\documentclass[pdflatex,sn-aps,12pt]{sn-jnl}

\usepackage{geometry}
\usepackage{amsfonts}
\usepackage[caption=false]{subfig}
\usepackage{supertabular,longtable}
\usepackage{multicol}
\usepackage{tabularx}
\usepackage{colortbl}
\usepackage{ulem}
\usepackage{graphicx}
\usepackage{booktabs}
\usepackage{array}
\usepackage[figuresright]{rotating}
\usepackage{float}
\usepackage{delarray,easybmat,eqnarray,exscale,mathenv}
\usepackage{mathptmx,yhmath,youngtab}
\usepackage{marvosym}
\usepackage{multirow}
\usepackage{bm}
\usepackage{algorithm}
\usepackage{algorithmicx}
\usepackage{amssymb}
\usepackage{mdframed}
\usepackage[font=footnotesize,labelfont=bf]{caption}
\usepackage{fancyhdr}
\usepackage{setspace}

\begin{document}
\title[PINN Swarm]{Physics-Informed Modeling and Control of Emergent Behaviors in Robot Swarms}


\author[1]{\fnm{Zixuan} \sur{Jin}}

\author[1]{\fnm{Wenzhuo} \sur{Zhang}}

\author[2]{\fnm{Shuxian} \sur{Quan}}

\author[1]{\fnm{Zirui} \sur{Dong}}

\author[1]{\fnm{Fangwen} \sur{Ye}}

\author[1]{\fnm{Yuchen} \sur{Shi}}

\author*[1,2]{\fnm{Cheng} \sur{Xu}\textsuperscript{\Letter}}\email{xucheng@ustb.edu.cn}

\affil*[1]{\orgdiv{School of Computer and Communication Engineering}, \orgname{University of Science and Technology Beijing}, \postcode{100083}, \state{Beijing}, \country{China}}

\affil*[2]{\orgdiv{Shunde Innovation School}, \orgname{University of Science and Technology Beijing}, \postcode{528399}, \state{Foshan Guangdong}, \country{China}}


\abstract{
Robot swarms can exhibit coherent collective behaviors through local perception, limited communication and decentralized decision-making, yet modeling and controlling such emergence remains challenging when behaviors unfold over multiple phases. 
Here we introduce PhySwarm, a physics-informed micro--macro framework that represents multi-stage swarm emergence as physically constrained density-field evolution coupled to executable robot motion. 
At the macroscopic level, a multi-phase advection--diffusion--reaction model (Macro-ADR) describes phase-dependent swarm-density evolution through directed transport, diffusion-based spatial regulation and behavioral phase transitions. 
At the microscopic level, an equivalent deterministic motion model (Micro-EDM) realizes these mechanisms through potential-field advection, density-gradient compensation and rate- or event-gated phase switching. 
A neural-physics controller (NPC) maps local observations and temporal memory to bounded physical parameters, and is trained with a reinforcement learning--PINN objective that combines task rewards with macro-scale density residuals and micro-scale motion-consistency constraints. 
In several proof-of-concept swarm missions---including trail-guided foraging, formation-reconfigurable navigation and role-adaptive search and rescue---we demonstrate that PhySwarm can generate distinct multi-stage emergent behaviors within a unified physics-informed modeling framework.
The learned density fields and physical parameters provide interpretable evidence of how advection, diffusion and reaction jointly regulate multi-stage swarm organization. These results establish a physics-informed route for learning, interpreting and controlling emergent behaviors in robot swarms.
}

\maketitle

\baselineskip14pt
\addtocontents{toc}{\protect\setcounter{tocdepth}{-1}}
\section{Introduction}
\label{sec:introduction}

Emergent behavior~\cite{oliveri2021continuous,benzion2023morphological,couzin2005effective} is one of the most characteristic and scientifically important collective phenomena in robot swarms. In such systems, large numbers of autonomous agents interact through local perception, limited communication and simple decision rules, yet can give rise to coherent macroscopic behaviors without centralized control~\cite{benzion2023morphological}. This property underpins the robustness, scalability, flexibility and fault tolerance of swarm robotic systems, making them attractive for environmental monitoring~\cite{Woxbots2026}, disaster response~\cite{dorigo2021swarm}, extraterrestrial exploration~\cite{exploration2025}, distributed transport~\cite{rubenstein2014programmable} and large-scale area coverage~\cite{werfel2014designing}. From an engineering perspective, this micro--macro organization enables swarms to maintain collective functionality under failures, communication constraints and environmental uncertainty. From a scientific perspective, it motivates a central modelling question: how do local perception, interaction and decentralized decision-making give rise to stable macroscopic structures, and how can these structures be continuously regulated across changing behavioral phases~\cite{vicsek1995novel,couzin2005effective,vicsek2012collective}?

Understanding and regulating this micro--macro transition is a central challenge in swarm intelligence~\cite{martinoli2004modeling, Quan2026Macroscopic, berman2009optimized, elamvazhuthi2019bilinear}. In realistic tasks, emergent behavior is rarely a single static pattern. It often appears as a sequence of functional behavioral phases that evolve continuously with environmental constraints and task demands~\cite{debie2023multask}. For example, swarm foraging~\cite{payton2001pheromone, bayindir2016review} may involve dispersed exploration, resource discovery, local aggregation, homing, trail formation and renewed exploration. Swarm navigation~\cite{olfati2006flocking, khatib1986realtime, rimon1992navigation} may require a swarm to maintain a formation in open space, compress into a line-like structure in a narrow corridor and recover its original formation after passing through the constraint. Search-and-rescue tasks~\cite{bayindir2016review, dorigo2021swarm} further involve distributed search, target localization, role differentiation, communication relay and coordinated withdrawal. These examples are not defined by the appearance of one fixed collective pattern, but by transitions among multiple functional phases. The key problem is therefore not only to reproduce a particular macroscopic collective state, but also to establish a dynamical framework that can represent and regulate multi-stage behavioral transitions in a unified manner. Such a framework should explain how different behavioral phases are represented in a common state space, how transitions are triggered by local observations and swarm interactions, and how these transitions remain consistent with macroscopic physical laws~\cite{martinoli2004modeling,lerman2001macroscopic,berman2009optimized}.

\subsection{Related Work}

Existing studies on swarm emergence can be broadly grouped into 1) rule-based methods, 2) data-driven learning methods, and 3) physical modelling approaches. Rule-based and bio-inspired methods include the Boids model~\cite{reynolds1987flocks}, particle swarm optimization~\cite{Gbenga2016Understanding}, artificial potential fields~\cite{khatib1986realtime, rimon1992navigation}, social-force models~\cite{Lu2021Swarm} and virtual pheromone mechanisms~\cite{dorigo1997ant,payton2001pheromone}, etc. These approaches are intuitive and interpretable, and have been successfully used to generate aggregation, flocking, obstacle avoidance, formation maintenance and area coverage~\cite{cortes2004coverage,olfati2006flocking}. However, they are usually task specific. Each behavior, or even each behavioral stage within a task, often requires a separately designed set of rules \cite{francesca2014automode,francesca2016automatic}. This makes it difficult for a single model to scale to multi-stage tasks or to adapt to unknown environmental constraints. From the perspective of behavioral-phase representation, rule-based methods effectively treat multi-stage emergence as a concatenation of discrete control logic, rather than as state evolution in a continuous dynamical system~\cite{bayindir2016review, dorigo2021swarm}. They therefore provide useful behavioral primitives, but offer limited support for a unified dynamical description of heterogeneous and multi-stage emergent behaviors.

A second line of work is driven by multi-agent learning. Recent multi-agent reinforcement learning methods, including MAPPO~\cite{lowe2017multi}, QMIX~\cite{rashid2018qmix}, MADDPG~\cite{yu2022surprising} and Transformer-based coordination architectures~\cite{wen2022multi}, have enabled robot swarms and multi-agent systems to learn complex cooperative policies from high-dimensional interaction data. Compared with manually designed rules, these methods offer stronger adaptability in unstructured environments and can optimize task performance through data-driven exploration~\cite{huttenrauch2019deep,sartoretti2019primal}. However, purely data-driven policies typically lack explicit physical structure. A learned policy may perform well in training environments, but its stability and generalization are not guaranteed under changes in environmental topology, swarm density or out-of-distribution conditions~\cite{gronauer2022multiagent, zhang2021decentralized}. More importantly, black-box policies provide limited explanation of why a collective phase emerges, how a behavioral transition occurs, or whether the macroscopic evolution satisfies physical consistency~\cite{karniadakis2021physics}. Learning methods can produce effective strategies, but they do not necessarily reveal whether those strategies correspond to interpretable physical phases, nor whether phase transitions follow continuous, conservative and stable macroscopic dynamics~\cite{gronauer2022multi}.

A third line of work seeks to bridge physical modelling and machine learning. Continuum models~\cite{berman2009optimized}, partial differential equations~\cite{elamvazhuthi2018pde} and active-matter theory~\cite{elamvazhuthi2019bilinear} abstract large-scale swarm systems into density, velocity or information fields, thereby providing compact macroscopic descriptions of collective dynamics~\cite{marchetti2013hydrodynamics,shaebani2020computational}. At the same time, physics-informed neural networks (PINNs) and neural differential equations~\cite{raissi2019physics,karniadakis2021physics,brunton2024promising} provide effective tools for embedding prior physical constraints into learnable models. Recent studies on non-reciprocal field theories~\cite{lama2025nonreciprocal,fruchart2021non} and the learning of hydrodynamic equations for active matter~\cite{supekar2023learning} further suggest that partial-differential-equation-based representations can capture complex collective patterns and macroscopic evolution laws in active systems. Nevertheless, most existing physical models still focus on specific steady patterns, fixed interaction mechanisms or globally constant parameters~\cite{marchetti2013hydrodynamics, shaebani2020computational, elamvazhuthi2019bilinear, supekar2023learning}. They do not yet fully explain how a swarm dynamically switches among multiple behavioral phases as environmental geometry, local observations, swarm density and task context change~\cite{supekar2023learning,lerman2006analysis}. A key missing capability is therefore an explicit formulation of behavioral switching in multi-stage emergence as phase transitions over learnable physical fields.

This limitation becomes particularly pronounced in realistic multi-stage swarm tasks. Traditional partial differential equation models and data-driven learning methods each address only part of this problem. Partial differential equations provide interpretable macroscopic structures for swarm evolution, but static parameterizations limit their responsiveness to dynamic environments and complex task stages. Multi-agent learning methods can acquire adaptive policies from interaction data, but they usually lack explicit constraints on transport consistency, density smoothing and phase evolution. The central challenge addressed in this paper is therefore how to represent spatial migration, density regulation and behavioral phase transitions within a single continuous dynamical framework, while allowing this framework to adapt to complex environments through data and still preserve physical consistency and mechanistic interpretability~\cite{rudy2017data,brunton2016discovering,brunton2024promising}.

\subsection{Contributions}

Here we propose PhySwarm, a physics-informed modelling and control framework for multi-stage emergent behaviors in robot swarms. The core contribution is a coupled micro--macro physical representation that links continuum-level swarm density dynamics with executable robot-level motion. At the macroscopic level, PhySwarm introduces a multi-phase advection--diffusion--reaction model, termed \textbf{Macro-ADR}, to describe how swarm densities evolve through three interpretable mechanisms: advection for task-directed transport, diffusion for density regulation and spatial redistribution, and reaction for transitions among behavioral phases or functional roles. At the microscopic level, PhySwarm introduces an equivalent deterministic motion model, termed \textbf{Micro-EDM}, which realizes the same ADR mechanisms through individual robot motion: advection is implemented by potential-field-based deterministic transport, diffusion by density-gradient compensation, and reaction by event- or rate-driven phase switching. Thus, Macro-ADR provides the field-level description of emergent collective dynamics, whereas Micro-EDM provides the robot-level executable realization.

To adapt this physical representation to unknown and dynamic environments, we further develop a neural-physics controller (\textbf{NPC}) that learns the physical parameters coupling Macro-ADR and Micro-EDM. Instead of directly generating unconstrained velocity commands, the NPC maps local observations and temporal memory to bounded ADR parameters \(P(t)=\{\omega(t),D(t),\lambda(t)\}\). The advection weights \(\omega\) determine how physical field bases are combined, the diffusion coefficient \(D\) regulates density-gradient-driven spreading and local pressure release, and the reaction rates \(\lambda\) govern behavioral phase transitions. These parameters are projected onto a physically feasible manifold and optimized using a RL--PINN objective, where reinforcement learning provides task adaptability and physics-informed residuals constrain the learned parameters to remain consistent with both Macro-ADR density evolution and Micro-EDM execution~\cite{schulman2017ppo, yu2022mappo, raissi2019pinn, karniadakis2021physics, brunton2024promising}.

By coupling Macro-ADR, Micro-EDM and NPC, PhySwarm moves beyond task-specific rule composition and purely black-box swarm policies. It provides a unified mechanism for representing, learning and controlling multi-stage emergent behaviors as physically constrained field evolution coupled to local robot execution. This formulation makes swarm emergence more interpretable, controllable and transferable: different collective behaviors can be generated by changing the learned physical parameter fields and task-specific field dictionaries, while retaining the same underlying ADR structure. More broadly, it offers a physics-informed learning perspective for understanding collective dynamics in distributed systems~\cite{zhang2025physics}. Although this study focuses on robot swarms, the same modelling principle may also inform traffic-flow control~\cite{duan2025attention}, crowd evacuation~\cite{helbing2000simulating}, biological collective motion~\cite{sayin2025behavioral,toner1998flocks} and distributed decision-making in complex systems~\cite{barfuss2025collective}.

\section{Results}

To evaluate PhySwarm, we conducted cross-validation experiments in both simulation and real-world settings using the E-puck swarm robotic platform~\cite{millard2017pi}. PhySwarm uses a swarm emergent behavior model and neural-physics controller to learn the physical parameters, which modulate advection, diffusion and reaction mechanisms in a unified multi-stage density-field representation.

The experiments include three complementary swarm tasks: \textit{Trail-Guided Swarm Foraging}, \textit{Formation-Reconfigurable Swarm Navigation} and \textit{Role-Adaptive Swarm Search and Rescue}. These tasks respectively test information-guided exploration--exploitation, geometry-constrained formation reconfiguration and task-driven role differentiation. We first evaluated behavior evolution in simulation under different swarm sizes, environmental disturbances and individual failure conditions, and then verified key behavioral patterns using real E-puck robots. The theoretical analyses supporting physical interpretability, boundedness, conditional controllability and NPC convergence are provided in the supplementary materials.

\subsection{Overview of PhySwarm}
\label{sec:model_overview}

\begin{figure}[thbp]
    \centering
    \includegraphics[width=\textwidth]{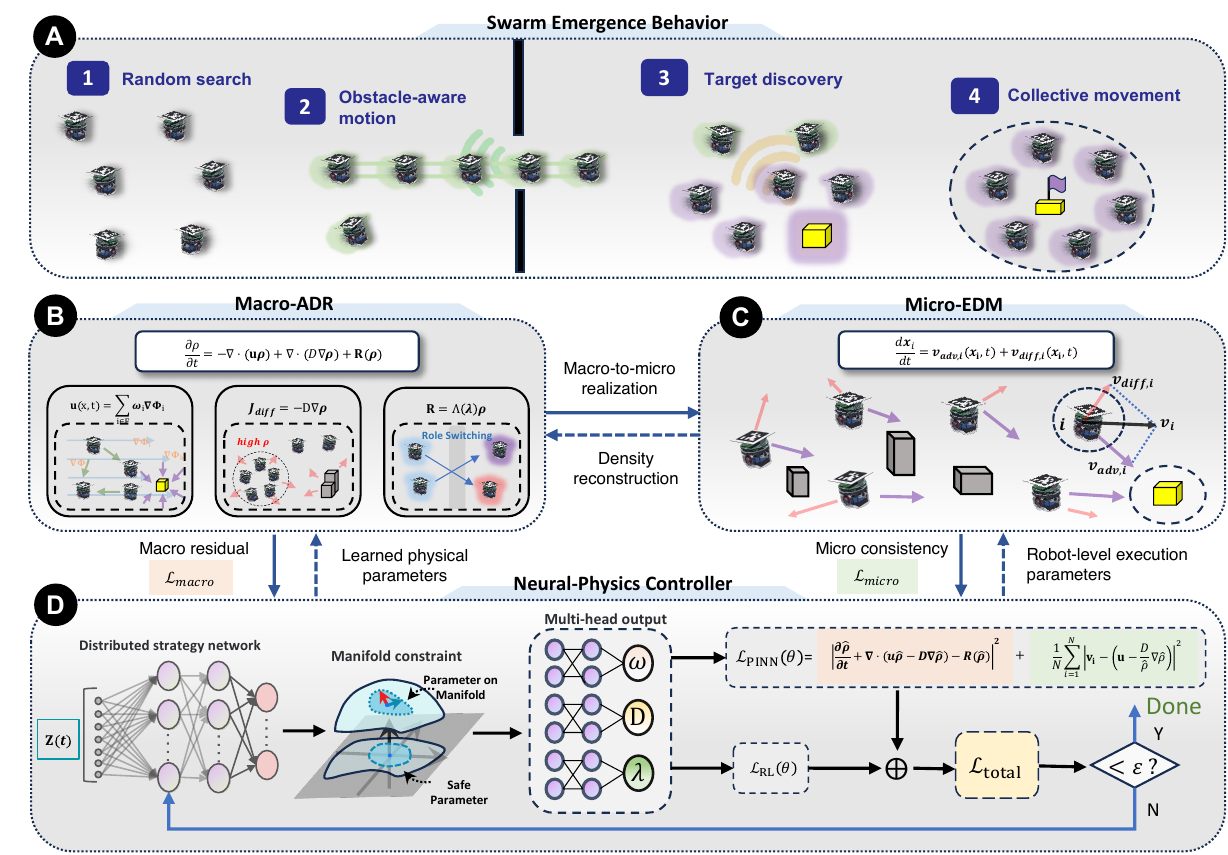} 
    \caption{
    \textbf{PhySwarm couples macroscopic density dynamics with microscopic robot motion through learned physical parameters.}
    \textbf{A,} Schematic of a multi-stage swarm-emergence process, including random search, obstacle-aware motion, target discovery and collective movement.
    \textbf{B,} The macroscopic advection--diffusion--reaction model (Macro-ADR) represents swarm behavior as phase-dependent density evolution governed by advection, diffusion and reaction mechanisms.
    \textbf{C,} The microscopic equivalent deterministic motion model (Micro-EDM) realizes the Macro-ADR mechanisms as executable robot-level motion and phase switching. The bidirectional connection between Macro-ADR and Micro-EDM indicates macro-to-micro realization through phase-conditioned physical fields and micro-to-macro density reconstruction from robot trajectories and behavioral phases.
    \textbf{D,} The neural-physics controller (NPC) maps distributed observations and temporal memory to bounded physical parameters \(P(t)=\{\omega(t),D(t),\lambda(t)\}\). These parameters are projected onto a physically feasible manifold and used to couple Macro-ADR and Micro-EDM. The controller is optimized by combining task-driven reinforcement learning with physics-informed residuals from both the macroscopic density dynamics \(\mathcal{L}_{macro}\) and the microscopic motion consistency \(\mathcal{L}_{micro}\).
    }
    \label{fig:framework}
\end{figure}

PhySwarm is a physics-informed framework for modeling and controlling multi-stage emergent behaviors in robot swarms through a coupled micro--macro representation (see Fig.~\ref{fig:framework}). The motivation is that swarm emergence is neither solely an individual-level control problem nor only a macroscopic pattern-generation problem. Individual robots execute local motion, respond to neighbouring robots and environmental cues, and switch behavioral states; collectively, these local processes induce density transport, spatial redistribution and phase transitions at the swarm level. PhySwarm therefore describes swarm emergence from a field-theoretic perspective: macroscopic collective behaviors are represented as density-field dynamics, while microscopic robot interactions provide an executable realization of those dynamics.

PhySwarm consists of two coupled components: the swarm emergence behavior model and the neural-physics controller. PhySwarm provides the physical representation of swarm emergence at two levels. At the macroscopic level, the Macro-ADR describes phase-dependent density evolution through advection, diffusion and reaction mechanisms (Fig.~\ref{fig:framework}B). At the microscopic level, the Micro-EDM converts the learned physical fields into executable robot motion and phase switching (Fig.~\ref{fig:framework}C). The NPC learns the time-varying physical parameters that couple these two levels and modulate the resulting swarm behavior (Fig.~\ref{fig:framework}D).

At the macroscopic level, we use an advection--diffusion--reaction formulation because multi-stage swarm behaviors naturally involve three classes of collective processes: directed transport, spatial redistribution and behavioral-state conversion (Detailed physical explanations and theoretical derivations are referred to the supplementary materials). Let \(\Omega\subset\mathbb{R}^{d}\) denote the task domain, \(x\in\Omega\) the spatial position and \(t\in[0,T]\) time. Let \(\mathcal{S}=\{1,\ldots,M\}\) be the set of behavioral phases, and let \(\rho_{\sigma}(x,t)\geq0\) denote the density of robots in phase \(\sigma\in\mathcal{S}\). The multi-phase density field is written as \(\rho(x,t)= [\rho_1(x,t),\ldots,\rho_M(x,t)]^{\top}\).

Macro-ADR models (Fig.~\ref{fig:framework}B) the evolution of each phase density as
\begin{equation}
    \frac{\partial \rho_{\sigma}}{\partial t}
    =
    \underbrace{
    -\nabla\cdot
    \left(
    u_{\sigma}\rho_{\sigma}
    \right)
    }_{\text{advection term}}
    +
    \underbrace{
    \nabla\cdot
    \left(
    D_{\sigma}\nabla\rho_{\sigma}
    \right)
    }_{\text{diffusion term}}
    +
    \underbrace{
    R_{\sigma}(\rho)
    }_{\text{reaction term}},
    \qquad
    \sigma=1,\ldots,M .
    \label{eq:results_overview_adr}
\end{equation}
Here, \(u_{\sigma}(x,t)\in\mathbb{R}^{d}\) is the advection velocity field, \(D_{\sigma}(x,t)\geq0\) is the diffusion coefficient and \(R_{\sigma}(\rho)\) is the reaction term describing transitions among behavioral phases. The advection term captures task-directed migration induced by goals, environmental cues and inter-robot interactions. The diffusion term captures density-gradient-driven spatial regulation, including coverage, local pressure release and collision avoidance. The reaction term captures redistribution of robots among behavioral phases or functional roles. This ADR structure follows from a local conservation principle: phase density changes arise from spatial fluxes and conservative phase transitions~\cite{cosner2014reaction,sun2021lie}. The derivation, physical interpretation and mass-conservation property are provided in Sec.~S1.1 of the supplementary materials .

At the microscopic level, Micro-EDM (Fig.~\ref{fig:framework}C) provides a Lagrangian flux-matching realization of Macro-ADR. This level is necessary because physical robots do not execute density equations directly; they execute velocity commands and state transitions. For behavioral phase \(\sigma\), the spatial flux associated with Eq.~\eqref{eq:results_overview_adr} is
\begin{equation}
    J_{\sigma}
    =
    u_{\sigma}\rho_{\sigma}
    -
    D_{\sigma}\nabla\rho_{\sigma}.
    \label{eq:overview_total_flux}
\end{equation}
If the density field is represented by a population of moving robots, an equivalent deterministic particle velocity can be obtained by matching the particle flux \(\rho_{\sigma}v_{\sigma}\) to the Macro-ADR flux \(J_{\sigma}\), namely \(\rho_{\sigma}v_{\sigma} = J_{\sigma}\). This gives
\begin{equation}
    v_{\sigma}
    =
    \frac{J_{\sigma}}{\rho_{\sigma}}
    =
    u_{\sigma}
    -
    \frac{D_{\sigma}}{\rho_{\sigma}}
    \nabla\rho_{\sigma},
    \label{eq:overview_flux_matching_velocity}
\end{equation}
when \(\rho_{\sigma}>0\). This flux-matching construction follows the standard relation between continuity equations, diffusion fluxes and Lagrangian particle transport~\cite{chandrasekhar1943stochastic,philibert2006one}. Thus, Micro-EDM is not an independent heuristic controller, but an executable deterministic motion model whose empirical density is consistent with the Macro-ADR spatial flux in the continuum limit. In implementation, the density \(\rho_{\sigma}\) is replaced by an estimated density \(\hat{\rho}\), and the denominator is regularized by a small constant \(\varepsilon>0\).

For robot \(i\), let \(x_i(t)\in\Omega\) denote its position and \(s_i(t)\in\mathcal{S}\) its current behavioral phase. Its Micro-EDM motion is written as
\begin{equation}
    \dot{x}_i(t)
    =
    v_{\mathrm{adv},i}(t)
    +
    v_{\mathrm{diff},i}(t),
    \label{eq:micro_edm_overview}
\end{equation}
where \(v_{\mathrm{adv},i}(t)\) and \(v_{\mathrm{diff},i}(t)\) are the microscopic advection and diffusion-compensation velocities, respectively. The microscopic advection velocity realizes the first term in Eq.~\eqref{eq:overview_flux_matching_velocity} and is generated by a weighted combination of physical field bases:
\begin{equation}
    v_{\mathrm{adv},i}(t)
    =
    u_{s_i}(x_i,t)
    =
    \sum_{r=1}^{K_b}
    \omega_{s_i,r}(x_i,t)b_r(x_i,t),
    \qquad
    b_r(x,t)=-\nabla\Phi_r(x,t).
    \label{eq:micro_adv_overview}
\end{equation}
Here, \(K_b\) is the number of field bases, \(\Phi_r(x,t)\) is the \(r\)-th potential field, \(b_r(x,t)\) is the velocity basis induced by its negative gradient and \(\omega_{s_i,r}(x_i,t)\) is the corresponding field weight in the current behavioral phase. These fields encode reusable physical cues, such as target attraction, obstacle and boundary avoidance, trail or information memory, morphology regulation, local cohesion and connectivity-aware organization. Thus, the controller learns how to combine physically meaningful fields rather than producing arbitrary low-level motion commands. Typical field bases and their task semantics are summarized in the modeling guidelines in Sec.~S2.2 of the supplementary materials .

The microscopic diffusion-compensation velocity realizes the second term in Eq.~\eqref{eq:overview_flux_matching_velocity}. Given a local density estimate \(\hat{\rho}(x,t)\), which can be phase-conditioned or defined as a generalized total density depending on the task, we define
\begin{equation}
    v_{\mathrm{diff},i}(t)
    =
    -
    \frac{
    D_{s_i}(x_i,t)
    }{
    \hat{\rho}(x_i,t)+\varepsilon
    }
    \nabla \hat{\rho}(x_i,t),
    \qquad
    \varepsilon>0 .
    \label{eq:micro_diff_overview}
\end{equation}
Here, \(\varepsilon\) avoids numerical singularities in low-density regions. This term makes robots respond to local density gradients and induces, at the continuum level, the diffusion flux \(J_{\mathrm{diff},\sigma}  =  - D_{\sigma}\nabla\rho_{\sigma}\). 
Therefore, diffusion in PhySwarm is not treated as unstructured random perturbation, but as an active density-regulation mechanism for coverage, congestion suppression and safety-margin maintenance. The correspondence between Eq.~\eqref{eq:micro_diff_overview} and the diffusion flux in Eq.~\eqref{eq:results_overview_adr} is derived in Sec.~S1.1 of the supplementary materials .

The reaction mechanism is realized through phase switching. For a transition from phase \(m\) to phase \(n\), the microscopic switching probability over a small interval \(\Delta t\) is modeled as
\begin{equation}
    \mathbb{P}
    \left(
    s_i(t+\Delta t)=n
    \mid
    s_i(t)=m
    \right)
    =
    \lambda_{mn}(x_i,t)\chi_{mn}(x_i,t)\Delta t
    +
    o(\Delta t),
    \label{eq:micro_switch_overview}
\end{equation}
where \(\lambda_{mn}(x_i,t)\geq0\) is the learned transition rate and \(\chi_{mn}(x_i,t)\in[0,1]\) is a task-dependent activation function encoding triggering conditions such as target detection, communication degradation, obstacle constraints, role demand or task completion. This microscopic switching process induces the macroscopic reaction term \(R_{\sigma}(\rho)\), which redistributes density among behavioral phases without creating or removing robots. The conservative reaction structure and phase-transition graphs are detailed in Secs.~S1.1 and S2 of the supplementary materials .

The NPC (Fig.~\ref{fig:framework}C) learns the physical parameters that couple Macro-ADR and Micro-EDM. Given the local observation \(O_i(t)\) and temporal memory \(h_{i,t-1}\), the NPC outputs
\begin{equation}
    P_i(t)
    =
    F_{\theta}
    \left(
    O_i(t),h_{i,t-1}
    \right)
    =
    \{\omega_i(t),D_i(t),\lambda_i(t)\},
    \label{eq:results_overview_parameters}
\end{equation}
where \(F_{\theta}\) denotes the neural controller with trainable parameters \(\theta\). The vector \(\omega_i(t)\) determines the composition of advection field bases, \(D_i(t)\) controls density regulation and \(\lambda_i(t)\) controls behavioral phase transitions. The outputs are mapped to a bounded physical parameter manifold: advection weights are non-negative and normalized, diffusion coefficients are positive and bounded, and reaction rates are non-negative and finite. This hard physical projection prevents unbounded velocity amplification, degenerate diffusion and non-physical switching rates. The precise manifold definition and boundedness proof for the induced ADR terms are provided in Sec.~S1.2 of the supplementary materials .

The controller is trained using a RL--PINN objective:
\begin{equation}
\begin{aligned}   
    \mathcal{L}_{\mathrm{total}}
    & =
    \mathcal{L}_{\mathrm{RL}}
    +
    \eta
    \mathcal{L}_{\mathrm{PINN}},\\
    \mathcal{L}_{\mathrm{PINN}}
    & =
    \mathcal{L}_{\mathrm{macro}}
    +
    \beta
    \mathcal{L}_{\mathrm{micro}} .
\end{aligned}
    \label{eq:results_overview_loss}
\end{equation}
Here, \(\mathcal{L}_{\mathrm{RL}}\) is the reinforcement-learning loss (MAPPO~\cite{yu2022mappo} is adopted in this study), \(\mathcal{L}_{\mathrm{macro}}\) penalizes deviations from the Macro-ADR density dynamics, \(\mathcal{L}_{\mathrm{micro}}\) penalizes inconsistencies between the executed robot motion and the Micro-EDM realization, \(\eta>0\) is the physics-regularization weight and \(\mathcal{L}_{\mathrm{PINN}}\) is the total physics-informed residual, where \(\beta>0\) balances the macro- and micro-scale residuals. Through this objective, the learned parameter trajectories \(P(t)=\{\omega(t),D(t),\lambda(t)\}\) are optimized for task performance while remaining consistent with the physical structure of PhySwarm. Detailed network implementation, density estimation, residual construction and scenario-specific parameter settings are provided in Methods and the supplementary materials.

In summary, PhySwarm establishes a closed modeling-and-control loop: local observations are encoded by the NPC; the NPC predicts physically constrained parameters; these parameters define the advection, diffusion and reaction mechanisms in Macro-ADR; and Micro-EDM converts the learned fields into individual robot motion. This loop enables the swarm to modulate its physical mechanisms continuously across behavioral stages. In the following experiments, despite differences in task semantics, they are generated by PhySwarm and controlled through the same learned physical parameter set \(P(t)=\{\omega(t),D(t),\lambda(t)\}\). Detailed model instantiations, field-function choices, network implementation and parameter settings are provided in Methods and the supplementary materials.

\subsection{Emergent behavior scenarios}
\label{sec:emergent_behaviors}

To evaluate whether PhySwarm provides a unified representation of multi-stage swarm emergence, we designed three proof-of-concept scenarios that capture complementary forms of collective organization. These scenarios are not intended to exhaust the full space of swarm-robot tasks. Instead, they test whether the same physical modeling framework can represent distinct types of behavioral transition driven by different mechanisms. We focus on three representative transition mechanisms: information-guided task-state evolution, geometry-constrained spatial reconfiguration and task-driven functional role differentiation.

These mechanisms are instantiated as three benchmark scenarios in this work: trail-guided swarm foraging, formation-reconfigurable swarm navigation and role-adaptive swarm search and rescue. Together, they test whether PhySwarm can represent not only isolated collective patterns, but also continuous transitions among behavioral phases, spatial structures and functional roles. In all scenarios, we use the same modeling pipeline. Behavioral phases are represented by phase densities, task objectives and environmental constraints are encoded as physical fields, density regulation is captured by diffusion, and phase or role transitions are described by conservative reaction terms. The neural-physics controller then learns the time-varying physical parameters \(P(t)=\{\omega(t),D(t),\lambda(t)\}\) that modulate these mechanisms. Detailed phase definitions, field dictionaries, transition graphs, reward terms and implementation settings are provided in the corresponding experimental sections and in Sec.~2.3 of the supplementary materials.

\subsection{Trail-Guided Swarm Foraging}
\label{sec:Foraging}

\begin{figure}[thbp]
    \centering
    \includegraphics[width=\textwidth]{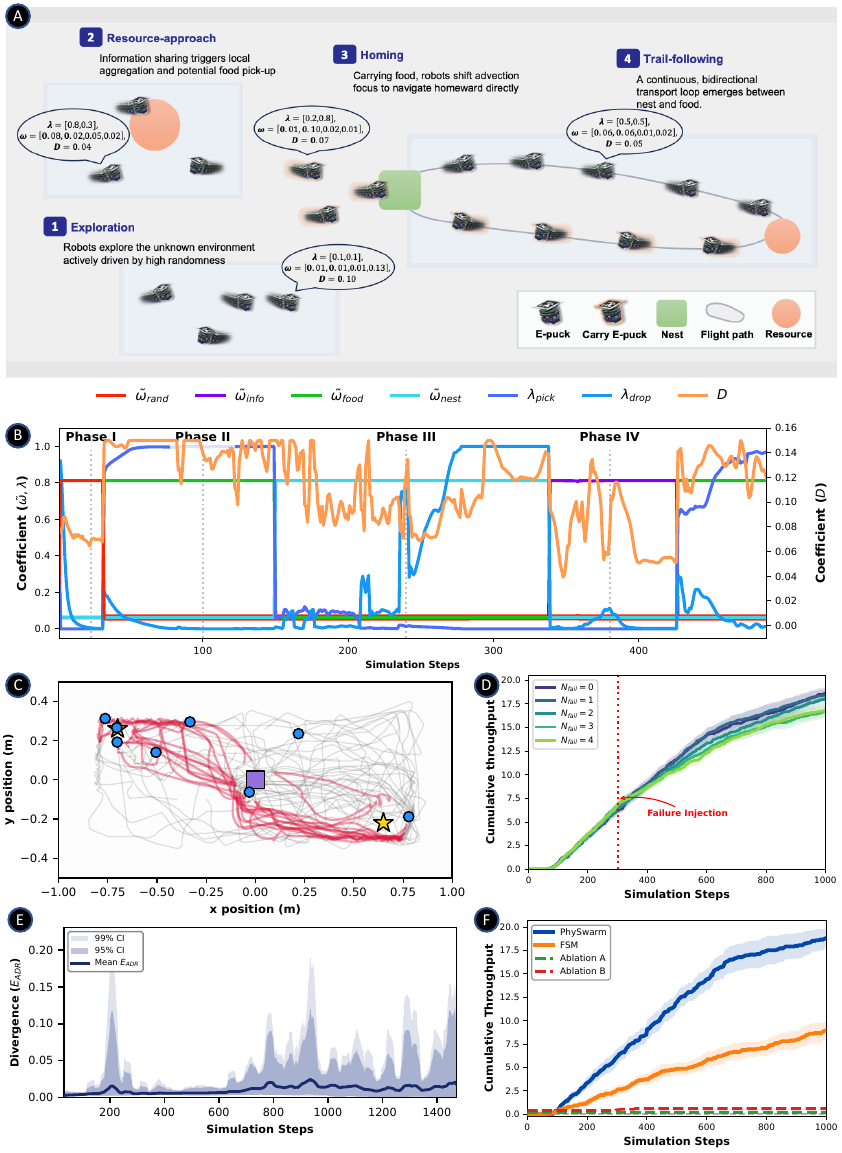} 
    \caption{
\textbf{Trail-guided swarm foraging.}
\textbf{A,} Four-stage foraging process generated by PhySwarm, including dispersed exploration, information-guided resource approach, resource transport to the nest and the formation of a stable bidirectional transport loop. Insets show representative learned physical parameters \(\omega\), \(D\) and \(\lambda\) for each stage.
\textbf{B,} Learned ADR parameters over time, showing coordinated changes in advection weights, reaction rates and diffusion coefficients across exploration, resource-approach, homing and trail-following phases.
\textbf{C,} Robot trajectories in the arena, with highlighted paths indicating the emergent transport route between the food source and the nest.
\textbf{D,} Cumulative macroscopic throughput under robot failures. The dashed line marks the failure-injection time.
\textbf{E,} ADR manifold divergence over simulation steps, with confidence intervals quantifying deviations from the learned physical manifold.
\textbf{F,} Throughput comparison between PhySwarm, a finite-state-machine baseline and fixed-parameter ablation variants. Shaded regions indicate confidence intervals where shown.
The scenario-specific metrics and ablation methods presented in \textbf{D} and \textbf{F} are detailed in Sec.~2.6 and 2.7 of the supplementary materials. The simulation and real-robot validation results are shown in supplementary Movies~S1, S4, S7 and S10.
}
    \label{fig:Foraging}
\end{figure}

The first proof-of-concept scenario evaluates whether PhySwarm can generate an information-guided exploration--exploitation cycle in a swarm foraging task. Robots are initialized without prior knowledge of the food location and must explore the workspace, transport food items to the nest, and reuse accumulated trail information to improve subsequent foraging (Fig.~\ref{fig:Foraging}A). This scenario tests a central requirement of the proposed framework: whether dispersed exploration, resource approach, homing and trail following can be represented as phase-conditioned density evolution under a shared dynamical model, rather than as separately hand-coded behaviors.

The learned physical parameters reveal how the NPC drives this multi-stage transition (see Fig.~\ref{fig:Foraging}B). During early exploration, the controller maintains relatively high diffusion and weakly directed advection, allowing robots to cover the arena and search for food-related cues. Once food information becomes available, the advection weights shift towards food- and information-related fields, and the corresponding reaction rates promote transitions from exploration to resource approach. After pick-up, the dominant advection component changes towards the nest field, guiding carrying robots back to the nest. As repeated transport occurs, trail-related fields become increasingly important, enabling the swarm to reuse an emergent transport route while preserving a residual exploratory component (see Fig.~\ref{fig:Foraging}B,C). Thus, the observed trajectory structure is not imposed as a pre-defined path; it arises from continuous modulation of \(\omega\), \(D\) and \(\lambda\) across behavioral phases.

PhySwarm also maintains the foraging process under perturbation and remains consistent with the learned physical manifold. After robot failures are injected, the cumulative throughput continues to increase, although the slope decreases as the number of active robots is reduced (Fig.~\ref{fig:Foraging}D). This suggests that the transport route is not tied to a fixed subset of robots, but is maintained by the collective density-field dynamics. The ADR manifold divergence remains bounded during long-horizon execution, with transient increases mainly corresponding to phase switching, route reorganization and failure-induced redistribution (see Fig.~\ref{fig:Foraging}E). Since this metric measures the discrepancy between empirical robot density and the ADR-induced reference density, the bounded divergence indicates that the learned behavior remains close to the physical structure specified by PhySwarm. 

Finally, PhySwarm achieves higher cumulative throughput than both the finite-state-machine baseline and the fixed-parameter ablation variants (see Fig.~\ref{fig:Foraging}F). The ablation variants use representative fixed values of the ADR parameters rather than learning time-varying physical parameter trajectories. The performance gap indicates that adaptive modulation of advection, diffusion and reaction is important for sustaining the full foraging cycle, especially when exploration, transport and trail reuse require different physical regimes. Overall, this scenario demonstrates that trail-guided swarm foraging can be represented as a continuous, physically constrained transition among behavioral phases within the unified PhySwarm framework.

\subsection{Formation-Reconfigurable Swarm Navigation}
\label{sec:formation_reconfigurable_navigation}

\begin{figure}[thbp]
    \centering
    \includegraphics[width=\textwidth]{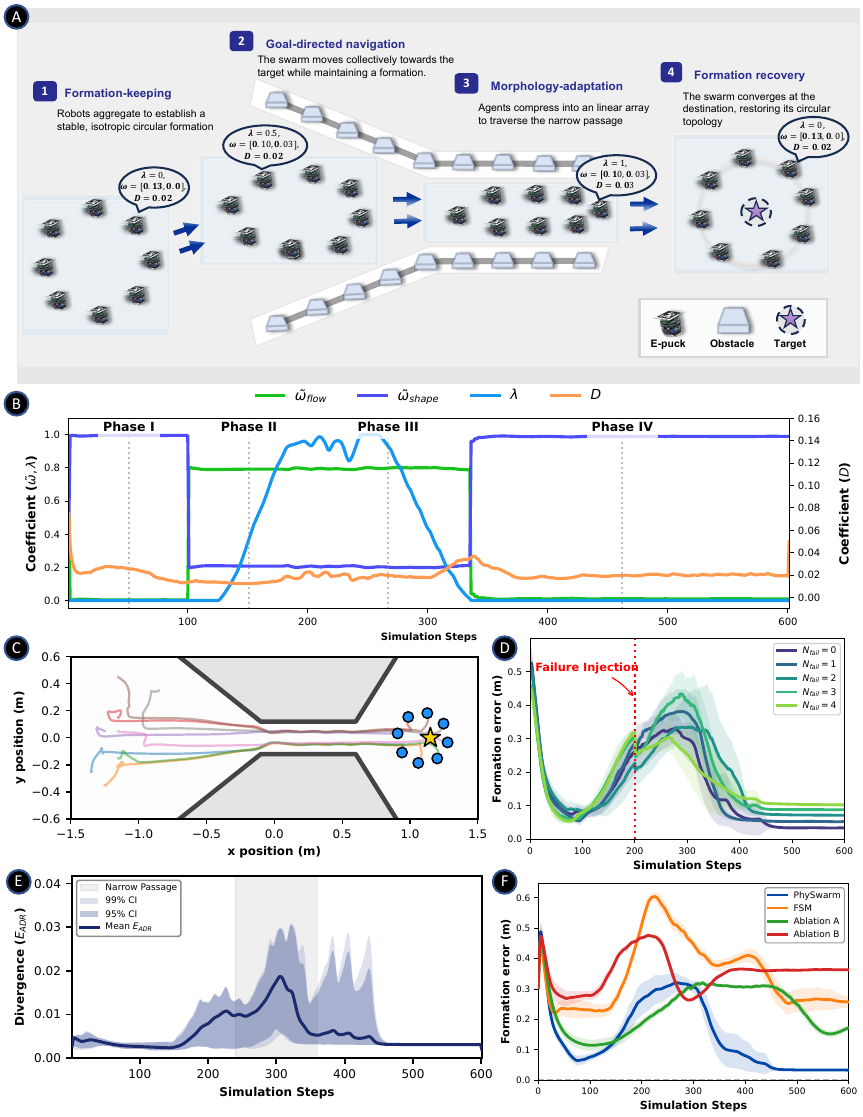} 
    \caption{
\textbf{Formation-reconfigurable swarm navigation.}
\textbf{A,} Schematic of the multi-stage navigation task. The swarm first aggregates into a stable isotropic circular formation, then senses environmental constraints and enters a topological morphing phase. To traverse the narrow corridor, robots compress into an anisotropic linear arrangement and subsequently recover the isotropic circular topology after reaching the target region. Representative physical parameters \(\omega\), \(D\) and \(\lambda\) are shown for each stage.
\textbf{B,} Learned ADR parameters during a representative trial. The normalized advection weights, reaction extent and diffusion coefficient vary consistently across four phases: formation-keeping, goal-directed navigation, morphology-adaptation and formation-recovery. The shift from isotropic to shape-adaptive advection weights indicates activation of morphology reconfiguration, whereas the reaction signal captures the transient switching process.
\textbf{C,} Robot trajectories during corridor traversal and target formation-recovery. The trajectories show that the swarm compresses through the constrained region and reforms near the target.
\textbf{D,} Circular-formation error under different numbers of failed robots. The vertical dashed line indicates failure injection. Although failures transiently increase formation error, the swarm subsequently recovers towards the target morphology.
\textbf{E,} ADR divergence over simulation steps. The mean curve and confidence intervals quantify the deviation of the learned dynamics from the ADR physical representation during formation-keeping, morphology-adaptation and formation-recovery.
\textbf{F,} Comparison of circular-formation error among the proposed method, a potential-field baseline and fixed-parameter ablation variants. The proposed method achieves more stable morphology recovery after corridor traversal. Shaded regions indicate confidence intervals where shown.
The scenario-specific metrics and ablation methods presented in \textbf{D} and \textbf{F} are detailed in Sec.~2.6 and 2.7 of the supplementary materials. The simulation and real-robot validation results are shown in supplementary Movies~S2, S5, S8 and S11.
}
    \label{fig:navigation}
\end{figure}

The second proof-of-concept scenario evaluates whether PhySwarm can generate geometry-constrained spatial reconfiguration during collective navigation. In this task, robots must move towards a target region while adapting their collective morphology to a narrow corridor (Fig.~\ref{fig:navigation}A). This scenario tests whether formation keeping, morphology adaptation, corridor traversal and formation recovery can be represented as continuous phase-conditioned density evolution under the same PhySwarm dynamics, rather than as a sequence of manually designed formation controllers.

The learned physical parameters show how the NPC modulates the swarm morphology in response to environmental geometry (see Fig.~\ref{fig:navigation}B). During formation keeping, the controller assigns a relatively high contribution to the isotropic formation field, allowing the swarm to aggregate into a compact circular topology. As the swarm approaches the constrained corridor, the advection weights shift towards shape-adaptive and corridor-oriented fields, while the reaction signal activates the transition towards morphology adaptation. During traversal, the anisotropic component becomes dominant, enabling the swarm to compress along the corridor direction while limiting lateral dispersion. After leaving the constrained region, the isotropic formation component recovers and the reaction signal decreases, driving the swarm back towards the target circular morphology (see Fig.~\ref{fig:navigation}B,C). Thus, the observed transition from circular aggregation to corridor-compatible compression and subsequent recovery is generated by continuous modulation of \(\omega\), \(D\) and \(\lambda\), rather than by a hard switch between predefined formation templates.

PhySwarm remains robust to robot failures and maintains physical consistency during the reconfiguration process (see Fig.~\ref{fig:navigation}D). When robot failures are injected, the circular-formation error increases transiently, with larger perturbations under more severe failures. Nevertheless, the error decreases again as the remaining robots reorganize and enter the recovery phase. This suggests that the formation is not maintained by a fixed set of robots, but by collective density-field regulation through the learned physical parameters. The ADR manifold divergence remains bounded throughout the episode, with temporary increases during corridor compression and formation recovery (see Fig.~\ref{fig:navigation}E). These transients correspond to the most demanding part of the task, where the swarm must simultaneously satisfy environmental constraints and morphology objectives. 

Finally, PhySwarm achieves lower circular-formation error than the potential-field baseline and the fixed-parameter ablation variants (see Fig.~\ref{fig:navigation}F). The ablation variants use representative fixed ADR parameters rather than learning time-varying physical parameter trajectories. The performance gap indicates that adaptive modulation of advection, diffusion and reaction is important for coordinating formation maintenance, corridor compression and morphology recovery within a single task. Overall, this scenario demonstrates that geometry-constrained swarm navigation can be represented as a continuous, physically constrained reconfiguration process within the unified PhySwarm framework.

\subsection{Role-Adaptive Swarm Search and Rescue}
\label{sec:role_adaptive_search_rescue}

\begin{figure}[thbp]
    \centering
    \includegraphics[width=\textwidth]{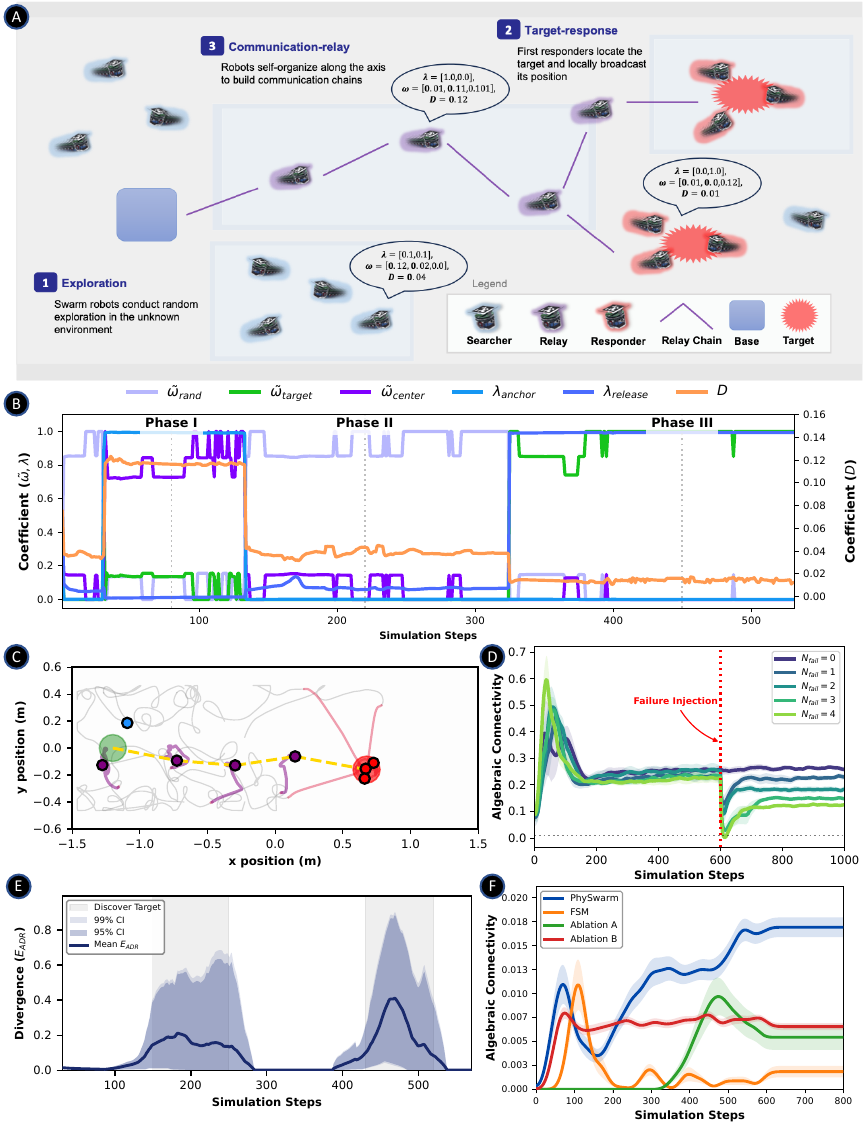} 
    \caption{
\textbf{Role-adaptive swarm search and rescue.}
\textbf{A,} Multi-stage search-and-rescue process generated by PhySwarm, including distributed search, target response and relay-chain formation. Representative learned parameters \(\omega\), \(D\) and \(\lambda\) are shown for each stage.
\textbf{B,} Learned ADR parameters over time, showing coordinated modulation of advection weights, reaction rates and diffusion coefficients across distributed-search, target-response and communication-relay phases.
\textbf{C,} Robot trajectories and role distribution, in which searchers explore the arena, responders gather near the target and relay robots maintain communication between the base and target region.
\textbf{D,} Algebraic connectivity under robot failures. The dashed line marks failure injection.
\textbf{E,} Topological manifold divergence over time, with transient increases during role redistribution and relay reorganization.
\textbf{F,} Connectivity comparison with a finite-state-machine baseline and fixed-parameter ablation variants. Shaded regions indicate confidence intervals where shown.
The scenario-specific metrics and ablation methods presented in \textbf{D} and \textbf{F} are detailed in Sec.~2.6 and 2.7 of the supplementary materials. The simulation and real-robot validation results are shown in supplementary Movies~S3, S6, S9 and S12.
}
    \label{fig:SAR}
\end{figure}

The third proof-of-concept scenario evaluates whether PhySwarm can generate task-driven role differentiation in a decentralized search-and-rescue task. Robots must explore an unknown environment, detect a target and maintain a communication pathway between the base and the target region (see Fig.~\ref{fig:SAR}A). This scenario tests whether distributed search, target response and communication relay can be represented as phase-conditioned density evolution under the same dynamics, rather than as predefined role assignments or separately designed task-allocation rules.

The learned physical parameters show how the NPC induces role differentiation from an initially homogeneous swarm (see Fig.~\ref{fig:SAR}B). During distributed search, exploration-related advection and diffusion support broad spatial coverage. Once target information is detected and propagated locally, reaction rates promote transitions from the search phase to the response phase, while target-directed advection guides responders towards the target region. As communication demand increases, relay-related advection fields and role-transition rates organize a subset of robots along the base--target direction, forming a communication backbone while the remaining robots continue search or response behaviors (see Fig.~\ref{fig:SAR}B,C). Thus, the observed searcher--responder--relay organization is generated by continuous modulation of \(\omega\), \(D\) and \(\lambda\), rather than by assigning fixed roles to specific robots.

PhySwarm also preserves relay connectivity under perturbation and remains consistent with the learned physical organization (see Fig.~\ref{fig:SAR}D). After robot failures are injected, algebraic connectivity decreases, with larger drops under more severe failures, but the learned policy maintains non-zero connectivity across the tested conditions. This indicates that the relay structure is not tied to a fixed set of robots; instead, the remaining robots can reorganize through density-field regulation and reaction-driven role redistribution. The topological manifold divergence remains bounded over the episode, with transient increases during target response and relay reconfiguration (see Fig.~\ref{fig:SAR}E). These transients correspond to periods in which the swarm changes its functional topology, and their subsequent reduction indicates that the learned role-switching dynamics remain close to the physical structure specified by PhySwarm. 

Finally, PhySwarm maintains stronger relay connectivity than the finite-state-machine baseline and the fixed-parameter ablation variants (see Fig.~\ref{fig:SAR}F). The ablation variants use representative fixed ADR parameters rather than learning time-varying physical parameter trajectories. The performance gap indicates that adaptive coupling among search-related advection, target attraction, relay alignment, density regulation and reaction-based role transitions is important for sustaining a functional communication chain. Overall, this scenario demonstrates that role-adaptive search and rescue can be represented as a continuous, physically constrained transition from distributed exploration to target response and relay coordination within the unified PhySwarm framework.

\subsection{Density-field evolution and physical interpretability}
\label{sec:density_field_evolution}
\begin{figure}[htbp]
    \centering
    \includegraphics[width=\textwidth]{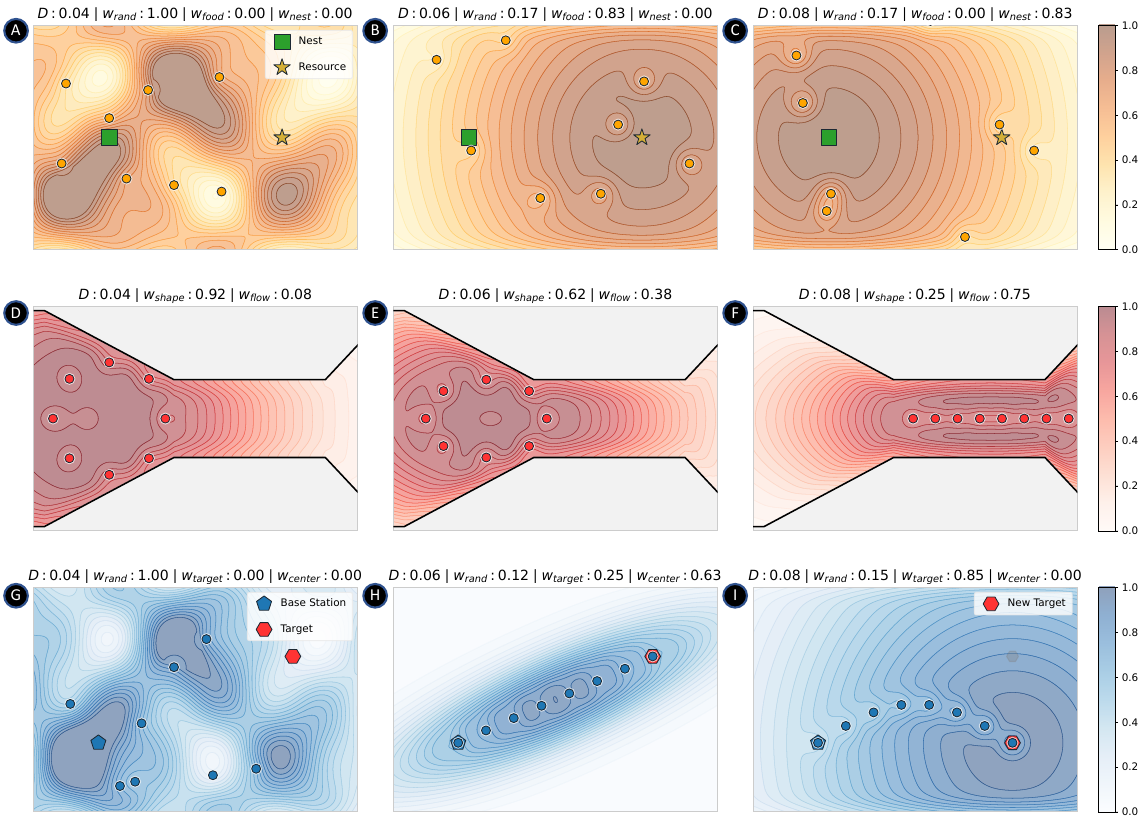}
    \caption{
    \textbf{Density-field evolution and physical visualization across three multi-stage swarm tasks.}
    \textbf{A--C,} \textit{Trail-Guided Swarm Foraging}. The learned field regulation shifts from exploration-dominated search (\textbf{A}) to food-directed aggregation (\textbf{B}) and nest-directed homing (\textbf{C}). 
    \textbf{D--F,} \textit{Formation-Reconfigurable Swarm Navigation}. The density field evolves from an isotropic circular formation in open space (\textbf{D}), through a topological morphing phase near the corridor entrance (\textbf{E}), to an anisotropic linear arrangement inside the narrow corridor (\textbf{F}).
    \textbf{G--I,} \textit{Role-Adaptive Swarm Search and Rescue}. The swarm first performs distributed exploration (\textbf{G}), then forms a relay-oriented structure between the base and target (\textbf{H}), and finally reorients the density field towards a new target region (\textbf{I}). 
    In all panels, coloured contours denote the normalized density-field landscape, robot markers show the corresponding microscopic realization, and the annotated values indicate representative learned diffusion coefficients and advection-field weights. These visualizations show how the neural-physics controller modulates the Macro-ADR mechanisms and realizes them through Micro-EDM across task stages (see Movie~S1--S3 in the supplementary materials). The derivation and physical interpretation of each Macro-ADR term, together with additional potential-field visualizations, are provided in the supplementary materials. The physical visualization results are shown in supplementary Movies~S1--S3.  
    }
    \label{fig:density_field_evolution}
\end{figure}

Beyond task-level performance, we examined whether the learned physical parameters produce interpretable density-field evolution across different swarm tasks. This analysis addresses a central question of PhySwarm: whether the neural-physics controller (NPC) merely learns task-specific actions, or instead learns to regulate the physical mechanisms encoded by the Macro-ADR representation. Fig.~\ref{fig:density_field_evolution} shows representative density-field snapshots from the three benchmark scenarios. Across all tasks, the learned parameters reshape the swarm distribution through the same physical channels: advection weights \(\omega\) determine the dominant transport direction, diffusion coefficients \(D\) regulate spatial spreading and local pressure release, and reaction rates \(\lambda\) determine when robots switch between phase-conditioned field responses.

A common pattern emerges from the three scenarios. During exploratory or weakly constrained phases, the learned fields produce broader density distributions, allowing the swarm to cover uncertain regions  (see Fig.~\ref{fig:density_field_evolution}A,D,G). When task-relevant information becomes available, such as a food location, a narrow corridor or a rescue target, the density field becomes more structured and aligns with the corresponding physical field (see Fig.~\ref{fig:density_field_evolution}B,E,H). In phases requiring transport, corridor or relay formation, the density distribution is further reorganized into task-specific spatial manifolds, such as a homing flow, a compressed formation or a base--target communication corridor (see Fig.~\ref{fig:density_field_evolution}C,F,I). These transitions show that multi-stage swarm behavior is realized through continuous modulation of physical fields, rather than through abrupt switching among unrelated hand-designed controllers.

In \textit{Trail-Guided Swarm Foraging}, the density field evolves from exploration-dominated dispersion to resource-directed aggregation and then to nest-directed homing (see Fig.~\ref{fig:density_field_evolution}A--C). This progression reflects a shift in the dominant advection component from exploration to food attraction and finally to nest attraction, while diffusion regulates local congestion around resource and nest regions. In \textit{Formation-Reconfigurable Swarm Navigation},  the density field changes from an isotropic circular formation in open space to an anisotropic linear arrangement inside the narrow corridor (see Fig.~\ref{fig:density_field_evolution}D--F). This indicates that formation reconfiguration is achieved by continuously rebalancing morphology preservation and forward navigation. In \textit{Role-Adaptive Swarm Search and Rescue}, the density field first supports distributed search, then organizes robots along a relay-oriented corridor between the base and target, and finally redirects the response towards the target region (see Fig.~\ref{fig:density_field_evolution}G--I). This demonstrates that role differentiation is expressed as phase-conditioned spatial organization at the density-field level.

Together, these visualizations provide physical evidence for the micro--macro coupling in PhySwarm. Macro-ADR describes the evolution of phase-conditioned density fields, Micro-EDM realizes these fields through executable robot motion and phase switching, and the NPC couples the two levels by learning \(P(t)=\{\omega(t),D(t),\lambda(t)\}\). The same physical parameter interface therefore supports qualitatively different emergent behaviors across tasks, while preserving interpretability in terms of advection, diffusion and phase transition.

\section{Discussion}
\label{sec:discussion}

This study investigates whether multi-stage emergent behaviors in robot swarms can be represented, learned and controlled as transitions over physical fields, rather than as predefined behavioral scripts, task-specific rules or purely black-box policies. The results across trail-guided foraging, formation-reconfigurable navigation and role-adaptive search and rescue support this perspective. Although these scenarios differ in task objectives, environmental constraints and organizational forms, they can all be represented through the same coupled micro--macro structure, in which density evolution, directed transport, diffusion-based spatial regulation and reaction-driven phase transitions jointly shape collective behaviors. This suggests that PhySwarm captures a shared field-level mechanism underlying different swarm emergent behaviors, instead of fitting isolated trajectories or manually enumerating task stages.

The central contribution of PhySwarm is therefore not merely the reproduction of several collective behaviors, but the formulation of swarm emergence as a physics-informed micro--macro modelling and learning problem. At the macroscopic level, Macro-ADR describes how swarm densities evolve under three interpretable mechanisms. The advection term captures directed migration induced by task goals, potential fields, environmental cues and collective information; the diffusion term captures density regulation, coverage, local pressure release and safety-related spatial redistribution; and the reaction term captures transitions among behavioral phases or functional roles. At the microscopic level, Micro-EDM converts these field-level mechanisms into executable robot motion and phase switching. This coupling links continuum-level density dynamics with individual robot execution, allowing emergent behavior to be analysed as both a macroscopic field process and a decentralized control process.

A key feature of this framework is that the physical fields are not switched by manually designed rules. Instead, the neural-physics controller (NPC) learns the time-varying physical parameters \(P(t)=\{\omega(t),D(t),\lambda(t)\}\). The advection weights \(\omega\) determine the contribution of different potential-field bases, the diffusion coefficient \(D\) regulates density-gradient-driven spreading and local smoothing, and the reaction rates \(\lambda\) control transitions among behavioral phases. This parameterization gives the learning policy a physically meaningful action space: the controller does not directly output arbitrary low-level robot commands, but learns how to modulate interpretable physical mechanisms. The physical manifold projection further constrains these parameters to a bounded feasible manifold, preventing unbounded advection, degenerate diffusion and non-physical phase-switching rates. The corresponding boundedness analysis is provided in Sec.~S1.2 of the supplementary materials.

The physics-informed training objective also plays an important role. Without the PINN residual constraint, a multi-agent reinforcement-learning policy may still discover reward-improving behaviors, but the resulting policy can be difficult to interpret, analyse or transfer. By incorporating the Macro-ADR and Micro-EDM residuals into the MAPPO training objective, the learned density evolution and microscopic execution are encouraged to remain consistent with the prescribed physical mechanisms. In this sense, the physical model is not used only as a post-hoc explanation of learned behavior; it actively regularizes the learning process. This is particularly important for multi-stage swarm behaviors, where the same robots must continuously redistribute themselves across spatial patterns and functional roles while satisfying environmental and safety constraints.

The three benchmark scenarios provide complementary evidence for this formulation. Trail-guided swarm foraging demonstrates information-guided emergence: dispersed exploration, resource approach, homing and trail reuse arise from the coupling of resource, nest and trail-information fields with diffusion and phase transitions. Formation-reconfigurable swarm navigation demonstrates geometry-constrained emergence: the swarm maintains a compact formation in open space, adapts its morphology when encountering a narrow corridor and recovers its formation after the constraint is removed. Role-adaptive search and rescue demonstrates task-driven functional differentiation: robots transition among search, response and relay roles according to target confidence, communication quality and local role demand. These behaviors are not generated by separate task-specific controllers. Instead, they are different instantiations of PhySwarm, with scenario-specific field dictionaries, transition graphs and learned parameter trajectories. Additional implementation details, evaluation metrics and extended experimental results are provided in the supplementary materials.

PhySwarm differs from several existing paradigms. Rule-based swarm-control methods are often interpretable, but they require task-specific engineering and become difficult to scale to long-horizon multi-stage behaviors. Classical formation-control and potential-field methods provide useful geometric primitives, but they usually do not describe how behavioral phases emerge, switch or interact at the level of swarm density. Black-box multi-agent reinforcement learning can learn complex policies, but often provides limited mechanistic insight and weak physical guarantees. PhySwarm combines complementary strengths of these approaches: Macro-ADR provides an interpretable field-level representation, Micro-EDM connects this representation to executable robot motion, potential-field bases encode reusable physical primitives, NPC provides adaptive parameter learning, and PINN residuals constrain the learned policy toward physically consistent swarm evolution.

Several limitations remain. First, PhySwarm relies on a continuum-field approximation of the swarm distribution. This approximation is well suited to the Macro-ADR formulation when the swarm provides sufficient spatial samples for density reconstruction. However, for very small swarms, highly sparse deployments or severely fragmented configurations, kernel-based density estimates and the associated PDE residuals may become less reliable. This limitation reflects the finite-sample nature of mapping discrete robot trajectories to continuous density fields, rather than an intrinsic failure of the ADR formulation. It is also consistent with the trends observed in the scalability and fault-tolerance analyses, where reduced population size or robot failures can weaken collective-field regulation and degrade task performance. Second, the ADR formulation provides an interpretable parameterization of swarm dynamics, but it is not unique. The choice of behavioral phases, potential-field bases and reaction structures can affect the compactness, expressiveness and identifiability of the learned model. Third, although the present experiments include both simulation and \textit{e-puck} robot validation, broader evaluation is still needed in heterogeneous swarms, larger workspaces, three-dimensional environments and long-duration field deployments.

Future work may address these limitations along several directions. More robust density estimation and residual-evaluation methods would improve the applicability of PhySwarm to sparse, fragmented or partially observable swarms. Adaptive field dictionaries, neural operators and equation-discovery methods could reduce the reliance on predefined potential fields and phase-transition graphs, allowing parts of the physical structure to be inferred from data. Another important direction is to extend the current homogeneous-swarm formulation to heterogeneous robots with different sensing, mobility and communication capabilities. These extensions would further test whether physics-informed micro--macro learning can serve as a general modeling principle for distributed systems in which local interactions generate large-scale collective dynamics.

\section{Methods}
\label{sec:methods}

PhySwarm consists of the swarm emergent behavior model and the neural-physics controller (NPC). The model overview in Sec.~\ref{sec:model_overview} introduces the physical rationale and the coupled formulation. Here we describe how the framework is instantiated, trained and evaluated in the experiments, while detailed field definitions, theoretical derivations and task-specific configurations are provided in Sec.~S2 of the supplementary materials.

We consider \(N\) robots moving in a bounded two-dimensional task domain \(\Omega\subset\mathbb{R}^{2}\). For robot \(i\in\{1,\ldots,N\}\), \(x_i(t)\in\Omega\) denotes its position at time \(t\), \(v_i(t)\in\mathbb{R}^{2}\) its executed planar velocity, \(O_i(t)\) its local observation, \(h_{i,t-1}\) its recurrent memory state and \(\sigma_i(t)\in\mathcal{S}=\{1,\ldots,M\}\) its behavioral phase. The phase variable represents the current functional state of the robot, such as searching, carrying, morphing, responding or relaying, depending on the task.

At each control step, the NPC predicts a physically constrained parameter set
\begin{equation}
    P_i(t)
    =
    \{\omega_i(t),D_i(t),\lambda_i(t)\},
    \label{eq:methods_parameter_set}
\end{equation}
where \(\omega_i(t)\) controls the composition of advection field bases, \(D_i(t)\) controls diffusion-based density regulation and \(\lambda_i(t)\) controls behavioral phase transitions. These parameters are then used by Micro-EDM to generate executable robot motion and phase switching.

\subsection{Macro-ADR construction}
\label{sec:methods_macro_adr}

The Macro-ADR model requires a continuous approximation of the swarm distribution and task-dependent definitions of the advection, diffusion and reaction components. This subsection describes how these components are constructed in the experiments. Their physical interpretation, field-function examples and theoretical properties are provided in Sec.~S1 of the supplementary materials.

For each behavioral phase \(\sigma\in\mathcal{S}\), we estimate the phase density from robot positions using kernel density estimation~\cite{hamann2008framework}:
\begin{equation}
    \hat{\rho}_{\sigma}(x,t)
    =
    \frac{1}{N h^{d}}
    \sum_{i\in\mathcal{I}_{\sigma}(t)}
    K
    \left(
    \frac{x-x_i(t)}{h}
    \right),
    \label{eq:methods_density_estimation}
\end{equation}
where \(\mathcal{I}_{\sigma}(t)=\{i:\sigma_i(t)=\sigma\}\) is the set of robots currently assigned to phase \(\sigma\), \(K(\cdot)\) is a smooth kernel, \(h>0\) is the kernel bandwidth and \(d\) is the spatial dimension (\(d=2\) in this paper). The normalization by \(N\) makes \(\int_\Omega \hat{\rho}_{\sigma}(x,t)\,dx\) proportional to the fraction of robots in phase \(\sigma\). If an absolute robot-number density is required, the estimate can be rescaled by \(N\). The estimated total density is
\begin{equation}
    \hat{\rho}_{\mathrm{tot}}(x,t)
    =
    \sum_{\sigma=1}^{M}
    \hat{\rho}_{\sigma}(x,t).
    \label{eq:methods_total_density}
\end{equation}
The estimated phase densities \(\hat{\rho}_{\sigma}\) are used to evaluate the Macro-ADR residual, whereas \(\hat{\rho}_{\mathrm{tot}}\) or a task-specific generalized density field is used for local density-gradient compensation in Micro-EDM.

\textbf{The advection component} is constructed from a finite dictionary of physical field bases. For the \(r\)-th basis, we define
\begin{equation}
    b_r(x,t)
    =
    -\nabla \Phi_r(x,t),
    \qquad
    r=1,\ldots,K_b,
    \label{eq:methods_field_basis}
\end{equation}
where \(\Phi_r(x,t)\) is a scalar potential field, \(b_r(x,t)\in\mathbb{R}^{2}\) is the velocity basis induced by its negative gradient and \(K_b\) is the number of field bases. The phase-conditioned advection field is
\begin{equation}
    u_{\sigma}(x,t)
    =
    \sum_{r=1}^{K_b}
    \omega_{\sigma,r}(x,t)b_r(x,t),
    \label{eq:methods_advection_field}
\end{equation}
where \(u_{\sigma}(x,t)\in\mathbb{R}^{2}\) is the advection velocity field for phase \(\sigma\), and \(\omega_{\sigma,r}(x,t)\) is the learned contribution of field basis \(r\). The field dictionary is task dependent and may include target, obstacle, boundary, trail-information, morphology, cohesion, separation, alignment and communication-quality fields. These fields encode reusable physical cues, whereas the learned weights determine their relative contributions.

\textbf{The diffusion component} is constructed from the learned diffusion coefficient \(D_{\sigma}(x,t)\) and the estimated phase density. At the continuum level, the diffusion flux~\cite{philibert2006one} for phase \(\sigma\) is
\begin{equation}
    J_{\mathrm{diff},\sigma}(x,t)
    =
    -
    D_{\sigma}(x,t)
    \nabla \hat{\rho}_{\sigma}(x,t),
    \label{eq:methods_diffusion_flux}
\end{equation}
where \(J_{\mathrm{diff},\sigma}(x,t)\in\mathbb{R}^{2}\) denotes the diffusion flux and \(D_{\sigma}(x,t)\geq0\) denotes the phase-dependent diffusion coefficient. This term is used in the Macro-ADR residual to represent density-gradient-driven spatial regulation. In robot-level execution, the same diffusion mechanism is implemented through the density-gradient compensation term in Micro-EDM. Depending on the scenario, the density field used for local compensation may be the robot density alone or a generalized density field that also includes obstacle, boundary or risk constraints.

\textbf{The reaction component} is constructed from the learned transition rates \(\lambda_{mn}(x,t)\) and a task-specific transition mask. Here, \(\lambda_{mn}(x,t)\geq0\) denotes the local transition rate from phase \(m\) to phase \(n\). We use a conservative reaction matrix \(\Lambda(\lambda)\in\mathbb{R}^{M\times M}\), whose off-diagonal entries encode inflow from other phases and whose diagonal entries encode the corresponding outflow:
\begin{equation}
    \Lambda_{\sigma\eta}(\lambda)
    =
    \left\{
    \begin{array}{ll}
        \lambda_{\eta\sigma}, 
        & \sigma \neq \eta, \\[4pt]
        -\displaystyle\sum_{\ell \neq \sigma}\lambda_{\sigma\ell},
        & \sigma = \eta .
    \end{array}
    \right.
    \label{eq:methods_reaction_matrix}
\end{equation}
Let \( \hat{\rho}(x,t) = [\hat{\rho}_1(x,t),\ldots,\hat{\rho}_M(x,t)]^{\top} \) denote the estimated multi-phase density vector. The macroscopic reaction term for phase \(\sigma\) is
\begin{equation}
    R_{\sigma}(\hat{\rho})
    =
    \left[
    \Lambda(\lambda)\hat{\rho}
    \right]_{\sigma}.
    \label{eq:methods_reaction_term}
\end{equation}
This structure redistributes density among behavioral phases without changing the total robot mass. The derivation of the conservative reaction structure and its mass-conservation property are provided in Sec.~S1 of the supplementary materials.

Together, Eqs.~\eqref{eq:methods_advection_field}, \eqref{eq:methods_diffusion_flux} and \eqref{eq:methods_reaction_term} define the task-specific Macro-ADR instance used in each experiment. Complete field definitions, diffusion-field choices, phase-transition graphs and activation functions are provided in Sec.~S2 of the supplementary materials.

\subsection{Micro-EDM execution}
\label{sec:methods_micro_edm}

During execution, Micro-EDM converts the learned physical fields into robot-level velocity commands. For robot \(i\), the desired planar velocity~\cite{degond1990deterministic} is
\begin{equation}
    v_i^{\mathrm{des}}(t)
    =
    u_{s_i}(x_i,t)
    -
    \frac{
    D_{s_i}(x_i,t)
    }{
    \hat{\rho}_{\mathrm{loc}}(x_i,t)+\varepsilon
    }
    \nabla\hat{\rho}_{\mathrm{loc}}(x_i,t),
    \qquad
    \varepsilon>0.
    \label{eq:methods_micro_edm_velocity}
\end{equation}
Here, \(v_i^{\mathrm{des}}(t)\in\mathbb{R}^{2}\) is the desired planar velocity, \(u_{s_i}(x_i,t)\) is the advection velocity associated with the current behavioral phase \(s_i(t)\), \(\hat{\rho}_{\mathrm{loc}}(x_i,t)\) is the density estimate used for local density-gradient compensation and \(\varepsilon\) avoids numerical singularities in low-density regions. Depending on the scenario, \(\hat{\rho}_{\mathrm{loc}}\) can be a phase-conditioned robot density or a generalized density field that also includes obstacle or boundary constraints.

For differential-drive E-puck robots, the desired planar velocity is converted into translational and angular velocity commands through heading tracking with velocity saturation:
\begin{equation}
\begin{aligned}
    \nu_i^{\mathrm{cmd}}
    & =
    \min
    \left(
    \|v_i^{\mathrm{des}}\|_2,
    \nu_{\max}
    \right),\\
    \varpi_i^{\mathrm{cmd}}
    & =
    \operatorname{clip}
    \left(
    k_{\psi}\,
    \operatorname{wrap}
    \left(
    \angle v_i^{\mathrm{des}}-\psi_i
    \right),
    -\varpi_{\max},
    \varpi_{\max}
    \right),
\end{aligned}
    \label{eq:methods_differential_drive}
\end{equation}
where \(\nu_i^{\mathrm{cmd}}\) is the commanded translational velocity, \(\varpi_i^{\mathrm{cmd}}\) is the commanded angular velocity, \(\psi_i\) is the heading of robot \(i\), \(\nu_{\max}\) and \(\varpi_{\max}\) are actuator limits, \(k_{\psi}>0\) is the heading-control gain, \(\angle v_i^{\mathrm{des}}\) denotes the direction of \(v_i^{\mathrm{des}}\), and \(\operatorname{wrap}(\cdot)\) maps an angle to \((-\pi,\pi]\). The same Micro-EDM formulation is used in simulation and real-robot experiments; platform-specific differences arise from actuator limits, sensing noise and communication constraints.

Behavioral phase transitions are governed by learned transition rates and task-dependent activation functions. A transition from phase \(m\) to phase \(n\) follows
\begin{equation}
    \mathbb{P}
    \left(
    s_i(t+\Delta t)=n
    \mid
    s_i(t)=m
    \right)
    =
    \lambda_{mn}(x_i,t)\chi_{mn}(x_i,t)\Delta t
    +
    o(\Delta t),
    \label{eq:methods_phase_transition}
\end{equation}
where \(\Delta t\) is a small control interval, \(\chi_{mn}(x_i,t)\in[0,1]\) is a task-specific activation function and \(o(\Delta t)\) denotes a higher-order term satisfying \(o(\Delta t)/\Delta t\rightarrow0\) as \(\Delta t\rightarrow0\). The activation function encodes triggering conditions such as food detection, nest arrival, corridor detection, target confidence, communication degradation or task completion.

\subsection{Neural-Physics Controller}
\label{sec:methods_npc}

The NPC maps local observations and temporal memory to physical parameters:
\begin{equation}
    P_i(t)
    =
    F_{\theta}
    \left(
    O_i(t),h_{i,t-1}
    \right) =
    \{\omega_i(t),D_i(t),\lambda_i(t)\},
    \label{eq:methods_npc_mapping}
\end{equation}
where \(F_{\theta}\) denotes the neural controller with trainable parameters \(\theta\). The observation \(O_i(t)\) contains task-relevant local information available to robot \(i\), including neighboring robots, obstacle cues, density estimates, target or food detections, trail information, formation error and communication-quality measurements, depending on the scenario. The network uses a shared observation encoder, a recurrent module for temporal memory and three physical output heads for \(\omega\), \(D\) and \(\lambda\).

To ensure physical feasibility, raw neural outputs are mapped to a bounded parameter manifold. Let \(z_{\omega,i,r}\), \(z_{D,i}\) and \(z_{\lambda,mn,i}\) denote the unconstrained outputs of the advection, diffusion and reaction heads, respectively. We use
\begin{equation}
    \begin{aligned}
        \omega_{i,r}
        & =
        \frac{\exp(z_{\omega,i,r})}
        {\sum_{q=1}^{K_b}\exp(z_{\omega,i,q})},
        \qquad
        r=1,\ldots,K_b,\\
        D_i
        & =
        D_{\min}
        +
        (D_{\max}-D_{\min})
        \operatorname{sigmoid}
        \left(
        z_{D,i}
        \right),\\
        \lambda_{mn,i}
        & =
        A_{mn}
        \lambda_{\max}
        \operatorname{sigmoid}
        \left(
        z_{\lambda,mn,i}
        \right).
    \end{aligned}
\label{eq:methods_reaction_matrix}
\end{equation}
where \(A_{mn}\in\{0,1\}\) is the task-specific phase-transition mask, \(D_{\min}>0\) and \(D_{\max}<\infty\) are the diffusion bounds, and \(\lambda_{\max}<\infty\) is the maximum transition rate. Thus, the advection weights form a non-negative normalized combination of field bases, diffusion coefficients remain positive and bounded, and reaction rates remain non-negative and finite. The boundedness analysis of the induced advection, diffusion and reaction terms is provided in Sec.~S1 of the supplementary materials.

The NPC is trained using a RL--PINN objective. The reinforcement-learning component optimizes task performance, whereas the physics-informed component penalizes deviations from PhySwarm:
\begin{equation}
    \mathcal{L}_{\mathrm{total}}(\theta)
    =
    \mathcal{L}_{\mathrm{RL}}(\theta)
    +
    \eta
    \mathcal{L}_{\mathrm{PINN}}(\theta),
    \label{eq:methods_total_loss}
\end{equation}
where \(\mathcal{L}_{\mathrm{total}}\) is the total training loss, \(\mathcal{L}_{\mathrm{RL}}\) is the MAPPO~\cite{yu2022mappo} loss, \(\mathcal{L}_{\mathrm{PINN}}\) is the physics-informed residual and \(\eta>0\) is the physics-regularization weight. In our implementation, \(\mathcal{L}_{\mathrm{RL}}\) includes the clipped policy loss, value-function loss and entropy regularization following the MAPPO training paradigm~\cite{yu2022mappo}.

The physics-informed loss is
\begin{equation}
    \mathcal{L}_{\mathrm{PINN}}
    =
    \mathcal{L}_{\mathrm{macro}}
    +
    \beta
    \mathcal{L}_{\mathrm{micro}},
    \label{eq:methods_pinn_loss}
\end{equation}
where \(\mathcal{L}_{\mathrm{macro}}\) is the macroscopic Macro-ADR residual, \(\mathcal{L}_{\mathrm{micro}}\) is the microscopic Micro-EDM consistency residual and \(\beta>0\) balances the two residuals. The macroscopic residual measures the deviation of estimated phase-density evolution from the Macro-ADR dynamics:
\begin{equation}
    \mathcal{L}_{\mathrm{macro}}
    =
    \sum_{\sigma=1}^{M}
    \left\|
    \frac{\partial \hat{\rho}_{\sigma}}{\partial t}
    +
    \nabla\cdot
    \left(
    u_{\theta,\sigma}\hat{\rho}_{\sigma}
    \right)
    -
    \nabla\cdot
    \left(
    D_{\theta,\sigma}\nabla \hat{\rho}_{\sigma}
    \right)
    -
    \left[
    \Lambda(\lambda_{\theta})\hat{\rho}
    \right]_{\sigma}
    \right\|_{L^{2}(\Omega\times[0,T])}^{2}.
    \label{eq:methods_macro_loss}
\end{equation}
Here, \(u_{\theta,\sigma}\), \(D_{\theta,\sigma}\) and \(\lambda_{\theta}\) are the advection field, diffusion coefficient and reaction rates induced by the current NPC parameters, respectively. The matrix \(\Lambda(\lambda_{\theta})\) is the conservative reaction matrix induced by the learned transition rates, and \(\|\cdot\|_{L^{2}(\Omega\times[0,T])}\) denotes the space--time \(L^2\) norm over the task domain and training horizon. The microscopic residual aligns the executed or predicted robot velocity with the Micro-EDM velocity:
\begin{equation}
    \mathcal{L}_{\mathrm{micro}}
    =
    \frac{1}{N}
    \sum_{i=1}^{N}
    \left\|
    v_i(t)
    -
    v_i^{\mathrm{des}}(t)
    \right\|_2^2 .
    \label{eq:methods_micro_loss}
\end{equation}
This objective constrains policy search to physically meaningful parameter trajectories while allowing reinforcement learning to optimize task performance. The conditional convergence analysis of the RL--PINN controller is provided in Sec.~S1 of the supplementary materials.

\subsection{Scenario-specific instantiation}
\label{sec:methods_scenarios}

The same PhySwarm implementation is used across all three benchmark tasks in the Results section. In each task, the NPC outputs the same physical parameter set \(P(t)=\{\omega(t),D(t),\lambda(t)\}\), and the learned parameters are executed through the same micro--macro coupling. Thus, the scenarios differ not in the underlying model architecture, but in how PhySwarm is instantiated for each task.

For each scenario, we specify five task-dependent components: 1) the behavioral phase set \(\mathcal{S}\), 2) the physical field dictionary used to construct advection, 3) the phase-transition graph and activation functions used by the reaction term, 4) the density or constraint fields used for diffusion regulation, and 5) the reward function used by MAPPO. These components define the task-specific semantics, whereas the parameter projection, Micro-EDM execution, Macro-ADR residual and RL--PINN training objective remain shared. Detailed field equations, transition masks, activation functions, reward terms, hyperparameters, simulation settings and real-robot implementation details are provided in Sec.~S2 of the supplementary materials.

\subsection{Theoretical guarantees}
\label{sec:methods_theory}

The theoretical analysis supporting PhySwarm is provided in Sec.~S1 of the supplementary materials. We first derive the multi-stage swarm representation from a local conservation principle, formulating swarm behavior as phase-dependent ADR density dynamics. In this formulation, advection, diffusion and reaction respectively represent task-directed migration, density-gradient-driven spatial regulation and transitions among behavioral phases.

We then prove that the physical parameters predicted by the NPC, \(P(t)=\{\omega(t),D(t),\lambda(t)\}\), remain within a bounded feasible parameter manifold and that the induced advection field, diffusion term and reaction term are bounded under standard regularity conditions. We further establish conditional controllability of the Macro-ADR behavior representation by constructing a target density path, matching phase-mass redistribution through reaction rates, deriving the desired advection field from the spatial flux and approximating it using a finite set of potential-field bases.

Finally, we analyze the conditional convergence of the RL--PINN controller. Under the existence of a physically feasible reference trajectory, bounded optimization error and controlled physics residuals, the learned parameter trajectory approximates the ideal trajectory and induces swarm density evolution close to the target emergent-behavior density. Complete definitions, assumptions, theorems and proofs are provided in Secs.~S1 of the supplementary materials.

\subsection{Analysis metric}
\label{sec:methods:analysis-metrics}

To evaluate whether the learned swarm evolution is consistent with the physical structure encoded by PhySwarm, we use the ADR manifold divergence as the main task-independent physical-consistency metric. It measures how closely the empirical swarm density follows the density manifold implied by the learned advection and diffusion fields. We also report one representative task-related metric for each benchmark scenario: cumulative foraging throughput in Fig.~\ref{fig:Foraging}, formation error in Fig.~\ref{fig:navigation}, and the relay-chain connectivity in Fig.~\ref{fig:SAR}. Their detailed mathematical definitions, normalization procedures and additional scenario-specific analyses are provided in Sec.~S2.6 of the supplementary materials.

\textbf{ADR manifold divergence.}
For each behavioral phase \(\sigma\), the learned advection field defines an effective potential \(\Phi_{\sigma}^{\mathrm{eff}}(x,t)\), obtained from the phase-conditioned weighted combination of potential-field bases. Under a quasi-steady zero-flux approximation, the corresponding reference density can be written as a normalized Boltzmann-type distribution~\cite{romanczuk2012active}:
\begin{equation}
    \rho_{\sigma}^{\mathrm{ref}}(x,t)
    =
    \frac{
    \exp\!\left[
    -\Phi_{\sigma}^{\mathrm{eff}}(x,t)/
    D_{\sigma}^{\mathrm{eff}}(t)
    \right]
    \mathbb{I}(x\in\Omega_{\sigma})
    }{
    \int_{\Omega_{\sigma}}
    \exp\!\left[
    -\Phi_{\sigma}^{\mathrm{eff}}(y,t)/
    D_{\sigma}^{\mathrm{eff}}(t)
    \right]dy
    },
    \label{eq:methods_ref_density}
\end{equation}
where \(D_{\sigma}^{\mathrm{eff}}(t)>0\) denotes the effective diffusion strength, \(\Omega_{\sigma}\) is the spatial support associated with phase \(\sigma\), and \(\mathbb{I}(\cdot)\) is the indicator function. The empirical phase density \(\hat{\rho}_{\sigma}^{\mathrm{emp}}(x,t)\) is reconstructed from robot positions using kernel density estimation~\cite{chen2017tutorial}.

We define the ADR manifold divergence as the phase-mass-weighted \(L^2\) discrepancy between the empirical density and the reference density:
\begin{equation}
    E_{\mathrm{ADR}}(t)
    =
    \sum_{\sigma=1}^{M}
    \frac{N_{\sigma}(t)}{N}
    \left\|
    \hat{\rho}_{\sigma}^{\mathrm{emp}}(\cdot,t)
    -
    \rho_{\sigma}^{\mathrm{ref}}(\cdot,t)
    \right\|_{L^{2}(\Omega)}^{2},
    \label{eq:methods_adr_divergence}
\end{equation}
where \(N_{\sigma}(t)\) is the number of robots in phase \(\sigma\), \(N\) is the total number of robots, and \(M\) is the number of behavioral phases. A smaller \(E_{\mathrm{ADR}}(t)\) indicates that the observed swarm distribution is closer to the physical density manifold specified by the learned Macro-ADR fields.

\section{Conclusion}\label{section:5}

We introduced \textit{PhySwarm}, a physics-informed micro--macro framework for modeling and controlling multi-stage emergent behaviors in robot swarms. The framework couples macroscopic advection--diffusion--reaction density dynamics with microscopic equivalent deterministic motion, thereby linking field-level swarm evolution to executable robot actions. A neural-physics controller learns bounded physical parameters \(P(t)=\{\omega(t),D(t),\lambda(t)\}\) from local observations and temporal memory, while macro-scale and micro-scale physics residuals constrain the learned policy to remain consistent with the underlying physical representation.

Across three benchmark scenarios, the same PhySwarm framework generated distinct emergent behaviors without changing the underlying dynamical representation. Experiments showed that the learned advection weights, diffusion coefficients and reaction rates can reshape swarm-density fields, support behavioral phase transitions and produce executable robot motion under different task demands. These results suggest that multi-stage swarm emergence can be represented as physically constrained field evolution, rather than as disconnected task-specific rules or purely black-box policies.

More broadly, PhySwarm provides a route towards interpretable, learnable and controllable swarm intelligence. Future work will extend the framework to sparse and heterogeneous swarms, richer and adaptive field dictionaries, three-dimensional workspaces and long-duration deployments. Another important direction is to discover physical field structures and phase-transition mechanisms directly from data, further reducing reliance on predefined modeling choices while preserving physical consistency and interpretability.

\section*{Data Availability}

The data supporting the findings of this study are available from the corresponding author upon reasonable request during peer review. Upon formal acceptance of the manuscript, the data and codes will be made publicly available in an online repository.

\section*{Code Availability}

The data supporting the findings of this study are available from the corresponding author upon reasonable request during peer review. Upon formal acceptance of the manuscript, the data and codes will be made publicly available in an online repository.

\section*{Acknowledgements}

This work was supported in part by the National Natural Science Foundation of China (NSFC) under Grant 62101029, and in part by the National Science and Technology Major Project - Advanced Materials under Grant 2024ZD0608000.

\section*{Author Contributions}

C.X. conceived the study, proposed and refined the core methodology, supervised the project, and led the overall research. Z.J. contributed to the design of the proposed method and was mainly responsible for implementing the framework. W.Z. led the hardware implementation, experimental design, and experimental validation. S.Q. and Z.D. contributed to the implementation of the models and algorithms. F.Y. implemented the fault-tolerance experiments. Y.S. contributed to model implementation and validation. C.X. supervised the theoretical analysis, experimental validation, and manuscript preparation. Z.J., W.Z., S.Q., Z.D., F.Y., and Y.S. contributed to experimental analysis, result organization, and supplementary materials. J.Z. and C.X. drafted and revised the manuscript with input from all authors. All authors reviewed, discussed, and approved the final manuscript.

\section*{Competing Interests}

All authors declare no financial or non-financial competing interests.

\section*{Figures and Tables}

\noindent
{\bf Fig.1.} PhySwarm couples macroscopic density dynamics with microscopic robot motion through learned physical parameters. \\
{\bf Fig.2.} Trail-guided swarm foraging. \\
{\bf Fig.3.} Formation-reconfigurable swarm navigation. \\
{\bf Fig.4.} Role-adaptive swarm search and rescue. \\
{\bf Fig.5.} Density-field evolution and physical visualization across three multi-stage swarm tasks. 

\section*{Supplementary Materials}

Supplementary materials and movies are available from the project website: \url{https://physwarm.github.io/}.

\vspace{3mm}

\noindent {\bf Section S1: Supplementary Results} 

{\bf Section S1.1.} Physical and theoretical explanations 

{\bf Section S1.2.} Fault tolerance analysis

{\bf Section S1.3.} Scalability analysis 

{\bf Section S1.4.} Versatility analysis 

{\bf Section S1.5.} The boundedness 

{\bf Section S1.6.} Controllability of the model

{\bf Section S1.7.} Convergence of the controller

\noindent {\bf Section S2: Supplementary Methods}

{\bf Section S2.1.} General modeling guidelines 

{\bf Section S2.2.} Examples of typical fields

{\bf Section S2.3.} Scenario-specific model implementation

{\bf Section S2.4.} NPC network implementation  

{\bf Section S2.5.} Experimental arena setup  

{\bf Section S2.6.} Supplementary evaluation metrics

{\bf Section S2.7.} Baseline and ablation methods 

\noindent {\bf Section S3: Supplementary Note} 

{\bf Section S3.1.} Emergent Behavior v.s. Collective Behavior \\
{\it Figures S1--S18 and Tables S1--S11 are included in Secs.~S1-S3.}\\

\noindent {\it Movies S1--S12 provide supplementary demonstrations of the PhySwarm framework, including density-field visualizations, fault-tolerance and scalability tests, and real-robot validations. Each movie illustrates the corresponding swarm behavior together with its physical interpretation when applicable.}\\

\noindent {\bf Movie S1.} Density-field evolution and physical visualization of Trail-Guided Swarm Foraging.\\
{\bf Movie S2.} Density-field evolution and physical visualization of Formation-Reconfigurable Swarm Navigation.\\
{\bf Movie S3.} Density-field evolution and physical visualization of Role-Adaptive Swarm Search and Rescue.\\

\noindent {\bf Movie S4.} Fault-tolerance verification in simulation for Trail-Guided Swarm Foraging.\\
{\bf Movie S5.} Fault-tolerance verification in simulation for Formation-Reconfigurable Swarm Navigation.\\
{\bf Movie S6.} Fault-tolerance verification in simulation for Role-Adaptive Swarm Search and Rescue.\\

\noindent {\bf Movie S7.} Scalability verification in simulation for Trail-Guided Swarm Foraging.\\
{\bf Movie S8.} Scalability verification in simulation for Formation-Reconfigurable Swarm Navigation.\\
{\bf Movie S9.} Scalability verification in simulation for Role-Adaptive Swarm Search and Rescue.\\

\noindent {\bf Movie S10.} Real-robot validation of Trail-Guided Swarm Foraging.\\
{\bf Movie S11.} Real-robot validation of Formation-Reconfigurable Swarm Navigation.\\
{\bf Movie S12.} Real-robot validation of Role-Adaptive Swarm Search and Rescue.\\






\clearpage

\newtheorem{problem}{Problem}
\newtheorem{theorem}{Theorem}
\newtheorem{remark}{Remark}
\newtheorem{assumption}{Assumption}
\newtheorem{lemma}[theorem]{Lemma}
\newtheorem{definition}{Definition}
\newcommand{\cmnt}[1]{\textcolor{red}{\bf[#1]}}
\newcommand\topstrut[1][1.2ex]{\setlength\bigstrutjot{#1}{\bigstrut[t]}}
\newcommand\botstrut[1][0.9ex]{\setlength\bigstrutjot{#1}{\bigstrut[b]}}
\newcommand{\RN}[1]{%
  \textup{\uppercase\expandafter{\romannumeral#1}}%
}

\makeatletter

\setcounter{section}{0}
\setcounter{subsection}{0}
\setcounter{subsubsection}{0}
\setcounter{equation}{0}
\setcounter{figure}{0}
\setcounter{table}{0}

\renewcommand{\thesection}{S\arabic{section}}
\renewcommand{\thesubsection}{\thesection.\arabic{subsection}}
\renewcommand{\thesubsubsection}{\thesubsection.\arabic{subsubsection}}

\renewcommand{\thefigure}{S\@arabic\c@figure}
\renewcommand{\thetable}{S\@arabic\c@table}
\renewcommand{\theequation}{S\@arabic\c@equation}
\renewcommand{\theproblem}{S\@arabic\c@problem}
\renewcommand{\thetheorem}{S\@arabic\c@theorem}
\newtheorem{\thelemma}{S\@arabic\c@lemma}
\renewcommand{\theassumption}{S\@arabic\c@assumption}
\renewcommand{\theremark}{S\@arabic\c@remark}
\makeatother

\setcounter{MaxMatrixCols}{15}
\setcounter{figure}{0}
\setcounter{equation}{0}

\begin{titlepage}
\thispagestyle{empty}

\begin{center}

\vspace*{12mm}

{\Large Supplementary Materials for\par}

\vspace{6mm}

{\large\textsc{
\textbf{Physics-Informed Modeling and Control of\\
Emergent Behaviors in Robot Swarms}
}\par}

\vspace{12mm}

{\large
Zixuan Jin$^1$, Wenzhuo Zhang$^1$, Shuxian Quan$^2$, Zirui Dong$^1$, \\
Fangwen Ye$^1$, Yuchen Shi$^1$, Cheng Xu$^{1,2}$\textsuperscript{\Letter}
\par}

\vspace{8mm}

{\large 
$^1$School of Computer and Communication Engineering,\\
University of Science and Technology Beijing, Beijing, China
\par}

\vspace{3mm}

{\large
$^2$Shunde Innovation School,\\
University of Science and Technology Beijing, Guangdong, China
\par}

\vspace{3mm}

{\large
\textsuperscript{\Letter}Correspondence: \url{xucheng@ustb.edu.cn}
\par}

\end{center}

\end{titlepage}

\clearpage
\tableofcontents
\clearpage

\addtocontents{toc}{\protect\setcounter{tocdepth}{3}}
\section{Supplementary Results}

\subsection{Physical and theoretical explanations}

This section provides the physical interpretation and theoretical foundation of the proposed swarm behavior representation. We first introduce the notation, boundary condition and conservative reaction structure used to model multi-stage swarm behaviors as phase-dependent density fields in a bounded task domain. We then explain how the three terms in the ADR equation correspond to interpretable swarm mechanisms: advection represents directed migration induced by task goals, potential fields and local interactions; diffusion represents density regulation, coverage and safety-driven spreading; and reaction represents transitions among behavioral phases or functional roles. These analyses show that the proposed model is not only a descriptive field representation, but also a physically constrained and learnable dynamical framework for multi-stage emergent behaviors.

\subsubsection{Notations and assumptions}

To describe multi-stage emergent behaviors in robot swarms, we model the swarm as a continuous density field evolving in a bounded task domain. Let
\(\Omega\subset\mathbb{R}^{d}\) denote the task space, where \(d=2\) corresponds to planar robot motion. The variables \(x\in\Omega\) and \(t\in[0,T]\) denote spatial position and time, respectively. Unlike single-stage collective motion, multi-stage emergent behaviors~\cite{elamvazhuthi2019bilinear} usually involve continuous transitions among multiple behavioral phases. We therefore define a finite set of behavioral phases as \(\mathcal{S}=\{1,2,\ldots,M\}\), where each \(\sigma\in\mathcal{S}\) represents a macroscopic behavioral phase, such as exploration, aggregation, etc.

For each behavioral phase \(\sigma\), let \(\rho_{\sigma}(x,t)\geq 0\) denote the density of robots in phase \(\sigma\) at position \(x\) and time \(t\). The multi-phase density field~\cite{elamvazhuthi2019bilinear,zhu2007asymptotic,zhu2009strong} is written as
\begin{equation}
    \rho(x,t)=
    \left[
    \rho_{1}(x,t),
    \rho_{2}(x,t),
    \ldots,
    \rho_{M}(x,t)
    \right]^{T}.
\end{equation}
The total swarm density is
\begin{equation}
    \rho_{\mathrm{tot}}(x,t)
    =
    \sum_{\sigma=1}^{M}\rho_{\sigma}(x,t),
\end{equation}
and the total swarm mass is
\begin{equation}
    \mathcal{M}(t)
    =
    \sum_{\sigma=1}^{M}
    \int_{\Omega}\rho_{\sigma}(x,t)\,dx.
\end{equation}
When \(\rho\) is interpreted as a probability density, one typically has \(\mathcal{M}(t)=1\). When \(\rho\) is interpreted as a robot number density, \(\mathcal{M}(t)\) represents the normalized total number of robots.

This leads to the following multi-stage advection--diffusion--reaction equation (Eq.~(1) in the main paper):
\begin{equation}
    \frac{\partial \rho_{\sigma}}{\partial t}
    +
    \nabla\cdot
    \left(
    u_{\sigma}\rho_{\sigma}
    \right)
    =
    \nabla\cdot
    \left(
    D_{\sigma}\nabla\rho_{\sigma}
    \right)
    +
    R_{\sigma}(\rho),
    \qquad
    \sigma=1,\ldots,M.
    \label{eq:multi_stage_adr_cn}
\end{equation}
Here, \(u_{\sigma}(x,t)\) is the advection velocity field of phase \(\sigma\), \(D_{\sigma}(x,t)\geq 0\) is the diffusion coefficient, and \(R_{\sigma}(\rho)\) is the reaction term induced by transitions among behavioral phases.

Robot swarms usually operate in bounded task regions, such as closed simulation arenas, or workspaces constrained by obstacles and task boundaries (see Sec.~\ref{sec:arena} for descriptions of the arenas used in this work). Let \(\partial\Omega\) denote the boundary of the task domain, and let \(\mathbf{n}\) be the outward unit normal vector on \(\partial\Omega\). For behavioral phase \(\sigma\), the total spatial flux~\cite{keller1971model} is defined as
\begin{equation}
    J_{\sigma}
    =
    J_{\mathrm{adv},\sigma}
    +
    J_{\mathrm{diff},\sigma}
    =
    u_{\sigma}\rho_{\sigma}
    -
    D_{\sigma}\nabla\rho_{\sigma},
    \label{eq:total_flux_for_boundary_cn}
\end{equation}
where the advective flux \(J_{\mathrm{adv},\sigma}=u_{\sigma}\rho_{\sigma}\) describes deterministic migration driven by task objectives and potential fields, and the diffusive flux \(J_{\mathrm{diff},\sigma}=-D_{\sigma}\nabla\rho_{\sigma}\) describes density-gradient-driven spatial spreading and pressure release. The reaction term \(R_{\sigma}(\rho)\) describes density exchange among behavioral phases.

\begin{assumption}[Zero-flux boundary condition~\cite{stewart1980generation}]
\label{assump:zero_flux}
For each behavioral phase \(\sigma=1,\ldots,M\), the net flux across the task-domain boundary is zero:
\begin{equation}
    \mathbf{n}\cdot J_{\sigma}
    =
    \mathbf{n}\cdot
    \left(
    u_{\sigma}\rho_{\sigma}
    -
    D_{\sigma}\nabla\rho_{\sigma}
    \right)
    =
    0,
    \qquad
    x\in\partial\Omega.
    \label{eq:zero_flux_boundary_cn}
\end{equation}
\end{assumption}

\begin{remark}
This assumption states that robots may undergo directed migration and density-driven spreading inside the task domain, but they do not leave the workspace through its boundary. Therefore, the advection and diffusion terms only redistribute the swarm within \(\Omega\), and do not cause loss of total swarm mass through the boundary.
\end{remark}

\begin{assumption}[Conservative reaction structure~\cite{seabrook2023tutorial}]
\label{assump:cons_react}
The reaction term \(R_{\sigma}(\rho)\) describes transitions of robots among behavioral phases, rather than the creation or disappearance of robots. Specifically, we use the following conservative inflow--outflow structure:
\begin{equation}
    R_{\sigma}(\rho)
    =
    \sum_{\eta\neq\sigma}
    \lambda_{\eta\sigma}\rho_{\eta}
    -
    \sum_{\ell\neq\sigma}
    \lambda_{\sigma\ell}\rho_{\sigma},
    \qquad
    \lambda_{mn}\geq 0.
    \label{eq:conservative_reaction_structure_cn}
\end{equation}
Here, \(\lambda_{mn}\) denotes the transition rate from behavioral phase \(m\) to behavioral phase \(n\) per unit time~\cite{zhu2007asymptotic,zhu2009strong}. In Eq.~\eqref{eq:conservative_reaction_structure_cn}, the first term represents the density flowing into phase \(\sigma\) from other phases, and the second term represents the density flowing out of phase \(\sigma\) to other phases. Thus, \(R_{\sigma}(\rho)\) is the net inflow of phase \(\sigma\).

This structure further satisfies the global conservation condition~\cite{giovangigli2012multicomponent}
\begin{equation}
    \sum_{\sigma=1}^{M}R_{\sigma}(\rho)=0.
    \label{eq:reaction_sum_zero_cn}
\end{equation}
That is, the reaction term only changes the distribution of robots among behavioral phases, while preserving the total number of robots.
\end{assumption}

\begin{remark}
The conservative reaction structure is consistent with behavioral phase switching in robot swarms. Each transition from phase \(m\) to phase \(n\) appears as a loss term in the source phase and a gain term in the target phase. When all behavioral phases are summed, these paired gain and loss terms cancel out. This structure is equivalent to the generator-matrix form of a continuous-time Markov jump process~\cite{seabrook2023tutorial,zhu2007asymptotic,zhu2009strong}, where \(\lambda_{mn}\) controls the transition intensity between behavioral phases rather than the creation or removal of robots.
\end{remark}

\begin{theorem}[Local conservation relation and global swarm mass conservation~\cite{giovangigli2012multicomponent}]
\label{thm:local_and_global_mass_conservation_cn}
Consider a multi-stage swarm density field defined on a bounded task domain \(\Omega\subset\mathbb{R}^{d}\):
\[
    \rho(x,t)
    =
    [\rho_1(x,t),\rho_2(x,t),\ldots,\rho_M(x,t)]^{\top}.
\]
If each behavioral phase \(\sigma=1,\ldots,M\) satisfies a local conservation relation over any control volume \(V\subset\Omega\), then its local differential form is
\begin{equation}
    \frac{\partial \rho_{\sigma}}{\partial t}
    +
    \nabla\cdot J_{\sigma}
    =
    R_{\sigma}(\rho),
    \qquad
    \sigma=1,\ldots,M.
    \label{eq:local_conservation_cn}
\end{equation}
Moreover, if the spatial flux takes the ADR flux decomposition in Eq.~\eqref{eq:total_flux_for_boundary_cn}, then Eq.~\eqref{eq:local_conservation_cn} is equivalent to the multi-stage Macro-ADR system
\begin{equation}
    \frac{\partial \rho_{\sigma}}{\partial t}
    +
    \nabla\cdot
    \left(
    u_{\sigma}\rho_{\sigma}
    \right)
    =
    \nabla\cdot
    \left(
    D_{\sigma}\nabla\rho_{\sigma}
    \right)
    +
    R_{\sigma}(\rho).
    \label{eq:adr_for_mass_conservation_cn}
\end{equation}
If the system satisfies the zero-flux boundary condition in Assumption~\ref{assump:zero_flux} and the reaction term satisfies the conservative reaction structure in Assumption~\ref{assump:cons_react}, then the total swarm mass is conserved:
\begin{equation}
    \frac{d}{dt}
    \sum_{\sigma=1}^{M}
    \int_{\Omega}
    \rho_{\sigma}(x,t)\,dx
    =
    0.
    \label{eq:mass_conservation_prop_cn}
\end{equation}
\end{theorem}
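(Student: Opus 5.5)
The plan has three parts, in the order of the statement.

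\textbf{Step 1: local conservation form.} I would start from the integral balance for each phase over an arbitrary control volume $V\subset\Omega$ with smooth boundary:
\[
\frac{d}{dt}\int_V\rho_\sigma\,dx=-\int_{\partial V}\mathbf{n}\cdot J_\sigma\,dS+\int_V R_\sigma(\rho)\,dx .
\]
This says that the mass of phase $\sigma$ in $V$ changes only through flux across $\partial V$ and through phase conversion inside $V$.

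Assume enough regularity: $\rho_\sigma\in C^1$ in $t$, $J_\sigma\in C^1$ in $x$, and $R_\sigma$ continuous. Then I would
\begin{itemize}
\item differentiate under the integral,
\item convert the surface term with the divergence theorem, giving
\[
\int_V\Bigl(\partial_t\rho_\sigma+\nabla\cdot J_\sigma-R_\sigma(\rho)\Bigr)dx=0\quad\text{for every }V,
\]
\item apply the standard localization argument: if the continuous integrand were nonzero at some point, it would keep one sign on a small ball around it, and integrating over that ball would contradict the identity.
\end{itemize}
This gives Eq.~\eqref{eq:local_conservation_cn}.

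\textbf{Step 2: equivalence with Macro-ADR.} I would substitute the flux decomposition Eq.~\eqref{eq:total_flux_for_boundary_cn} and use linearity of the divergence:
\[
\nabla\cdot J_\sigma=\nabla\cdot(u_\sigma\rho_\sigma)-\nabla\cdot(D_\sigma\nabla\rho_\sigma).
\]
Moving the diffusion term to the right-hand side gives Eq.~\eqref{eq:adr_for_mass_conservation_cn}. Each step is reversible, so the two forms are equivalent.

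\textbf{Step 3: global mass conservation.} I would sum Eq.~\eqref{eq:local_conservation_cn} over $\sigma$ and integrate over $\Omega$, exchanging $d/dt$ with the finite sum and the integral:
\[
\frac{d}{dt}\mathcal{M}(t)=-\sum_\sigma\int_{\partial\Omega}\mathbf{n}\cdot J_\sigma\,dS+\int_\Omega\sum_\sigma R_\sigma(\rho)\,dx .
\]
The two terms vanish for separate reasons.
\begin{itemize}
\item The boundary term is zero by the zero-flux condition in Assumption~\ref{assump:zero_flux}.
\item The reaction term is zero by Eq.~\eqref{eq:reaction_sum_zero_cn}. I would verify that identity directly. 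Summing Eq.~\eqref{eq:conservative_reaction_structure_cn} over $\sigma$ gives
\[
\sum_{\sigma}\sum_{\eta\ne\sigma}\lambda_{\eta\sigma}\rho_\eta-\sum_\sigma\sum_{\ell\ne\sigma}\lambda_{\sigma\ell}\rho_\sigma .
\]
Relabelling the first double sum as $(\eta,\sigma)\to(\sigma,\ell)$ shows it equals the second, so the difference is zero pointwise. This holds even when $\lambda$ depends on $(x,t)$, because the cancellation is pointwise in $(x,t)$.
\end{itemize}

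\textbf{Main obstacle.} The only delicate part is regularity. The localization in Step 1 and the boundary integral in Step 3 need $\partial\Omega$ Lipschitz and traces of $J_\sigma$ well defined. I would state these as standing smoothness hypotheses, namely classical solutions with $u_\sigma,D_\sigma,\lambda$ bounded and continuous. For weak solutions, I would instead test against $\varphi\equiv1$, with zero flux imposed as the natural boundary condition.
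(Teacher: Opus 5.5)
Your proposal is correct and follows essentially the same route as the paper: an integral balance over an arbitrary control volume, the divergence theorem and arbitrariness of $V$ to get the local form, direct substitution of the ADR flux, and then integration over $\Omega$, where the zero-flux condition and $\sum_{\sigma}R_{\sigma}(\rho)=0$ eliminate the two terms. The differences are additions rather than a different argument: your explicit relabeling check of $\sum_{\sigma}R_{\sigma}=0$ (which the paper simply takes from Assumption~\ref{assump:cons_react}) and your stated regularity hypotheses fill in details the paper leaves implicit.
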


\begin{proof}[\textbf{Proof of Theorem~\ref{thm:local_and_global_mass_conservation_cn}:}]
We first prove the \textit{local conservation relation}. For any control volume \(V\subset\Omega\), the local mass balance of behavioral phase \(\sigma\) can be written as~\cite{sun2021lie,giovangigli2012multicomponent}
\begin{equation}
    \frac{d}{dt}
    \int_V \rho_{\sigma}(x,t)\,dx
    =
    -
    \int_{\partial V}
    J_{\sigma}\cdot\mathbf{n}\,dS
    +
    \int_V
    R_{\sigma}(\rho)\,dx.
    \label{eq:integral_local_balance_cn}
\end{equation}
By the divergence theorem~\cite{chandrasekhar1943stochastic,cosner2014reaction},
\begin{equation}
    \int_{\partial V}
    J_{\sigma}\cdot\mathbf{n}\,dS
    =
    \int_V
    \nabla\cdot J_{\sigma}\,dx.
\end{equation}
Therefore,
\begin{equation}
    \int_V
    \left(
    \frac{\partial \rho_{\sigma}}{\partial t}
    +
    \nabla\cdot J_{\sigma}
    -
    R_{\sigma}(\rho)
    \right)dx
    =
    0.
\end{equation}
Since the control volume \(V\) is arbitrary, we obtain
\begin{equation}\label{eq:S15}
    \frac{\partial \rho_{\sigma}}{\partial t}
    +
    \nabla\cdot J_{\sigma}
    =
    R_{\sigma}(\rho),
\end{equation}
which is Eq.~\eqref{eq:local_conservation_cn}.

Substituting the ADR flux definition in Eq.~\eqref{eq:total_flux_for_boundary_cn} into Eq.~\eqref{eq:local_conservation_cn} gives
\begin{equation}
    \frac{\partial \rho_{\sigma}}{\partial t}
    +
    \nabla\cdot
    \left(
    u_{\sigma}\rho_{\sigma}
    -
    D_{\sigma}\nabla\rho_{\sigma}
    \right)
    =
    R_{\sigma}(\rho).
\end{equation}
Rearranging terms yields
\begin{equation}
    \frac{\partial \rho_{\sigma}}{\partial t}
    +
    \nabla\cdot
    \left(
    u_{\sigma}\rho_{\sigma}
    \right)
    =
    \nabla\cdot
    \left(
    D_{\sigma}\nabla\rho_{\sigma}
    \right)
    +
    R_{\sigma}(\rho),
\end{equation}
which is Eq.~\eqref{eq:adr_for_mass_conservation_cn}.

We next prove global swarm mass conservation. Define the total swarm mass as
\begin{equation}
    \mathcal{M}(t)
    =
    \sum_{\sigma=1}^{M}
    \int_{\Omega}
    \rho_{\sigma}(x,t)\,dx.
    \label{eq:total_mass_for_proof_cn}
\end{equation}
Taking the time derivative and using the local conservation form in Eq.~\eqref{eq:local_conservation_cn}, we have
\begin{equation}\label{eq:mass_derivative_step_cn}
    \frac{d\mathcal{M}}{dt}
    =
    \sum_{\sigma=1}^{M}
    \int_{\Omega}
    \frac{\partial \rho_{\sigma}}{\partial t}
    \,dx
    =
    -
    \sum_{\sigma=1}^{M}
    \int_{\Omega}
    \nabla\cdot J_{\sigma}\,dx
    +
    \sum_{\sigma=1}^{M}
    \int_{\Omega}
    R_{\sigma}(\rho)\,dx.
\end{equation}
By the divergence theorem and the zero-flux boundary condition in Assumption~\ref{assump:zero_flux}, for each behavioral phase \(\sigma\),
\begin{equation}
    \int_{\Omega}
    \nabla\cdot J_{\sigma}\,dx
    =
    \int_{\partial\Omega}
    \mathbf{n}\cdot J_{\sigma}\,dS
    =
    0.
    \label{eq:boundary_flux_zero_cn}
\end{equation}
On the other hand, by the conservative reaction structure in Assumption~\ref{assump:cons_react}, the reaction terms only redistribute mass among behavioral phases and cancel when summed over all phases, as stated in Eq.~\eqref{eq:reaction_sum_zero_cn}. Hence,
\begin{equation}
    \sum_{\sigma=1}^{M}
    \int_{\Omega}
    R_{\sigma}(\rho)\,dx
    =
    0.
    \label{eq:reaction_sum_zero_cn_jifen}
\end{equation}
Substituting Eqs.~\eqref{eq:boundary_flux_zero_cn} and \eqref{eq:reaction_sum_zero_cn_jifen} into Eq.~\eqref{eq:mass_derivative_step_cn} gives
\begin{equation}
    \frac{d\mathcal{M}}{dt}=0.
\end{equation}
Therefore,
\begin{equation}
    \mathcal{M}(t)=\mathcal{M}(0),
    \qquad
    t\in[0,T].
\end{equation}
\end{proof}

\begin{remark}[Physical interpretation]
Theorem~\ref{thm:local_and_global_mass_conservation_cn} provides the conservation basis of the multi-stage Macro-ADR model. The local conservation relation (Eq.~\eqref{eq:S15}) shows that the density change of each behavioral phase is jointly determined by the spatial flux \(J_{\sigma}\) and the phase-transition term \(R_{\sigma}(\rho)\). The zero-flux boundary condition rules out non-physical loss of robots through the task-domain boundary, while the conservative reaction structure ensures that behavioral phase transitions only redistribute robots among phases without changing the total number of robots. Therefore, the total swarm mass remains finite and conserved.
\end{remark}

\clearpage

\subsubsection{The physical explanation of the advection term}

The advection term describes deterministic robot migration induced by task objectives, environmental constraints and inter-agent interactions. In PhySwarm, robots do not directly execute unconstrained neural actions. Instead, their motion directions are generated from weighted combinations of physically interpretable field bases. For robot \(i\), the microscopic advection velocity is written as
\begin{equation}
    v_{\mathrm{adv},i}(t)
    =
    \sum_{r=1}^{K_b}
    \omega_{i,r}(t)b_r(x_i,t),
    \qquad
    b_r(x,t)=-\nabla\Phi_r(x,t),
    \label{eq:micro_advection_velocity}
\end{equation}
where \(\Phi_r(x,t)\) is the \(r\)-th potential field, \(b_r(x,t)\) is the velocity basis induced by its negative gradient, and \(K_b\) is the number of field bases. These field bases encode reusable physical cues, including target attraction, obstacle repulsion, boundary constraints, trail or information guidance, formation shaping, cohesion, separation, alignment and connectivity maintenance. The weights \(\omega_{i,r}(t)\) are predicted by the Neural-Physics Controller and dynamically regulate the contribution of each field basis to the current motion. Thus, advection in PhySwarm is not an arbitrary velocity output, but a phase- and context-dependent composition of physically meaningful transport directions. Detailed examples of the field bases are provided in Supplementary Materials Sec.~\ref{fields_example}.

At the macroscopic level, the advection velocity field associated with behavioral phase \(\sigma\) is denoted by \(u_{\sigma}(x,t)\). If the density of this phase is \(\rho_{\sigma}(x,t)\), the advective flux generated by deterministic migration is
\begin{equation}
    J_{\mathrm{adv},\sigma}
    =
    u_{\sigma}\rho_{\sigma}.
    \label{eq:advective_flux}
\end{equation}
This flux represents the number of phase-\(\sigma\) robots crossing a unit area per unit time due to deterministic motion. According to the local conservation relation in Theorem~\ref{thm:local_and_global_mass_conservation_cn}, the temporal change of density is given by the negative divergence of the outgoing flux:
\begin{equation}
    \frac{\partial \rho_{\sigma}}{\partial t}
    +
    \nabla\cdot J_{\mathrm{adv},\sigma}
    =
    0.
\end{equation}
Substituting Eq.~\eqref{eq:advective_flux} into this relation gives
\begin{equation}
    \frac{\partial \rho_{\sigma}}{\partial t}
    =
    -
    \nabla\cdot
    \left(
    u_{\sigma}\rho_{\sigma}
    \right).
    \label{eq:advection_term}
\end{equation}
Eq.~\eqref{eq:advection_term} gives the origin of the advection term in the Macro-ADR equation. Physically, the advection term describes deterministic transport of robots under task objectives, environmental constraints and inter-agent interactions. If more robots are transported out of a local region than into it, the density in that region decreases; conversely, if more robots enter the region than leave it, the local density increases. Therefore, the advection term mainly accounts for directed swarm transport, goal-driven migration, collective formation movement and other task-oriented collective motions.

\begin{figure}[htbp]
    \centering
    \includegraphics[width=\linewidth]{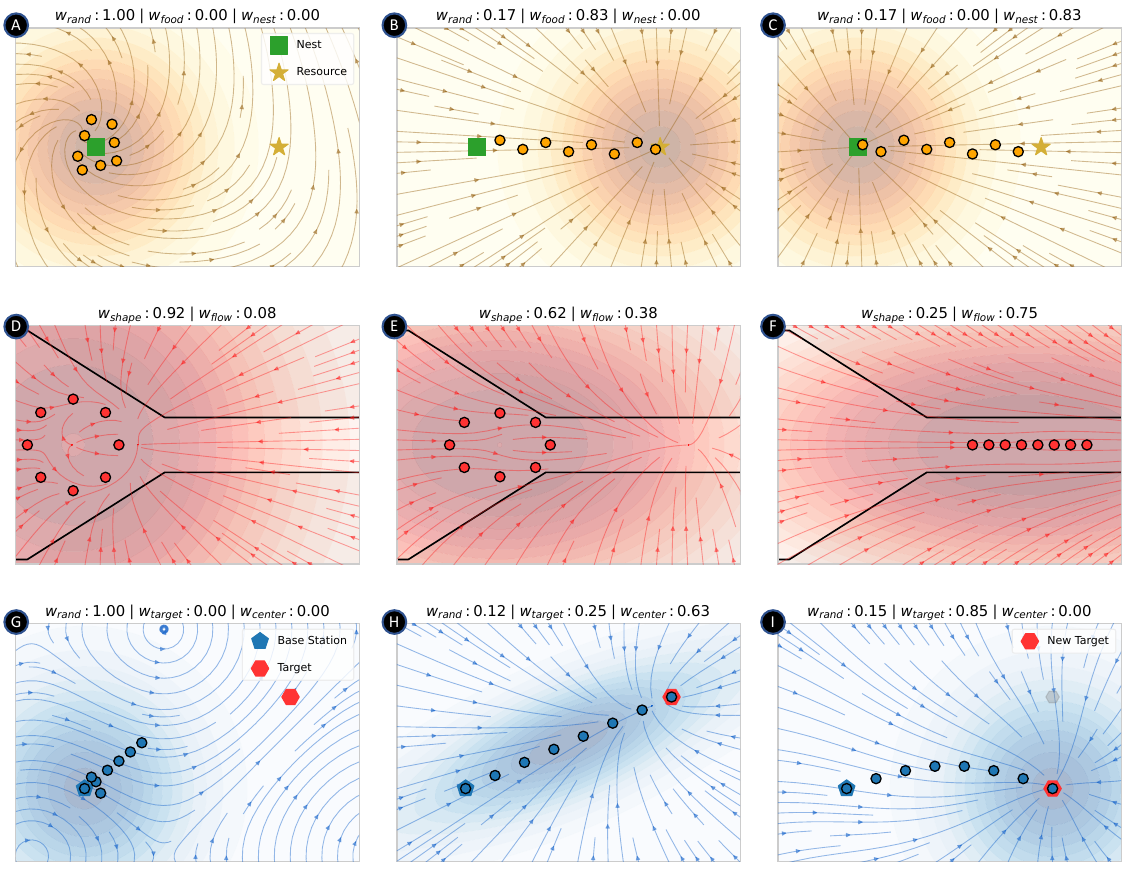} 
    \caption{
\textbf{Representative advection-field topologies across tasks and behavioral phases.}
\textbf{A--C,} Trail-Guided Swarm Foraging: the dominant advection mechanism shifts from random exploration (\textbf{A}) to food attraction (\textbf{B}) and nest-directed homing (\textbf{C}). 
\textbf{D--F,} Formation-Reconfigurable Swarm Navigation: the field changes from isotropic gathering (\textbf{D}) to corridor-induced squeezing (\textbf{E}) and anisotropic transit (\textbf{F}). 
\textbf{G--I,} Role-Adaptive Swarm Search and Rescue: the field changes from random locating (\textbf{G}) to base--target relay bridging (\textbf{H}) and target tracking after reconfiguration (\textbf{I}). 
Background contours indicate scalar potential landscapes generated by weighted physical field bases, and streamlines indicate the deterministic velocity fields induced by the negative potential gradients. The weights shown above each panel are representative examples of the advection parameters learned or selected by the Neural-Physics Controller; full field definitions and scenario-specific parameter settings are provided in Sec.~\ref{sec:trail_guided_swarm_foraging_cn} of the supplementary materials.
}
    \label{fig:adv}
\end{figure}

Fig.~\ref{fig:adv} illustrates the field structures of the advection term in three representative swarm tasks and how these structures are modulated across behavioral phases. The purpose of this figure is not to present a complete solution process of the Macro-ADR system, but to provide an intuitive view of how, in the PhySwarm, the advection term generates phase-dependent deterministic transport directions through weighted combinations of physical potential-field bases. The learned NPC weights \(\omega\) determine the relative contribution of these field bases at different behavioral stages.

For \textit{Trail-Guided Swarm Foraging}, the advection field is mainly composed of a random-exploration field, a food-attraction field and a nest-attraction field. In Phase I, a relatively high \(\omega_{\mathrm{rand}}\) produces weakly directional exploratory streamlines, supporting spatial search in an initially unknown area. In Phase II, \(\omega_{\mathrm{food}}\) increases, and the potential-field topology becomes oriented towards the food location. The streamlines gradually converge near the resource site, inducing aggregation around the food region and enabling subsequent pick-up events. In Phase III, \(\omega_{\mathrm{nest}}\) becomes the dominant advection component, reorienting the velocity field from the food region towards the nest and guiding food-carrying robots back to the nest. Thus, exploration, resource aggregation and homeward transport in the foraging process are represented by phase-dependent modulation of a shared advection-field dictionary, rather than by independent handcrafted state controllers.

For \textit{Formation-Reconfigurable Swarm Navigation}, the advection field is primarily shaped by a morphology-preserving field and a forward-flow field. In Phase I, a high \(\omega_{\mathrm{shape}}\) and a low \(\omega_{\mathrm{flow}}\) support an approximately isotropic aggregation or circular formation in open space. In Phase II, after the robots perceive the corridor constraint, the contribution of the forward-flow field increases, while the morphology field gradually shifts from circular formation maintenance to corridor-compatible deformation. The streamlines are consequently reorganized along the corridor direction. In Phase III, \(\omega_{\mathrm{flow}}\) is further strengthened, and the swarm is guided into an anisotropic arrangement suitable for narrow-corridor traversal. This process shows that formation reconfiguration can be interpreted as a continuous trade-off between morphology-preserving potentials and environment-guided flow fields. The swarm does not switch abruptly between fixed formations, but adapts its morphology through advection-field reconstruction in constrained space.

For \textit{Role-Adaptive Swarm Search and Rescue}, the advection field is composed of a random-exploration field, a target-attraction field and a relay or centre-constraining field. In Phase I, before the target is detected, \(\omega_{\mathrm{rand}}\) dominates and the swarm maintains a dispersed search pattern. In Phase II, after target information is locally sensed and propagated, \(\omega_{\mathrm{center}}\) and \(\omega_{\mathrm{target}}\) are activated. The velocity field begins to form a connective structure between the base and the target, guiding a subset of robots to organize along the relay axis and supporting the establishment of a communication or response chain. In Phase III, when the target location changes or dynamic tracking is required, the target-attraction field reorients the streamlines towards the new target region, allowing the swarm to adjust its spatial organization and re-establish response or relay structures under the updated task geometry. These results indicate that role adaptation does not rely on a predefined fixed role assignment. Instead, spatial differentiation among searchers, responders and relays is induced at the density-field level through dynamic weighting of the exploration, target and relay-related fields.

Overall, Fig.~\ref{fig:adv} explains the role of the advection term in PhySwarm from the perspective of potential-field topology. The advection term converts high-level task semantics, including task objectives, environmental constraints, swarm morphology and communication requirements, into executable velocity fields. Different tasks and behavioral phases correspond to different combinations of potential-field weights, which are learned by the NPC and jointly determine the multi-stage evolution of swarm density together with the diffusion and reaction terms. Detailed field-function definitions, weight constraints, diffusion constructions and reaction phase graphs are provided in the modeling guidelines and scenario-specific implementation details in Sec.~\ref{sec:general_modeling_procedure_cn} and \ref{sec:scenario_specific_details_cn} of the supplementary materials.

\clearpage

\subsubsection{The physical explanation of the diffusion term}

The diffusion term describes the tendency of the swarm to expand from high-density regions towards low-density regions. In the proposed framework, diffusion is not treated as aimless microscopic equivalent deterministic motion, but as an active response of robots to local density gradients. Specifically, a local density field \(\hat{\rho}(x)\) is estimated using kernel density estimation, and the microscopic density-gradient compensation velocity of robot \(i\) is defined as
\begin{equation}
    v_{\mathrm{diff},i}
    =
    -
    \frac{D}{\hat{\rho}(x_i)+\varepsilon}
    \nabla\hat{\rho}(x_i),
    \label{eq:micro_diffusion_velocity}
\end{equation}
where \(D\geq 0\) is the diffusion coefficient predicted by the Neural-Physics Controller, and \(\varepsilon>0\) is a small constant introduced to avoid numerical singularities in low-density regions.

When the estimated density \(\hat{\rho}\) sufficiently approximates the true continuum density \(\rho\), Eq.~\eqref{eq:micro_diffusion_velocity} induces the following macroscopic diffusive flux:
\begin{equation}
    J_{\mathrm{diff}}
    =
    \rho v_{\mathrm{diff}}
    =
    \rho
    \left(
    -
    \frac{D}{\rho+\varepsilon}
    \nabla\rho
    \right)
    \approx
    -D\nabla\rho.
\end{equation}
When \(\varepsilon\) is sufficiently small and \(\rho\) is bounded away from zero, this expression reduces to the standard Fickian diffusion law~\cite{philibert2006one}:
\begin{equation}
    J_{\mathrm{diff},\sigma}
    =
    -
    D_{\sigma}\nabla\rho_{\sigma}.
    \label{eq:fick_flux}
\end{equation}
According to the local conservation relation in Theorem~\ref{thm:local_and_global_mass_conservation_cn}, the diffusive transport satisfies
\(\frac{\partial\rho_{\sigma}}{\partial t}+\nabla\cdot J_{\mathrm{diff},\sigma}=0\).
Substituting Eq.~\eqref{eq:fick_flux} gives
\begin{equation}
    \frac{\partial \rho_{\sigma}}{\partial t}
    =
    \nabla\cdot
    \left(
    D_{\sigma}\nabla\rho_{\sigma}
    \right).
    \label{eq:diffusion_term_general}
\end{equation}
If \(D_{\sigma}\) is spatially uniform, this further reduces to
\begin{equation}
    \frac{\partial \rho_{\sigma}}{\partial t}
    =
    D_{\sigma}\Delta\rho_{\sigma}.
    \label{eq:diffusion_term_constant}
\end{equation}

Thus, the diffusion term in the Macro-ADR equation originates from a density-gradient-driven diffusive flux. Its primary physical role is to regulate the spatial distribution of the swarm. A larger \(D_{\sigma}\) promotes rapid coverage and exploration in open areas, whereas a smaller \(D_{\sigma}\) helps preserve compactness and topological stability in narrow or constrained regions. In addition, the diffusion term suppresses excessive local aggregation and thereby contributes to collision avoidance, pressure release and density equalization.

\begin{figure}[h!]
    \centering
    \includegraphics[width=\linewidth]{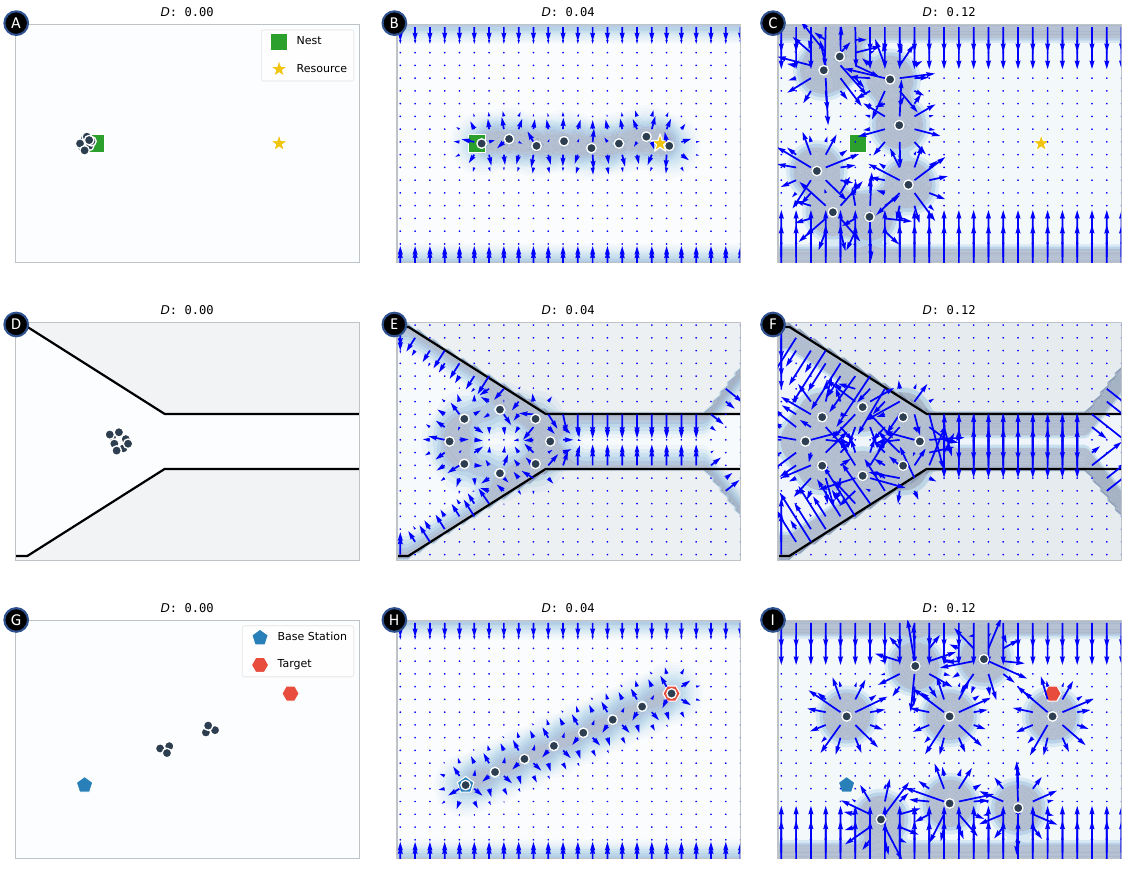} 
    \caption{
\textbf{Effect of diffusion strength on density-gradient compensation across task scenarios.}
\textbf{A--C,} Trail-Guided Swarm Foraging under low diffusion (\textbf{A}, \(D=0.00\)), balanced diffusion (\textbf{B}, \(D=0.04\)) and high diffusion (\textbf{C}, \(D=0.12\)). 
\textbf{D--F,} Formation-Reconfigurable Swarm Navigation under the same three diffusion levels. 
\textbf{G--I,} Role-Adaptive Swarm Search and Rescue under the same three diffusion levels. 
Blue arrows indicate the deterministic density-gradient compensation velocity induced by the diffusion term, and shaded regions indicate areas where the diffusion response is stronger. 
At low diffusion, local density regulation is nearly inactive, leading to over-aggregation and congestion around task-relevant regions such as the nest, base or local formation centre. 
At balanced diffusion, the swarm maintains more uniform spacing, improving density equalization, local collision avoidance and structural stability. 
At high diffusion, the stronger density-gradient response promotes rapid spatial coverage and exploration, but may also weaken formation compactness or relay-chain continuity in constrained environments. 
These examples illustrate the role of \(D\) in controlling the trade-off between spatial cohesion and exploratory dispersion in PhySwarm.
Full field definitions and scenario-specific parameter settings are provided in Sec.~\ref{sec:formation_reconfigurable_swarm_navigation_cn} of the supplementary materials.
}
    \label{fig:phydiff}
\end{figure}

Fig.~\ref{fig:phydiff} illustrates how the diffusion coefficient \(D\) affects swarm spatial distribution and density-gradient compensation velocity. The purpose of this figure is to show that the diffusion term does not directly encode task objectives. Instead, it regulates how robots respond to local density gradients and spatial constraints, thereby influencing dispersion, safety margins and topological stability.

Under low-diffusion conditions (Fig.~\ref{fig:phydiff}A,D,G), density-gradient compensation is largely suppressed, and the system lacks an effective mechanism for mitigating local overcrowding. As a result, in the foraging and search-and-rescue tasks, robots tend to form compact clusters near the nest, base or local information regions. In the formation-reconfiguration task, the swarm has limited ability to adjust its spatial distribution near the corridor entrance, making it difficult to maintain formation while preserving a safe distance from the boundaries. This regime corresponds to under-diffusion and reflects the congestion that may arise when the swarm relies mainly on advection-based attraction, target guidance or morphology constraints.

Under balanced-diffusion conditions (Fig.~\ref{fig:phydiff}B,E,H), the diffusion term provides moderate density equalization and spatial pressure release. More uniform local repulsive responses emerge among robots and between robots and environmental boundaries, allowing the swarm to avoid excessive crowding while preserving task-relevant structures. In the foraging task, this diffusion level helps maintain appropriate spacing along transport paths. In the formation-reconfiguration task, the swarm can remain compact under corridor constraints without becoming over-compressed. In the search-and-rescue task, diffusion compensation supports a more evenly distributed relay chain. These results indicate that moderate diffusion provides an effective trade-off between spatial cohesion and coverage or dispersion capability.

Under high-diffusion conditions (Fig.~\ref{fig:phydiff}C,F,I), the response to density gradients becomes substantially stronger, and the swarm exhibits a greater tendency to disperse. In open regions, enhanced diffusion can enlarge the search coverage. However, in narrow corridors, near boundaries or in tasks that require maintaining a communication chain, excessive diffusion may weaken the constraints imposed by the advection term on targets, formations or relay structures. This can lead to over-dispersion and degrade formation stability, corridor-traversal efficiency or communication-chain continuity. Therefore, stronger diffusion does not necessarily improve task performance, and \(D\) should be modulated according to the task stage and environmental constraints.

Overall, Fig.~\ref{fig:phydiff} shows that the diffusion term in PhySwarm acts as a mechanism for spatial regulation and safety constraint. A smaller \(D\) helps maintain cohesion but may lead to crowding; a larger \(D\) improves coverage and exploration but may disrupt formation or communication topology; and an intermediate \(D\) balances density equalization, local collision avoidance and task-structure preservation. By learning phase-dependent diffusion coefficients \(D(t)\), the NPC enables the swarm to dynamically adjust its spatial distribution across task stages such as exploration, corridor traversal, transport and relay formation. The physical origin of the diffusion term, its microscopic density-gradient compensation form and its correspondence to the ADR flux are provided in the physical interpretation and modeling guidelines in the Supplementary Materials.

\clearpage

\subsubsection{The physical explanation of the reaction term}

\begin{figure}[t]
    \centering
    \includegraphics[width=\linewidth]{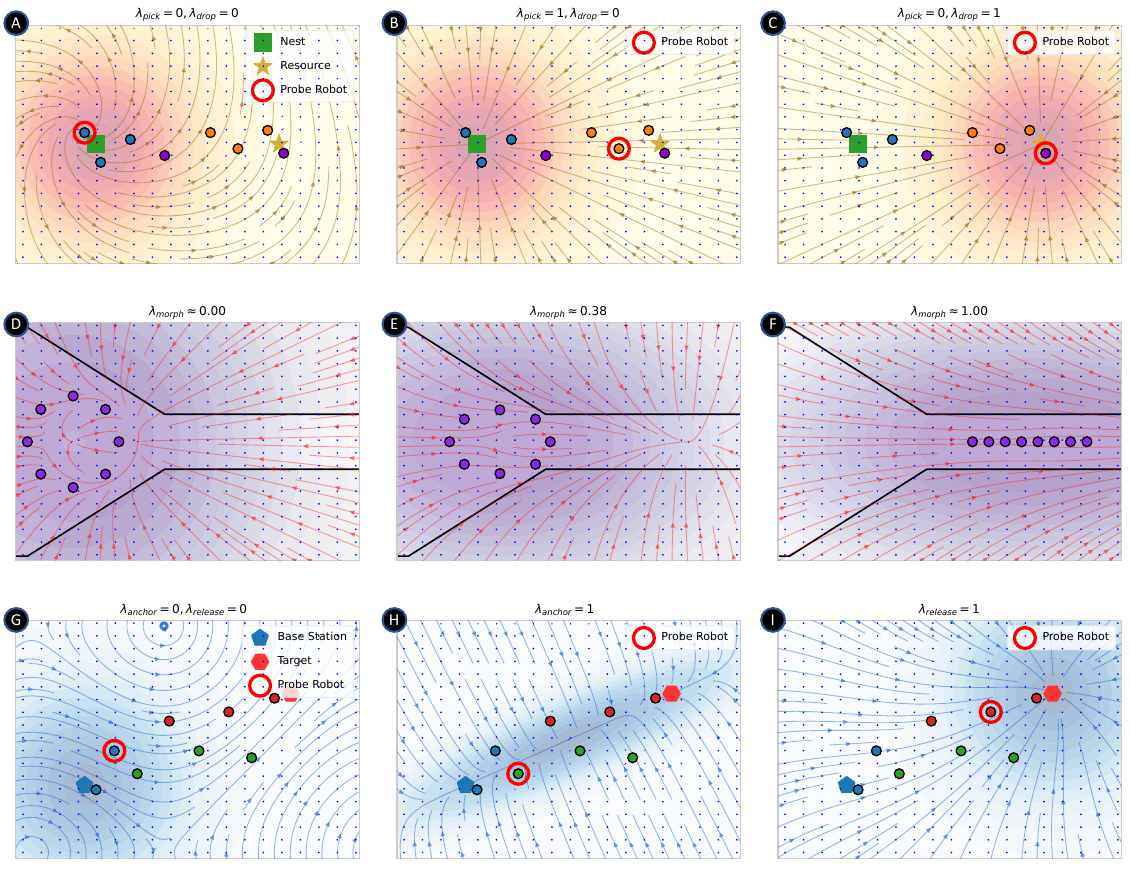} 
    \caption{
\textbf{Representative reaction-regulated state fields across tasks and behavioral phases.}
\textbf{A--C,} Trail-Guided Swarm Foraging. 
\textbf{A,} In the searcher state, neither pick-up nor drop-off transitions are activated, and robots mainly respond to exploration-related fields. 
\textbf{B,} When the pick-up transition is activated, robots enter the carrier state and the effective field redirects motion towards the nest. 
\textbf{C,} When the drop-off transition is activated, robots leave the carrier state and re-enter an approach or re-search state directed by food- or task-information-related fields. 
\textbf{D--F,} Formation-Reconfigurable Swarm Navigation. 
\textbf{D,} With \(\lambda_{\mathrm{morph}}\approx0\), the swarm maintains an approximately circular formation. 
\textbf{E,} As \(\lambda_{\mathrm{morph}}\) increases, the effective morphology field deforms towards an elliptical, corridor-adaptive shape. 
\textbf{F,} With \(\lambda_{\mathrm{morph}}\approx1\), the swarm adopts a line-like formation compatible with narrow-corridor traversal. 
\textbf{G--I,} Role-Adaptive Swarm Search and Rescue. 
\textbf{G,} In the searcher state, relay and release transitions are inactive and robots remain in a dispersed locating mode. 
\textbf{H,} Activation of \(\lambda_{\mathrm{anchor}}\) induces an anchored relay state, organizing robots along the base--target direction. 
\textbf{I,} Activation of \(\lambda_{\mathrm{release}}\) induces a responder phase, redirecting robots towards the target region. 
In all panels, the background contours display the effective phase-conditioned field perceived by the designated Probe Robot, and streamlines indicate the corresponding deterministic velocity directions. The reaction rates \(\lambda\) do not directly generate spatial potentials; instead, they regulate behavioral phase or role transitions, thereby switching robots among phase-conditioned field responses. Full reaction matrices, transition graphs and task-specific activation conditions are provided in Sec.~\ref{sec:formation_reconfigurable_swarm_navigation_cn} of the supplementary materials.
}
    \label{fig:phyrea}
\end{figure}

The reaction term describes transitions among behavioral phases. In the context of swarm emergent behavior modeling, it should be emphasized that the reaction term does not represent material generation or annihilation in the chemical sense. Instead, it represents mass transfer among different functional states or behavioral phases of the same robot population. In other words, the total number of robots is conserved, while the proportion of robots assigned to different phases may vary over time.

Let
\(\lambda_{mn}(x,t)\geq 0,\ m\neq n,\)
denote the local transition rate from behavioral phase \(m\) to behavioral phase \(n\) per unit time. At the microscopic level, the transition of robot \(i\) from phase \(\sigma_m\) to phase \(\sigma_n\) can be modeled as an environmentally triggered stochastic jump~\cite{seabrook2023tutorial,zhu2007asymptotic,zhu2009strong}:
\begin{equation}
    \mathbb{P}
    \left(
    \sigma_i(t+\Delta t)=\sigma_n
    \mid
    \sigma_i(t)=\sigma_m
    \right)
    =
    \lambda_{mn}(x_i,t)
    \chi_{mn}(x_i,t)
    \Delta t
    +
    o(\Delta t),
    \label{eq:micro_reaction_probability}
\end{equation}
where \(\chi_{mn}(x_i,t)\) is a task- or environment-dependent activation function.

At the macroscopic level, transitions among phases are represented by a reaction matrix \(\Lambda(\lambda)\). Using the column-vector convention,
\(\rho=[\rho_1,\rho_2,\ldots,\rho_M]^{T}\),
the reaction term can be written as
\begin{equation}
    R(\rho)
    =
    \Lambda(\lambda)\rho .
    \label{eq:reaction_vector}
\end{equation}
The entries of \(\Lambda(\lambda)\) are defined as
\begin{equation}
    \Lambda_{\sigma\eta}(\lambda)
    =
    \left\{
    \begin{array}{ll}
        \lambda_{\eta\sigma}, 
        & \sigma \neq \eta, \\[4pt]
        -\displaystyle\sum_{\ell \neq \sigma}\lambda_{\sigma\ell},
        & \sigma = \eta .
    \end{array}
    \right.
    \label{eq:reaction_matrix}
\end{equation}
Thus, the reaction term of phase \(\sigma\) is
\begin{equation}
    R_{\sigma}(\rho)
    =
    \left[
    \Lambda(\lambda)\rho
    \right]_{\sigma}.
\end{equation}
Expanding this component gives
\begin{equation}
    R_{\sigma}(\rho)
    =
    \sum_{\eta\neq\sigma}
    \lambda_{\eta\sigma}\rho_{\eta}
    -
    \sum_{\ell\neq\sigma}
    \lambda_{\sigma\ell}\rho_{\sigma}.
    \label{eq:reaction_component}
\end{equation}
Here,
\(\sum_{\eta\neq\sigma}\lambda_{\eta\sigma}\rho_{\eta}\)
denotes the density flowing into phase \(\sigma\) from other phases, whereas
\(\sum_{\ell\neq\sigma}\lambda_{\sigma\ell}\rho_{\sigma}\)
denotes the density flowing out of phase \(\sigma\) to other phases. Therefore, \(R_{\sigma}\) represents the net density change of phase \(\sigma\) induced by behavioral phase transitions. For a transition between two behavioral phases \(m\) and \(n\), the reaction dynamics can be written as
\begin{equation}
\begin{aligned}
\left[
\begin{array}{c}
R_m \\
R_n
\end{array}
\right]
&=
\left[
\begin{array}{cc}
-\lambda_{mn} & \lambda_{nm} \\
 \lambda_{mn} & -\lambda_{nm}
\end{array}
\right]
\left[
\begin{array}{c}
\rho_m \\
\rho_n
\end{array}
\right].
\end{aligned}
\label{eq:two_state_reaction}
\end{equation}
This expression shows that density lost from one phase is gained by the other phase with the same amount. Hence, the reaction term does not create or remove robots; it redistributes swarm density among behavioral phases. Its primary physical role is to encode multi-stage behavioral switching, such as transitions from exploration to aggregation, from formation maintenance to collective navigation, or from search to rescue response.

Fig.~\ref{fig:phyrea} illustrates how the reaction term regulates behavioral-phase and role-state transitions across different task stages. It is important to note that the reaction term \(R(\rho)\) does not directly generate a spatial potential field. Instead, it modulates transitions among behavioral phases or functional roles through the reaction rates \(\lambda\). As a result, robots switch to different phase-conditioned fields, which indirectly changes the advection, diffusion and local control mechanisms to which they respond.Because robots in the swarm simultaneously occupy different behavioral phases, they perceive different physical fields. For visual clarity, each panel visualizes only the scalar potential landscape perceived by the designated Probe Robot. Other robots are guided by their respective phase-conditioned 
fields, which are omitted from the background to avoid overlapping contour layers.

In \textit{Trail-Guided Swarm Foraging}, the reaction term mainly regulates transitions among searchers, carriers and re-searching or approaching robots through the pick-up rate \(\lambda_{\mathrm{pick}}\) and the drop-off rate \(\lambda_{\mathrm{drop}}\). In the search stage (Fig.~\ref{fig:phyrea}A), \(\lambda_{\mathrm{pick}}=\lambda_{\mathrm{drop}}=0\), and robots have not triggered pick-up or drop-off reactions; they therefore mainly respond to exploration-related fields. In the carrying stage (Fig.~\ref{fig:phyrea}B), once the pick-up condition is triggered, \(\lambda_{\mathrm{pick}}\) is activated and robots enter the carrier state. Their effective field is then reoriented towards the nest, supporting food-carrying return motion. In the post-drop-off stage (Fig.~\ref{fig:phyrea}C), \(\lambda_{\mathrm{drop}}\) is activated, robots leave the carrier state and re-enter a food-approach or re-search state. Thus, exploration, pick-up, homeward transport and re-search in the foraging task are represented as phase-conditioned field switching regulated by reaction rates, rather than by independent handcrafted state controllers.

In \textit{Formation-Reconfigurable Swarm Navigation}, the reaction term represents the continuous transition of formation topology from a circular arrangement to a line-like arrangement through the morphology-transition progress \(\lambda_{\mathrm{morph}}\). In the circular stage (Fig.~\ref{fig:phyrea}D), \(\lambda_{\mathrm{morph}}\approx0\), and the swarm mainly maintains an approximately isotropic circular or aggregated structure. As the swarm enters the corridor region, geometric constraints gradually activate morphology transition. In the elliptical transition stage (Fig.~\ref{fig:phyrea}E), \(\lambda_{\mathrm{morph}}\) increases, and the effective morphology field progressively shifts from circular formation maintenance to corridor-compatible deformation. In the line-like traversal stage (Fig.~\ref{fig:phyrea}F), \(\lambda_{\mathrm{morph}}\approx1\), and the formation field is further compressed into a line-like structure aligned with the corridor direction. This process shows that formation reconfiguration can be modeled as continuous phase-field deformation regulated by reaction progress, rather than as abrupt switching between discrete formation templates.

In \textit{Role-Adaptive Swarm Search and Rescue}, the reaction term regulates role transitions among searchers, relays and responders through \(\lambda_{\mathrm{anchor}}\) and \(\lambda_{\mathrm{release}}\). In the search stage (Fig.~\ref{fig:phyrea}G), \(\lambda_{\mathrm{anchor}}=\lambda_{\mathrm{release}}=0\), and robots remain in the searcher state, mainly responding to exploration- and base-related fields to expand the search region. In the relay-formation stage (Fig.~\ref{fig:phyrea}H), \(\lambda_{\mathrm{anchor}}\) is activated, and a subset of robots enters an anchored relay state. Their effective field forms a relay constraint along the Base--Target direction, guiding robots to organize into a communication chain. In the response stage (Fig.~\ref{fig:phyrea}I), \(\lambda_{\mathrm{release}}\) is activated, robots enter the responder state and respond more strongly to the target region, supporting aggregation and rescue response near the target. This result indicates that role differentiation in search and rescue can be interpreted as a reaction-regulated process triggered by task events and communication demands, rather than as a predefined fixed role allocation.

Overall, Fig.~\ref{fig:phyrea} shows the central role of the reaction term in PhySwarm. It converts task events, environmental triggers and role demands into transition rates among behavioral phases, and changes the physical mechanisms that robots respond to through phase-conditioned fields. In contrast to the advection term, which primarily governs directional spatial transport, and the diffusion term, which primarily regulates density distribution, the reaction term organizes multi-stage behavioral transitions along temporal and functional dimensions. Detailed definitions of the reaction matrix, conservative reaction structure, phase-transition graphs and scenario-specific triggering conditions are provided in the physical interpretation, theoretical analysis and scenario-specific implementation details in the Supplementary Materials.

\clearpage

\subsection{Fault tolerance analysis}
\label{sec:fault_tolerance}

We further evaluate the fault-tolerance capability of PhySwarm under partial robot failures. The purpose of this supplementary analysis is to examine whether the learned policy depends on a fixed set of individual robots, or whether the remaining active robots can redistribute task-relevant behavior after some agents become unavailable. Unlike the main experiments, which focus on nominal multi-stage behavior generation and task performance, the experiments in this section introduce robot failures during execution and observe whether the swarm can preserve the essential macroscopic organization required by each task.

The same trained policy is used without retraining or task-specific recovery rules. Failed robots are treated as unavailable agents and no longer contribute to motion, role execution or task completion. The remaining robots continue to infer the physical parameters \(P(t)=\{\omega(t),D(t),\lambda(t)\}\) from their local observations and execute the corresponding Micro-EDM dynamics. Therefore, robustness is expected to arise from the learned physical mechanisms themselves: advection fields redistribute task-directed motion, diffusion regulates local spacing after agent loss, and reaction terms reallocate behavioral phases or functional roles when the available swarm composition changes.

We test this setting in the three benchmark tasks. In \textit{Trail-Guided Swarm Foraging}, the failure perturbation reduces the number of robots available for exploration and transport. The evaluation focuses on whether the remaining agents can still maintain task-directed motion between the food and nest regions. In \textit{Formation-Reconfigurable Swarm Navigation}, failures perturb the formation membership during corridor traversal, and the evaluation focuses on whether the remaining agents can preserve a corridor-compatible spatial organization. In \textit{Role-Adaptive Swarm Search and Rescue}, failures remove part of the search, response or relay population, and the evaluation focuses on whether the swarm can reallocate roles and preserve a base--target communication or response structure.

These experiments are intended as representative perturbation tests rather than an exhaustive worst-case fault-tolerance guarantee. The results show that, across the three task types, the remaining active robots can reorganize their trajectories and maintain the main task-relevant collective structures after partial robot loss. This supports the interpretation that PhySwarm does not encode emergence as a fixed robot-level assignment, but as a density-field process regulated by learned advection, diffusion and reaction parameters.

\begin{figure}[htbp]
    \centering
    \includegraphics[width=\linewidth]{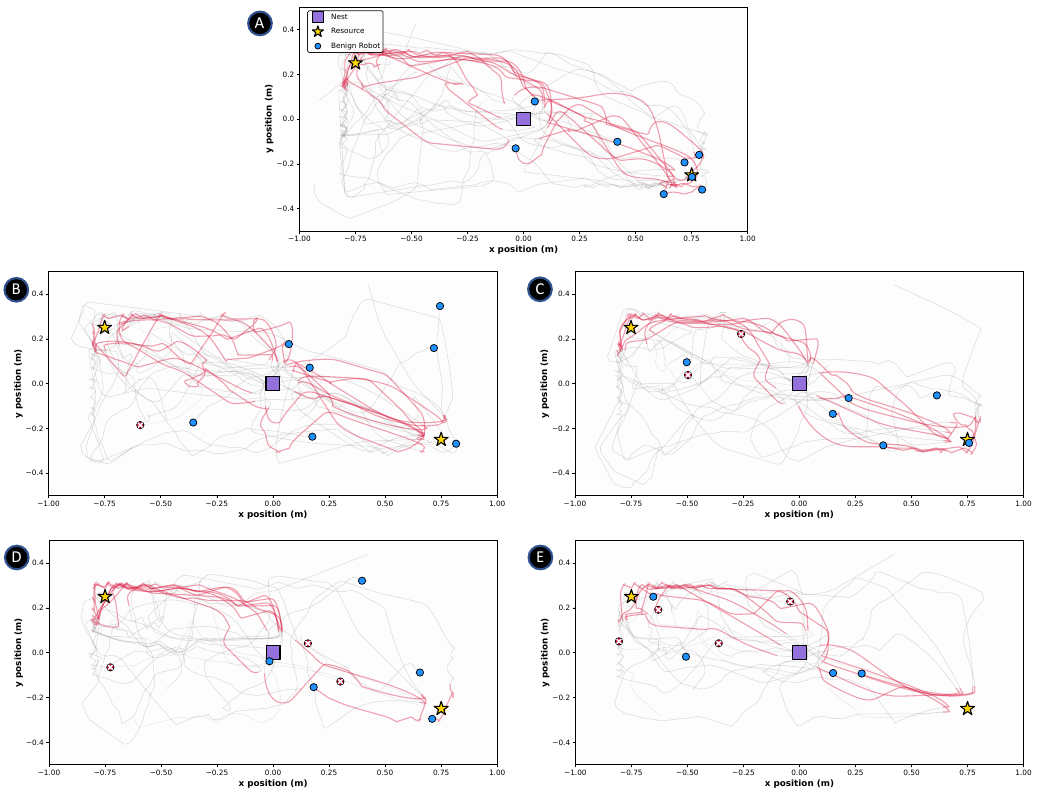} 
    \caption{
\textbf{Fault tolerance in Trail-Guided Swarm Foraging.}
\textbf{A--E,} Representative trajectories of the swarm under different robot-failure conditions in the foraging task.
\textbf{A,} Baseline trial without robot failure.
\textbf{B--E,} Trials with partial robot failures, where failed robots are marked by red crossed circles.
Grey curves show individual robot trajectories over the trial, whereas red curves highlight task-directed motion between the task-relevant sites.
Across the tested failure cases, the remaining benign robots adapt their trajectories and continue to maintain a coherent transport pattern, indicating that the learned PhySwarm policy does not depend on a fixed set of individual robots but can redistribute foraging behavior through the learned advection, diffusion and phase-transition mechanisms. (For more detailed experiment results, please refer to the supplementary Movie~S4.)
}
    \label{fig:forage_fails_0}
\end{figure}

\begin{figure}[htbp]
    \centering
    \includegraphics[width=\linewidth]{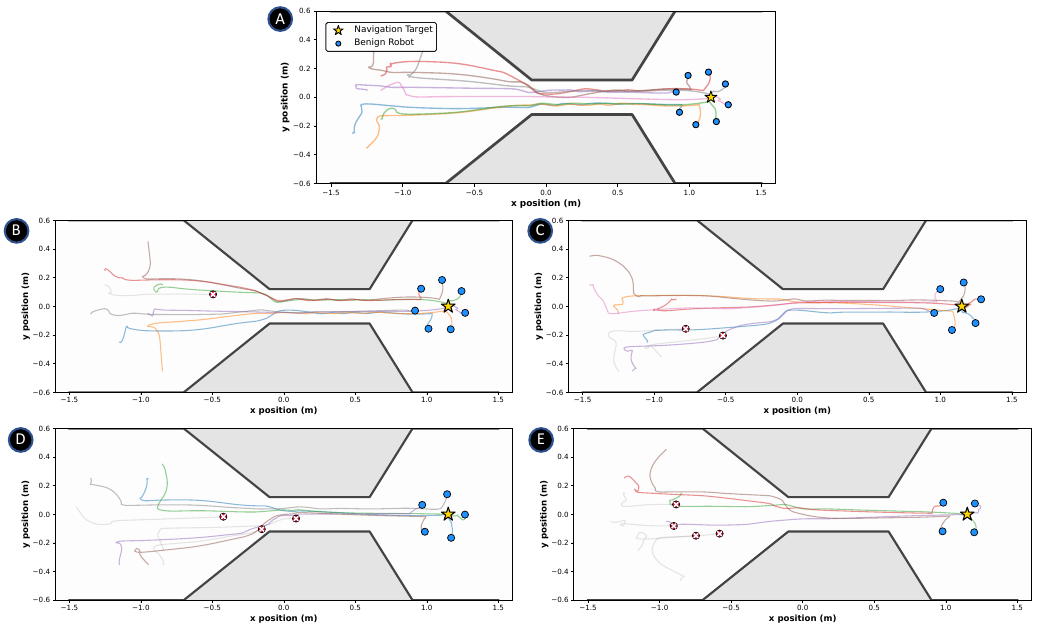} 
    \caption{
\textbf{Fault tolerance in Formation-Reconfigurable Swarm Navigation.}
\textbf{A--E,} Representative trajectories of the swarm passing through a constrained corridor under different robot-failure conditions.
\textbf{A,} Baseline trial without robot failure.
\textbf{B--E,} Trials with partial robot failures, marked by red crossed circles.
Grey regions indicate the geometric constraints of the corridor.
Colored trajectories show the motion of individual robots during formation reconfiguration and corridor traversal.
Despite the loss of several robots, the remaining robots preserve task-directed motion through the narrow corridor and maintain a corridor-compatible spatial organization.
These results provide supplementary evidence that the learned policy can reallocate the formation and transport structure under partial robot loss, rather than relying on a fixed formation membership. (For more detailed experiment results, please refer to the supplementary Movie~S5.)
}
    \label{fig:forage_fails_0}
\end{figure}

\begin{figure}[htbp]
    \centering
    \includegraphics[width=\linewidth]{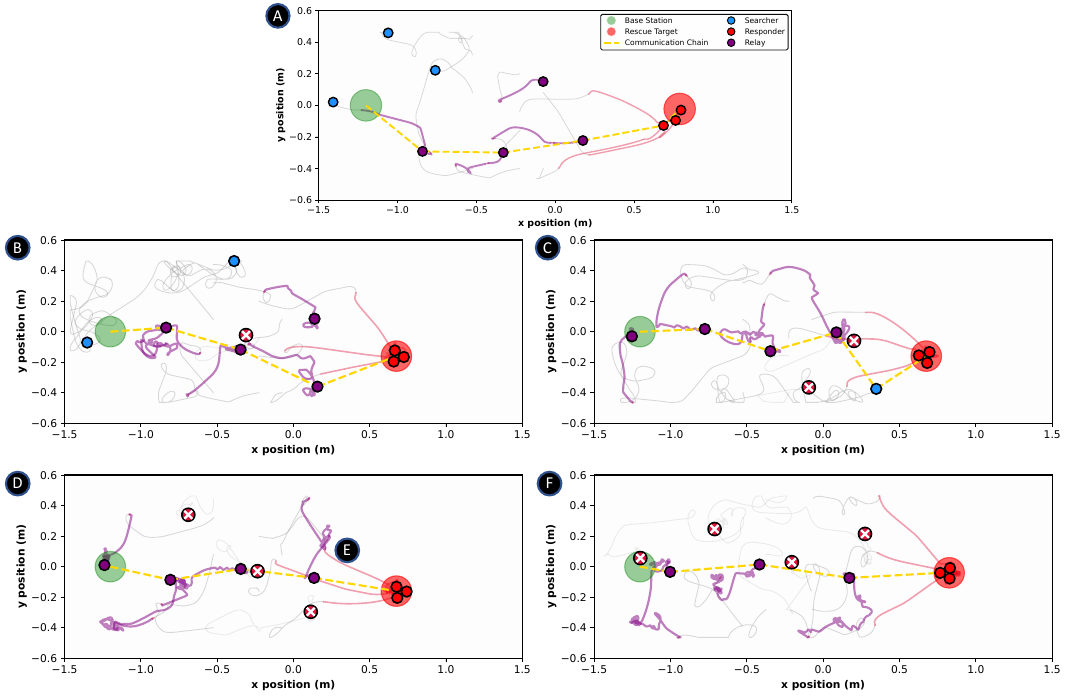} 
    \caption{
\textbf{Fault tolerance in Role-Adaptive Swarm Search and Rescue.}
\textbf{A--E,} Representative trajectories and role assignments under different robot-failure conditions in the search-and-rescue task.
\textbf{A,} Baseline trial without robot failure.
\textbf{B--E,} Trials with partial robot failures, where failed robots are marked by red crossed circles.
Grey curves show individual robot trajectories over the trial.
After robot failures occur, the remaining robots adapt their roles and spatial positions to preserve a base--target communication structure and support target response.
These supplementary results show that the role-adaptive policy can redistribute search, relay and response functions under partial robot loss, supporting the fault-tolerance analysis of PhySwarm. (For more detailed experiment results, please refer to the supplementary Movie~S6.)
}
    \label{fig:forage_fails_0}
\end{figure}

\clearpage

\subsection{Scalability analysis}
\label{sec:scalability}

We further evaluate the scalability of PhySwarm by increasing the number of robots in the three benchmark tasks. The purpose of this supplementary analysis is to examine whether the same PhySwarm formulation can preserve task-relevant macroscopic organization as the swarm population grows, rather than being restricted to a fixed number of agents. In contrast to the fault-tolerance tests, which remove robots during execution, the scalability experiments increase the number of active robots and assess whether the learned physical mechanisms can redistribute motion, density regulation and phase or role allocation over larger robot teams.

The scalability analysis focuses on the qualitative consistency of the emergent structures generated by the learned advection, diffusion and reaction parameters. In \textit{Trail-Guided Swarm Foraging}, increasing the swarm size introduces more robots for exploration and transport. The evaluation therefore focuses on whether the swarm can maintain coherent task-directed movement among food, nest and information-guided regions without collapsing into local congestion. In \textit{Formation-Reconfigurable Swarm Navigation}, larger populations impose stronger spatial constraints during corridor traversal. The evaluation focuses on whether the swarm can preserve a corridor-compatible formation and adjust its density distribution as the number of robots increases. In \textit{Role-Adaptive Swarm Search and Rescue}, larger teams provide more agents that can be allocated to search, response and relay functions. The evaluation focuses on whether the learned phase-transition and field-modulation mechanisms can support larger or multiple communication and response structures.

These experiments are intended to complement the main results by demonstrating population-level robustness of the learned physical representation. They do not constitute an asymptotic complexity guarantee, but they show that the same modeling interface can be instantiated across different swarm sizes without changing the underlying ADR formulation. Across the three tasks, the observed behaviors remain organized around the same task-relevant physical mechanisms: advection provides directed transport, diffusion regulates local density and spacing, and reaction terms redistribute behavioral phases or functional roles. This supports the interpretation that PhySwarm models swarm emergence at the density-field level, enabling the learned policy to generalize beyond a single fixed population size.

\begin{figure}[htbp]
    \centering
    \includegraphics[width=\linewidth]{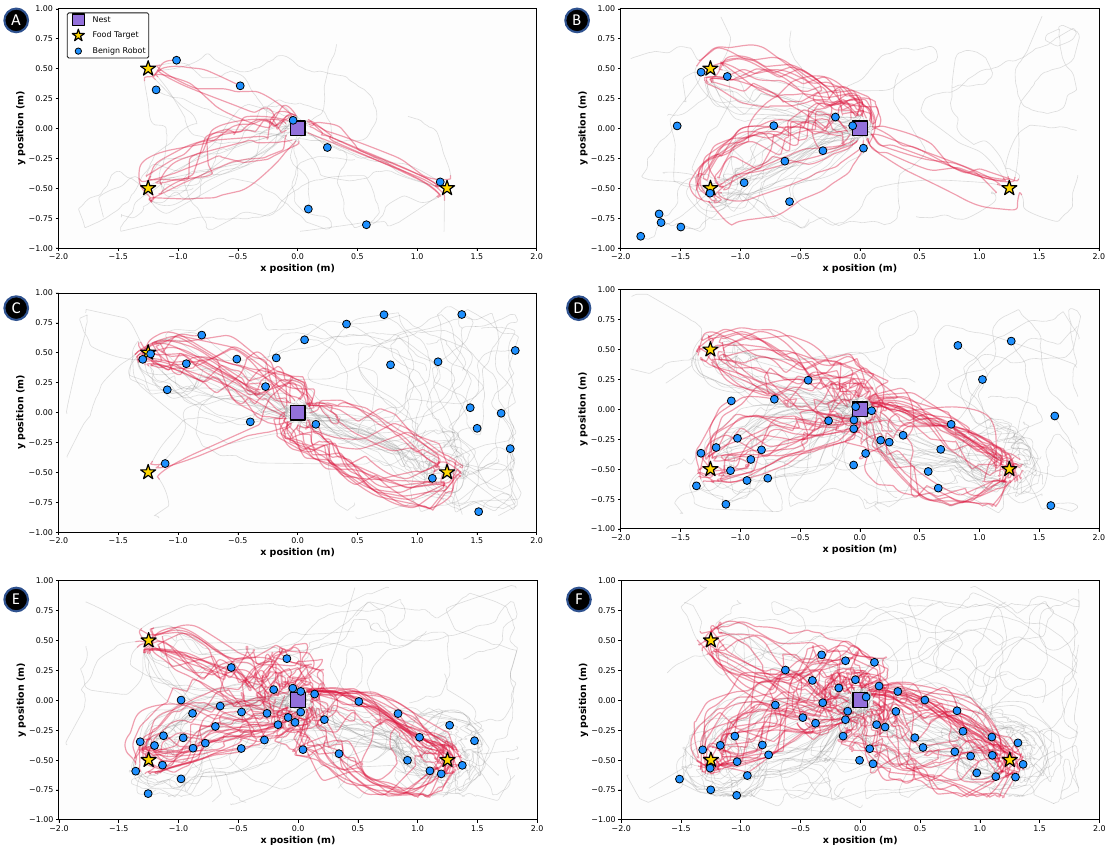} 
    \caption{
\textbf{Scalability in Trail-Guided Swarm Foraging.}
\textbf{A--F,} Representative trajectories of the foraging task under different swarm sizes.
Grey curves show individual robot trajectories over the trial, whereas red curves highlight the dominant task-directed transport paths formed by the swarm.
As the number of robots increases, the swarm maintains coherent exploration and transport patterns between task-relevant locations, indicating that the learned PhySwarm policy can scale the same advection--diffusion--reaction mechanisms to larger robot populations without requiring task-specific controller redesign. (For more detailed experiment results, please refer to the supplementary Movie~S7.)
}
    \label{fig:forage_expand_8}
\end{figure}

\begin{figure}[htbp]
    \centering
    \includegraphics[width=\linewidth]{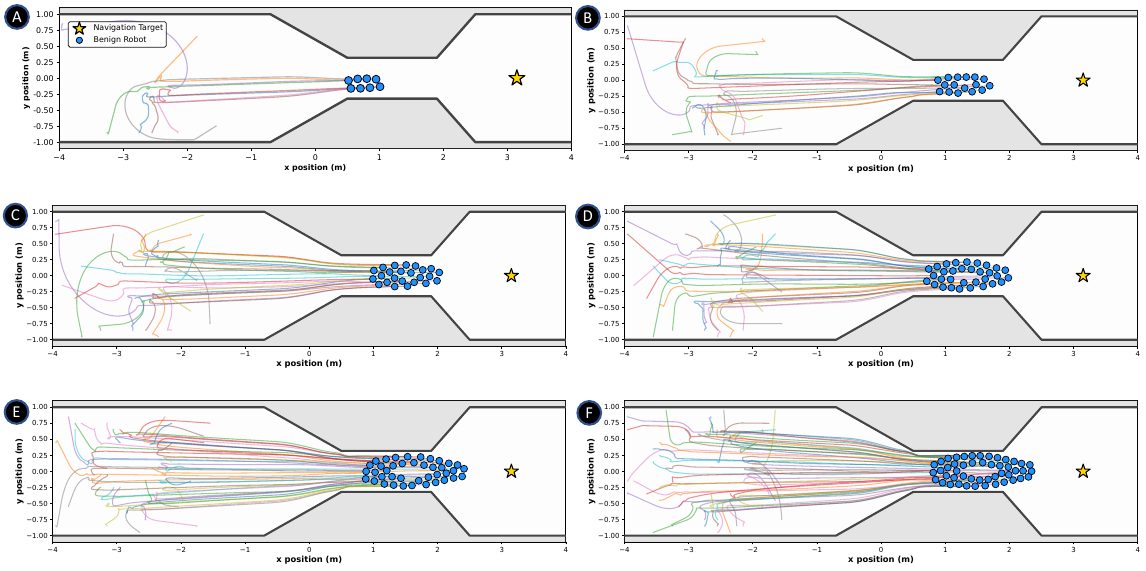} 
    \caption{
\textbf{Scalability in Formation-Reconfigurable Swarm Navigation.}
\textbf{A--F,} Representative trajectories of swarms with different population sizes navigating through a constrained corridor.
Grey regions indicate the geometric constraints of the corridor.
Colored curves show individual robot trajectories during formation reconfiguration and corridor traversal.
Across swarm sizes, the robots preserve a corridor-compatible spatial organization while moving through the narrow region, showing that the learned policy can extend the same morphology-regulation and density-adjustment mechanisms to larger formations.
These results supplement the main scalability analysis by demonstrating that the PhySwarm framework can modulate formation structure through physical parameters rather than relying on a fixed number of agents. (For more detailed experiment results, please refer to the supplementary Movie~S8.)
}
    \label{fig:forage_expand_8}
\end{figure}

\begin{figure}[htbp]
    \centering
    \includegraphics[width=\linewidth]{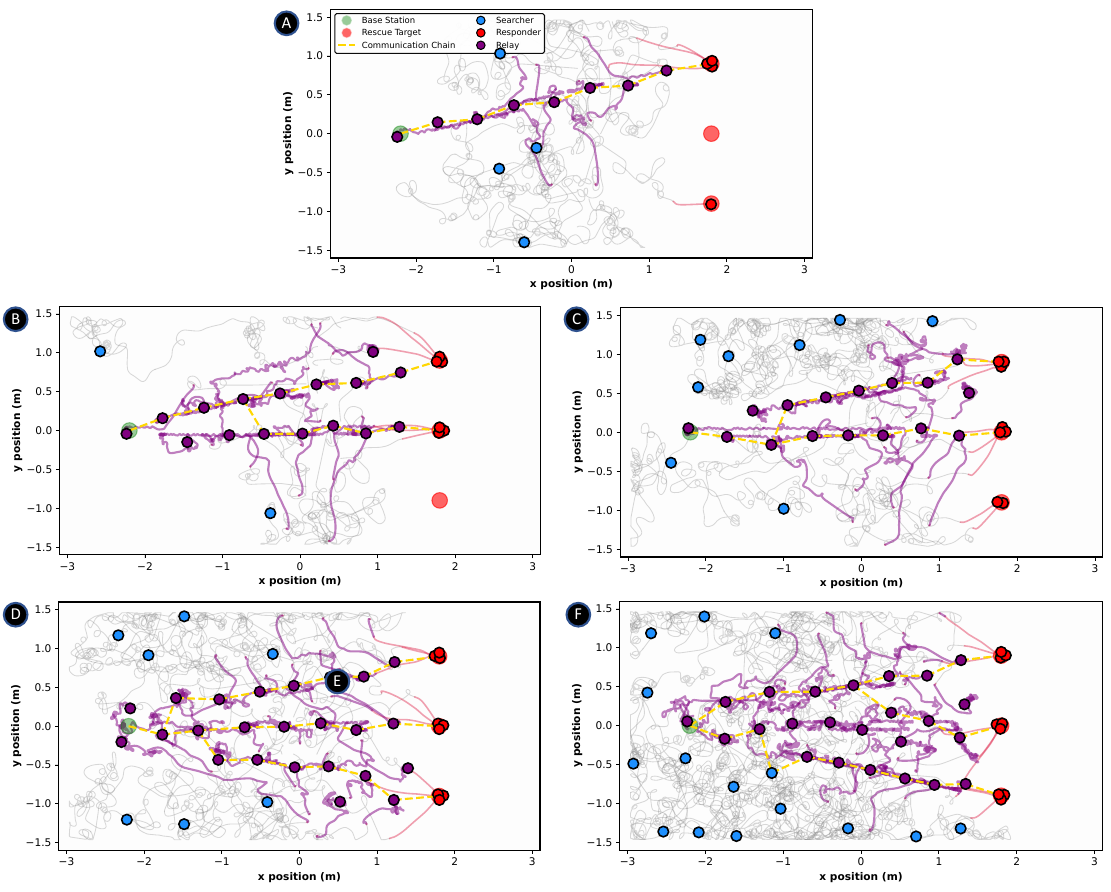} 
    \caption{
\textbf{Scalability in Role-Adaptive Swarm Search and Rescue.}
\textbf{A--E,} Representative trajectories and role allocations under different swarm sizes in the search-and-rescue task.
Grey curves show individual robot trajectories over the trial.
As the swarm size increases, additional robots are redistributed across search, relay and response roles, enabling the formation of larger or multiple base--target communication structures.
These results show that the learned role-adaptive policy can scale the same phase-transition and field-modulation mechanisms to larger robot teams while preserving task-directed search, relay and response organization. (For more detailed experiment results, please refer to the supplementary Movie~S9.)
}
    \label{fig:forage_expand_8}
\end{figure}

\clearpage

\subsection{Versatility analysis}
\label{sec:versatility}

The versatility analysis evaluates whether the same PhySwarm framework can maintain balanced performance across tasks with different physical requirements. The radar plots compare the proposed PhySwarm framework with a hand-crafted finite-state-machine (FSM) baseline and two fixed-parameter ablation variants under the same robot platform, sensing range, actuator limits and task-specific input cues. Each radar axis reports a normalized score, with larger values indicating better performance. The definitions and normalization procedures of the radar-plot metrics are provided in Supplementary Sec.~\ref{sec:versatility_metrics}, and the implementation details of the FSM baseline and fixed-parameter ablations are described in Supplementary Sec.~\ref{sec:baseline_ablation_methods}.

\textbf{Trail-Guided Swarm Foraging. }
Fig.~\ref{fig:versatility_forage} compares the four methods in the foraging task. The radar plot reports per-robot foraging efficiency \(\Phi_{\mathrm{ind}}\), transport economy \(E_{\mathrm{trans}}\), collision-safety score \(C_{\mathrm{rate}}\) and control-smoothness score \(J_{\mathrm{smooth}}\). PhySwarm achieves the most balanced profile, indicating that adaptive modulation of advection, diffusion and reaction is important for coordinating exploration, food approach, homing and trail following. The FSM baseline can exploit manually designed task heuristics, but it lacks continuous physical-parameter adaptation. The fixed-parameter ablations show that neither weak static regulation nor over-active phase transition can sustain the full foraging cycle.

\begin{figure}[htbp]
    \centering
    \includegraphics[width=0.8\linewidth]{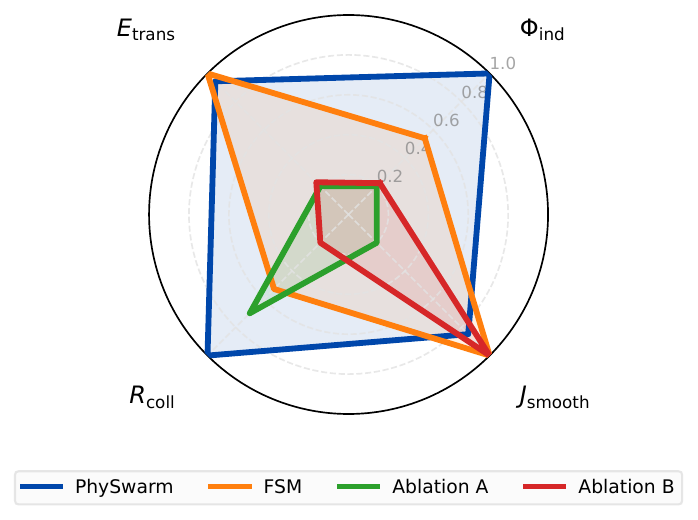} 
    \caption{
    \textbf{Versatility in Trail-Guided Swarm Foraging.}
    Radar comparison across normalized per-robot foraging efficiency \(\Phi_{\mathrm{ind}}\), transport economy \(E_{\mathrm{trans}}\), collision safety \(R_{\mathrm{coll}}\) and control smoothness \(J_{\mathrm{smooth}}\). Larger values indicate better normalized performance.
    }
    \label{fig:versatility_forage}
\end{figure}

\textbf{Formation-Reconfigurable Swarm Navigation. }
Fig.~\ref{fig:versatility_formation} compares the four methods in the formation-reconfigurable navigation task. The radar plot reports normalized navigation efficiency \(T_{\mathrm{nav}}\), formation-quality score \(E_{\mathrm{form}}\), collision-safety score \(C_{\mathrm{rate}}\) and control-smoothness score \(J_{\mathrm{smooth}}\). PhySwarm maintains a strong and balanced profile, showing that the swarm can coordinate formation keeping, corridor traversal and formation recovery. Ablation A over-emphasizes shape preservation and weakens forward progression, whereas Ablation B over-emphasizes propulsion and weakens formation fidelity. The result supports the need for adaptive balancing among morphology fields, forward-flow fields and diffusion-based safety regulation.

\begin{figure}[htbp]
    \centering
    \includegraphics[width=0.8\linewidth]{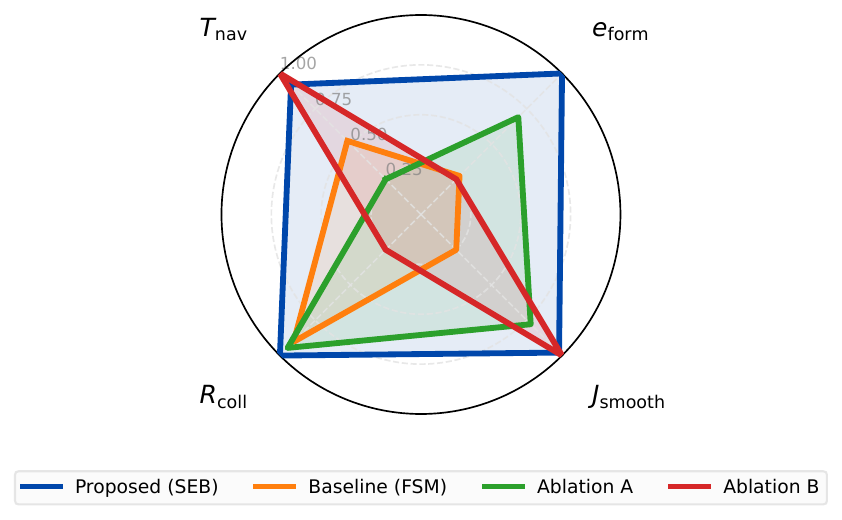} 
    \caption{
    \textbf{Versatility in Formation-Reconfigurable Swarm Navigation.}
    Radar comparison across normalized navigation efficiency \(T_{\mathrm{nav}}\), formation-quality score \(E_{\mathrm{form}}\), collision safety \(R_{\mathrm{coll}}\) and control smoothness \(J_{\mathrm{smooth}}\). Larger values indicate better normalized performance.
    }
    \label{fig:versatility_formation}
\end{figure}

\textbf{Role-Adaptive Swarm Search and Rescue. }
Fig.~\ref{fig:versatility_rescue} compares the four methods in the search-and-rescue task. The radar plot reports task success rate \(\eta_{\mathrm{succ}}\), time-to-connectivity score \(T_{\mathrm{TTC}}\), collision-safety score \(C_{\mathrm{rate}}\) and control-smoothness score \(J_{\mathrm{smooth}}\). PhySwarm provides a balanced advantage because it can jointly regulate distributed search, target response, relay formation and local spacing. The FSM baseline uses fixed role-allocation rules, whereas the ablations reveal that aggressive target seeking or excessive diffusion alone cannot maintain both relay connectivity and safe response behavior.

\begin{figure}[htbp]
    \centering
    \includegraphics[width=0.8\linewidth]{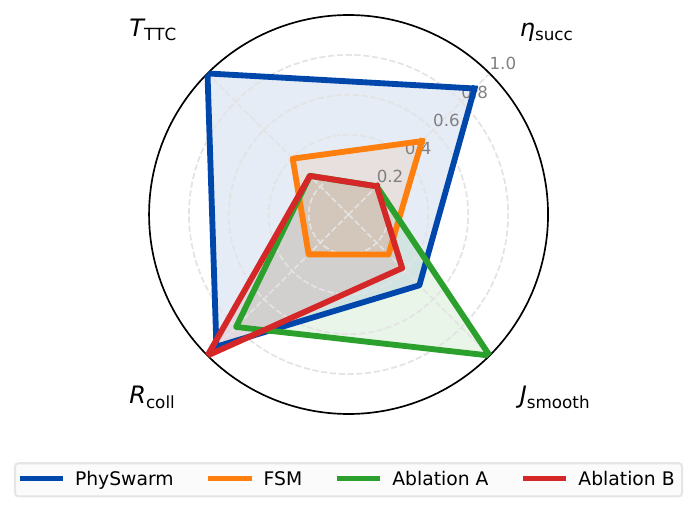} 
    \caption{
    \textbf{Versatility in Role-Adaptive Swarm Search and Rescue.}
    Radar comparison across normalized task success rate \(\eta_{\mathrm{succ}}\), time-to-connectivity score \(T_{\mathrm{TTC}}\), collision safety \(R_{\mathrm{coll}}\) and control smoothness \(J_{\mathrm{smooth}}\). Larger values indicate better normalized performance.
    }
    \label{fig:versatility_rescue}
\end{figure}

Overall, the versatility analysis shows that PhySwarm does not rely on a single fixed physical regime. Instead, the NPC adapts \(\omega\), \(D\) and \(\lambda\) according to task stage and local observations, allowing the same Macro-ADR--Micro-EDM interface to support information-guided foraging, geometry-constrained formation reconfiguration and role-adaptive relay construction.

\clearpage

\subsection{The boundedness}\label{sec:bound}

This section proves that, under the Neural-Physics Controller, the physical parameters, advection field, diffusion term and reaction term in the multi-stage Macro-ADR behavior representation remain bounded. Unlike controllers that directly output unconstrained robot actions, the proposed NPC outputs the physical parameters of Macro-the ADR system, \(P(t)=\{\omega(t),D(t),\lambda(t)\}\). Therefore, the central question in the boundedness analysis is whether these neural-network-predicted parameters may introduce unbounded velocities, unbounded diffusion or unbounded phase-transition rates when applied to the ADR dynamics.

To address this question, we first define the physical parameters predicted by the controller and the corresponding bounded feasible parameter manifold. This specifies how the advection weights, diffusion coefficients and reaction rates are projected into physically admissible ranges. We then prove that the advection term generated by weighted potential-field bases, the diffusion term driven by density gradients and the reaction term induced by the conservative phase-transition structure all admit finite upper bounds.

This analysis shows that the NPC does not learn arbitrary unconstrained control inputs. Instead, it modulates the ADR mechanisms within a bounded physical parameter space, thereby preventing non-physical divergence and providing a foundation for the subsequent controllability and convergence analyses.

\subsubsection{Notations and assumptions}

For robot \(i\), the Neural-Physics Controller predicts the control parameters from its local observation \(O_i(t)\) and temporal memory \(h_{i,t-1}\):
\begin{equation}
    P_i(t)
    =
    F_{\theta}\big(O_i(t),h_{i,t-1}\big)
    =
    \{\omega_i(t),D_i(t),\lambda_i(t)\}.
    \label{eq:boundedness_agent_parameter_cn}
\end{equation}
Here, \(\omega_i(t)\) denotes the coupling weights of different potential-field bases in the advection field, \(D_i(t)\) denotes the diffusion coefficient, and \(\lambda_i(t)\) denotes the reaction transition rates among behavioral phases.

At the continuum-field level, the physical parameters associated with behavioral phase \(\sigma\) can be written as
\(P_{\sigma}(x,t)=\{\omega_{\sigma}(x,t),D_{\sigma}(x,t),\lambda(x,t)\}\), where
\(\omega_{\sigma}(x,t)=[\omega_{\sigma,1}(x,t),\ldots,\omega_{\sigma,K_b}(x,t)]^{\top}\) combines \(K_b\) potential-field bases, \(D_{\sigma}(x,t)\) controls the diffusion strength of phase \(\sigma\), and \(\lambda_{mn}(x,t)\) controls the local transition rate from behavioral phase \(m\) to behavioral phase \(n\).

For each behavioral phase \(\sigma\), the advection velocity field is generated by a weighted combination of physically meaningful potential-field bases. Let
\begin{equation}
    b_r(x,t)
    =
    -\nabla \Phi_r(x,t),
    \qquad
    r=1,\ldots,K_b,
    \label{eq:boundedness_velocity_basis_cn}
\end{equation}
where \(\Phi_r(x,t)\) is the \(r\)-th potential-field basis function, and \(b_r(x,t)\) is the velocity basis induced by this field. The potential-field bases may represent physical mechanisms such as target attraction, obstacle repulsion, boundary constraints, swarm cohesion, individual separation, velocity alignment or formation maintenance. Further details are provided in Sec.~\ref{fields_example}.

Thus, the advection velocity field of phase \(\sigma\) is
\begin{equation}
    u_{\sigma}(x,t)
    =
    \sum_{r=1}^{K_b}
    \omega_{\sigma,r}(x,t)b_r(x,t)
    =
    -
    \sum_{r=1}^{K_b}
    \omega_{\sigma,r}(x,t)\nabla\Phi_r(x,t).
    \label{eq:boundedness_advection_field_cn}
\end{equation}
This representation shows that the controller changes the contribution of different physical fields to the current motion direction by regulating \(\omega_{\sigma,r}\), rather than directly generating an arbitrary unconstrained velocity field.

\begin{assumption}[Bounded potential-field bases~\cite{rimon1992navigation,olfati2006flocking}]
\label{assumption_boundedness_basis}
There exists a constant \(B_{\Phi}>0\) such that, for all \(r=1,\ldots,K_b\),
\begin{equation}
    \|b_r\|_{\infty}
    =
    \|\nabla\Phi_r\|_{\infty}
    \leq B_{\Phi}.
    \label{eq:boundedness_basis_assumption_cn}
\end{equation}
This assumption ensures that the velocity direction induced by each physical potential-field basis has finite magnitude.
\end{assumption}

\begin{assumption}[Bounded density field~\cite{chandrasekhar1943stochastic,cosner2014reaction}]
\label{assumption_boundedness_density}
There exists a constant \(B_{\rho}>0\) such that, for all behavioral phases \(\sigma=1,\ldots,M\),
\begin{equation}
    \|\rho_{\sigma}\|_{\infty}
    \leq B_{\rho}.
    \label{eq:boundedness_density_assumption_cn}
\end{equation}
This condition excludes non-physical cases in which the reaction term generates unbounded density.
\end{assumption}

Unless otherwise specified, the boundedness analysis uses the \(L^\infty\) norm. For a scalar field \(f(x,t)\), we define
\begin{equation}
    \|f\|_{\infty}
    =
    \operatorname*{ess\,sup}_{(x,t)\in\Omega\times[0,T]}
    |f(x,t)|.
    \label{eq:boundedness_scalar_norm_cn}
\end{equation}
For a vector field \(v(x,t)\), we define
\begin{equation}
    \|v\|_{\infty}
    =
    \operatorname*{ess\,sup}_{(x,t)\in\Omega\times[0,T]}
    \|v(x,t)\|_{2}.
    \label{eq:boundedness_vector_norm_cn}
\end{equation}
For a matrix or tensor field, \(\|\cdot\|_{\infty}\) denotes the essential supremum of the corresponding matrix or tensor norm over \(\Omega\times[0,T]\).

\clearpage

\subsubsection{Boundedness of the parameters}
\label{bound_of_para}

To ensure that the parameters produced by the Neural-Physics Controller always have clear physical meaning, we do not directly use the unconstrained outputs of the network. Instead, these outputs are mapped onto a bounded physical parameter manifold. Specifically, the ADR parameter set \(P_i(t)=\{\omega_i(t),D_i(t),\lambda_i(t)\}\) is constrained to the following feasible physical domain:
\begin{equation}
    \mathcal{M}
    =
    \mathcal{M}_{\omega}
    \times
    \mathcal{M}_{D}
    \times
    \mathcal{M}_{\lambda}.
    \label{eq:bounded_physical_manifold_cn}
\end{equation}
The advection weights, diffusion coefficients and reaction rates are constrained as
\begin{equation}
\label{eq:manifold}
    \begin{aligned}
        &\mathcal{M}_{\omega}
    =
    \prod_{\sigma=1}^{M}
    \Delta^{K_b-1},
    \qquad
    \Delta^{K_b-1}
    =
    \left\{
    \omega_{\sigma}\in\mathbb{R}_{+}^{K_b}
    :
    \sum_{r=1}^{K_b}
    \omega_{\sigma,r}=1
    \right\},\\
        &\mathcal{M}_{D}
    =
    \left\{
    D_{\sigma}
    :
    D_{\min}
    \leq
    D_{\sigma}
    \leq
    D_{\max},
    \quad
    \sigma=1,\ldots,M
    \right\},
    \qquad
    0<D_{\min}\leq D_{\max}<\infty,\\
        &\mathcal{M}_{\lambda}
    =
    \left\{
    \lambda_{mn}
    :
    0\leq
    \lambda_{mn}
    \leq
    A_{mn}\lambda_{\max},
    \quad
    m\neq n
    \right\},
    \qquad
    \lambda_{\max}<\infty .
    \end{aligned}
\end{equation}
Here, \(\Delta^{K_b-1}\) is the standard simplex, and \(K_b\) denotes the number of potential-field bases. This constraint requires the advection weights to be non-negative and to sum to one. Therefore, the advection velocity field is a convex combination of bounded physical field bases, rather than an arbitrary unbounded linear superposition. The condition \(D_{\min}>0\) keeps the ADR equation non-degenerate in its diffusive component, whereas \(D_{\max}<\infty\) prevents the diffusion strength from growing without bound. The reaction rate \(\lambda_{mn}\) represents the transition rate from behavioral phase \(m\) to behavioral phase \(n\). Its non-negativity gives the phase transition the interpretation of a continuous-time Markov process, while the upper bound \(\lambda_{\max}\) prevents unbounded switching frequency. The mask \(A_{mn}\in\{0,1\}\) specifies whether a transition is allowed by the task logic: \(A_{mn}=0\) disables the transition from phase \(m\) to phase \(n\), whereas \(A_{mn}=1\) allows the corresponding rate to be learned by the network. If no explicit transition mask is used, all transitions considered in the phase graph can be set to \(A_{mn}=1\).

In implementation, the neural network first produces unconstrained real-valued outputs
\[
    z_i(t)
    =
    \{z_{\omega,i}(t),z_{D,i}(t),z_{\lambda,i}(t)\}.
\]
These outputs are then mapped to feasible physical parameters through a differentiable physical parameterization layer~\cite{box1966comparison,alizadeh2023power}:
\begin{equation}
\label{eq:manifold_projection}
    \begin{aligned}
    & \omega_{i,r}(t)
    =
    \frac{\exp(z_{\omega,i,r}(t))}
    {\sum_{q=1}^{K_b}\exp(z_{\omega,i,q}(t))},
    \qquad
    r=1,\ldots,K_b,\\
    & D_i(t)
    =
    D_{\min}
    +
    (D_{\max}-D_{\min})
    \operatorname{sigmoid}
    \left(
    z_{D,i}(t)
    \right),\\
    &
    \lambda_{mn,i}(t)
    =
    A_{mn}
    \lambda_{\max}
    \operatorname{sigmoid}
    \left(
    z_{\lambda,mn,i}(t)
    \right).
    \end{aligned}
\end{equation}
The softmax mapping ensures that the advection weights always satisfy \(\omega_{i,r}(t)\geq0\) and \(\sum_{r=1}^{K_b}\omega_{i,r}(t)=1\). The sigmoid interval mappings ensure that the diffusion coefficient satisfies \(D_{\min}<D_i(t)<D_{\max}\), and that the reaction rate satisfies \(0\leq\lambda_{mn,i}(t)\leq A_{mn}\lambda_{\max}\). Therefore, for any network parameter \(\theta\), any time \(t\) and any robot \(i\),
\[
    P_i(t)\in\mathcal{M}.
\]

This differentiable physical parameterization layer constitutes the hard-constraint mechanism of the Neural-Physics Controller. It ensures that the NPC does not learn unconstrained black-box actions, but instead produces physically meaningful ADR parameters: \(\omega\) regulates the relative contributions of different field mechanisms, \(D\) regulates the strength of spatial diffusion, and \(\lambda\) regulates the transition rates among behavioral phases. Based on this parameter boundedness, we next prove the boundedness of the advection, diffusion and reaction terms in the Macro-ADR.

\clearpage

\subsubsection{Boundedness of the field}

\begin{lemma}\label{lemma:adv}
    The advection term is bounded.
\end{lemma}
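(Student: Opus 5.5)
The plan is to bound the advection velocity field and the advective flux pointwise, using only the simplex constraint on the weights and the bounded-basis and bounded-density assumptions. By Eq.~\eqref{eq:boundedness_advection_field_cn}, \(u_{\sigma}(x,t)=\sum_{r}\omega_{\sigma,r}(x,t)b_r(x,t)\). The parameterization layer \eqref{eq:manifold_projection} gives \(\omega_{\sigma}(x,t)\in\Delta^{K_b-1}\) for every \((x,t)\). Combining this with Assumption~\ref{assumption_boundedness_basis} and the triangle inequality yields
\begin{equation*}
\|u_{\sigma}(x,t)\|_2\le\sum_{r=1}^{K_b}\omega_{\sigma,r}(x,t)\|b_r(x,t)\|_2\le B_{\Phi}\sum_{r=1}^{K_b}\omega_{\sigma,r}(x,t)=B_{\Phi}.
\end{equation*}
Taking the essential supremum gives \(\|u_{\sigma}\|_{\infty}\le B_{\Phi}\), independent of \(\theta\), \(K_b\) and the phase. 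Without normalization one would only get \(K_b B_{\Phi}\max_r|\omega_{\sigma,r}|\), so the convexity from the softmax is what makes the bound parameter-free.

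Next I bound the advective flux. Using Assumption~\ref{assumption_boundedness_density},
\begin{equation*}
\|J_{\mathrm{adv},\sigma}\|_{\infty}=\|u_{\sigma}\rho_{\sigma}\|_{\infty}\le B_{\Phi}B_{\rho}.
\end{equation*}
If "advection term" is read as the divergence \(-\nabla\cdot(u_{\sigma}\rho_{\sigma})\) appearing in \eqref{eq:results_overview_adr}, this flux bound does not suffice. In that case I would expand
\begin{equation*}
\nabla\cdot(u_{\sigma}\rho_{\sigma})=\rho_{\sigma}\nabla\cdot u_{\sigma}+u_{\sigma}\cdot\nabla\rho_{\sigma},
\end{equation*}
and bound the divergence of the velocity as
\begin{equation*}
\nabla\cdot u_{\sigma}=-\sum_r\big(\omega_{\sigma,r}\Delta\Phi_r+\nabla\omega_{\sigma,r}\cdot\nabla\Phi_r\big).
\end{equation*}

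This divergence step is the main obstacle, because the stated assumptions control neither second derivatives of \(\Phi_r\), nor spatial gradients of the learned weights, nor \(\nabla\rho_{\sigma}\). My first move is to present the flux and velocity bounds as the content of the lemma, since these are what the paper's stated assumptions support and what matters physically: bounded transport speed and bounded flux, with the weak form of the ADR equation involving only \(J_{\sigma}\). For the strong form I would add explicit regularity hypotheses, say \(\|\Delta\Phi_r\|_{\infty}\le B_{\Phi}'\), \(\|\nabla\omega_{\sigma,r}\|_{\infty}\le B_{\omega}'\) (Lipschitz in \(x\), inherited from a Lipschitz network composed with a Lipschitz observation map) and \(\|\nabla\rho_{\sigma}\|_{\infty}\le B_{\rho}'\). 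Under these,
\begin{equation*}
\|\nabla\cdot(u_{\sigma}\rho_{\sigma})\|_{\infty}\le B_{\rho}\big(B_{\Phi}'+K_bB_{\omega}'B_{\Phi}\big)+B_{\Phi}B_{\rho}'.
\end{equation*}
I would phrase this last bound as holding "under standard regularity conditions," matching the wording of the theoretical-guarantees summary in the Methods section.
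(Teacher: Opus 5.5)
Your core argument is the paper's proof. Nonnegativity and normalization of $\omega_\sigma$, together with $\|\nabla\Phi_r\|_\infty\le B_\Phi$ and the triangle inequality, give $\|u_\sigma\|_\infty\le B_\Phi$, and the paper then simply declares the advection term bounded. Your extra flux bound $\|u_\sigma\rho_\sigma\|_\infty\le B_\Phi B_\rho$ and your remark that the divergence form needs additional regularity (on $\Delta\Phi_r$, $\nabla\omega_{\sigma,r}$ and $\nabla\rho_\sigma$) are correct, and they spell out what the paper's one-line conclusion leaves implicit, much as the paper itself adds $W^{1,\infty}$/$W^{2,\infty}$ hypotheses in the diffusion lemma.
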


\begin{proof}[\textbf{Proof of Lemma~\ref{lemma:adv}}]

For behavioral phase \(\sigma\), the advection velocity field is generated by a weighted combination of potential-field gradients:
\[
u_\sigma(x,t)
=
-\sum_{r=1}^{K_b}
\omega_{\sigma,r}(x,t)\nabla\Phi_r(x,t).
\]
If all potential-field bases have bounded gradients (Assumption~\ref{assumption_boundedness_basis}), then, using \(\omega_{\sigma,r}\geq 0\) and \(\sum_r\omega_{\sigma,r}=1\), we have
\[
\|u_\sigma\|_\infty
\leq
\sum_{r=1}^{K_b}
\omega_{\sigma,r}
\|\nabla\Phi_r\|_\infty
\leq
B_\Phi.
\]
Therefore, the advection velocity field is bounded, and hence the advection term remains bounded.
    
\end{proof}

\begin{lemma}\label{lemma:diff}
    The diffusion term is bounded.
\end{lemma}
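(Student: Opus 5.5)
The plan is to mirror the proof of Lemma~\ref{lemma:adv}: combine the parameter bounds from the manifold $\mathcal{M}_D$ with a bound on the density gradient. The quantity to control is the diffusive flux $J_{\mathrm{diff},\sigma}=-D_\sigma\nabla\rho_\sigma$. We will also control its microscopic realization $v_{\mathrm{diff},i}=-D_{s_i}\nabla\hat\rho/(\hat\rho+\varepsilon)$. Where possible, we will control the divergence $\nabla\cdot(D_\sigma\nabla\rho_\sigma)$ as well.

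Assumption~\ref{assumption_boundedness_density} bounds $\rho_\sigma$ but not its gradient. We therefore add a regularity hypothesis, in the spirit of the ``standard regularity conditions'' announced in Sec.~\ref{sec:methods_theory}: there is $B_{\nabla\rho}$ with $\|\nabla\rho_\sigma\|_\infty\le B_{\nabla\rho}$. For the divergence form we would also need bounds on $\nabla D_\sigma$ and $\Delta\rho_\sigma$.

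For the kernel estimate (Eq.~\eqref{eq:methods_density_estimation}) the gradient bound can be proved rather than assumed. Differentiating under the finite sum gives
\[
\|\nabla\hat\rho_\sigma\|_\infty\le \frac{N_\sigma}{N h^{d+1}}\|\nabla K\|_\infty\le \frac{\|\nabla K\|_\infty}{h^{d+1}},
\]
which is finite for any smooth kernel with bounded derivative. The same argument gives $\|\hat\rho_\sigma\|_\infty\le \|K\|_\infty/h^d$.

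The steps, in order, are as follows.

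First, since $P_i(t)\in\mathcal{M}$ for every $\theta$ (Sec.~\ref{bound_of_para}), we have $0<D_{\min}\le D_\sigma\le D_{\max}$. This gives the flux bound $\|J_{\mathrm{diff},\sigma}\|_\infty\le D_{\max}B_{\nabla\rho}$.

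Second, for the microscopic term we use $\hat\rho\ge0$ and $\varepsilon>0$, so that $1/(\hat\rho+\varepsilon)\le 1/\varepsilon$. Hence
\[
\|v_{\mathrm{diff},i}\|\le \frac{D_{\max}}{\varepsilon}\,\frac{\|\nabla K\|_\infty}{h^{d+1}}.
\]
This bound is uniform in $i$, $t$ and $\theta$. This is where the regularization $\varepsilon$ does its job.

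Third, for the macroscopic divergence term we expand
\[
\nabla\cdot(D_\sigma\nabla\rho_\sigma)=\nabla D_\sigma\cdot\nabla\rho_\sigma+D_\sigma\Delta\rho_\sigma .
\]
This term is bounded by $B_{\nabla D}B_{\nabla\rho}+D_{\max}B_{\Delta\rho}$. Here $B_{\nabla D}$ comes from the sigmoid chain rule, $|\nabla D|\le \tfrac14(D_{\max}-D_{\min})|\nabla z_D|$, together with a Lipschitz bound on the network output. $B_{\Delta\rho}$ is bounded for the kernel estimate by $\|\nabla^2K\|_\infty/h^{d+2}$.

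The main obstacle is this third step. Bounding the second derivatives and the spatial variation of the learned $D_\sigma$ requires regularity that the manifold constraint alone does not give. We would first try to state the lemma at the flux level, which needs only the first two steps, and note the divergence bound under the additional smoothness assumptions. The divergence form can alternatively be interpreted weakly, where flux boundedness suffices.
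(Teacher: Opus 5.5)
Your proposal is correct and follows the paper's proof essentially step for step. Both arguments use three bounds: $\|J_{\mathrm{diff},\sigma}\|_\infty\le D_{\max}\|\nabla\rho_\sigma\|_\infty$ from the manifold bounds on $D_\sigma$, the microscopic bound $\|v_{\mathrm{diff},i}\|\le (D_{\max}/\varepsilon)\|\nabla\hat\rho\|_\infty$, and the product-rule expansion $\nabla\cdot(D_\sigma\nabla\rho_\sigma)=D_\sigma\Delta\rho_\sigma+\nabla D_\sigma\cdot\nabla\rho_\sigma$ under the extra hypotheses $D_\sigma\in W^{1,\infty}$ and $\rho_\sigma\in W^{2,\infty}$. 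Your version is slightly more careful than the paper's in one respect: the paper attributes $\|\nabla\rho_\sigma\|_\infty<\infty$ to the bounded-density assumption (which only bounds $\rho_\sigma$) and asserts $\|\nabla\hat\rho\|_\infty<\infty$ without justification, whereas you make the gradient bound an explicit hypothesis and derive it, and the $\Delta\hat\rho$ bound, directly from the kernel estimate.
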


\begin{proof}[\textbf{Proof of Lemma~\ref{lemma:diff}}]

For the diffusion term, the physical manifold projection ensures that
\[
0<D_{\min}\leq D_\sigma(x,t)\leq D_{\max}<\infty.
\]
Thus, the diffusion process is neither degenerate nor unbounded. The corresponding diffusive flux is
\[
J_{\mathrm{diff},\sigma}
=
-D_\sigma\nabla\rho_\sigma .
\]
If the density gradient is bounded (Assumption~\ref{assumption_boundedness_density}), namely \(\|\nabla\rho_\sigma\|_\infty<\infty\), then
\[
\|J_{\mathrm{diff},\sigma}\|_\infty
\leq
D_{\max}
\|\nabla\rho_\sigma\|_\infty
<\infty.
\]
At the microscopic execution level, the diffusion-compensation velocity is given by Eq.~\eqref{eq:micro_diffusion_velocity}. Since \(D_i\leq D_{\max}\) and \(\hat{\rho}(x_i)+\varepsilon\geq\varepsilon>0\), we obtain
\[
\|v_{\mathrm{diff},i}\|
\leq
\frac{D_{\max}}{\varepsilon}
\|\nabla\hat{\rho}\|_\infty
<\infty.
\]
Thus, the diffusion mechanism remains bounded at both the macroscopic flux level and the microscopic velocity level.

If, in addition, \(D_\sigma\in W^{1,\infty}(\Omega)\) and \(\rho_\sigma\in W^{2,\infty}(\Omega)\), then the diffusion operator satisfies
\[
\nabla\cdot(D_\sigma\nabla\rho_\sigma)
=
D_\sigma\Delta\rho_\sigma
+
\nabla D_\sigma\cdot\nabla\rho_\sigma.
\]
Therefore,
\[
\|\nabla\cdot(D_\sigma\nabla\rho_\sigma)\|_\infty
\leq
D_{\max}\|\Delta\rho_\sigma\|_\infty
+
\|\nabla D_\sigma\|_\infty
\|\nabla\rho_\sigma\|_\infty
<\infty.
\]
When \(D_\sigma\) is spatially constant, this reduces to
\[
\|\nabla\cdot(D_\sigma\nabla\rho_\sigma)\|_\infty
\leq
D_{\max}\|\Delta\rho_\sigma\|_\infty.
\]
Hence, under the stated regularity conditions, the diffusion term is bounded.
\end{proof}

\begin{lemma}\label{lemma:rea}
    The reaction term is bounded.
\end{lemma}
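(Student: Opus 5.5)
We bound the reaction term directly from its conservative inflow--outflow form in Assumption~\ref{assump:cons_react}, equivalently $R(\rho)=\Lambda(\lambda)\rho$ with $\Lambda$ as in Eq.~\eqref{eq:reaction_matrix}. Two ingredients suffice: the rate bounds from the physical manifold $\mathcal{M}_{\lambda}$ in Eq.~\eqref{eq:manifold}, namely $0\leq\lambda_{mn}\leq A_{mn}\lambda_{\max}\leq\lambda_{\max}$; and the density bound $\|\rho_\sigma\|_\infty\leq B_\rho$ of Assumption~\ref{assumption_boundedness_density}. Both hold pointwise in $(x,t)$. The softmax/sigmoid parameterization in Eq.~\eqref{eq:manifold_projection} guarantees the rate bound for every network parameter $\theta$.

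First, we bound each phase component pointwise by the triangle inequality applied to
\[
R_\sigma(\rho)=\sum_{\eta\neq\sigma}\lambda_{\eta\sigma}\rho_\eta-\sum_{\ell\neq\sigma}\lambda_{\sigma\ell}\rho_\sigma .
\]
This gives $|R_\sigma|\leq \lambda_{\max}\sum_{\eta\neq\sigma}|\rho_\eta|+(M-1)\lambda_{\max}|\rho_\sigma|$. Taking the essential supremum yields
\[
\|R_\sigma\|_\infty\leq 2(M-1)\lambda_{\max}B_\rho .
\]
Equivalently, we can bound the matrix. Each column $\eta$ of $\Lambda$ has off-diagonal entries $\lambda_{\eta\sigma}\geq0$ and diagonal entry $-\sum_{\ell\neq\eta}\lambda_{\eta\ell}$, so its absolute column sum is $2\sum_{\ell\neq\eta}\lambda_{\eta\ell}\leq 2(M-1)\lambda_{\max}$. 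Hence $\|\Lambda(\lambda)\|_1\leq2(M-1)\lambda_{\max}$, and a row-sum argument gives the same bound for $\|\Lambda\|_\infty$. This yields the vector bound $\|R(\rho)\|_\infty\leq 2(M-1)\lambda_{\max}B_\rho$ in any consistent norm, up to a dimension constant. If the mask $A$ is sparse, the factor $M-1$ can be sharpened to the maximal in- or out-degree of the phase-transition graph.

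Next, we add the microscopic counterpart, mirroring Lemma~\ref{lemma:diff}. Since $\chi_{mn}\in[0,1]$, the switching probability in Eq.~\eqref{eq:micro_reaction_probability} satisfies $\lambda_{mn}\chi_{mn}\Delta t\leq\lambda_{\max}\Delta t$. The total exit intensity from any phase is therefore at most $(M-1)\lambda_{\max}$, so the switching frequency is finite. Finally, we note that $\sum_\sigma R_\sigma=0$ (Eq.~\eqref{eq:reaction_sum_zero_cn}) holds regardless of the bound, so boundedness and conservation coexist.

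The main subtlety is Assumption~\ref{assumption_boundedness_density}. It is imposed rather than derived, so the argument is conditional on it. If we wanted to remove this conditionality, we would first try to show that $\Lambda$ is a Metzler matrix with zero column sums, so the reaction ODE preserves nonnegativity and total mass. Combined with mass conservation (Theorem~\ref{thm:local_and_global_mass_conservation_cn}) and a maximum-principle estimate for the ADR system, this would give an $L^1$ bound and, under regularity, an $L^\infty$ bound. For the lemma as stated, however, we simply invoke the assumption.
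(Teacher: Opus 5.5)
Your argument is correct and matches the paper's proof: both apply the triangle inequality to the inflow--outflow form of $R_\sigma(\rho)$, combining the rate bound $0\le\lambda_{mn}\le\lambda_{\max}$ from the projection with the assumed bound $\|\rho_\sigma\|_\infty\le B_\rho$ to obtain $|R_\sigma(\rho)|\le 2(M-1)\lambda_{\max}B_\rho$. The matrix-norm bound, the microscopic switching-rate bound and the sparsity refinement are valid but unnecessary additions, and you are right that the result holds only conditionally on the imposed density assumption.
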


\begin{proof}[\textbf{Proof of Lemma~\ref{lemma:rea}}]

For the reaction term, we use \(R(\rho)=\Lambda(\lambda)\rho\), whose \(\sigma\)-th component is given by Eq.~\eqref{eq:conservative_reaction_structure_cn}. If all reaction rates satisfy \(0\leq\lambda_{mn}\leq\lambda_{\max}\), and all phase densities are bounded (Assumption~\ref{assumption_boundedness_density}), then
\[
\mid R_\sigma(\rho)\mid
\leq
\sum_{\eta\neq\sigma}
\lambda_{\eta\sigma}\mid \rho_\eta\mid 
+
\sum_{\ell\neq\sigma}
\lambda_{\sigma\ell}\mid \rho_\sigma\mid 
\leq
2(M-1)\lambda_{\max}B_\rho
<\infty.
\]
Therefore, under bounded densities and bounded transition rates, the reaction term cannot diverge.

\end{proof}

\clearpage

\subsection{Controllability of the model}
\label{sec:control}

This section analyzes the conditional controllability of the Macro-ADR representation in PhySwarm. Specifically, we examine whether there exists a bounded physical parameter trajectory that drives the swarm density from an initial distribution \(\rho^0\) to a target distribution \(\rho^\ast\), or to a neighborhood of it.

The proof decomposes the control of multi-stage emergent behavior into two coupled components: phase-mass redistribution and within-phase spatial transport. The reaction rates \(\lambda\) regulate mass transfer among behavioral phases, so that the phase-wise mass variation along a target density path can be represented by the reaction term. Given this phase-mass evolution, the advection weights \(\omega\) and diffusion coefficients \(D\) regulate the spatial redistribution of density within each phase.

More specifically, we first construct a regular target path connecting the initial density and the target density. We then use bounded reaction rates to match the phase-mass evolution induced by this path. Next, we construct a bounded spatial flux satisfying the local conservation relation and recover the desired advection field from the flux relation. Since the controller can only generate advection fields through a finite set of potential-field bases, controllability ultimately depends on the expressive capacity of these bases. If the bases can exactly represent the desired advection field, exact controllability is obtained within the physically expressible space. If the desired field can only be approximated, the model achieves approximate controllability with an error controlled by the field-basis approximation error \(\varepsilon_{\mathrm{basis}}\).

\subsubsection{Notations and assumptions}

\begin{definition}[Target density path]\label{definitoin_density}
To prove reachability from an initial density to a target density, we assume that there exists a target density path connecting \(\rho^{0}\) and \(\rho^{\ast}\):
\begin{equation}
    \gamma(x,t)
    =
    [\gamma_{1}(x,t),\gamma_{2}(x,t),\ldots,\gamma_{M}(x,t)]^{\top},
    \qquad
    x\in\Omega,\quad t\in[0,T],
    \label{eq:target_density_path_ctrl_cn}
\end{equation}
satisfying
\begin{equation}
    \gamma_{\sigma}(x,0)=\rho^{0}_{\sigma}(x),
    \qquad
    \gamma_{\sigma}(x,T)=\rho^{\ast}_{\sigma}(x),
    \qquad
    \sigma=1,\ldots,M.
    \label{eq:target_density_path_boundary_cn}
\end{equation}
This path represents an ideal continuous evolution of the swarm density from the initial behavioral distribution to the target behavioral distribution.
\end{definition}

\begin{definition}[Controllability of emergent behavior]\label{definitoin_emergent}
Given an initial density
\(\rho^0(x)=[\rho_1^0(x),\ldots,\rho_M^0(x)]^T\)
and a target behavioral density
\(\rho^\ast(x)=[\rho_1^\ast(x),\ldots,\rho_M^\ast(x)]^T\),
which satisfy the same total-mass condition in Theorem~\ref{thm:local_and_global_mass_conservation_cn}, the Macro-ADR behavior representation is said to be \textbf{controllable} from \(\rho^0\) to \(\rho^\ast\) over the time interval \([0,T]\) if there exists a bounded physical parameter trajectory
\[
P(t)=\{\omega(t),D(t),\lambda(t)\}\in\mathcal{M},
\qquad t\in[0,T],
\]
such that the solution of the ADR system satisfies
\[
\rho(x,0)=\rho^0(x),
\qquad
\rho(x,T)=\rho^\ast(x).
\]
If there exists \(P(t)\in\mathcal{M}\) such that
\[
\|\rho(\cdot,T)-\rho^\ast\|
\leq
\varepsilon_c,
\]
then the system is said to be \textbf{approximately controllable}, where \(\varepsilon_c\) denotes the controllability error.
\end{definition}

\begin{assumption}[Regular and strictly positive target density path~\cite{chandrasekhar1943stochastic,cosner2014reaction,chueh1977positively}]
\label{assump:target_density_path}
For each behavioral phase \(\sigma=1,\ldots,M\), the target density path \(\gamma_{\sigma}\) satisfies
\begin{equation}
    \gamma_{\sigma}(x,t)\geq \rho_{\min}>0,
    \qquad
    x\in\Omega,\quad t\in[0,T],
    \label{eq:positive_target_path_ctrl_cn}
\end{equation}
and \(\gamma_\sigma\), its time derivative \(\partial_t\gamma_\sigma\), and its spatial gradient \(\nabla\gamma_\sigma\) are all bounded on \(\Omega\times[0,T]\). Equivalently, we assume
\begin{equation}
    \gamma_{\sigma}
    \in
    C^{1}([0,T];L^{\infty}(\Omega))
    \cap
    C([0,T];W^{1,\infty}(\Omega)).
    \label{eq:target_path_regular_ctrl_cn}
\end{equation}
Here, \(L^\infty(\Omega)\) denotes the space of bounded functions on \(\Omega\), and \(W^{1,\infty}(\Omega)\) denotes the Sobolev space of functions whose values and first-order spatial derivatives are bounded.
\end{assumption}

\begin{remark}
The boundedness of the spatial gradient of the density path ensures that the diffusion flux in the ADR model remains finite. The positive lower bound is required because the desired advection field is recovered from the density flux, which involves division by the density. If the density vanishes, the required velocity field may become singular. This assumption is therefore a technical condition for proving bounded controllability, rather than a strict restriction on practical target behaviors. If a target density contains zero-density regions, it can be regularized by adding a small background density, thereby avoiding singular velocities while remaining close to the original target distribution.
\end{remark}

\begin{assumption}[Strong connectivity of the behavioral phase-transition graph~\cite{zhu2007asymptotic,zhu2009strong,elamvazhuthi2019bilinear}]
\label{assump:phase_graph}
Let \(\mathcal{S}=\{1,2,\ldots,M\}\) denote the set of behavioral phases in the swarm system, such as exploration, aggregation, formation, navigation or return. Define the behavioral phase-transition graph as
\(\mathcal{G}=(\mathcal{S},\mathcal{E})\), where \(\mathcal{S}\) is the node set and \(\mathcal{E}\) is the set of directed edges. If behavioral phase \(m\) can transition to behavioral phase \(n\) through the reaction rate \(\lambda_{mn}\), then the directed edge \(m\rightarrow n\) exists in the graph. Here, \(\lambda_{mn}(t)\) denotes the transition rate from behavioral phase \(m\) to behavioral phase \(n\) per unit time, and satisfies the physical feasibility constraint
\[
0\leq \lambda_{mn}(t)\leq \lambda_{\max}.
\]

We assume that the behavioral phase-transition graph \(\mathcal{G}\) is strongly connected. That is, for any two behavioral phases \(m,n\in\mathcal{S}\), there exists a directed path composed of finitely many edges such that the system can reach phase \(n\) from phase \(m\), either directly or indirectly.

Furthermore, given the target density path \(\gamma(x,t)\), define the total mass of behavioral phase \(\sigma\) at time \(t\) as
\[
m_\sigma(t)
=
\int_\Omega
\gamma_\sigma(x,t)\,dx .
\]
We assume that there exists a non-negative and bounded reaction-rate trajectory
\(\lambda(t)\in\mathcal{M}_\lambda,\ t\in[0,T]\),
such that the target phase masses satisfy
\[
\dot{m}_\sigma(t)
=
\sum_{\eta\neq\sigma}
\lambda_{\eta\sigma}(t)m_\eta(t)
-
\sum_{\ell\neq\sigma}
\lambda_{\sigma\ell}(t)m_\sigma(t),
\qquad
\sigma=1,\ldots,M.
\]
\end{assumption}

\begin{remark}
This assumption ensures that the Macro-ADR reaction term has sufficient capacity to switch among behavioral phases. The reaction term
\[
R_\sigma(\rho)=\sum_{\eta\neq\sigma}\lambda_{\eta\sigma}\rho_\eta-\sum_{\ell\neq\sigma}\lambda_{\sigma\ell}\rho_\sigma
\]
describes density transfer among behavioral phases. Strong connectivity guarantees that every behavioral phase is structurally reachable from every other phase, so the system does not contain phases that are unreachable by construction. If the transition graph were not strongly connected, then even if the advection and diffusion terms could reshape the spatial distribution of the swarm, robot mass could not be transferred to unreachable behavioral phases, making certain target behavioral distributions uncontrollable.

Strong connectivity only indicates structural reachability; it does not imply reachability within an arbitrarily short time. Therefore, this assumption further requires the existence of a bounded reaction-rate trajectory satisfying \(0\leq\lambda_{mn}(t)\leq\lambda_{\max}\) that can realize the target phase-mass path \(m_\sigma(t)\). This excludes target paths that would require infinitely fast state switching, and keeps the controllability statement consistent with physical constraints.

Thus, Assumption~\ref{assump:phase_graph} transforms the multi-stage behavioral switching problem into a phase-mass transfer problem under bounded reaction rates. It ensures that \(\lambda\) has a clear physical interpretation and can regulate the proportion of robots in different behavioral phases within feasible rate bounds, thereby providing a consistent phase-mass basis for the subsequent spatial-flux construction and desired-advection recovery.
\end{remark}

\begin{assumption}[Expressive capacity of the potential-field bases]
\label{assump:supp_basis_approx_cn}
Let \(u^{\mathrm{des}}_\sigma(x,t)\) denote the desired advection field. The Neural-Physics Controller constructs the advection field using a finite set of potential-field bases
\(\mathcal{B}=\{b_r(x,t)\}_{r=1}^{K_b}\):
\begin{equation}
    u_\sigma^\theta(x,t)
    =
    \sum_{r=1}^{K_b}
    \omega_{\sigma,r}^\theta(x,t)b_r(x,t).
    \label{eq:supp_basis_advection_cn}
\end{equation}
We assume that there exists an ideal weight vector \(\omega^\dagger\in\mathcal{M}_\omega\) such that
\begin{equation}
    \left\|
    u_\sigma^{\dagger}
    -
    u_\sigma^{\mathrm{des}}
    \right\|_{L^2(\Omega\times[0,T])}
    \leq
    \varepsilon_{\mathrm{basis}},
    \label{eq:supp_basis_error_cn}
\end{equation}
where
\[
u_\sigma^{\dagger}
=
\sum_{r=1}^{K_b}
\omega_{\sigma,r}^{\dagger}b_r .
\]
That is, the potential-field bases can approximate the desired advection field up to the error \(\varepsilon_{\mathrm{basis}}\).
\end{assumption}

\begin{assumption}[Constructible and bounded spatial flux~\cite{dacorogna1990prescribing}]
\label{assump:space_flux_bounded}
Given the target density path
\(\gamma(x,t)=[\gamma_1(x,t),\ldots,\gamma_M(x,t)]^T\)
and the reaction term
\(R(\gamma)=\Lambda(\lambda)\gamma\)
induced by the reaction-rate trajectory \(\lambda(t)\), define, for each behavioral phase \(\sigma\in\{1,\ldots,M\}\),
\[
h_\sigma(x,t)
=
R_\sigma(\gamma)
-
\frac{\partial \gamma_\sigma}{\partial t}.
\]
Here, \(h_\sigma(x,t)\) represents the part of the density change along the target path that remains to be realized by spatial transport after subtracting the phase-transition contribution \(R_\sigma(\gamma)\). We assume that \(h_\sigma\) satisfies the zero-mean compatibility condition~\cite{chandrasekhar1943stochastic,cosner2014reaction}:
\begin{equation}
    \int_\Omega h_\sigma(x,t)\,dx=0,
\qquad
\forall t\in[0,T].
\label{eq:zero-mean}
\end{equation}
Under the zero-flux boundary condition, we assume that there exists a spatial flux field
\(J_\sigma:\Omega\times[0,T]\rightarrow\mathbb{R}^d\)
such that
\[
\nabla\cdot J_\sigma(x,t)=h_\sigma(x,t),
\]
\[
\mathbf{n}\cdot J_\sigma(x,t)=0,
\qquad x\in\partial\Omega,
\]
where \(\mathbf{n}\) denotes the outward normal vector on the boundary \(\partial\Omega\).

We further assume that this flux field is bounded over the control interval: there exists a constant \(B_J>0\) such that
\[
\|J_\sigma\|_\infty\leq B_J,
\qquad
\sigma=1,\ldots,M.
\]
Here, \(\|J_\sigma\|_\infty\) denotes the maximum magnitude of the flux field over space and time.
\end{assumption}

\begin{remark}
This assumption ensures that the portion of the target density-path variation that must be realized by spatial transport can be implemented by a bounded spatial flux field. The reaction term \(R_\sigma(\gamma)\) controls the mass redistribution among behavioral phases, but by itself it does not determine how density is spatially rearranged within a given phase. Therefore, after the target path \(\gamma\) and the reaction term \(R_\sigma(\gamma)\) are specified, a spatial flux \(J_\sigma\) is required to transport density within each phase.

The zero-mean condition in Eq.~\eqref{eq:zero-mean} is the compatibility condition for constructing a spatial flux under zero-flux boundary conditions. Physically, when robots cannot enter or leave the domain through the boundary, spatial flux can only redistribute density inside the domain and cannot change the total mass of that behavioral phase. The phase mass variation has already been controlled by the reaction rates \(\lambda\). Therefore, the spatial integral of \(h_\sigma\) must be zero.

The boundedness of the flux is used to ensure that the desired advection field recovered in the subsequent step is also bounded (see the proof of Theorem~\ref{thm:desired_advection_bounded_basis_cn}).
\end{remark}

\clearpage

\subsubsection{Controllability of emergent behaviors}

We now formalize the controllability argument outlined above. The analysis proceeds in two steps. First, we show that, once the target phase-mass evolution is matched by bounded reaction rates and the remaining density variation is represented by a bounded spatial flux, the desired advection field recovered from the flux relation is also bounded. We further relate this theoretical field to the realizable advection field generated by the finite potential-field bases used by the Neural-Physics Controller. This gives an explicit approximation error \(\varepsilon_{\mathrm{basis}}\), which quantifies the expressive capacity of the chosen field dictionary.

Second, we use this field-approximation result together with the finite-time stability of the Macro-ADR dynamics to establish conditional controllability of PhySwarm. The resulting statement should be interpreted within the physically admissible space defined by the bounded parameter manifold and the potential-field bases: exact controllability is obtained when the desired advection field is exactly representable, whereas approximate controllability is obtained when it can only be approximated. This formulation is consistent with the design of PhySwarm, in which \(\lambda\), \(D\) and \(\omega\) regulate phase transitions, density diffusion and directed migration, respectively, rather than generating arbitrary unconstrained velocity fields.

\begin{theorem}[Boundedness of the desired advection field and approximation by field bases]
\label{thm:desired_advection_bounded_basis_cn}
Suppose that the target density path is regular and strictly positive (Assumption~\ref{assump:target_density_path}), the spatial flux is constructible and bounded (Assumption~\ref{assump:space_flux_bounded}), and the diffusion coefficient is bounded (Eq.~\eqref{eq:manifold_projection}). Then, for each behavioral phase \(\sigma=1,\ldots,M\), the desired advection field \(u_\sigma^{\mathrm{des}}\) required to realize the target density path is bounded. Moreover, the realizable advection field \(u_\sigma^\omega\) generated by the potential-field bases is also bounded and satisfies
\begin{equation}
    \left\|
    u_\sigma^{\mathrm{des}}
    -
    u_\sigma^\omega
    \right\|_{\mathcal V}
    \leq
    \varepsilon_{\mathrm{basis}},
    \label{eq:desired_advection_basis_error_cn}
\end{equation}
where \(\varepsilon_{\mathrm{basis}}\geq 0\) denotes the approximation error of the field bases with respect to the desired advection field, and \(\|\cdot\|_{\mathcal V}\) can be taken as the \(L^2(\Omega\times[0,T])\) or \(L^\infty(\Omega\times[0,T])\) norm.
\end{theorem}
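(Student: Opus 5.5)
The plan is to construct the desired advection field explicitly from the flux given by Assumption~\ref{assump:space_flux_bounded} and the Fickian decomposition of Eq.~\eqref{eq:total_flux_for_boundary_cn}. Then bound it using the positivity and regularity of the target path, and finally invoke Assumption~\ref{assump:supp_basis_approx_cn} for the basis approximation.

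\textbf{Step 1: consistency check.} The bounded reaction trajectory from Assumption~\ref{assump:phase_graph} matches the phase masses. The zero-flux flux \(J_\sigma\) with \(\nabla\cdot J_\sigma=h_\sigma=R_\sigma(\gamma)-\partial_t\gamma_\sigma\) then makes \(\gamma\) satisfy the local conservation form \eqref{eq:local_conservation_cn}, up to the sign convention. I will take \(-J_\sigma\) as the physical flux, or equivalently define \(h_\sigma\) with the opposite sign, so that \(\partial_t\gamma_\sigma+\nabla\cdot J^{\mathrm{phys}}_\sigma=R_\sigma(\gamma)\). This bookkeeping does not affect any norm bound.

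\textbf{Step 2: recover the advection field and bound it.} Imposing \(J^{\mathrm{phys}}_\sigma=u_\sigma\gamma_\sigma-D_\sigma\nabla\gamma_\sigma\) and dividing by \(\gamma_\sigma\ge\rho_{\min}>0\), as in the flux-matching relation \eqref{eq:overview_flux_matching_velocity}, I define
\[
u^{\mathrm{des}}_\sigma=\frac{J^{\mathrm{phys}}_\sigma+D_\sigma\nabla\gamma_\sigma}{\gamma_\sigma}.
\]
Any admissible \(D_\sigma\in[D_{\min},D_{\max}]\) from Eq.~\eqref{eq:manifold_projection} can be used here. Then
\[
\|u^{\mathrm{des}}_\sigma\|_\infty\le \frac{B_J+D_{\max}\|\nabla\gamma_\sigma\|_\infty}{\rho_{\min}}<\infty,
\]
because \(\nabla\gamma_\sigma\) is bounded by Assumption~\ref{assump:target_density_path}. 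With this choice of \(u^{\mathrm{des}}_\sigma\), \(\gamma\) solves the Macro-ADR system \eqref{eq:adr_for_mass_conservation_cn} exactly. The zero-flux boundary condition is inherited from \(\mathbf n\cdot J_\sigma=0\), provided \(D_\sigma\nabla\gamma_\sigma\cdot\mathbf n=0\) on \(\partial\Omega\). I will either add this compatibility to the target path or absorb the boundary term into \(J_\sigma\).

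\textbf{Step 3: realizable field.} For any \(\omega\in\mathcal M_\omega\), \(u^\omega_\sigma=\sum_r\omega_{\sigma,r}b_r\) is a convex combination of the bases. Exactly as in Lemma~\ref{lemma:adv}, this gives \(\|u^\omega_\sigma\|_\infty\le B_\Phi\) by Assumption~\ref{assumption_boundedness_basis}. Choosing \(\omega=\omega^\dagger\) from Assumption~\ref{assump:supp_basis_approx_cn} yields \eqref{eq:desired_advection_basis_error_cn} in the \(L^2\) norm, with the same \(\varepsilon_{\mathrm{basis}}\).

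\textbf{Main obstacle: the \(L^\infty\) variant of the norm \(\|\cdot\|_{\mathcal V}\).} Assumption~\ref{assump:supp_basis_approx_cn} is stated only in \(L^2\). I will interpret \(\varepsilon_{\mathrm{basis}}\) as defined relative to the chosen norm, that is, as the best-approximation error \(\inf_{\omega\in\mathcal M_\omega}\|u^{\mathrm{des}}_\sigma-u^\omega_\sigma\|_{\mathcal V}\). This quantity is finite, bounded by \(\|u^{\mathrm{des}}_\sigma\|_\infty+B_\Phi\) from Steps 2 and 3, and is attained by compactness of the simplex when weights are constant. I will also note the relation \(\|\cdot\|_{L^2}\le(|\Omega|T)^{1/2}\|\cdot\|_{L^\infty}\), so that an \(L^\infty\) bound implies the \(L^2\) one.

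A secondary subtlety is that \(u^{\mathrm{des}}_\sigma\) need not be a gradient field, while each \(b_r\) is. Exact representability therefore genuinely requires \(\varepsilon_{\mathrm{basis}}=0\), which the statement already permits.
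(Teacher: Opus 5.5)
Your route is the same as the paper's: recover $u^{\mathrm{des}}_\sigma=(J_\sigma+D_\sigma\nabla\gamma_\sigma)/\gamma_\sigma$ from the flux relation, bound it by $(B_J+D_{\max}\|\nabla\gamma_\sigma\|_\infty)/\rho_{\min}$, bound the convex combination by $B_\Phi$, and invoke the basis-approximation assumption. The estimates are correct, but Step 1 needs fixing, because there is no sign mismatch. The assumption gives $\nabla\cdot J_\sigma=h_\sigma=R_\sigma(\gamma)-\partial_t\gamma_\sigma$, i.e.\ $\partial_t\gamma_\sigma+\nabla\cdot J_\sigma=R_\sigma(\gamma)$. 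This is already the local conservation law, and the paper's proof of the controllability theorem uses exactly this identity. If you replace $J_\sigma$ by $-J_\sigma$ as you propose, you get $\partial_t\gamma_\sigma+\nabla\cdot J^{\mathrm{phys}}_\sigma=2\partial_t\gamma_\sigma-R_\sigma(\gamma)$. The field you build would then no longer realize $\gamma$, and your Step 2 claim that $\gamma$ solves the Macro-ADR system exactly would be false. The norm bound survives, since it only uses $\|J_\sigma\|_\infty$. However, the field the theorem is about, the one required to realize the target path, must be built from $J_\sigma$ itself. Your boundary proviso $D_\sigma\nabla\gamma_\sigma\cdot\mathbf n=0$ is also unnecessary. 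By construction $u^{\mathrm{des}}_\sigma\gamma_\sigma-D_\sigma\nabla\gamma_\sigma=J_\sigma$ identically, and the zero-flux assumption constrains only this total flux, whose normal component already vanishes.

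Your handling of the $L^\infty$ case is more careful than the paper's. The paper's proof simply applies the $L^2$-stated expressive-capacity assumption in the $\|\cdot\|_{\mathcal V}$ norm without comment. Reading $\varepsilon_{\mathrm{basis}}$ as the norm-dependent best-approximation error is the honest way to justify this. Your gradient-field remark, however, only holds for constant weights. With the space--time-dependent weights $\omega_{\sigma,r}(x,t)$ the paper allows, $\sum_r\omega_{\sigma,r}b_r$ need not be a gradient. The actual pointwise restriction on representable fields is $u^\omega_\sigma(x,t)\in\mathrm{conv}\{b_r(x,t)\}_{r}$.
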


\begin{proof}[\textbf{Proof of Theorem~\ref{thm:desired_advection_bounded_basis_cn}:}]
From the spatial-flux construction above, for the target density path \(\gamma_\sigma(x,t)\), there exists a spatial flux \(J_\sigma(x,t)\) satisfying the conservation relation. By the total flux relation
\begin{equation}
    J_\sigma
    =
    u_\sigma\gamma_\sigma
    -
    D_\sigma\nabla\gamma_\sigma,
    \label{eq:flux_relation_for_desired_advection_cn}
\end{equation}
the desired advection field required to realize the target density path can be recovered as
\begin{equation}
    u_\sigma^{\mathrm{des}}(x,t)
    =
    \frac{
    J_\sigma(x,t)
    +
    D_\sigma(x,t)\nabla\gamma_\sigma(x,t)
    }{
    \gamma_\sigma(x,t)
    }.
    \label{eq:desired_advection_field_cn}
\end{equation}

By the positivity and regularity assumption on the target density path (Assumption~\ref{assump:target_density_path}), we have
\begin{equation}
    \gamma_\sigma(x,t)\geq \rho_{\min}>0,
    \qquad
    \|\nabla\gamma_\sigma\|_{\infty}<\infty.
\end{equation}
By the boundedness of the spatial flux (Assumption~\ref{assump:space_flux_bounded}) and the boundedness of the diffusion coefficient, we also have
\begin{equation}
    \|J_\sigma\|_{\infty}\leq B_J,
    \qquad
    0<D_{\min}\leq D_\sigma(x,t)\leq D_{\max}<\infty.
\end{equation}
Therefore, Eq.~\eqref{eq:desired_advection_field_cn} gives
\begin{equation}
    \|u_\sigma^{\mathrm{des}}\|_{\infty}
    \leq
    \frac{
    \|J_\sigma\|_{\infty}
    +
    D_{\max}\|\nabla\gamma_\sigma\|_{\infty}
    }{
    \rho_{\min}
    }
    <
    \infty.
    \label{eq:desired_advection_bound_cn}
\end{equation}
Thus, the desired advection field is bounded.

We next prove that the realizable advection field generated by the potential-field bases is also bounded. Let the field bases be
\(b_r(x,t)=-\nabla\Phi_r(x,t),\ r=1,\ldots,K_b.\)
The Neural-Physics Controller combines these bases through the weights \(\omega_\sigma(x,t)\) to generate the advection field
\begin{equation}
    u_\sigma^\omega(x,t)
    =
    \sum_{r=1}^{K_b}
    \omega_{\sigma,r}(x,t)b_r(x,t).
    \label{eq:realizable_advection_basis_cn}
\end{equation}
Here,
\begin{equation}
    \omega_\sigma(x,t)\in\mathcal M_\omega,
    \qquad
    \mathcal M_\omega
    =
    \left\{
    \omega\in\mathbb R_+^{K_b}:
    \sum_{r=1}^{K_b}\omega_r=1
    \right\}.
    \label{eq:omega_simplex_for_advection_cn}
\end{equation}
If all field bases are bounded (Assumption~\ref{assumption_boundedness_basis}), then the simplex constraint gives
\begin{align}
    \|u_\sigma^\omega\|_{\infty}
    =
    \left\|
    \sum_{r=1}^{K_b}
    \omega_{\sigma,r}b_r
    \right\|_{\infty}
    \nonumber\\
    \leq
    \sum_{r=1}^{K_b}
    \omega_{\sigma,r}
    \|b_r\|_{\infty}
    \nonumber\\
    \leq
    B_\Phi
    \sum_{r=1}^{K_b}
    \omega_{\sigma,r}
    =
    B_\Phi .
    \label{eq:realizable_advection_bound_cn}
\end{align}
Therefore, the advection field generated by the field-basis combination is also bounded.

Finally, by the expressive-capacity assumption of the field bases (Assumption~\ref{assump:supp_basis_approx_cn}), there exists \(\omega_\sigma\in\mathcal M_\omega\) such that
\begin{equation}
    \left\|
    u_\sigma^{\mathrm{des}}
    -
    \sum_{r=1}^{K_b}
    \omega_{\sigma,r}b_r
    \right\|_{\mathcal V}
    \leq
    \varepsilon_{\mathrm{basis}}.
    \label{eq:basis_approximation_error_cn}
\end{equation}
Together with Eq.~\eqref{eq:realizable_advection_basis_cn}, this yields
\begin{equation}
    \left\|
    u_\sigma^{\mathrm{des}}
    -
    u_\sigma^\omega
    \right\|_{\mathcal V}
    \leq
    \varepsilon_{\mathrm{basis}}.
\end{equation}
\end{proof}

\begin{remark}[Desired advection field and expressive capacity of field bases]
Theorem~\ref{thm:desired_advection_bounded_basis_cn} shows that the desired advection field \(u_\sigma^{\mathrm{des}}\) is a theoretical velocity field recovered from the target density path and the corresponding spatial flux. It represents the ideal migration direction required for the swarm to evolve along the target density path. Because the target density path has a positive lower bound, and because \(J_\sigma\), \(D_\sigma\) and \(\nabla\gamma_\sigma\) are bounded, this theoretical velocity field does not introduce singularities or unbounded control inputs.

The proposed controller does not directly output an arbitrary velocity field. Instead, it generates the realizable advection field \(u_\sigma^\omega\) by taking a convex combination of finitely many physical potential-field bases using \(\omega_\sigma\). Therefore, controllability depends on the expressive capacity of these field bases with respect to \(u_\sigma^{\mathrm{des}}\). If \(\varepsilon_{\mathrm{basis}}=0\), the desired advection field can be represented exactly. If \(\varepsilon_{\mathrm{basis}}>0\), the target advection field can only be approximated, and this approximation error affects the final density approximation accuracy. At the same time, the boundedness of the field bases and the simplex constraint on the weights ensure that \(u_\sigma^\omega\) remains bounded, providing the basis for the subsequent controllability and boundedness analyses.
\end{remark}

\begin{theorem}[Conditional controllability of the Macro-ADR representation]
\label{tr:controllability}
Under the conditions that the target density path is regular, phase-mass transfer is realizable, the spatial flux is constructible, the desired advection field is bounded and the field bases have sufficient expressive capacity for advection, there exists a bounded physical parameter trajectory
\[
    P(t)=\{\omega(t),D(t),\lambda(t)\}\in\mathcal{M},
\]
such that the system can drive the swarm density from the initial density \(\rho^0\) to a neighborhood of the target density \(\rho^\ast\):
\begin{equation}
    \|\rho(\cdot,T)-\rho^\ast\|
    \leq
    C_T\varepsilon_{\mathrm{basis}},
    \label{eq:adr_controllability_bound_cn}
\end{equation}
where \(\varepsilon_{\mathrm{basis}}\) denotes the approximation error of the finite field bases with respect to the desired advection field, and \(C_T>0\) is a stability constant of the system over the finite time interval. In particular, when \(\varepsilon_{\mathrm{basis}}=0\), the system is exactly controllable within the physically expressible space:
\begin{equation}
    \rho(\cdot,T)=\rho^\ast .
    \label{eq:exact_controllability_cn}
\end{equation}
\end{theorem}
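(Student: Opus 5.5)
The plan is to compare the controlled solution with the target path \(\gamma\) itself, which serves as a reference solution. Fix the reaction trajectory \(\lambda(t)\in\mathcal{M}_\lambda\) from Assumption~\ref{assump:phase_graph}. Choose any admissible diffusion coefficient \(D_\sigma(t)\in[D_{\min},D_{\max}]\); for simplicity take it spatially constant, say \(D_\sigma\equiv D_{\min}\). Take the flux \(J_\sigma\) from Assumption~\ref{assump:space_flux_bounded} and the desired field \(u_\sigma^{\mathrm{des}}\) from Eq.~\eqref{eq:desired_advection_field_cn}.

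The first step is to verify that \(\gamma\) solves the ADR system driven by \(u^{\mathrm{des}}\), \(D\) and \(\lambda\). By construction,
\[
u_\sigma^{\mathrm{des}}\gamma_\sigma-D_\sigma\nabla\gamma_\sigma=J_\sigma,
\qquad
\nabla\cdot J_\sigma=h_\sigma=R_\sigma(\gamma)-\partial_t\gamma_\sigma .
\]
This is exactly Eq.~\eqref{eq:local_conservation_cn} with \(\rho=\gamma\). The zero-flux boundary condition holds because \(\mathbf{n}\cdot J_\sigma=0\). Since \(\gamma(\cdot,0)=\rho^0\) and \(\gamma(\cdot,T)=\rho^\ast\), exact controllability would follow at once if \(u^{\mathrm{des}}\) were realizable. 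Theorem~\ref{thm:desired_advection_bounded_basis_cn} gives its boundedness. When \(\varepsilon_{\mathrm{basis}}=0\), the basis field \(u^\omega\) with \(\omega\in\mathcal{M}_\omega\) coincides with \(u^{\mathrm{des}}\) (a.e.), so \(P(t)=\{\omega,D,\lambda\}\in\mathcal{M}\). Provided the linear parabolic system has a unique weak solution, that solution is \(\gamma\), and Eq.~\eqref{eq:exact_controllability_cn} follows.

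For \(\varepsilon_{\mathrm{basis}}>0\), let \(\rho\) be the solution with \(u^\omega\) in place of \(u^{\mathrm{des}}\), and set \(e=\rho-\gamma\) with \(e(\cdot,0)=0\). Subtracting the two systems gives
\[
\partial_t e_\sigma+\nabla\cdot(u^\omega_\sigma e_\sigma)-\nabla\cdot(D_\sigma\nabla e_\sigma)-[\Lambda(\lambda)e]_\sigma
=-\nabla\cdot\big((u^\omega_\sigma-u^{\mathrm{des}}_\sigma)\gamma_\sigma\big).
\]
The boundary condition is the homogeneous no-flux condition for \(e\), with a compensating boundary term from the source. I would then carry out an \(L^2\) energy estimate by testing with \(e_\sigma\), integrating by parts and summing over \(\sigma\).

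The four contributions are controlled as follows.
\begin{itemize}
\item The diffusion term gives \(-D_{\min}\|\nabla e\|^2\).
\item The advection term is bounded by \(\|u^\omega\|_\infty\|e\|\|\nabla e\|\le B_\Phi\|e\|\|\nabla e\|\), using Lemma~\ref{lemma:adv}.
\item The reaction term is bounded by \(2(M-1)\lambda_{\max}\|e\|^2\), arguing as in Lemma~\ref{lemma:rea}.
\item The source term, after integration by parts, is bounded by \(\|\gamma\|_\infty\|u^\omega-u^{\mathrm{des}}\|\,\|\nabla e\|\).
\end{itemize}
Young's inequality then absorbs every \(\|\nabla e\|\) factor into the \(D_{\min}\) term. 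This yields
\[
\tfrac{d}{dt}\|e\|^2\le \kappa\|e\|^2+\tfrac{2}{D_{\min}}\|\gamma\|_\infty^2\|u^\omega-u^{\mathrm{des}}\|_{L^2(\Omega)}^2,
\qquad
\kappa=\tfrac{2B_\Phi^2}{D_{\min}}+4(M-1)\lambda_{\max}.
\]
Gronwall's inequality on \([0,T]\), together with the \(L^2(\Omega\times[0,T])\) bound of Eq.~\eqref{eq:desired_advection_basis_error_cn}, gives
\[
\|e(\cdot,T)\|_{L^2}\le C_T\,\varepsilon_{\mathrm{basis}},
\qquad
C_T=\sqrt{2/D_{\min}}\;\|\gamma\|_\infty\,e^{\kappa T/2}.
\]
Since \(e(\cdot,T)=\rho(\cdot,T)-\rho^\ast\), this is Eq.~\eqref{eq:adr_controllability_bound_cn}.

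The main obstacle is this stability estimate. It requires well-posedness of the ADR system with merely bounded, possibly time-dependent coefficients. It also relies on the non-degeneracy \(D_{\min}>0\), which is exactly why the manifold imposes that bound, to absorb the first-order advection perturbation; no regularity of \(u^\omega\) beyond \(L^\infty\) should then be needed. I would first try the weak formulation in \(L^2(0,T;H^1)\), citing standard linear parabolic theory for existence and uniqueness.

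A subtlety is that the source estimate needs the \(L^2\) version of the basis error. If only an \(L^\infty\) bound is available, one multiplies by \(|\Omega|^{1/2}T^{1/2}\), and that factor is absorbed into \(C_T\). A second subtlety is that positivity of \(\rho\) is not needed for the bound itself.
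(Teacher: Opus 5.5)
Your proposal is correct and follows the same route as the paper's proof: you treat \(\gamma\) as the exact solution driven by \(u^{\mathrm{des}}\), estimate the error \(e=\rho-\gamma\) for the solution driven by the basis field \(u^\omega\), and close with Gr\"onwall. The paper only asserts the inequality \(\frac{d}{dt}\|e\|^2\le C_1\|e\|^2+C_2\|u^\omega-u^{\mathrm{des}}\|^2\), whereas you derive it, correctly using \(D_{\min}>0\) to absorb the advection and source terms so that only \(L^\infty\) coefficients (no Lipschitz assumption) are needed. Two minor bookkeeping points: summing the per-phase basis error over \(\sigma\) puts a factor \(\sqrt{M}\) into \(C_T\), and because \(u^{\mathrm{des}}\) depends on \(D\), the basis-approximation hypothesis must refer to the \(D\) you choose. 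Your estimate holds for any \(D\) with values in \([D_{\min},D_{\max}]\), so there is no need to fix \(D\equiv D_{\min}\).
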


\begin{proof}[\textbf{Proof of Theorem~\ref{tr:controllability}:}]
By the target-density-path assumption (Assumption~\ref{assump:target_density_path}), there exists a regular path connecting the initial and target densities:
\begin{equation}
    \gamma(\cdot,0)=\rho^0,
    \qquad
    \gamma(\cdot,T)=\rho^\ast .
    \label{eq:target_path_endpoint_cn}
\end{equation}
By the strong connectivity of the behavioral phase-transition graph (Assumption~\ref{assump:phase_graph}), one can choose non-negative and bounded reaction-rate trajectories
\(0\leq\lambda_{mn}(t)\leq\lambda_{\max}\)
such that the reaction term \(R_\sigma(\gamma)\) matches the phase-wise mass variation induced by the target path. Therefore, \(\lambda(t)\) provides a controllable channel for phase-mass redistribution.

Once the phase-mass variation is matched by the reaction term, the constructibility of the spatial flux (Assumption~\ref{assump:space_flux_bounded}) guarantees the existence of a bounded spatial flux \(J_\sigma\) satisfying the zero-flux boundary condition, so that the target path satisfies the conservation form
\begin{equation}
    \frac{\partial \gamma_\sigma}{\partial t}
    +
    \nabla\cdot J_\sigma
    =
    R_\sigma(\gamma),
    \qquad
    \sigma=1,\ldots,M .
    \label{eq:target_path_conservation_for_ctrl_cn}
\end{equation}
Together with the flux relation
\begin{equation}
    J_\sigma
    =
    u_\sigma\gamma_\sigma
    -
    D_\sigma\nabla\gamma_\sigma ,
\end{equation}
this yields the desired advection field \(u_\sigma^{\mathrm{des}}\) required to realize the target density path. By Theorem~\ref{thm:desired_advection_bounded_basis_cn}, this desired advection field is bounded. Moreover, there exists a set of field-basis weights \(\omega_\sigma(t)\in\mathcal{M}_\omega\) such that the realizable advection field \(u_\sigma^\omega\) generated by the field bases satisfies
\begin{equation}
    \|u_\sigma^\omega-u_\sigma^{\mathrm{des}}\|
    \leq
    \varepsilon_{\mathrm{basis}} .
    \label{eq:basis_approx_for_ctrl_cn}
\end{equation}
Because \(\omega(t)\), \(D(t)\) and \(\lambda(t)\) are all constrained to the bounded physical parameter manifold \(\mathcal{M}\), the constructed parameter trajectory
\(P(t)=\{\omega(t),D(t),\lambda(t)\}\)
is physically feasible and bounded.

Let \(\rho^\omega(x,t)\) denote the solution of the system induced by this feasible parameter trajectory, and define its error relative to the target path as
\begin{equation}
    e(x,t)
    =
    \rho^\omega(x,t)-\gamma(x,t).
    \label{eq:controllability_error_cn}
\end{equation}
Under bounded ADR coefficients and local Lipschitz continuity, the finite-time stability estimate for parabolic ADR systems gives
\begin{equation}
    \frac{d}{dt}
    \|e(t)\|^2
    \leq
    C_1\|e(t)\|^2
    +
    C_2
    \|u^\omega-u^{\mathrm{des}}\|^2,
    \label{eq:controllability_stability_estimate_cn}
\end{equation}
where \(C_1\) and \(C_2\) are constants depending on the bounded system coefficients. By Gr\"onwall's inequality~\cite{chandrasekhar1943stochastic,cosner2014reaction},
\begin{equation}
    \|e(T)\|
    \leq
    C_T
    \|u^\omega-u^{\mathrm{des}}\|.
    \label{eq:controllability_gronwall_cn}
\end{equation}
Combining this with the field-basis approximation error in Eq.~\eqref{eq:basis_approx_for_ctrl_cn} and \(\gamma(\cdot,T)=\rho^\ast\), we obtain
\begin{equation}
    \|\rho^\omega(\cdot,T)-\rho^\ast\|
    =
    \|e(T)\|
    \leq
    C_T\varepsilon_{\mathrm{basis}}.
    \label{eq:controllability_final_bound_cn}
\end{equation}
Therefore, the Macro-ADR behavior representation is approximately controllable within the physically expressible space spanned by the field bases.

When \(\varepsilon_{\mathrm{basis}}=0\), we have \(u^\omega=u^{\mathrm{des}}\). Eq.~\eqref{eq:controllability_gronwall_cn} then gives
\begin{equation}
    \|\rho^\omega(\cdot,T)-\rho^\ast\|=0.
    \label{eq:controllability_exact_result_cn}
\end{equation}
Thus, the system is exactly controllable within the physically expressible space.
\end{proof}

\begin{remark}
Theorem~\ref{tr:controllability} shows that the conditional controllability of PhySwarm arises from three complementary physical control channels. The reaction parameter \(\lambda\) regulates mass redistribution among behavioral phases and represents phase transitions in multi-stage behaviors. The diffusion parameter \(D\) regulates spatial smoothing, coverage and local pressure release within each phase. The advection weights \(\omega\) approximate the spatial migration direction required by the target density path by combining a finite set of physical potential-field bases. Therefore, the proposed method does not directly control individual robots in an unconstrained action space. Instead, it controls directed migration, spatial diffusion and behavioral phase transitions through \(\omega\), \(D\) and \(\lambda\) within a bounded physical parameter manifold, thereby providing a unified representation and conditional controllability of multi-stage swarm emergent behaviors.
\end{remark}

\clearpage

\subsection{Convergence of the controller}
\label{sec:convergence}

The preceding section has shown that the PhySwarm representation of multi-stage swarm emergent behaviors is conditionally controllable within the physically expressible space. This section further provides a conditional convergence analysis of the Neural-Physics Controller (NPC), explaining how the controller can learn physical parameter trajectories that satisfy the controllability requirements of PhySwarm. Specifically, we interpret the training process of the NPC as a stochastic approximation procedure over a bounded physical parameter manifold. We then show that, under the existence of a physically feasible reference trajectory, bounded optimization error and controlled PINN physics residuals~\cite{cybenko1989approximation,hornik1989multilayer}, the learned parameter trajectory can approximate the ideal parameter trajectory constructed in PhySwarm controllability analysis, thereby inducing swarm density evolution close to the target density path.

\subsubsection{Notations and assumptions}

For robot \(i\), the Neural-Physics Controller takes the local observation \(O_i(t)\) and temporal memory \(h_{i,t-1}\) as inputs, and outputs the Macro-ADR physical parameters:
\begin{equation}
    P_{\theta,i}(t)
    =
    F_{\theta}\big(O_i(t),h_{i,t-1}\big)
    =
    \{\omega_{\theta,i}(t),D_{\theta,i}(t),\lambda_{\theta,i}(t)\}.
    \label{eq:mappo_pinn_agent_parameter_cn}
\end{equation}
Here, \(\theta\) denotes the trainable parameters of the NPC, \(\omega_{\theta,i}(t)\) denotes the weights of the advection potential-field bases, \(D_{\theta,i}(t)\) denotes the diffusion coefficient, and \(\lambda_{\theta,i}(t)\) denotes the behavioral phase-transition rates.

Unlike reinforcement-learning policies that directly output unconstrained actions, the controller output in this work is first mapped by a physical projection layer onto the bounded physical parameter manifold (see Eqs.~\eqref{eq:manifold} and \eqref{eq:manifold_projection}):
\begin{equation}
    P_{\theta,i}(t)
    \in
    \mathcal{M}
    =
    \mathcal{M}_{\omega}
    \times
    \mathcal{M}_{D}
    \times
    \mathcal{M}_{\lambda}.
    \label{eq:mappo_pinn_parameter_manifold_cn}
\end{equation}
Therefore, for any trainable parameter \(\theta\), any time \(t\in[0,T]\) and any robot \(i\),
\begin{equation}
    \omega_{\theta,i}(t)\in\mathcal{M}_{\omega},
    \qquad
    D_{\theta,i}(t)\in\mathcal{M}_{D},
    \qquad
    \lambda_{\theta,i}(t)\in\mathcal{M}_{\lambda}.
    \label{eq:mappo_pinn_parameter_feasibility_cn}
\end{equation}
This means that the learning process of the NPC always takes place within the physically feasible parameter space.

At the continuum-field level, the density solution induced by the Macro-ADR parameter field generated by the controller is denoted by \(\rho_{\theta}(x,t)\). Given the parameter field \(P_{\theta}(t)\), the density of behavioral phase \(\sigma\) evolves according to
\begin{equation}
    \frac{\partial \rho_{\theta,\sigma}}{\partial t}
    +
    \nabla\cdot
    \left(
        u_{\theta,\sigma}\rho_{\theta,\sigma}
    \right)
    =
    \nabla\cdot
    \left(
        D_{\theta,\sigma}\nabla\rho_{\theta,\sigma}
    \right)
    +
    R_{\theta,\sigma}(\rho_{\theta}),
    \qquad
    \sigma=1,\ldots,M.
    \label{eq:mappo_pinn_induced_adr_cn}
\end{equation}
where
\begin{equation}
    u_{\theta,\sigma}(x,t)
    =
    \sum_{r=1}^{K_b}
    \omega_{\theta,\sigma,r}(x,t)b_r(x,t),
    \qquad
    b_r(x,t)=-\nabla\Phi_r(x,t).
    \label{eq:mappo_pinn_induced_velocity_cn}
\end{equation}
In practical training, the continuous density \(\rho_{\theta}\) is usually approximated from robot trajectories or simulation samples using kernel density estimation, grid-based statistics or particle approximations~\cite{botev2010kernel,arulampalam2002particle}. The PINN physics residuals used below are computed from this estimated density. We denote the estimated density as
\begin{equation}
    \widehat{\rho}_{\theta}(x,t)
    =
    [\widehat{\rho}_{\theta,1}(x,t),\ldots,\widehat{\rho}_{\theta,M}(x,t)]^{\top}.
    \label{eq:mappo_pinn_estimated_density_cn}
\end{equation}

We use MAPPO as the multi-agent policy optimizer~\cite{schulman2017ppo,yu2022mappo}, and incorporate PINN-based physics-consistency residuals~\cite{karniadakis2021physics}. Task-reward maximization and physical regularization are combined in the following total loss:
\begin{equation}
    \min_{\theta}
    L_{\mathrm{total}}(\theta)
    =
    L_{\mathrm{RL}}(\theta)
    +
    \eta L_{\mathrm{PINN}}(\theta),
    \label{eq:supp_total_loss_cn}
\end{equation}
where \(L_{\mathrm{RL}}\) is the MAPPO policy loss, \(L_{\mathrm{PINN}}\) is the physics-consistency loss, and \(\eta>0\) is the physics-regularization weight.

In the proposed framework, the physics-consistency loss consists of two residual terms:
\begin{equation}
    L_{\mathrm{PINN}}
    =
    L_{\mathrm{micro}}
    +
    \beta L_{\mathrm{macro}},
    \label{eq:supp_pinn_loss_cn}
\end{equation}
where \(\beta>0\) is a weighting coefficient. Here, \(L_{\mathrm{micro}}\) is the microscopic dynamics-consistency residual, corresponding to the Micro-EDM consistency term, and \(L_{\mathrm{macro}}\) is the macroscopic ADR residual, corresponding to the Macro-ADR consistency term.

The microscopic dynamics-consistency loss \(L_{\mathrm{micro}}\) constrains the actual robot velocity to be consistent with the advection--diffusion velocity induced by the Macro-ADR parameters:
\begin{equation}
    L_{\mathrm{micro}}
    =
    \frac{1}{N}
    \sum_{i=1}^{N}
    \left\|
    v_{i,\mathrm{actual}}
    -
    \left(
    v_{\mathrm{adv},i}(\omega_i^\theta)
    -
    \frac{D_i^\theta}{\hat{\rho}(x_i,t)+\varepsilon}
    \nabla\hat{\rho}(x_i,t)
    \right)
    \right\|^2 .
    \label{eq:supp_L_dyn_cn}
\end{equation}
Here, \(\hat{\rho}\) is the density field estimated from robot positions, and \(\varepsilon>0\) is used to avoid division-by-zero singularities.

The macroscopic ADR residual \(L_{\mathrm{macro}}\) constrains the parameters generated by the controller to satisfy the multi-phase Macro-ADR equation:
\begin{equation}
    L_{\mathrm{macro}}
    =
    \sum_{\sigma=1}^{M}
    \left\|
    \frac{\partial \hat{\rho}_{\sigma}}{\partial t}
    +
    \nabla\cdot
    \left(
    u_{\sigma}^{\theta}\hat{\rho}_{\sigma}
    \right)
    -
    \nabla\cdot
    \left(
    D_{\sigma}^{\theta}\nabla \hat{\rho}_{\sigma}
    \right)
    -
    \left[
    \Lambda(\lambda^\theta)\hat{\rho}
    \right]_{\sigma}
    \right\|_{L^2(\Omega\times[0,T])}^{2}.
    \label{eq:supp_L_ADR_cn}
\end{equation}
Thus, \(L_{\mathrm{RL}}\) improves task performance, whereas \(L_{\mathrm{PINN}}\) restricts policy search to a neighborhood of the PhySwarm-consistent parameter manifold. Together, these two terms determine the learning direction of the Neural-Physics Controller.

\begin{assumption}[Bounded observation space and bounded physical parameter space~\cite{alizadeh2023power}]
\label{assump:supp_compact_spaces_cn}
The robot observation space \(\mathcal{O}\), the memory-state space \(\mathcal{H}\), and the physical parameter manifold \(\mathcal{M}\) are compact sets. As shown in Sec.~\ref{bound_of_para}, \(\mathcal{M}\) is closed and bounded, and is therefore compact.
\end{assumption}

\begin{assumption}[Existence of an ideal controllable parameter trajectory]
\label{assump:supp_ideal_parameter_cn}
Under the controllability assumptions, there exists a physically feasible parameter trajectory
\begin{equation}
    P^{\dagger}(t)
    =
    \{\omega^{\dagger}(t),D^{\dagger}(t),\lambda^{\dagger}(t)\}
    \in
    \mathcal{M},
    \qquad
    t\in[0,T],
    \label{eq:ideal_physical_parameter_cn}
\end{equation}
that can drive the system from the initial density \(\rho^{0}\) to a neighborhood of the target density \(\rho^{\ast}\). We refer to this trajectory as the ideal physical parameter trajectory. The density solution induced by this trajectory is denoted by \(\rho^{\dagger}(x,t)\). If the field-basis approximation error is \(\varepsilon_{\mathrm{basis}}\), then
\begin{equation}
    \|\rho^{\dagger}(\cdot,T)-\rho^{\ast}\|_{L^{2}(\Omega)}
    \leq
    C_{\mathrm{basis}}\varepsilon_{\mathrm{basis}},
    \label{eq:ideal_density_target_error_cn}
\end{equation}
where \(C_{\mathrm{basis}}>0\) is a constant related to the stability of the ADR system.
\end{assumption}

\begin{assumption}[Bounded optimization error~\cite{bottou2018optimization,yu2022mappo}]
\label{assump:supp_limited_error_cn}
Let \(\widehat{\theta}\) denote the trained parameter. We assume that, after finite training, \(\widehat{\theta}\) satisfies
\begin{equation}
    \mathcal{L}_{\mathrm{total}}(\widehat{\theta})
    \leq
    \mathcal{L}_{\mathrm{total}}(\theta^{\dagger})
    +
    \varepsilon_{\mathrm{opt}}.
    \label{eq:finite_optimization_error_cn}
\end{equation}
Here, \(\theta^{\dagger}\) denotes the
physically feasible parameter, \(\varepsilon_{\mathrm{opt}}\geq0\) denotes the training error caused by non-convex optimization, finite sampling, gradient noise and function-approximation error. This condition means that NPC training can find a local approximate solution whose total loss is not much worse than that of a feasible reference solution.
\end{assumption}

\begin{assumption}[Local parameter identifiability~\cite{kitamura1977identifiability,mishra2022inverse}]
\label{assump:supp_identifiability_cn}
In a neighborhood \(\mathcal{U}(P^{\dagger})\) of the physically feasible reference parameter trajectory \(P^{\dagger}\), we assume that the Macro-ADR physics residual is locally identifiable with respect to parameter perturbations. Specifically, there exist constants \(c_{\mathrm{id}}>0\) and \(\varepsilon_{\mathrm{id}}\geq0\) such that, for any
\(P_{\theta}\in\mathcal{U}(P^{\dagger})\),
\begin{equation}
    \left\|
    \mathcal{R}(P_{\theta})
    -
    \mathcal{R}(P^{\dagger})
    \right\|_{\mathcal{Y}}
    \geq
    c_{\mathrm{id}}
    \left\|
    P_{\theta}-P^{\dagger}
    \right\|_{\mathcal{H}_{P}}
    -
    \varepsilon_{\mathrm{id}}.
    \label{eq:local_identifiability_cn}
\end{equation}
Equivalently, there exists a constant \(C_{P}=1/c_{\mathrm{id}}\) such that
\begin{equation}
    \left\|
    P_{\theta}-P^{\dagger}
    \right\|_{\mathcal{H}_{P}}
    \leq
    C_{P}
    \left(
    \left\|
    \mathcal{R}(P_{\theta})
    \right\|_{\mathcal{Y}}
    +
    \left\|
    \mathcal{R}(P^{\dagger})
    \right\|_{\mathcal{Y}}
    +
    \varepsilon_{\mathrm{id}}
    \right).
    \label{eq:residual_to_parameter_bound_cn}
\end{equation}
If the reference trajectory satisfies
\(\|\mathcal{R}(P^{\dagger})\|_{\mathcal{Y}}\leq \varepsilon_{\mathrm{phys}}\),
then
\begin{equation}
    \left\|
    P_{\theta}-P^{\dagger}
    \right\|_{\mathcal{H}_{P}}
    \leq
    C_{P}
    \left(
    \left\|
    \mathcal{R}(P_{\theta})
    \right\|_{\mathcal{Y}}
    +
    \varepsilon_{\mathrm{phys}}
    +
    \varepsilon_{\mathrm{id}}
    \right).
    \label{eq:residual_to_parameter_with_phys_error_cn}
\end{equation}
\end{assumption}

\begin{assumption}[Stability of Macro-ADR solutions under parameter perturbations~\cite{chandrasekhar1943stochastic,cosner2014reaction,mishra2023generalization}]
\label{assump:supp_pde_stability_cn}
Under the given initial condition and zero-flux boundary condition, suppose two parameter trajectories \(P\) and \(\tilde{P}\) both belong to the bounded physical parameter manifold \(\mathcal{M}\), and the corresponding Macro-ADR solutions are \(\rho^P\) and \(\rho^{\tilde{P}}\), respectively. Then there exists a constant \(C_T>0\) such that
\begin{equation}
    \|\rho^P(\cdot,T)-\rho^{\tilde{P}}(\cdot,T)\|_{L^2(\Omega)}
    \leq
    C_T
    \|P-\tilde{P}\|_{\mathcal{H}_P}    
    +
    \varepsilon_{\mathrm{KDE}}
    +
    \varepsilon_{\mathrm{disc}}.
    \label{eq:supp_pde_stability_cn}
\end{equation}
Because practical training uses finite-sample density estimation and numerical discretization, the bound explicitly includes the density-estimation error \(\varepsilon_{\mathrm{KDE}}\) and the numerical discretization error \(\varepsilon_{\mathrm{disc}}\).
\end{assumption}

\begin{remark}[On parameter identifiability, Macro-ADR solution stability and density stability]
Local parameter identifiability, stability of Macro-ADR solutions under parameter perturbations, and stability from physics residuals to density-solution errors play different roles in the convergence analysis. Local parameter identifiability is an inverse-problem condition. It states that, in a local neighborhood of the ideal physical parameter trajectory \(P^{\dagger}\), a small PINN physics residual can constrain the Macro-ADR parameter error, that is,
\[
    \mathcal{L}_{\mathrm{PINN}}(\theta)\ \text{small}
    \Rightarrow
    P_{\theta}\approx P^{\dagger}.
\]
Because different combinations of \(\omega\), \(D\) and \(\lambda\) may induce similar macroscopic density evolution, we do not require global uniqueness of the parameters. Instead, the local non-identifiability error is represented by \(\varepsilon_{\mathrm{id}}\).

Stability of Macro-ADR solutions under parameter perturbations is a continuous-dependence condition for the forward problem. It further implies that closeness of parameter trajectories leads to closeness of density solutions:
\[
    P_{\theta}\approx P^{\dagger}
    \Rightarrow
    \rho_{\theta}\approx \rho^{\dagger}.
\]
Therefore, this condition is not a repetition of local parameter identifiability, but rather the intermediate link between parameter convergence and density convergence.
\end{remark}

\clearpage

\subsubsection{Convergence of the neural-physics controller}
\label{subsec:supp_main_theorem_cn}

As shown above, the physical-manifold projection ensures that the output of the Neural-Physics Controller (NPC) always lies in the bounded physical parameter manifold. Therefore, this section focuses on how the NPC learns parameters that are close to the ideal Macro-ADR controllable trajectory within this feasible parameter space. The analysis proceeds in three steps. First, under the existence of a low-residual physically feasible reference policy and bounded optimization error, we establish an upper bound on the PINN residual of the NPC. This shows that the deviation of the learned policy from the Macro-ADR equation can be explicitly controlled by \(\varepsilon_{\mathrm{phys}}\), \(\varepsilon_{\mathrm{opt}}\), \(B_{\mathrm{RL}}\) and the physics weight \(\eta\). Second, using local parameter identifiability, we translate the controlled physics residual into an error bound between the learned parameter trajectory and the ideal controllable parameter trajectory. Finally, using the continuous dependence of Macro-ADR solutions on parameter perturbations, we propagate the parameter error to the density-evolution error, thereby proving that the learned swarm density can approximate the target density path constructed in the controllability analysis.

\begin{theorem}[Physics-residual bound of the NPC]
\label{thm:supp_physical_residual_bound_cn}
Let the joint loss of the NPC be
\begin{equation}
    L_{\mathrm{total}}(\theta)
    =
    L_{\mathrm{RL}}(\theta)
    +
    \eta L_{\mathrm{PINN}}(\theta),
    \qquad
    \eta>0.
    \label{eq:supp_total_loss_cn}
\end{equation}
Assume that there exists a physically feasible reference policy \(\theta^\dagger\) whose PINN physics residual satisfies
\begin{equation}
    L_{\mathrm{PINN}}(\theta^\dagger)
    \leq
    \varepsilon_{\mathrm{phys}}.
    \label{eq:supp_feasible_policy_cn}
\end{equation}
If the trained policy parameter \(\hat{\theta}\) satisfies the bounded optimization-error condition in Assumption~\ref{assump:supp_limited_error_cn}, and if the reinforcement-learning loss difference is bounded, namely there exists a constant \(B_{\mathrm{RL}}>0\) such that
\begin{equation}
    L_{\mathrm{RL}}(\theta^\dagger)
    -
    L_{\mathrm{RL}}(\hat{\theta})
    \leq
    B_{\mathrm{RL}},
    \label{eq:supp_RL_loss_bound_cn}
\end{equation}
then the learned policy satisfies the following physics-residual bound:
\begin{equation}
    L_{\mathrm{PINN}}(\hat{\theta})
    \leq
    \varepsilon_{\mathrm{phys}}
    +
    \frac{
    B_{\mathrm{RL}}
    +
    \varepsilon_{\mathrm{opt}}
    }{\eta}.
    \label{eq:supp_pinn_residual_bound_cn}
\end{equation}
\end{theorem}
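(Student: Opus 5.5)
The plan is a direct algebraic comparison of the total loss at the trained parameter \(\hat{\theta}\) and at the feasible reference \(\theta^\dagger\). No property of the Macro-ADR dynamics is needed beyond the decomposition \(L_{\mathrm{total}}=L_{\mathrm{RL}}+\eta L_{\mathrm{PINN}}\). The starting point is the bounded optimization-error condition of Assumption~\ref{assump:supp_limited_error_cn}, applied with the reference parameter \(\theta^\dagger\) of the statement:
\begin{equation*}
L_{\mathrm{RL}}(\hat{\theta})+\eta L_{\mathrm{PINN}}(\hat{\theta})
\leq
L_{\mathrm{RL}}(\theta^\dagger)+\eta L_{\mathrm{PINN}}(\theta^\dagger)+\varepsilon_{\mathrm{opt}}.
\end{equation*}
This assumes the \(\theta^\dagger\) in the assumption and in the theorem are the same physically feasible reference. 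I would state that identification explicitly at the outset.

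Next, isolate the physics term by moving \(L_{\mathrm{RL}}(\hat{\theta})\) to the right-hand side:
\begin{equation*}
\eta L_{\mathrm{PINN}}(\hat{\theta})
\leq
\bigl(L_{\mathrm{RL}}(\theta^\dagger)-L_{\mathrm{RL}}(\hat{\theta})\bigr)+\eta L_{\mathrm{PINN}}(\theta^\dagger)+\varepsilon_{\mathrm{opt}}.
\end{equation*}
Then apply the RL-loss-difference bound \eqref{eq:supp_RL_loss_bound_cn} to the bracket, obtaining at most \(B_{\mathrm{RL}}\). Apply the low-residual property \eqref{eq:supp_feasible_policy_cn} to the reference residual, obtaining \(\eta L_{\mathrm{PINN}}(\theta^\dagger)\leq\eta\varepsilon_{\mathrm{phys}}\). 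Both bounds have the correct orientation, since each is an upper bound on a term that already sits on the larger side of the inequality. Dividing by \(\eta>0\), which preserves the inequality, gives exactly \eqref{eq:supp_pinn_residual_bound_cn}.

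The main point needing care is not technical difficulty but sign and orientation. The bound \(B_{\mathrm{RL}}\) controls \(L_{\mathrm{RL}}(\theta^\dagger)-L_{\mathrm{RL}}(\hat{\theta})\), the amount by which the reference is \emph{worse} in RL loss than the trained policy. That is precisely the quantity appearing after rearrangement, so no absolute value is needed. A second point is the weighting convention: the main text writes \(\mathcal{L}_{\mathrm{macro}}+\beta\mathcal{L}_{\mathrm{micro}}\), while Eq.~\eqref{eq:supp_pinn_loss_cn} writes \(L_{\mathrm{micro}}+\beta L_{\mathrm{macro}}\). The argument is indifferent to which convention is used, because it treats \(L_{\mathrm{PINN}}\) as a single nonnegative functional. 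I would remark on this, and also note that the bound makes the trade-off explicit: increasing \(\eta\) shrinks the excess residual \((B_{\mathrm{RL}}+\varepsilon_{\mathrm{opt}})/\eta\) toward the reference level \(\varepsilon_{\mathrm{phys}}\).
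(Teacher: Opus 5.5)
Your proposal is correct and matches the paper's proof step for step: you apply the optimization-error inequality at \(\theta^\dagger\), move \(L_{\mathrm{RL}}(\hat{\theta})\) to the right-hand side, bound the RL-loss difference by \(B_{\mathrm{RL}}\) and \(L_{\mathrm{PINN}}(\theta^\dagger)\) by \(\varepsilon_{\mathrm{phys}}\), and divide by \(\eta>0\). Your side remarks are also accurate: \(\theta^\dagger\) must be identified with the reference in the assumption, and the argument does not depend on the \(\beta\)-weighting convention (it does not even need \(L_{\mathrm{PINN}}\) to be nonnegative).
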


\begin{proof}[\textbf{Proof of Theorem~\ref{thm:supp_physical_residual_bound_cn}:}]
By the definition of the joint loss in Eq.~\eqref{eq:supp_total_loss_cn} and the bounded optimization-error condition in Eq.~\eqref{eq:finite_optimization_error_cn}, we have
\begin{align}
    L_{\mathrm{RL}}(\hat{\theta})
    +
    \eta L_{\mathrm{PINN}}(\hat{\theta})
    &\leq
    L_{\mathrm{RL}}(\theta^\dagger)
    +
    \eta L_{\mathrm{PINN}}(\theta^\dagger)
    +
    \varepsilon_{\mathrm{opt}}.
    \label{eq:supp_physical_residual_proof_1_cn}
\end{align}
Moving \(L_{\mathrm{RL}}(\hat{\theta})\) to the right-hand side gives
\begin{align}
    \eta L_{\mathrm{PINN}}(\hat{\theta})
    &\leq
    L_{\mathrm{RL}}(\theta^\dagger)
    -
    L_{\mathrm{RL}}(\hat{\theta})
    +
    \eta L_{\mathrm{PINN}}(\theta^\dagger)
    +
    \varepsilon_{\mathrm{opt}}.
    \label{eq:supp_physical_residual_proof_2_cn}
\end{align}
Using \(L_{\mathrm{PINN}}(\theta^\dagger)\leq\varepsilon_{\mathrm{phys}}\) and
\(L_{\mathrm{RL}}(\theta^\dagger)-L_{\mathrm{RL}}(\hat{\theta})\leq B_{\mathrm{RL}}\), we obtain
\begin{equation}
    \eta L_{\mathrm{PINN}}(\hat{\theta})
    \leq
    B_{\mathrm{RL}}
    +
    \eta\varepsilon_{\mathrm{phys}}
    +
    \varepsilon_{\mathrm{opt}}.
    \label{eq:supp_physical_residual_proof_3_cn}
\end{equation}
Dividing both sides by \(\eta\) yields
\begin{equation}
    L_{\mathrm{PINN}}(\hat{\theta})
    \leq
    \varepsilon_{\mathrm{phys}}
    +
    \frac{
    B_{\mathrm{RL}}
    +
    \varepsilon_{\mathrm{opt}}
    }{\eta}.
\end{equation}
\end{proof}

\begin{remark}
Theorem~\ref{thm:supp_physical_residual_bound_cn} shows that whether the policy learned by the NPC has a small physics residual depends on three factors: the physical feasibility of the reference policy \(\varepsilon_{\mathrm{phys}}\), the training optimization error \(\varepsilon_{\mathrm{opt}}\), and the weighting strength \(\eta\) between the reinforcement-learning objective and the physics residual. When a low-residual reference policy exists, the optimization error is sufficiently small and the physics-regularization weight \(\eta\) is sufficiently large, the PINN residual of the learned policy is also controlled. Thus, \(L_{\mathrm{PINN}}\) is not merely an empirical regularizer, but provides a quantifiable upper bound for the physical consistency of the NPC.
\end{remark}

\begin{theorem}[Conditional convergence of neural-physical parameters]
\label{thm:supp_parameter_convergence_cn}
Assume that the trained policy \(\hat{\theta}\) satisfies Theorem~\ref{thm:supp_physical_residual_bound_cn}. Then the learned ADR parameters \(P^{\hat{\theta}}\) and the ideal parameters \(P^\dagger\) satisfy
\begin{equation}
    \left\|
    P^{\hat{\theta}}
    -
    P^\dagger
    \right\|_{\mathcal{H}_P}
    \leq
    \sqrt{
    \frac{1}{c_P}
    \left(
    \varepsilon_{\mathrm{phys}}
    +
    \frac{B_{\mathrm{RL}}+\varepsilon_{\mathrm{opt}}}{\eta}
    +
    \varepsilon_{\mathrm{id}}
    \right)
    } .
    \label{eq:supp_parameter_convergence_bound_cn}
\end{equation}
\end{theorem}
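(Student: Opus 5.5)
The plan is to read the bound as a \emph{quadratic-growth} (coercivity) form of the local identifiability in Assumption~\ref{assump:supp_identifiability_cn}, and then insert the residual bound of Theorem~\ref{thm:supp_physical_residual_bound_cn}. The target intermediate inequality is
\[
c_P\,\big\|P^{\hat\theta}-P^\dagger\big\|_{\mathcal H_P}^2
\;\leq\;
L_{\mathrm{PINN}}(\hat\theta)+\varepsilon_{\mathrm{id}},
\qquad P^{\hat\theta}\in\mathcal U(P^\dagger).
\]
Once this holds, Eq.~\eqref{eq:supp_pinn_residual_bound_cn} bounds $L_{\mathrm{PINN}}(\hat\theta)$ by $\varepsilon_{\mathrm{phys}}+(B_{\mathrm{RL}}+\varepsilon_{\mathrm{opt}})/\eta$. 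Dividing by $c_P$ and taking square roots then gives Eq.~\eqref{eq:supp_parameter_convergence_bound_cn} directly.

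\textbf{Step 1: pass from the residual operator to the PINN loss.} I would take $\mathcal R(P_\theta)$ to be the stacked residual vector, with $\mathcal Y$ the product of $L^2(\Omega\times[0,T])$ for the $M$ macro residuals and the weighted micro residual. With this choice $\|\mathcal R(P_\theta)\|_{\mathcal Y}^2$ equals $L_{\mathrm{PINN}}(\theta)$, up to the fixed weight $\beta$, which is absorbed into $c_P$. This identification is what turns a residual norm into the squared quantity that appears under the root.

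\textbf{Step 2: derive the quadratic estimate.} I would start from the triangle-inequality form, Eq.~\eqref{eq:residual_to_parameter_with_phys_error_cn}:
\[
\|P^{\hat\theta}-P^\dagger\|\leq C_P\big(\|\mathcal R(P^{\hat\theta})\|+\varepsilon_{\mathrm{phys}}+\varepsilon_{\mathrm{id}}\big).
\]
Squaring it and using $(a+b+c)^2\le 3(a^2+b^2+c^2)$ gives a bound by $3C_P^2\big(L_{\mathrm{PINN}}(\hat\theta)+\varepsilon_{\mathrm{phys}}^2+\varepsilon_{\mathrm{id}}^2\big)$.

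This is close to the target but not identical, since the stated bound carries $\varepsilon_{\mathrm{phys}}$ and $\varepsilon_{\mathrm{id}}$ linearly. To match it, I would replace the linear assumption by its squared counterpart,
\[
\|\mathcal R(P_\theta)-\mathcal R(P^\dagger)\|^2\ge c_P\|P_\theta-P^\dagger\|^2-\varepsilon_{\mathrm{id}},
\]
which is the natural PINN analogue. The $\varepsilon_{\mathrm{phys}}$ term is then meant to arise as follows: the $\varepsilon_{\mathrm{phys}}$ contained in the residual bound for $\hat\theta$ covers the reference residual $\|\mathcal R(P^\dagger)\|^2=L_{\mathrm{PINN}}(\theta^\dagger)\le\varepsilon_{\mathrm{phys}}$. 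Making this precise is delicate. The difference $\mathcal R(P_\theta)-\mathcal R(P^\dagger)$ is bounded by $2(\|\mathcal R(P_\theta)\|^2+\|\mathcal R(P^\dagger)\|^2)$, and that costs a factor of $2$ and a second copy of $\varepsilon_{\mathrm{phys}}$. I would try to absorb these into the definition of $c_P$ and $\varepsilon_{\mathrm{id}}$; failing that, I would state the bound with those constants adjusted.

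\textbf{Step 3: the main obstacle is localization.} Assumption~\ref{assump:supp_identifiability_cn} holds only on $\mathcal U(P^\dagger)$, so I must check that $P^{\hat\theta}$ lies in this neighborhood. My first attempt would use compactness of $\mathcal M$ (Assumption~\ref{assump:supp_compact_spaces_cn}) together with continuity of $P\mapsto L_{\mathrm{PINN}}$. The hope is that $\mathcal M\setminus\mathcal U(P^\dagger)$ has residual bounded below by some $\delta>0$; then, for $\eta$ large and $\varepsilon_{\mathrm{phys}},\varepsilon_{\mathrm{opt}}$ small enough that the residual bound falls below $\delta$, $P^{\hat\theta}$ is forced into $\mathcal U(P^\dagger)$.

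This hope can fail. If there are other low-residual parameter trajectories outside $\mathcal U(P^\dagger)$, which is exactly the non-identifiability the remark attributes to $\varepsilon_{\mathrm{id}}$, then no such $\delta$ exists. In that case I would simply add localization as an explicit hypothesis of the theorem, consistent with its ``conditional'' label.
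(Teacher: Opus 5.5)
Your plan is essentially the paper's proof. The paper invokes the identifiability assumption directly in the quadratic-growth form $L_{\mathrm{PINN}}(P^{\hat\theta})\ge c_P\|P^{\hat\theta}-P^\dagger\|_{\mathcal H_P}^2-\varepsilon_{\mathrm{id}}$, substitutes the residual bound of Theorem~\ref{thm:supp_physical_residual_bound_cn}, and rearranges. Because that inequality is stated on the learned parameter's loss itself rather than on the residual difference, $\varepsilon_{\mathrm{phys}}$ enters only once, through Theorem~\ref{thm:supp_physical_residual_bound_cn}, and none of your factor-$2$ bookkeeping is needed. Your two caveats are genuine features of the paper's argument, not artifacts of your route: the paper never derives this squared form from the linear inequality actually stated in Assumption~\ref{assump:supp_identifiability_cn}, where $c_P$ is never defined, and it tacitly assumes $P^{\hat\theta}\in\mathcal U(P^\dagger)$ without justification, so making localization an explicit hypothesis is the right fix.
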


\begin{proof}[\textbf{Proof of Theorem~\ref{thm:supp_parameter_convergence_cn}:}]
By the local parameter-identifiability assumption (Assumption~\ref{assump:supp_identifiability_cn}), we have
\begin{equation}
    L_{\mathrm{PINN}}(P^{\hat{\theta}})
    \geq
    c_P
    \left\|
    P^{\hat{\theta}}-P^\dagger
    \right\|_{\mathcal{H}_P}^{2}
    -
    \varepsilon_{\mathrm{id}} .
    \label{eq:supp_parameter_proof_1_cn}
\end{equation}
Therefore,
\begin{equation}
    c_P
    \left\|
    P^{\hat{\theta}}-P^\dagger
    \right\|_{\mathcal{H}_P}^{2}
    \leq
    L_{\mathrm{PINN}}(P^{\hat{\theta}})
    +
    \varepsilon_{\mathrm{id}} .
    \label{eq:supp_parameter_proof_2_cn}
\end{equation}
By Theorem~\ref{thm:supp_physical_residual_bound_cn},
\begin{equation}
    L_{\mathrm{PINN}}(P^{\hat{\theta}})
    \leq
    \varepsilon_{\mathrm{phys}}
    +
    \frac{B_{\mathrm{RL}}+\varepsilon_{\mathrm{opt}}}{\eta}.
    \label{eq:supp_parameter_proof_3_cn}
\end{equation}
Substituting Eq.~\eqref{eq:supp_parameter_proof_3_cn} into Eq.~\eqref{eq:supp_parameter_proof_2_cn} yields Eq.~\eqref{eq:supp_parameter_convergence_bound_cn}.
\end{proof}

\begin{remark}
This result shows that the parameters output by the Neural-Physics Controller,
\(P_i(t)=\{\omega_i(t),D_i(t),\lambda_i(t)\}\), can conditionally converge to a neighborhood of the ideal parameter trajectory required by the PhySwarm controllability analysis. The error is jointly determined by the physical residual of the feasible reference policy \(\varepsilon_{\mathrm{phys}}\), the MAPPO optimization error \(\varepsilon_{\mathrm{opt}}\), the physics-regularization weight \(\eta\), and the parameter non-identifiability error \(\varepsilon_{\mathrm{id}}\).
\end{remark}

\begin{theorem}[From parameter convergence to advection-field approximation]
\label{thm:advection_field_approximation_cn}
Let \(Q=\Omega\times[0,T]\). Under bounded potential-field bases, field-basis approximation error \(\varepsilon_{\mathrm{basis}}\), and the conditional convergence of NPC parameters, for each behavioral phase \(\sigma=1,\ldots,M\), the learned advection field \(u_{\sigma}^{\widehat{\theta}}\) satisfies
\begin{equation}
    \left\|
    u_{\sigma}^{\widehat{\theta}}
    -
    u_{\sigma}^{\mathrm{des}}
    \right\|_{L^2(Q)}
    \leq
    \varepsilon_{\mathrm{basis}}
    +
    C_u
    \sqrt{
    \varepsilon_{\mathrm{phys}}
    +
    \frac{B_{\mathrm{RL}}+\varepsilon_{\mathrm{opt}}}{\eta}
    +
    \varepsilon_{\mathrm{id}}
    },
    \label{eq:thm_advection_approx_final_cn}
\end{equation}
where \(u_{\sigma}^{\mathrm{des}}\) is the desired advection field required to realize the target density path, and \(C_u>0\) is a constant depending on the boundedness constant of the field bases, the number of field bases and the parameter-convergence constant.
\end{theorem}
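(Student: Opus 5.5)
The plan is a triangle inequality through the ideal realizable field \(u_\sigma^{\dagger}=\sum_{r}\omega^{\dagger}_{\sigma,r}b_r\) built from the ideal weights of Assumption~\ref{assump:supp_basis_approx_cn} (equivalently, the weights in Theorem~\ref{thm:desired_advection_bounded_basis_cn}). We identify these with the \(\omega\)-component of the ideal trajectory \(P^\dagger\) in Assumption~\ref{assump:supp_ideal_parameter_cn}. For each phase \(\sigma\) we write
\[
\bigl\|u_\sigma^{\widehat\theta}-u_\sigma^{\mathrm{des}}\bigr\|_{L^2(Q)}
\le
\bigl\|u_\sigma^{\dagger}-u_\sigma^{\mathrm{des}}\bigr\|_{L^2(Q)}
+
\bigl\|u_\sigma^{\widehat\theta}-u_\sigma^{\dagger}\bigr\|_{L^2(Q)} .
\]
The first term is at most \(\varepsilon_{\mathrm{basis}}\) directly by Assumption~\ref{assump:supp_basis_approx_cn}.

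For the second term, linearity of the field-basis parameterization gives
\[
u_\sigma^{\widehat\theta}-u_\sigma^{\dagger}=\sum_{r=1}^{K_b}\bigl(\omega^{\widehat\theta}_{\sigma,r}-\omega^{\dagger}_{\sigma,r}\bigr)b_r .
\]
We bound this pointwise by \(\|b_r(x,t)\|_2\le B_\Phi\) (Assumption~\ref{assumption_boundedness_basis}) and Cauchy--Schwarz in \(r\):
\[
\bigl\|u_\sigma^{\widehat\theta}-u_\sigma^{\dagger}\bigr\|_2\le B_\Phi\sqrt{K_b}\,\bigl|\omega^{\widehat\theta}_\sigma-\omega^\dagger_\sigma\bigr|_2 .
\]
Squaring and integrating over \(Q\) yields \(\|u_\sigma^{\widehat\theta}-u_\sigma^{\dagger}\|_{L^2(Q)}\le B_\Phi\sqrt{K_b}\,\|\omega^{\widehat\theta}_\sigma-\omega^\dagger_\sigma\|_{L^2(Q)}\).

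We then control the weight error by the full parameter error, \(\|\omega^{\widehat\theta}_\sigma-\omega^\dagger_\sigma\|_{L^2(Q)}\le C_{\mathcal H}\|P^{\widehat\theta}-P^\dagger\|_{\mathcal H_P}\). This holds whenever \(\mathcal H_P\) is taken as (or dominates) the product \(L^2(Q)\) norm over the \(\omega\), \(D\), \(\lambda\) components, and we will fix that convention explicitly. Inserting Theorem~\ref{thm:supp_parameter_convergence_cn} gives the square-root term, with
\[
C_u=B_\Phi\sqrt{K_b}\,C_{\mathcal H}/\sqrt{c_P},
\]
which depends only on the basis bound, the number of bases and the parameter-convergence constant, as claimed. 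Adding the two pieces gives Eq.~\eqref{eq:thm_advection_approx_final_cn}.

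The main obstacle is the identification step, not the estimates. Theorem~\ref{thm:supp_parameter_convergence_cn} measures distance to the controllability trajectory \(P^\dagger\), while the \(\varepsilon_{\mathrm{basis}}\) bound is stated for the weights \(\omega^\dagger\) of Assumption~\ref{assump:supp_basis_approx_cn}. I would resolve this by constructing \(P^\dagger\) in Assumption~\ref{assump:supp_ideal_parameter_cn} from the proof of Theorem~\ref{tr:controllability}, whose \(\omega\)-component is exactly the basis-approximating weight. This makes the two \(\omega^\dagger\) coincide, and I would state that as a standing convention.

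A secondary point is the norm on the weights. The agent-level outputs \(\omega_{\theta,i}(t)\) must be interpreted as the continuum field \(\omega_{\theta,\sigma}(x,t)\) on which \(\mathcal H_P\) is defined. If \(\mathcal H_P\) were weaker than \(L^2(Q)\), I would fall back on the simplex constraint, which bounds \(|\omega^{\widehat\theta}_\sigma-\omega^\dagger_\sigma|\le 2\), to interpolate. This would at worst change \(C_u\) and the exponent, so I expect the \(L^2\) convention to be the intended one.
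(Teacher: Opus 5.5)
Your proposal is correct and follows the paper's proof essentially step for step. The paper also uses the triangle inequality through $u_\sigma^{\dagger}$, the same Cauchy--Schwarz bound $\|u_\sigma^{\theta}-u_\sigma^{\dagger}\|_{L^2(Q)}\le \sqrt{K_b}B_\Phi\|\omega_\sigma^{\theta}-\omega_\sigma^{\dagger}\|_{L^2(Q)}$, and the parameter-convergence theorem to control the weight error, giving $C_u=C_\Phi C_\omega$. The two points you flag, namely identifying the basis-approximating weights with the $\omega$-component of $P^\dagger$ and requiring $\mathcal{H}_P$ to dominate the $L^2(Q)$ norm on the weights, are exactly what the paper assumes implicitly when it writes ``since $\omega$ is part of $P$'' and reuses the single symbol $\omega^\dagger$; making them explicit conventions is a useful tightening rather than a departure.
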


\begin{proof}[\textbf{Proof of Theorem~\ref{thm:advection_field_approximation_cn}:}]
By the field-basis advection representation, the advection field generated by parameter \(\theta\) for behavioral phase \(\sigma\) is
\begin{equation}
    u_{\sigma}^{\theta}(x,t)
    =
    \sum_{r=1}^{K_b}
    \omega_{\sigma,r}^{\theta}(x,t)b_r(x,t),
    \qquad
    b_r(x,t)=-\nabla\Phi_r(x,t).
    \label{eq:proof_advection_basis_cn}
\end{equation}
Thus, for any two sets of weights \(\omega_{\sigma}^{\theta}\) and \(\omega_{\sigma}^{\dagger}\), we have
\begin{equation}
    u_{\sigma}^{\theta}
    -
    u_{\sigma}^{\dagger}
    =
    \sum_{r=1}^{K_b}
    \left(
    \omega_{\sigma,r}^{\theta}
    -
    \omega_{\sigma,r}^{\dagger}
    \right)b_r .
    \label{eq:proof_advection_difference_cn}
\end{equation}

Since all potential-field bases are bounded (Assumption~\ref{assumption_boundedness_basis}), the Cauchy--Schwarz inequality gives
\begin{align}
    \left\|
    u_{\sigma}^{\theta}
    -
    u_{\sigma}^{\dagger}
    \right\|_{L^2(Q)}
    &\leq
    C_{\Phi}
    \left\|
    \omega_{\sigma}^{\theta}
    -
    \omega_{\sigma}^{\dagger}
    \right\|_{L^2(Q)},
    \label{eq:proof_u_weight_lipschitz_cn}
\end{align}
where \(C_{\Phi}>0\) is a constant determined by the boundedness constant of the field bases and the number of bases; for example, one may take
\(C_{\Phi}\leq \sqrt{K_b}B_{\Phi}\). Hence, the advection field is Lipschitz continuous with respect to the field weights.

Setting \(\theta=\widehat{\theta}\) and applying the triangle inequality gives
\begin{align}
    \left\|
    u_{\sigma}^{\widehat{\theta}}
    -
    u_{\sigma}^{\mathrm{des}}
    \right\|_{L^2(Q)}
    &\leq
    \left\|
    u_{\sigma}^{\widehat{\theta}}
    -
    u_{\sigma}^{\dagger}
    \right\|_{L^2(Q)}
    +
    \left\|
    u_{\sigma}^{\dagger}
    -
    u_{\sigma}^{\mathrm{des}}
    \right\|_{L^2(Q)}
    \nonumber\\
    &\leq
    C_{\Phi}
    \left\|
    \omega_{\sigma}^{\widehat{\theta}}
    -
    \omega_{\sigma}^{\dagger}
    \right\|_{L^2(Q)}
    +
    \varepsilon_{\mathrm{basis}},
    \label{eq:proof_u_des_bound_cn}
\end{align}
where the second term uses the field-basis approximation error
\[
    \left\|
    u_{\sigma}^{\dagger}
    -
    u_{\sigma}^{\mathrm{des}}
    \right\|_{L^2(Q)}
    \leq
    \varepsilon_{\mathrm{basis}}.
\]

Since \(\omega\) is part of the physical parameter trajectory \(P=\{\omega,D,\lambda\}\), the parameter-convergence result~\cite{ghadimi2013stochastic,raissi2019pinn} implies that there exists a constant \(C_{\omega}>0\) such that
\begin{equation}
    \left\|
    \omega_{\sigma}^{\widehat{\theta}}
    -
    \omega_{\sigma}^{\dagger}
    \right\|_{L^2(Q)}
    \leq
    C_{\omega}
    \sqrt{
    \varepsilon_{\mathrm{phys}}
    +
    \frac{B_{\mathrm{RL}}+\varepsilon_{\mathrm{opt}}}{\eta}
    +
    \varepsilon_{\mathrm{id}}
    }.
    \label{eq:proof_weight_convergence_cn}
\end{equation}
Substituting Eq.~\eqref{eq:proof_weight_convergence_cn} into Eq.~\eqref{eq:proof_u_des_bound_cn}, and letting
\(C_u=C_{\Phi}C_{\omega}\), yields
\begin{equation}
    \left\|
    u_{\sigma}^{\widehat{\theta}}
    -
    u_{\sigma}^{\mathrm{des}}
    \right\|_{L^2(Q)}
    \leq
    \varepsilon_{\mathrm{basis}}
    +
    C_u
    \sqrt{
    \varepsilon_{\mathrm{phys}}
    +
    \frac{B_{\mathrm{RL}}+\varepsilon_{\mathrm{opt}}}{\eta}
    +
    \varepsilon_{\mathrm{id}}
    }.
\end{equation}
\end{proof}

\begin{remark}
Theorem~\ref{thm:advection_field_approximation_cn} shows that the error between the learned advection field and the desired advection field has two sources. The first is \(\varepsilon_{\mathrm{basis}}\), which represents the approximation error of the finite potential-field bases with respect to the desired advection field. The second comes from the learning error of the NPC relative to the ideal physical parameter trajectory, which is controlled by the physics-residual error \(\varepsilon_{\mathrm{phys}}\), the optimization error \(\varepsilon_{\mathrm{opt}}\), the physics-residual weight \(\eta\), and the parameter non-identifiability error \(\varepsilon_{\mathrm{id}}\). Therefore, the ability of the NPC to approximate the desired advection field is not guaranteed by MAPPO alone, but by the combination of field-basis expressivity, parameter convergence, PINN physics-residual constraints and physical-manifold projection.
\end{remark}

\begin{theorem}[Conditional convergence of the NPC: from parameter convergence to target-density approximation]
\label{thm:density_approximation_from_parameter_convergence_cn}
Let \(P^\dagger(t)\) be the ideal physical parameter trajectory constructed in the Macro-ADR controllability analysis, and let \(P^{\hat{\theta}}(t)\) be the parameter trajectory learned by the NPC. Denote the corresponding Macro-ADR solutions by \(\rho^{P^\dagger}(x,t)\) and \(\rho^{\hat{\theta}}(x,t)\), respectively. The terminal density induced by the learned NPC parameter trajectory satisfies
\begin{equation}
    \left\|
    \rho^{\hat{\theta}}(\cdot,T)
    -
    \rho^\ast
    \right\|_{L^2(\Omega)}
    \leq
    C_T
    \left[
    \varepsilon_{\mathrm{basis}}
    +
    \sqrt{
    \varepsilon_{\mathrm{phys}}
    +
    \frac{B_{\mathrm{RL}}+\varepsilon_{\mathrm{opt}}}{\eta}
    +
    \varepsilon_{\mathrm{id}}
    }
    +
    \varepsilon_{\mathrm{KDE}}
    +
    \varepsilon_{\mathrm{disc}}
    \right].
    \label{eq:final_density_bound_cn}
\end{equation}
Here, \(C_T>0\) is a uniform stability constant over the finite time interval, which may absorb constants associated with ADR stability, parameter convergence and boundedness of the field bases.
\end{theorem}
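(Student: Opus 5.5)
The plan is a triangle-inequality split through the ideal trajectory, followed by chaining the earlier stability and convergence results. Write
\[
\|\rho^{\hat\theta}(\cdot,T)-\rho^\ast\|_{L^2(\Omega)}
\le
\|\rho^{\hat\theta}(\cdot,T)-\rho^{P^\dagger}(\cdot,T)\|_{L^2(\Omega)}
+
\|\rho^{P^\dagger}(\cdot,T)-\rho^\ast\|_{L^2(\Omega)} .
\]
The second term is the controllability error of the ideal trajectory. By Assumption~\ref{assump:supp_ideal_parameter_cn}, or equivalently Theorem~\ref{tr:controllability}, it is bounded by $C_{\mathrm{basis}}\varepsilon_{\mathrm{basis}}$.

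For the first term, both $P^{\hat\theta}$ and $P^\dagger$ lie in $\mathcal{M}$: $P^{\hat\theta}$ because of the projection layer (Sec.~\ref{bound_of_para}), and $P^\dagger$ by assumption. Hence the forward stability estimate of Assumption~\ref{assump:supp_pde_stability_cn} applies and gives
\[
C_T\|P^{\hat\theta}-P^\dagger\|_{\mathcal{H}_P}+\varepsilon_{\mathrm{KDE}}+\varepsilon_{\mathrm{disc}} .
\]
We then insert Theorem~\ref{thm:supp_parameter_convergence_cn}, which bounds the parameter error by
\[
c_P^{-1/2}\sqrt{\varepsilon_{\mathrm{phys}}+(B_{\mathrm{RL}}+\varepsilon_{\mathrm{opt}})/\eta+\varepsilon_{\mathrm{id}}} .
\]
That theorem itself rests on Theorem~\ref{thm:supp_physical_residual_bound_cn} together with the identifiability assumption, so $\hat\theta$ must satisfy the hypotheses of Theorem~\ref{thm:supp_physical_residual_bound_cn}. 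I will state this explicitly as a standing hypothesis.

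Summing the two contributions, define the uniform constant
\[
\max\{C_{\mathrm{basis}},\;C_T c_P^{-1/2},\;1\} .
\]
Calling this again $C_T$, as the statement permits, gives the bound in Eq.~\eqref{eq:final_density_bound_cn}. An alternative route goes through Theorem~\ref{thm:advection_field_approximation_cn} and the Gr\"onwall estimate from the proof of Theorem~\ref{tr:controllability}, with $u^{\hat\theta}$ in place of $u^\omega$. It yields the same structure, but it also needs the errors in $D$ and $\lambda$ controlled. For that reason I prefer the parameter-level stability route, which handles all three components at once.

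The main obstacle is the first step. We need $P^{\hat\theta}$ to lie in the neighbourhood $\mathcal{U}(P^\dagger)$ where local identifiability holds, and we need the norm $\mathcal{H}_P$ used in the identifiability assumption to match the one in the stability assumption. I will handle this by assuming, as part of "satisfies Theorem~\ref{thm:supp_physical_residual_bound_cn}", that $\hat\theta$ is trained within $\mathcal{U}(P^\dagger)$, with both assumptions formulated in the same $\mathcal{H}_P$ norm.

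There is also a bookkeeping issue. The PINN residual is evaluated on the estimated density $\hat\rho$ rather than the exact solution. The discrepancy must be attributed to $\varepsilon_{\mathrm{KDE}}$ and $\varepsilon_{\mathrm{disc}}$, which already appear additively in the stability estimate, and must not be counted twice. Apart from these conditions, the argument is a direct chaining of the earlier results.
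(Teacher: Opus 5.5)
Your proposal is correct and follows the paper's proof essentially step for step. You split by the triangle inequality through $\rho^{P^\dagger}(\cdot,T)$, bound the controllability term with the ideal-trajectory assumption, bound the learned-versus-ideal term with the forward stability assumption plus Theorem~\ref{thm:supp_parameter_convergence_cn}, and absorb $C_{\mathrm{basis}}$, $c_P^{-1/2}$ and the stability constant into a single $C_T$. The side conditions you flag are left implicit in the paper: that $P^{\hat\theta}\in\mathcal{U}(P^\dagger)$, that both assumptions use the same $\mathcal{H}_P$ norm, and that the density-estimation error is not counted twice.
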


\begin{proof}[\textbf{Proof of Theorem~\ref{thm:density_approximation_from_parameter_convergence_cn}:}]
By the existence assumption of the ideal controllable parameter trajectory (Assumption~\ref{assump:supp_ideal_parameter_cn}), we have
\begin{equation}
    \left\|
    \rho^{P^\dagger}(\cdot,T)
    -
    \rho^\ast
    \right\|_{L^2(\Omega)}
    \leq
    C_T\varepsilon_{\mathrm{basis}}.
    \label{eq:ideal_density_basis_error_reuse_cn}
\end{equation}
By the conditional convergence theorem of neural-physical parameters (Theorem~\ref{thm:supp_parameter_convergence_cn}), we have
\begin{equation}
    \left\|
    P^{\hat{\theta}}
    -
    P^\dagger
    \right\|_{\mathcal{H}_P}
    \leq
    C_T
    \sqrt{
    \varepsilon_{\mathrm{phys}}
    +
    \frac{B_{\mathrm{RL}}+\varepsilon_{\mathrm{opt}}}{\eta}
    +
    \varepsilon_{\mathrm{id}}
    }.
    \label{eq:parameter_convergence_reuse_cn}
\end{equation}
By the triangle inequality,
\begin{align}
    \left\|
    \rho^{\hat{\theta}}(\cdot,T)
    -
    \rho^\ast
    \right\|_{L^2(\Omega)}
    &\leq
    \left\|
    \rho^{\hat{\theta}}(\cdot,T)
    -
    \rho^{P^\dagger}(\cdot,T)
    \right\|_{L^2(\Omega)}
    \nonumber\\
    &\quad+
    \left\|
    \rho^{P^\dagger}(\cdot,T)
    -
    \rho^\ast
    \right\|_{L^2(\Omega)}.
    \label{eq:density_triangle_reuse_cn}
\end{align}

Using the stability of ADR solutions under parameter perturbations (Assumption~\ref{assump:supp_pde_stability_cn}), we obtain
\begin{equation}
    \left\|
    \rho^{\hat{\theta}}(\cdot,T)
    -
    \rho^{P^\dagger}(\cdot,T)
    \right\|_{L^2(\Omega)}
    \leq
    C_T
    \left\|
    P^{\hat{\theta}}
    -
    P^\dagger
    \right\|_{\mathcal{H}_P}
    +
    C_T
    \left(
    \varepsilon_{\mathrm{KDE}}
    +
    \varepsilon_{\mathrm{disc}}
    \right).
    \label{eq:density_stability_reuse_cn}
\end{equation}
Substituting the parameter-convergence bound in Eq.~\eqref{eq:parameter_convergence_reuse_cn} into Eq.~\eqref{eq:density_stability_reuse_cn}, and combining it with the ADR controllability error bound in Eq.~\eqref{eq:ideal_density_basis_error_reuse_cn}, gives
\begin{equation}
    \left\|
    \rho^{\hat{\theta}}(\cdot,T)
    -
    \rho^\ast
    \right\|_{L^2(\Omega)}
    \leq
    C_T
    \left[
    \varepsilon_{\mathrm{basis}}
    +
    \sqrt{
    \varepsilon_{\mathrm{phys}}
    +
    \frac{B_{\mathrm{RL}}+\varepsilon_{\mathrm{opt}}}{\eta}
    +
    \varepsilon_{\mathrm{id}}
    }
    +
    \varepsilon_{\mathrm{KDE}}
    +
    \varepsilon_{\mathrm{disc}}
    \right].
    \label{eq:final_density_bound_proof_cn}
\end{equation}
\end{proof}

\begin{remark}[Relationship to Macro-ADR controllability and sources of error]
Theorem~\ref{thm:density_approximation_from_parameter_convergence_cn} establishes the main link between the conditional convergence of the NPC and the controllability of the Macro-ADR representation. The Macro-ADR controllability analysis proves the existence of an ideal physical parameter trajectory \(P^\dagger(t)\), showing that, within the physically expressible space, the Macro-ADR system can drive the swarm density towards the target emergent behavior through suitable trajectories of \(\omega\), \(D\) and \(\lambda\). The present section further shows that the Neural-Physics Controller can, under the stated conditions, learn an approximate realization of this ideal parameter trajectory. Thus, the two analyses correspond to two complementary levels of the theoretical framework: the Macro-ADR system provides the macroscopic controllability basis, whereas the NPC provides a learnable mechanism for realizing the required physical parameters.

The final density error is determined by several sources. The term \(\varepsilon_{\mathrm{basis}}\) arises from the approximation error of the finite potential-field bases with respect to the desired advection field, and therefore reflects the expressive capacity and controllability limitation of the Macro-ADR behavior representation itself. The terms \(\varepsilon_{\mathrm{phys}}\) and \(\varepsilon_{\mathrm{opt}}\) arise from the PINN physics residual and the NPC optimization error, respectively, and quantify how accurately the learning process approximates the ideal physical parameter trajectory. The term \(\varepsilon_{\mathrm{id}}\) represents local non-identifiability caused by parameter equivalence, weak excitation or limited observations. The terms \(\varepsilon_{\mathrm{KDE}}\) and \(\varepsilon_{\mathrm{disc}}\) denote the density-estimation error and numerical discretization error, respectively. Therefore, when these error terms are sufficiently small, the physical parameter trajectory learned by the NPC induces a terminal swarm density that approximates the target emergent-behavior density.

In particular, when
\(\varepsilon_{\mathrm{basis}}=\varepsilon_{\mathrm{phys}}=\varepsilon_{\mathrm{opt}}=\varepsilon_{\mathrm{id}}=\varepsilon_{\mathrm{KDE}}=\varepsilon_{\mathrm{disc}}=0\),
the error bound reduces to
\[
    \rho^{\widehat{\theta}}(\cdot,T)=\rho^{\ast},
\]
which means that, under ideal conditions, the learned controller can exactly approximate the target density. This also indicates that improving the expressive capacity of the field bases, increasing the strength of physical regularization, reducing policy-optimization error, enhancing parameter identifiability, improving density-estimation accuracy and adopting more stable numerical discretization schemes can systematically improve the ability of the NPC to approximate target emergent behaviors.
\end{remark}

\clearpage

\section{Supplementary Methods}

\subsection{General modeling guidelines}
\label{sec:general_modeling_procedure_cn}

This section provides a practical modeling guide for translating a concrete swarm-robot task into a multi-stage Macro-ADR behavior representation within PhySwarm (Fig.~\ref{fig:field_modeling_guideline}). The purpose is not to rederive the Macro-ADR equation, but to clarify how task semantics can be systematically converted into behavioral phases, physical field bases, diffusion-regulation fields, phase-transition structures and, ultimately, a physically parameterized model that can be learned and modulated by the Neural-Physics Controller (NPC).

The three benchmark tasks in this paper provide concrete examples of this modeling procedure. They instantiate the same PhySwarm interface with different phase sets, field dictionaries, diffusion-regulation densities, transition graphs and reward objectives. 
These examples illustrate how the proposed guidelines can be used to translate different swarm-task semantics into a unified PhySwarm implementation. Details could be referred to  Sec.~\ref{sec:scenario_specific_details_cn} and Sec.~\ref{sec:mappo_pinn_network_implementation_cn} of the supplementary materials.

\begin{figure}[htbp]
    \centering
    \includegraphics[width=0.8\textwidth]{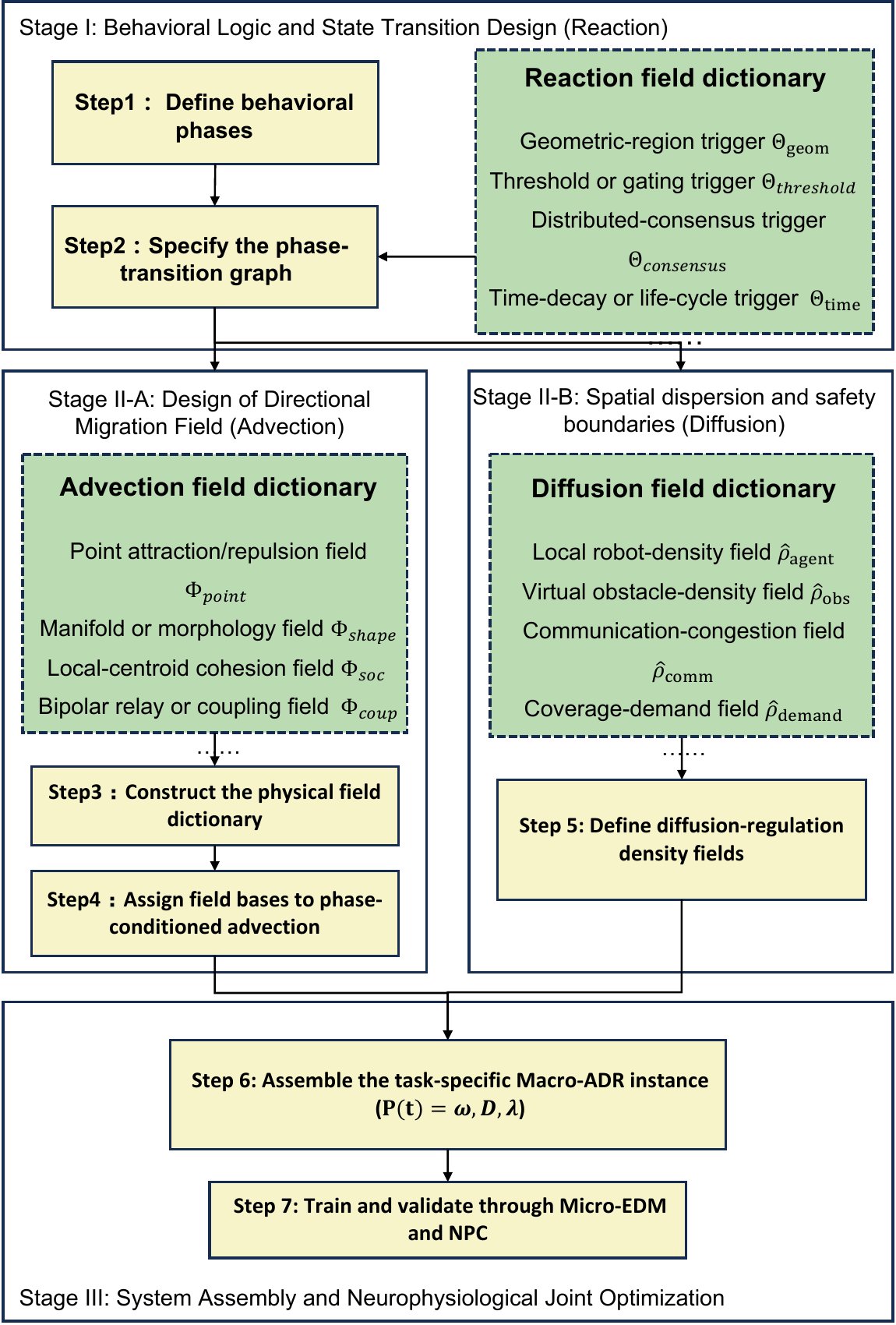} 
    \caption{
    \textbf{Modeling guide for constructing task-specific PhySwarm instances.}
    A swarm-robot task is translated into a multi-stage Macro-ADR representation by defining behavioral phases, constructing physical field bases, specifying diffusion-regulation fields and designing phase-transition rules. The resulting physical parameters are learned and modulated by the NPC.
    }
    \label{fig:field_modeling_guideline}
\end{figure}

In PhySwarm, a complex swarm emergent behavior can be modeled by answering three basic questions:

\begin{itemize}
    \item \textbf{Where should the swarm move?}
    This question is addressed by the advection term, which represents directional transport mechanisms such as target attraction, obstacle avoidance, formation guidance, trail following, etc.

    \item \textbf{How should the swarm regulate its spatial distribution?}
    This question is addressed by the diffusion term, which represents density-regulation mechanisms such as spatial coverage, local collision avoidance, safety-margin maintenance, etc.

    \item \textbf{When should robots switch behavioral phases or functional roles?}
    This question is addressed by the reaction term, which represents transitions among behavioral phases or roles, such as exploration, aggregation, navigation, rescue response, communication relay, etc.
\end{itemize}

Thus, constructing a task-specific PhySwarm instance amounts to designing three coupled components: directional migration, density regulation and phase transition. The advection term determines task-directed swarm motion, the diffusion term regulates the quality of the spatial distribution, and the reaction term organizes switching among behavioral phases.

Operationally, instantiating PhySwarm for a new task can be specified by the tuple
\begin{equation}
    \mathcal{I}_{\mathrm{task}}
    =
    \left(
    \Omega,
    \mathcal{S},
    \mathcal{B},
    \mathcal{G}_{\mathrm{phase}},
    \chi,
    \hat{\rho}_{\mathrm{loc}},
    \mathcal{M}_{P},
    \mathcal{R}_{\mathrm{task}}
    \right),
    \label{eq:task_instantiation_template}
\end{equation}
where \(\Omega\) is the task domain, \(\mathcal{S}\) is the set of behavioral phases, \(\mathcal{B}=\{b_r\}_{r=1}^{K_b}\) is the dictionary of physical field bases, \(\mathcal{G}_{\mathrm{phase}}\) is the admissible phase-transition graph, \(\chi=\{\chi_{mn}\}\) denotes task-dependent activation functions, \(\hat{\rho}_{\mathrm{loc}}\) is the density or generalized density field used for diffusion regulation, \(\mathcal{M}_{P}\) is the bounded physical parameter manifold for \(P(t)=\{\omega(t),D(t),\lambda(t)\}\), and \(\mathcal{R}_{\mathrm{task}}\) is the task reward or evaluation objective used during training. This tuple provides the operational interface between the task description and the implementation described in Methods of the main paper.

\paragraph{Step 1: Define behavioral phases.}
The first step is to decompose the task into a finite set of behavioral phases,
\[
    \mathcal{S}=\{1,\ldots,M\}.
\]
Each phase should correspond to a macroscopic behavior pattern that is observable, interpretable or controllable. For example, in Trail-Guided Swarm Foraging, the phases may include exploration, resource-approach, homing and trail-following. In Formation-Reconfigurable Swarm Navigation, the phases may include formation-keeping, goal-directed navigation, morphology-adaptation and formation-recovery. In Role-Adaptive Swarm Search and Rescue, the phases may include exploration, target-response and communication-relay.

The phase definition should be neither too fine nor too coarse. If the phases are too fine-grained, the transition graph becomes overly complex and increases the difficulty of learning and interpretation. If the phases are too coarse, the model may fail to capture key transitions in multi-stage behavior. A suitable behavioral phase should have a clear task-level interpretation and should be adjustable through advection weights, diffusion coefficients or reaction rates.

The output of this step is the phase set \(\mathcal{S}\), which defines the components of the multi-phase density vector
\[
    \rho(x,t)
    =
    [\rho_1(x,t),\ldots,\rho_M(x,t)]^{\top}.
\]
This density vector is used in the Macro-ADR representation and in the conservative reaction structure.

\paragraph{Step 2: Specify the phase-transition graph.}
After defining the behavioral phases, the next step is to specify the admissible phase-transition graph
\[
    \mathcal{G}_{\mathrm{phase}}
    =
    (\mathcal{S},\mathcal{E}_{\mathrm{phase}}),
\]
where an edge \(m\rightarrow n\) indicates that robots are allowed to switch from phase \(m\) to phase \(n\). For example, in foraging, a robot may switch from exploration to carrying-to-nest after detecting and picking up a resource; after dropping the resource at the nest, it may return to a search or resource-approach phase. In search and rescue, a robot may switch from a searcher phase to a responder phase after detecting a target, while some robots may switch to a relay phase according to communication quality and spatial position.

For each admissible transition \(m\rightarrow n\), a task-dependent activation function \(\chi_{mn}(x,t)\) is defined. These triggers may depend on task events, local observations, communication quality or learned decision signals. The transition intensity is controlled by the reaction rate \(\lambda_{mn}\), which determines how rapidly robots switch from one behavioral phase to another. Because the proposed model adopts a conservative reaction structure, the reaction term only redistributes robots among behavioral phases and does not change the total number of robots.

The output of this step is the transition mask \(A_{mn}\), the activation functions \(\chi_{mn}\), and the reaction-rate variables \(\lambda_{mn}\). Together, they define the reaction structure used in Macro-ADR and the phase-switching rule used in Micro-EDM.

\paragraph{Step 3: Construct the physical field dictionary.}
After defining behavioral phases and transition logic, task-relevant physical fields or field bases are constructed. These fields are not arbitrary neural features; they are interpretable variables directly related to task objectives, environmental constraints and inter-agent interactions. Common field types include:

\begin{itemize}
    \item \textbf{Target-related fields:} attractive fields generated by targets, nests, rescue locations, resource positions or desired motion directions.

    \item \textbf{Obstacle and boundary fields:} repulsive fields generated by obstacles, walls, hazardous regions or task-domain boundaries.

    \item \textbf{Swarm-interaction fields:} local interaction fields describing cohesion, separation, alignment, formation maintenance and connectivity preservation.

    \item \textbf{Information-memory fields:} fields representing trails, pheromone-like memory, target confidence, explored-region maps or communication-quality distributions that evolve with the task.

    \item \textbf{Role-function fields:} fields encoding spatial preferences or organizational relations for functional roles such as searchers, responders and relays.
\end{itemize}

Each field can be represented as a scalar potential field, a vector field or a grid function computed from sensor data, robot positions, communication states and task events. For a scalar potential field \(\Phi_r(x,t)\), the corresponding velocity base is
\[
    b_r(x,t)
    =
    -\nabla \Phi_r(x,t).
\]
Examples of typical field functions are provided in Sec.~\ref{fields_example}.

The output of this step is the physical field dictionary
\[
    \mathcal{B}
    =
    \{b_r(x,t)\}_{r=1}^{K_b},
\]
which provides the reusable basis functions for constructing the advection field.

\paragraph{Step 4: Assign field bases to phase-conditioned advection.}
The advection term determines the dominant direction of swarm motion. The PhySwarm framework does not allow the policy network to directly output unconstrained velocities. Instead, task-relevant motion tendencies are generated through weighted combinations of physical field bases. For behavioral phase \(\sigma\), the advection velocity field is written as
\[
    u_{\sigma}(x,t)
    =
    \sum_{r=1}^{K_b}
    \omega_{\sigma,r}(x,t)b_r(x,t),
\]
where \(\omega_{\sigma,r}(x,t)\) denotes the contribution of field basis \(r\) in phase \(\sigma\).

In task modeling, the designer specifies which physical fields are relevant to each behavioral phase. For example, in foraging, the resource-approach phase is typically dominated by the food-attraction field, whereas the homing phase is dominated by the nest-attraction field. In formation navigation, the goal-directed navigation phase usually requires the simultaneous consideration of a forward target field, a boundary-repulsion field and a morphology-preserving field. The NPC then learns the weights \(\omega\) dynamically from observations, thereby regulating the contribution of different physical mechanisms to the current motion direction.

The output of this step is the phase-conditioned advection field \(u_{\sigma}(x,t)\), which contributes to both the Macro-ADR flux and the Micro-EDM velocity.

\paragraph{Step 5: Define diffusion-regulation density fields.}
The diffusion term regulates local spatial distribution, density equalization and safety margins. In PhySwarm, diffusion is not treated as aimless random perturbation, but as an active spatial-regulation mechanism implemented through density-gradient compensation. A larger diffusion coefficient \(D\) represents stronger spatial spreading and density equalization, which is useful for search, coverage or pressure release in crowded regions. A smaller \(D\) helps preserve compact structures and topological stability, which is important for formation maintenance, narrow-corridor traversal or communication-chain preservation.

For a new task, the designer can flexibly specify the density field by selecting or combining elements from table~\ref{tab:typical_diffusion_fields_cn}. This table provides a rich set of candidate diffusion fields. Depending on the task requirements, the selected diffusion field $\hat{\rho}_{\mathrm{diff}}(x,t)$ can be formulated as a generalized density field constructed from a weighted combination of these basis fields:
\begin{equation}
    \hat{\rho}_{\mathrm{diff}}(x,t)
    =
    \sum_{\hat{\rho}_k \in \mathcal{D}_{\mathrm{diff}}} \beta_k \hat{\rho}_k(x,t),
    \label{eq:generalized_diffusion_field}
\end{equation}
where $\mathcal{D}_{\mathrm{diff}}$ denotes the set of typical diffusion fields, and $\beta_k \ge 0$ are task-dependent weighting coefficients. This basis-based formulation allows the framework to scale seamlessly across diverse spatial-regulation behaviors, ranging from local density equalization to boundary-aware safety margins.

The output of this step is the diffusion-regulation density \(\hat{\rho}_{\mathrm{diff,\sigma}}\) and the diffusion coefficient \(D_{\sigma}\). At the continuum level, they define the diffusion flux
\[
    J_{\mathrm{diff},\sigma}(x,t)
    =
    -
    D_{\sigma}(x,t)\nabla\hat{\rho}_{diff,\sigma}(x,t),
\]
whereas at the robot level they define the density-gradient compensation term in Micro-EDM.

\paragraph{Step 6: Assemble the task-specific Macro-ADR instance.}
After the behavioral phases, field dictionary, diffusion-regulation density and reaction structure have been specified, the task can be assembled into a Macro-ADR representation. The core physical parameters are
\[
    P(t)
    =
    \{\omega(t),D(t),\lambda(t)\},
\]
where \(\omega\) controls the contribution of different physical fields to directed migration, \(D\) controls diffusion strength and spatial regulation, and \(\lambda\) controls transition rates among behavioral phases.

Operationally, the task-specific Macro-ADR instance is defined by
\[
    \frac{\partial \rho_{\sigma}}{\partial t}
     =-
    \nabla\cdot
    \left(
    u_{\sigma}\rho_{\sigma}
    \right)
    +
    \nabla\cdot
    \left(
    D_{\sigma}\nabla\rho_{\sigma}
    \right)
    +
    R_{\sigma}(\rho),
    \qquad
    \sigma=1,\ldots,M,
\]
with
\[
    u_{\sigma}
    =
    \sum_{r=1}^{K_b}\omega_{\sigma,r}b_r,
    \qquad
    R(\rho)
    =
    \Lambda(\lambda)\rho .
\]
The output of this step is a complete Macro-ADR model instance, including the phase densities \(\rho_{\sigma}\), advection fields \(u_{\sigma}\), diffusion coefficients \(D_{\sigma}\), reaction matrix \(\Lambda(\lambda)\) and task-specific activation functions \(\chi_{mn}\).

\paragraph{Step 7: Train and validate through Micro-EDM and NPC.}
At the microscopic execution level, the advection, diffusion and reaction mechanisms defined by Macro-ADR are converted by Micro-EDM into executable local motion and phase switching for individual robots. Each robot responds to the advection field associated with its current phase, performs diffusion regulation through density-gradient compensation, and switches behavioral phases according to reaction rates and triggering conditions.

During training, the NPC learns the physical parameters \(P(t)=\{\omega(t),D(t),\lambda(t)\}\) from robot observations and temporal states. The reinforcement-learning objective improves task performance, whereas the PINN physics residual constrains the learned density evolution and microscopic motion to remain consistent with PhySwarm. Therefore, the NPC does not learn a purely black-box action policy, but physically meaningful control quantities: advection weights, diffusion coefficients and phase-transition rates.

The output of this step is a trained NPC policy and its induced physical parameter trajectory. 

\begin{remark}[Mapping task semantics to physical mechanisms]
The above procedure can be viewed as a modeling compilation process from task semantics to physical dynamics. Task objectives, environmental constraints and inter-agent interactions are encoded as physical field bases, forming the advection mechanism for directed transport. Requirements related to coverage, safety margins, pressure release and density equalization are encoded as diffusion mechanisms. Stage switching, role differentiation and task-triggered events are encoded as reaction mechanisms. The NPC further learns the time-varying weights and intensities of these physical mechanisms, enabling a unified representation, continuous switching and task adaptation of multi-stage swarm emergent behaviors.

It is important to keep the modeling roles of the advection, diffusion and reaction terms distinct in PhySwarm. The advection term describes directional deterministic transport and is suitable for encoding motion preferences such as target attraction, obstacle repulsion, formation guidance, trail following and communication-quality-guided motion. The diffusion term describes density-gradient-driven spatial regulation and is suitable for encoding coverage expansion, local collision avoidance, pressure release and density equalization. The reaction term describes transitions among behavioral phases or functional roles; it does not directly generate spatial potential fields, nor does it represent the creation or disappearance of robots. In practice, the same physical mechanism should not be redundantly encoded into multiple ADR components unless it plays different roles at different modeling levels. Through this separation of roles, different swarm-robot tasks can be represented as interpretable, learnable and physically constrained ADR dynamical models. A checklist is provided in Table~\ref{tab:seb_model_operational_checklist} for reference.
\end{remark}

\begin{table}[htbp]
\centering
\renewcommand{\arraystretch}{1.5}
\caption{
\textbf{Operational checklist for instantiating PhySwarm.}
Each modeling step produces a specific object that is used by Macro-ADR, Micro-EDM or the NPC.
}
\label{tab:seb_model_operational_checklist}
\begin{tabular}{p{0.16\linewidth}p{0.28\linewidth}p{0.28\linewidth}p{0.20\linewidth}}
\hline
\textbf{Step} & \textbf{Design question} & \textbf{Output object} & \textbf{Used in} \\
\hline
Behavioral phases &
What macroscopic stages or roles are required? &
Phase set \(\mathcal{S}\) and density components \(\rho_\sigma\) &
Macro-ADR, reaction term \\

Phase transitions &
Which phase changes are allowed, and when are they triggered? &
Transition graph \(\mathcal{G}_{\mathrm{phase}}\), mask \(A_{mn}\), triggers \(\chi_{mn}\) &
Reaction term, Micro-EDM switching \\

Physical field bases &
Which task, environment and interaction cues guide motion? &
Field dictionary \(\mathcal{B}=\{b_r\}_{r=1}^{K_b}\) &
Advection term \\

Advection parameterization &
How are field bases combined in each phase? &
Weights \(\omega_{\sigma,r}\) and advection field \(u_\sigma\) &
Macro-ADR, Micro-EDM \\

Diffusion regulation &
Which density or constraint field should regulate spacing and safety? &
Local or generalized density \(\hat{\rho}_{\mathrm{loc}}\), diffusion coefficient \(D_\sigma\) &
Diffusion term, Micro-EDM \\

Reaction construction &
How is phase mass redistributed? &
Reaction rates \(\lambda_{mn}\), reaction matrix \(\Lambda(\lambda)\) &
Macro-ADR residual \\

Training and validation &
What task objective and physical constraints are optimized? &
Reward \(\mathcal{R}_{\mathrm{task}}\), RL--PINN losses and evaluation metrics &
NPC training and evaluation \\
\hline
\end{tabular}
\end{table}

\clearpage

\subsection{Examples of typical fields}
\label{fields_example}

This section provides examples of typical field functions to help translate high-level swarm-task semantics into physical fields that can be used in ADR-based modeling. In practice, the designer does not need to manually derive a new force law for every task. Instead, suitable fields can be selected from a set of standardized field-function templates and then combined to construct the advection, diffusion and reaction components of PhySwarm.

\subsubsection{Advection field dictionary.}

The advection term describes the directional migration tendency of the swarm. In the proposed framework, the advection velocity field is generated by a weighted combination of physically interpretable potential-field bases. Each potential field \(\Phi_k(x)\) encodes a task-relevant motion preference, and its negative gradient \(-\nabla\Phi_k(x)\) gives the corresponding transport direction. The Neural-Physics Controller learns the weights \(\omega_k\) to dynamically regulate the contribution of different fields to the current behavior.

Table~\ref{tab:typical_advection_fields_cn} summarizes typical potential fields used to construct the advection term. Point attraction/repulsion fields, manifold or morphology fields, and uniform vector-flow fields can usually be regarded as external fields because they are mainly determined by task goals and environmental geometry. Local-centroid cohesion fields, alignment fields, virtual pheromone fields and communication-quality fields can be regarded as internal or information-driven fields because they depend on local interactions, historical memory or accumulated swarm-level information.

\begin{table}[h]
\centering
\renewcommand{\arraystretch}{1.5}
\caption{\textbf{Typical potential fields for constructing the advection term.}}
\label{tab:typical_advection_fields_cn}
\begin{tabular}{p{0.22\textwidth} p{0.30\textwidth} p{0.38\textwidth}}
\hline
\textbf{Field type} & \textbf{Typical template} & \textbf{Behavioral semantics} \\
\hline
Point attraction/repulsion field &
\(\Phi_{\mathrm{point}}(\mathbf x) = \frac{1}{2} \|\mathbf x - \mathbf p_{c}\|^2\) &
Provides a localized directional bias towards or away from a task-relevant point, such as a target, nest, rescue site, moving object or hazardous source. \\

Manifold or morphology field &
\(\Phi_{\mathrm{shape}}(x)=\frac{1}{2}(d(x,\mathcal{M}_\beta)-R_0)^2\) &
Encodes attraction towards a desired geometric structure, such as a circle, ellipse, line, corridor centreline or formation contour, thereby supporting morphology regulation. \\

Uniform vector-flow field &
\(\Phi_{\mathrm{flow}}(x)=-v_{\mathrm{dir}}\cdot x\) &
Introduces a global directional preference for collective motion, corridor traversal, corridor following or background-flow compensation. \\

Coordinate-anchored information field &
\(\Phi_{\mathrm{info}}(x)=\frac{1}{2}\|x-P_{\mathrm{shared}}\|^2\) &
Encodes a shared virtual anchor obtained from communication, memory or task allocation, enabling robots to respond to non-local task information. \\

Distribution-accumulated information field &
\(\Phi_{\mathrm{info}}(x)=\sum_j \mathcal{K}(x,x_j,Q_j)\) &
Represents accumulated task evidence, such as trails, pheromone-like memory, target confidence or explored-region information, and biases motion towards informative regions. \\

Local-centroid cohesion field &
\(\Phi_{\mathrm{soc}}(x)=\frac{1}{2}\|x-x_{\mathrm{lc}}\|^2\) &
Supports local cohesion by biasing robots towards the centroid of nearby neighbours, helping maintain swarm integrity or local formation structure. \\

Bipolar relay or coupling field & {\tiny \(\Phi_{\mathrm{coup}}(x) = \frac{1}{2} (1 - \gamma) \|x - p_{A}\|^2 + \frac{1}{2} \gamma \|x - p_{B}\|^2\)} & Creates a balanced spatial preference between two task anchors, supporting relay formation, cooperative transport or base--target linkage. \\

Regional safety or soft-boundary barrier field &
\(\Phi_{\mathrm{bar}}(x) = \frac{\eta}{d(x, \partial \Omega)^2}\) &
Encodes soft avoidance of boundaries, forbidden regions or large-scale hazards, reducing boundary violation at the advection level. \\ 

Anisotropic or direction-weighted field &
\(\Phi_{\mathrm{ani}}(x) = \frac{1}{2} (x - p_{c})^T \Sigma^{-1} (x - p_{c})\) &
Imposes direction-dependent motion preference, enabling stronger correction along selected axes and weaker regulation along others. \\
\hline
\end{tabular}
\end{table}

\begin{figure}[h]
    \centering
    \includegraphics[width=1.0\textwidth]{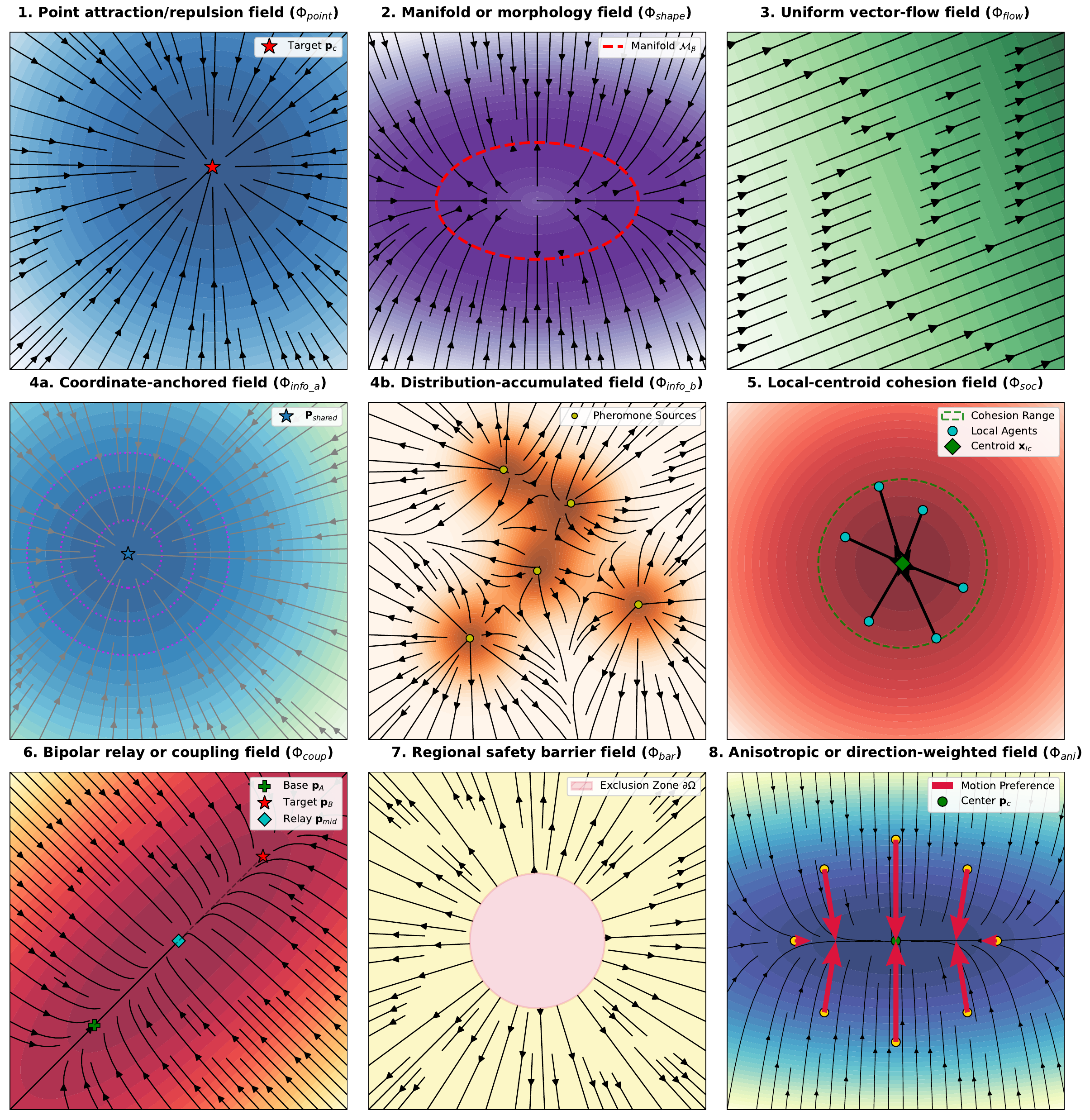} 
    \caption{\textbf{Examples of typical advection potential fields.} Background contours indicate the potential-energy landscape, and streamlines indicate the macroscopic transport direction induced by the negative potential gradients.}
    \label{fig:advection_field_examples}
\end{figure}

Fig.~\ref{fig:advection_field_examples} illustrates several representative advection fields. The background contours show the potential-energy landscape, and the streamlines show the transport direction induced by the negative gradients of the fields. In task modeling, the designer can select several field types according to the task requirements, and the Neural-Physics Controller then dynamically combines these fields by learning the weights \(\omega_k\).

\clearpage

\subsubsection{Diffusion field dictionary.}

The diffusion term does not directly specify task goals. Instead, it provides fundamental physical mechanisms for density regulation, spatial coverage, collision avoidance, and safety constraints. In the proposed framework, the diffusion direction is always induced by the negative gradient of the constructed generalized density field $\hat{\rho}_{\mathrm{diff}}(x,t)$, pushing robots away from crowded, hazardous, or unsafe regions. To support diverse spatial-regulation behaviors under varying environmental and task constraints, the candidate physical components are organized into a structured candidate dictionary. In this subsection, we detail these typical candidate fields and their physical meanings.

Table~\ref{tab:typical_diffusion_fields_cn} summarizes typical fields for constructing diffusion terms and safety constraints. In implementation, the reinforcement-learning controller (NPC) does not need to directly output complex repulsive-force functions. It only needs to regulate the diffusion coefficient \(D\), which determines the response strength of the swarm to the generalized density landscape. A larger \(D\) enhances diffusion, exploration and coverage, whereas a smaller \(D\) helps preserve compact formations or stable swarm structures.

\begin{table}[h]
\centering
\renewcommand{\arraystretch}{1.5}
\caption{\textbf{Typical fields for constructing diffusion terms and safety constraints.}}
\label{tab:typical_diffusion_fields_cn}
\begin{tabular}{p{0.22\textwidth} p{0.30\textwidth} p{0.38\textwidth}}
\hline
\textbf{Field type} & \textbf{Mathematical formulation} & \textbf{Behavioral semantics in the ADR model} \\
\hline
Local robot-density field &
\(\hat{\rho}_{\mathrm{robot}}(x) = \sum_{j \in \mathcal{N}} \mathcal{K}_h(x - x_j)\) &
Measures local crowding from robot positions and supports density-gradient compensation for spacing regulation and local collision avoidance. \\ 

Virtual obstacle-density field &
\(\hat{\rho}_{\mathrm{obs}}(x) = \max\left(0, \frac{\eta}{\mathrm{SDF}(x)}\right)\) &
Represents obstacles, walls or hazardous boundaries as high-density virtual regions, allowing diffusion regulation to encode safety-aware spatial repulsion. \\ 

Generalized total-density field &
\(\hat{\rho}_{\mathrm{total}}(x) = \alpha \hat{\rho}_{\mathrm{robot}}(x) + \gamma \hat{\rho}_{\mathrm{obs}}(x)\) &
Combines inter-robot spacing and environmental safety constraints into a unified density landscape for diffusion-based spatial regulation. \\ 

Hazard or risk field &
\(\hat{\rho}_{\mathrm{risk}}(x) = \sum_{k} I_k e^{-\lambda_k \|x - p_k\|^2}\) &
Encodes risk intensity associated with hazardous sources, interference or forbidden regions, and can be coupled to diffusion to reduce exposure to unsafe areas. \\ 

Coverage-demand field &
\(\hat{\rho}_{\mathrm{demand}}(x) = 1 - c(x)\), where \(c(x) \in [0,1]\) is the coverage rate &
Represents remaining exploration demand and can be used to bias density regulation towards under-covered regions. \\ 

Communication-congestion field &
\(\hat{\rho}_{\mathrm{comm}}(x) = \sum_{j \in \mathcal{N}_c(x)} \frac{B_j}{C_j}\), where \(B_j\) and \(C_j\) denote load and capacity &
Represents communication load or congestion as a density-like quantity, supporting spatial redistribution to reduce local communication bottlenecks. \\

Anisotropic spatial-diffusion field &
\(f_{\mathrm{diff}}(x) = - D \cdot \mathbf{J}(x) \nabla \hat{\rho}_{\mathrm{total}}(x)\), where \(\mathbf{J}\) is an anisotropic damping tensor &
Introduces direction-dependent density regulation, allowing diffusion to be stronger along safety-critical directions and weaker along task-progress directions. \\ 

Cooperative state-consensus field &
\(\dot{\theta}_i = D_{\theta} \sum_{j \in \mathcal{N}_i} (\theta_j - \theta_i)\), where \(\theta\) is an individual decision state or clock &
Extends diffusion from spatial density to internal robot states, supporting local consensus of beliefs, clocks, task variables or map priors over the communication graph. \\
\hline
\end{tabular}
\end{table}

\begin{figure}[h]
    \centering
    \includegraphics[width=1.0\textwidth]{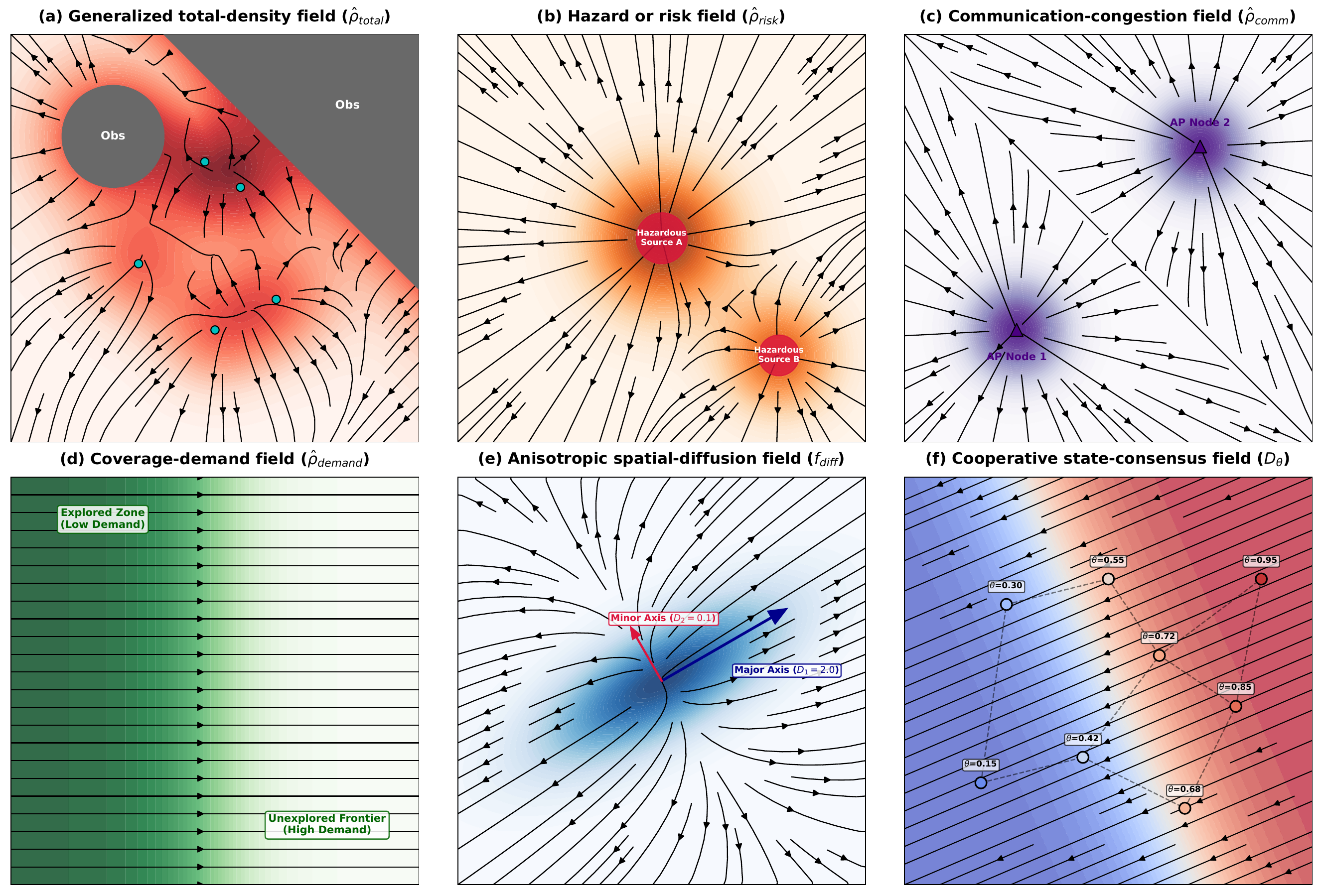} 
    \caption{\textbf{Examples of diffusion-related density fields.} \textbf{A–F,} Diverse physical configurations constructed from typical candidate fields. Streamlines indicate the outward dispersion direction induced by the negative density gradient.}
    \label{fig:diffusion_field_examples}
\end{figure}

Fig.~\ref{fig:diffusion_field_examples} illustrates this idea. The local robot-density field describes local crowding among robots, the virtual obstacle-density field represents environmental constraints, and their combination forms a generalized total-density field. The diffusion term induces outward dispersion along the negative gradient of this density field.

\clearpage

\subsubsection{Reaction field dictionary.}

The reaction term describes transitions among behavioral phases and is the key mechanism by which the Macro-ADR model represents multi-stage emergent behaviors. Table~\ref{tab:typical_reaction_fields_cn} summarizes typical triggering mechanisms, mathematical structures and behavioral semantics for constructing reaction terms.

In implementation, rather than requiring the policy network to learn all switching logic from scratch, the designer can first specify the phase-transition graph and the associated triggering conditions \(\Theta(x)\) in Table~\ref{tab:typical_reaction_fields_cn}. The Neural-Physics Controller then learns the transition rates \(\lambda_{mn}\) during training. This design preserves the behavioral safety and logical interpretability of multi-stage tasks while retaining adaptive agility in dynamic environments.

\begin{table}[h]
\centering
 \renewcommand{\arraystretch}{1.5} 
\caption{\textbf{Typical triggering mechanisms, mathematical structures and behavioral semantics for constructing the reaction term.}}
\label{tab:typical_reaction_fields_cn}
\begin{tabular}{p{0.22\textwidth} p{0.30\textwidth} p{0.38\textwidth}}
\hline
\textbf{Reaction or trigger component} & \textbf{Mathematical definition or logic} & \textbf{Behavioral semantics in the ADR model} \\
\hline
Phase-transition rate term &
\(R_{m \to n}(x) = \lambda_{mn} \cdot \Theta(x)\) &
Defines the local rate at which robot density is transferred from phase \(\sigma_m\) to phase \(\sigma_n\), combining a learnable transition intensity with a task-dependent trigger. \\ 

Event-driven trigger &
\(\Theta_{\mathrm{event}}(x) = \mathbb{I}(\text{Event Detected})\) &
Activates phase or role transitions in response to discrete task events, such as target detection, food pick-up, collision warning or communication loss. \\ 

Threshold or gating trigger &
\(\Theta_{\mathrm{threshold}}(x) = \sigma\left( \kappa (f(x) - \tau) \right)\) &
Provides a smooth transition gate when a continuous task variable, such as confidence, trail strength or interference level, crosses a predefined threshold. \\

Geometric-region trigger &
\(\Theta_{\mathrm{geom}}(x) = \mathbb{I}(x \in \Omega_{\mathrm{zone}})\) &
Activates phase transitions when robots enter task-relevant spatial regions, such as corridors, target neighbourhoods, relay zones or formation-reconfiguration regions. \\

Density-coupled trigger &
\(\Theta_{\mathrm{dens}}(x) = \sigma\left(\kappa(\hat{\rho}_{\mathrm{total}}(x) - \rho_{\mathrm{crit}})\right)\) &
Couples phase switching to local density or congestion, enabling density-dependent redistribution of behavioral phases or functional roles. \\

Role-assignment trigger &
\(\mathcal{S} = \{\sigma_{\mathrm{search}}, \sigma_{\mathrm{relay}}, \sigma_{\mathrm{resp}}\}\) &
Defines the admissible functional roles and constrains the phase-transition graph for role-adaptive tasks. \\

Distributed-consensus trigger &
{\tiny \(\Theta_{\mathrm{consensus}}(x) = \mathbb{I}\left( \frac{1}{\mid \mathcal{N}_i\mid } \sum_{j \in \mathcal{N}_i} \mathbb{I}(\sigma_j = \sigma_n) \ge \delta \right)\) }&
Enables decentralized coordination of phase switching by requiring sufficient local agreement among neighboring robots before a transition is activated. \\

Time-decay or life-cycle trigger &
\(\Theta_{\mathrm{time}}(t) = 1 - e^{-\lambda_t (t - t_{\mathrm{start}})}\) &
Models time-dependent transition tendencies associated with task duration, battery constraints, timeout conditions or lifecycle-dependent role changes. \\
\hline
\end{tabular}
\end{table}

\vspace{10mm}

\textbf{Summary.} The above field dictionary provides a reusable interface for PhySwarm modeling. The advection fields answer the question of where the swarm should move; the diffusion fields answer how the swarm should spread or avoid unsafe regions; and the reaction fields answer when robots should switch behavioral phases or functional roles. By selecting suitable fields from this dictionary and assigning them to different behavioral phases, the designer can construct a task-specific Macro-ADR model without deriving a new dynamical system from scratch. The Neural-Physics Controller then learns the time-varying parameters \(\omega\), \(D\) and \(\lambda\), allowing these physical mechanisms to adapt to the current task state.

\clearpage

\subsection{Scenario-specific model implementation}
\label{sec:scenario_specific_details_cn}

Building on the general modeling guidelines summarized in Fig.~\ref{fig:field_modeling_guideline}, this section describes how PhySwarm is instantiated in the three experimental scenarios. For each scenario, we follow a consistent construction procedure: we first define the behavioral phase set, specify the phase-transition graph and conservative reaction structure, construct the advection potential fields and diffusion-regulation density field, and finally present the resulting Macro-ADR representation together with the physical parameters learned by the Neural-Physics Controller (NPC).

\subsubsection{Trail-Guided Swarm Foraging}
\label{sec:trail_guided_swarm_foraging_cn}

\textbf{Behavioral phases.}
Trail-Guided Swarm Foraging is designed to evaluate whether the swarm can perform distributed exploration, resource discovery, carrying-to-nest motion, and trail- or virtual-pheromone-guided reuse in an unknown environment. We decompose this task into four behavioral phases:
\begin{equation}
    \mathcal{S}_{\mathrm{forage}}
    =
    \{
    \sigma_{\mathrm{exp}},
    \sigma_{\mathrm{app}},
    \sigma_{\mathrm{home}},
    \sigma_{\mathrm{trail}}
    \}.
    \label{eq:forage_phase_set_cn}
\end{equation}
Here, \(\sigma_{\mathrm{exp}}\), \(\sigma_{\mathrm{app}}\), \(\sigma_{\mathrm{home}}\) and \(\sigma_{\mathrm{trail}}\) denote the exploration, resource-approach, homing and trail-following phases, respectively.

\textbf{Phase-transition structure.}
With the phase-density vector ordered as
\[
    \rho_{\mathrm{forage}}
    =
    [
    \rho_{\mathrm{exp}},
    \rho_{\mathrm{app}},
    \rho_{\mathrm{home}},
    \rho_{\mathrm{trail}}
    ]^{\top},
\]
the conservative phase-transition matrix for the foraging task is written as
\begin{equation}
\begin{aligned}
\left[
\begin{array}{c}
R_{\mathrm{exp}} \\
R_{\mathrm{app}} \\
R_{\mathrm{home}} \\
R_{\mathrm{trail}}
\end{array}
\right]
&=
\left[
\begin{array}{cccc}
 -(\lambda_{\mathrm{exp},\mathrm{app}} + \lambda_{\mathrm{exp},\mathrm{trail}}) & 0 & \lambda_{\mathrm{home},\mathrm{exp}} & 0 \\
 \lambda_{\mathrm{exp},\mathrm{app}} & -\lambda_{\mathrm{app},\mathrm{home}} & 0 & 0 \\
 0 & \lambda_{\mathrm{app},\mathrm{home}} & -\lambda_{\mathrm{home},\mathrm{exp}} & \lambda_{\mathrm{trail},\mathrm{home}} \\
 \lambda_{\mathrm{exp},\mathrm{trail}} & 0 & 0 & -\lambda_{\mathrm{trail},\mathrm{home}}
\end{array}
\right]
\left[
\begin{array}{c}
\rho_{\mathrm{exp}} \\
\rho_{\mathrm{app}} \\
\rho_{\mathrm{home}} \\
\rho_{\mathrm{trail}}
\end{array}
\right].
\end{aligned}
\label{eq:forage_matrix_reaction_cn}
\end{equation}
Each column of this matrix sums to zero, and thus the reaction structure conserves the total swarm mass. The corresponding transition rates are activated by task events such as resource detection, resource pick-up, nest arrival and trail availability:
\begin{equation}
    \begin{aligned}
    \lambda_{\mathrm{exp},\mathrm{app}}
    &=
    \lambda_{\max}
    g_{\mathrm{resource}}(c_{\mathrm{resource}}),\\
    \lambda_{\mathrm{app},\mathrm{home}}
    &=
    \lambda_{\max}
    g_{\mathrm{pick-up}}(q_{\mathrm{pick-up}}),\\
    \lambda_{\mathrm{home},\mathrm{exp}}
    &=
    \lambda_{\max}
    g_{\mathrm{nest}}(q_{\mathrm{drop}}),\\
    \lambda_{\mathrm{exp},\mathrm{trail}}
    &=
    \lambda_{\max}
    g_{\mathrm{trail}}(c_{\mathrm{trail}}),\\
    \lambda_{\mathrm{trail},\mathrm{home}}
    &=
    \lambda_{\max}
    g_{\mathrm{pick-up}}(q_{\mathrm{pick-up}}).
    \label{eq:forage_lambda_trail_home_cn}
\end{aligned}
\end{equation}
Here, \(g(\cdot)\) can be implemented as a sigmoid or threshold-gating function; \(c_{\mathrm{resource}}\) denotes the resource confidence, \(q_{\mathrm{pick-up}}\) denotes the pick-up or carrying state, \(q_{\mathrm{drop}}\) denotes the drop-off state at the nest, and \(c_{\mathrm{trail}}\) denotes the trail or virtual-pheromone strength.

Under the convention that \(A_{mn}=1\) means the transition \(m\rightarrow n\) is allowed, the phase-transition mask for the foraging task is
\begin{equation}
    A_{\mathrm{forage}}
    =
    \left[
    \begin{array}{cccc}
        0 & 1 & 0 & 1 \\
        0 & 0 & 1 & 0 \\
        1 & 0 & 0 & 0 \\
        0 & 0 & 1 & 0
    \end{array}
    \right],
    \label{eq:forage_mask_matrix_cn}
\end{equation}
where rows denote source phases and columns denote target phases. The general conservative reaction term for the foraging task is therefore
\begin{equation}
    R_{\sigma}^{\mathrm{forage}}(\rho)
    =
    \sum_{\eta\neq\sigma}
    A_{\eta\sigma}^{\mathrm{forage}}\lambda_{\eta\sigma}\rho_{\eta}
    -
    \sum_{\ell\neq\sigma}
    A_{\sigma\ell}^{\mathrm{forage}}\lambda_{\sigma\ell}\rho_{\sigma},
    \qquad
    \sigma,\eta,\ell\in\mathcal{S}_{\mathrm{forage}}.
    \label{eq:forage_general_reaction_cn}
\end{equation}

\textbf{Advection potential fields.}
The advection component in the foraging task is constructed from a food-attraction field, a nest-attraction field, a trail/information field and an exploration-bias field (see Sec.~\ref{fields_example} for details):
\begin{equation}
    \begin{aligned}
    \Phi_{\mathrm{food}}(x)
    &=
    \frac{1}{2}\|x-p_{\mathrm{food}}\|^2,\\
    \Phi_{\mathrm{nest}}(x)
    &=
    \frac{1}{2}\|x-p_{\mathrm{nest}}\|^2,\\
    \Phi_{\mathrm{info}}(x)
    &=
    \frac{1}{2}\|x-P_{\mathrm{shared}}\|^2,\\
    \Phi_{\mathrm{rand}}(x)
    &=
    \frac{1}{2}(x-p_c)^{\top}\Sigma^{-1}(x-p_c).
    \label{eq:forage_exp_field_cn}
    \end{aligned}
\end{equation}
Here, \(p_{\mathrm{food}}\) and \(p_{\mathrm{nest}}\) denote the resource and nest locations, respectively; \(P_{\mathrm{shared}}\) denotes a shared anchor inferred from neighbor communication or trail memory; and \(p_c\) and \(\Sigma^{-1}\) define the center and inverse covariance matrix of the exploration-bias field.

The phase-conditioned advection field is
\begin{equation}
    u_{\sigma}^{\mathrm{forage}}
    =
    \sum_{k\in\mathcal{K}_{\sigma}^{\mathrm{forage}}}
    \omega_{\sigma,k}b_k,
    \qquad
    b_k=-\nabla\Phi_k,
    \qquad
    \sigma\in\mathcal{S}_{\mathrm{forage}},
    \label{eq:forage_advection_cn}
\end{equation}
where \(\mathcal{K}_{\sigma}^{\mathrm{forage}}\) is the subset of active field indices in phase \(\sigma\). The field dictionary is
\[
    \mathcal{K}^{\mathrm{forage}}
    =
    \{
    \Phi_{\mathrm{food}},
    \Phi_{\mathrm{nest}},
    \Phi_{\mathrm{info}},
    \Phi_{\mathrm{rand}}
    \}.
\]
Thus, exploration, resource approach, homing and trail following are all generated by phase-dependent weighting of the same field dictionary.

\textbf{Diffusion regulation.}
The diffusion term in the foraging task improves search coverage and suppresses local congestion near resource points, the nest or trail regions. We define the generalized local density field as
\begin{equation}
    \hat{\rho}_{\mathrm{loc}}^{\mathrm{forage}}(x,t)
    =
    \alpha_{\mathrm{a}}\hat{\rho}_{\mathrm{robot}}(x,t)
    +
    \alpha_{\mathrm{o}}\hat{\rho}_{\mathrm{obs}}(x),
    \label{eq:forage_local_density_cn}
\end{equation}
where \(\hat{\rho}_{\mathrm{robot}}\) is the local robot-density estimate, \(\hat{\rho}_{\mathrm{obs}}\) is the virtual density induced by obstacles or boundaries, and \(\alpha_{\mathrm{a}}\) and \(\alpha_{\mathrm{o}}\) are non-negative weights. The diffusion coefficient \(D_{\sigma}\) is learned by the NPC and controls the response strength to the density gradient. In general, a larger \(D_{\mathrm{exp}}\) is used in the exploration phase to enhance coverage, whereas smaller or moderate \(D_{\mathrm{home}}\) and \(D_{\mathrm{trail}}\) help preserve the main transport path while avoiding local congestion.

\textbf{Final Macro-ADR model and trainable parameters.}
The final Macro-ADR representation for the foraging task is
\begin{equation}
    \frac{\partial \rho_{\sigma}}{\partial t}
    =-
    \nabla\cdot
    \left(
        \rho_{\sigma}
        \sum_{k\in\mathcal{K}^{\mathrm{forage}}_{\sigma}}
        \omega_{\sigma,k}b_k
    \right)
    +
    \nabla\cdot
    \left(
        D_{\sigma}\nabla\rho_{\sigma}
    \right)
    +
    R_{\sigma}^{\mathrm{forage}}(\rho),
    \qquad
    \sigma\in\mathcal{S}_{\mathrm{forage}}.
    \label{eq:forage_final_adr_cn}
\end{equation}
The corresponding trainable physical parameter set is
\begin{equation}
    P_{\mathrm{forage}}(t)
    =
    \left\{
    \omega_{\sigma,k}(t),
    D_{\sigma}(t),
    \lambda_{mn}(t)
    \right\}_{\sigma\in\mathcal{S}_{\mathrm{forage}},\,k\in\mathcal{K}^{\mathrm{forage}}_{\sigma},\,(m,n)\in\mathcal{E}_{\mathrm{forage}}}.
    \label{eq:forage_trainable_parameters_cn}
\end{equation}
This scenario evaluates PhySwarm in multi-stage foraging with trail-mediated information reuse.

\subsubsection{Formation-Reconfigurable Swarm Navigation}
\label{sec:formation_reconfigurable_swarm_navigation_cn}

\textbf{Behavioral phases.}
Formation-Reconfigurable Swarm Navigation evaluates whether the swarm can adapt its formation during navigation under environmental geometric constraints. The behavioral phase set is
\begin{equation}
    \mathcal{S}_{\mathrm{nav}}
    =
    \{
    \sigma_{\mathrm{keep}},
    \sigma_{\mathrm{nav}},
    \sigma_{\mathrm{morph}},
    \sigma_{\mathrm{rec}}
    \}.
    \label{eq:nav_phase_set_cn}
\end{equation}
Here, \(\sigma_{\mathrm{keep}}\), \(\sigma_{\mathrm{nav}}\), \(\sigma_{\mathrm{morph}}\) and \(\sigma_{\mathrm{rec}}\) denote the formation-keeping, goal-directed navigation, morphology-adaptation and formation-recovery phases, respectively.

\textbf{Phase-transition structure.}
With the phase-density vector ordered as
\[
    \rho_{\mathrm{nav}}
    =
    [
    \rho_{\mathrm{keep}},
    \rho_{\mathrm{nav}},
    \rho_{\mathrm{morph}},
    \rho_{\mathrm{rec}}
    ]^{\top},
\]
the conservative phase-transition matrix for the formation-navigation task is
\begin{equation}
\begin{aligned}
\left[
\begin{array}{c}
R_{\mathrm{keep}} \\
R_{\mathrm{nav}} \\
R_{\mathrm{morph}} \\
R_{\mathrm{rec}}
\end{array}
\right]
&=
\left[
\begin{array}{cccc}
 -\lambda_{\mathrm{keep},\mathrm{nav}} & 0 & 0 & \lambda_{\mathrm{rec},\mathrm{keep}} \\
 \lambda_{\mathrm{keep},\mathrm{nav}} & -\lambda_{\mathrm{nav},\mathrm{morph}} & 0 & 0 \\
 0 & \lambda_{\mathrm{nav},\mathrm{morph}} & -\lambda_{\mathrm{morph},\mathrm{rec}} & 0 \\
 0 & 0 & \lambda_{\mathrm{morph},\mathrm{rec}} & -\lambda_{\mathrm{rec},\mathrm{keep}}
\end{array}
\right]
\left[
\begin{array}{c}
\rho_{\mathrm{keep}} \\
\rho_{\mathrm{nav}} \\
\rho_{\mathrm{morph}} \\
\rho_{\mathrm{rec}}
\end{array}
\right].
\end{aligned}
\label{eq:nav_matrix_reaction_cn}
\end{equation}
This matrix forms a closed phase-transition chain. Each column sums to zero, and therefore the total swarm mass is conserved. The transition rates are triggered by geometric cues, predicted corridor width and formation error:
\begin{equation}
    \begin{aligned}
    \lambda_{\mathrm{keep},\mathrm{nav}}
    &=
    \lambda_{\max}
    g_{\mathrm{start}}(e_{\mathrm{shape}}),\\
    \lambda_{\mathrm{nav},\mathrm{morph}}
    &=
    \lambda_{\max}
    g_{\mathrm{enter}}(W_{\mathrm{form}}-W_{\mathrm{corridor}}),\\
    \lambda_{\mathrm{morph},\mathrm{rec}}
    &=
    \lambda_{\max}
    g_{\mathrm{exit}}(W_{\mathrm{corridor}}-W_{\mathrm{form}}),\\
    \lambda_{\mathrm{rec},\mathrm{keep}}
    &=
    \lambda_{\max}
    g_{\mathrm{shape}}(e_{\mathrm{shape}},W_{\mathrm{corridor}}).
    \label{eq:nav_lambda_rec_keep_cn}
    \end{aligned}
\end{equation}
Here, \(W_{\mathrm{form}}\) is the current formation width, \(W_{\mathrm{corridor}}\) is the predicted traversable width, and \(e_{\mathrm{shape}}\) is the formation error. The functions \(g_{\mathrm{start}}\), \(g_{\mathrm{enter}}\), \(g_{\mathrm{exit}}\) and \(g_{\mathrm{shape}}\) can be implemented as smooth gating functions.

Under the convention that \(A_{mn}=1\) means the transition \(m\rightarrow n\) is allowed, the transition mask for this scenario is
\begin{equation}
    A_{\mathrm{nav}}
    =
    \left[
    \begin{array}{cccc}
        0 & 1 & 0 & 0 \\
        0 & 0 & 1 & 0 \\
        0 & 0 & 0 & 1 \\
        1 & 0 & 0 & 0
    \end{array}
    \right].
    \label{eq:nav_mask_matrix_cn}
\end{equation}
The conservative reaction term is
\begin{equation}
    R_{\sigma}^{\mathrm{nav}}(\rho)
    =
    \sum_{\eta\neq\sigma}
    A_{\eta\sigma}^{\mathrm{nav}}\lambda_{\eta\sigma}\rho_{\eta}
    -
    \sum_{\ell\neq\sigma}
    A_{\sigma\ell}^{\mathrm{nav}}\lambda_{\sigma\ell}\rho_{\sigma},
    \qquad
    \sigma,\eta,\ell\in\mathcal{S}_{\mathrm{nav}}.
    \label{eq:nav_general_reaction_cn}
\end{equation}

\textbf{Advection potential fields.}
The advection component in this scenario is constructed from a global navigation field and a morphology field (see Sec.~\ref{fields_example} for details):
\begin{equation}
    \begin{aligned}
    \Phi_{\mathrm{flow}}(x)
    &=
    -v_{\mathrm{goal}}\cdot x,\\
    \Phi_{\mathrm{shape}}(x;\beta)
    &=
    \frac{1}{2}
    \left(
    d(x,\mathcal{M}_{\beta})
    -
    R_0
    \right)^2.
    \label{eq:nav_soc_field_cn}
    \end{aligned}
\end{equation}
Here, \(v_{\mathrm{goal}}\) is the goal-directed navigation direction, \(\mathcal{M}_{\beta}\) is the target formation manifold controlled by the morphology parameter \(\beta\), \(d(x,\mathcal{M}_{\beta})\) denotes the distance from \(x\) to this manifold and \(R_0\) is the desired formation scale.

The phase-conditioned advection field is
\begin{equation}
    u_{\sigma}^{\mathrm{nav}}
    =
    \sum_{k\in\mathcal{K}_{\sigma}^{\mathrm{nav}}}
    \omega_{\sigma,k}b_k,
    \qquad
    b_k=-\nabla\Phi_k,
    \qquad
    \sigma\in\mathcal{S}_{\mathrm{nav}}.
    \label{eq:nav_advection_cn}
\end{equation}
The full field dictionary is
\[
    \mathcal{K}^{\mathrm{nav}}
    =
    \{
    \Phi_{\mathrm{flow}},
    \Phi_{\mathrm{shape}}
    \}.
\]
Thus, formation keeping, goal-directed navigation, morphology adaptation and formation recovery are all represented by phase-dependent weighting of the goal-flow and morphology fields.

\textbf{Diffusion regulation.}
The diffusion term balances formation preservation, inter-robot spacing and environmental safety. The generalized local density field is
\begin{equation}
    \hat{\rho}_{\mathrm{loc}}^{\mathrm{nav}}(x,t)
    =
    \alpha_{\mathrm{a}}\hat{\rho}_{\mathrm{robot}}(x,t)
    +
    \alpha_{\mathrm{o}}\hat{\rho}_{\mathrm{obs}}(x)
    +
    \alpha_{\mathrm{w}}\hat{\rho}_{\mathrm{wall}}(x),
    \label{eq:nav_local_density_cn}
\end{equation}
where \(\hat{\rho}_{\mathrm{wall}}\) denotes the virtual density induced by corridor walls or narrow-region boundaries. In open areas, \(D_{\mathrm{nav}}\) can remain moderate to maintain safe spacing. During morphology adaptation, \(D_{\mathrm{morph}}\) is typically reduced or adjusted anisotropically to avoid excessive dispersion of the formation. Near obstacles and boundaries, the virtual density field provides additional safety repulsion.

\textbf{Final Macro-ADR model and trainable parameters.}
The final Macro-ADR representation for the formation-navigation task is
\begin{equation}
    \frac{\partial \rho_{\sigma}}{\partial t}
    =-
    \nabla\cdot
    \left(
        \rho_{\sigma}
        \sum_{k\in\mathcal{K}^{\mathrm{nav}}_{\sigma}}
        \omega_{\sigma,k}b_k
    \right)
    +
    \nabla\cdot
    \left(
        D_{\sigma}\nabla\rho_{\sigma}
    \right)
    +
    R_{\sigma}^{\mathrm{nav}}(\rho),
    \qquad
    \sigma\in\mathcal{S}_{\mathrm{nav}}.
    \label{eq:nav_final_adr_cn}
\end{equation}
The corresponding trainable physical parameter set is
\begin{equation}
    P_{\mathrm{nav}}(t)
    =
    \left\{
    \omega_{\sigma,k}(t),
    D_{\sigma}(t),
    \lambda_{mn}(t)
    \right\}_{\sigma\in\mathcal{S}_{\mathrm{nav}},\,k\in\mathcal{K}^{\mathrm{nav}}_{\sigma},\,(m,n)\in\mathcal{E}_{\mathrm{nav}}}.
    \label{eq:nav_trainable_parameters_cn}
\end{equation}
This scenario evaluates PhySwarm in geometry-constrained formation-reconfigurable navigation.

\subsubsection{Role-Adaptive Swarm Search and Rescue}
\label{sec:role_adaptive_swarm_search_rescue_cn}

\textbf{Behavioral phases.}
Role-Adaptive Swarm Search and Rescue evaluates distributed search, target response, role differentiation and communication relay in a search-and-rescue setting. The phase set is defined as
\begin{equation}
    \mathcal{S}_{\mathrm{SAR}}
    =
    \{
    \sigma_{\mathrm{exp}},
    \sigma_{\mathrm{resp}},
    \sigma_{\mathrm{relay}}
    \}.
    \label{eq:sar_phase_set_cn}
\end{equation}
Here, \(\sigma_{\mathrm{exp}}\), \(\sigma_{\mathrm{resp}}\) and \(\sigma_{\mathrm{relay}}\) denote the exploration, target-response and communication-relay phases, respectively.

\textbf{Phase-transition structure.}
With the phase-density vector ordered as
\[
    \rho_{\mathrm{SAR}}
    =
    [
    \rho_{\mathrm{exp}},
    \rho_{\mathrm{resp}},
    \rho_{\mathrm{relay}}
    ]^{\top},
\]
the conservative phase-transition matrix for the search-and-rescue task is
\begin{equation}
\begin{aligned}
\left[
\begin{array}{c}
R_{\mathrm{exp}} \\
R_{\mathrm{resp}} \\
R_{\mathrm{relay}}
\end{array}
\right]
&=
\left[
\begin{array}{ccc}
 -(\lambda_{\mathrm{exp},\mathrm{resp}} + \lambda_{\mathrm{exp},\mathrm{relay}}) & 0 & 0 \\
 \lambda_{\mathrm{exp},\mathrm{resp}} & -\lambda_{\mathrm{resp},\mathrm{relay}} & \lambda_{\mathrm{relay},\mathrm{resp}} \\
 \lambda_{\mathrm{exp},\mathrm{relay}} & \lambda_{\mathrm{resp},\mathrm{relay}} & -\lambda_{\mathrm{relay},\mathrm{resp}}
\end{array}
\right]
\left[
\begin{array}{c}
\rho_{\mathrm{exp}} \\
\rho_{\mathrm{resp}} \\
\rho_{\mathrm{relay}}
\end{array}
\right].
\end{aligned}
\label{eq:sar_matrix_reaction_cn}
\end{equation}
This structure represents task-triggered role differentiation through the branching of searchers into responders and relays, together with dynamic conversion between responder and relay roles. Each column sums to zero, and thus the total swarm mass is conserved. The transition rates are driven by target discovery, communication bottlenecks and local role demand:
\begin{equation}
    \begin{aligned}
    \lambda_{\mathrm{exp},\mathrm{resp}}
    &=
    \lambda_{\max}
    g_{\mathrm{target}}
    (c_{\mathrm{target}}),\\
    \lambda_{\mathrm{resp},\mathrm{relay}}
    &=
    \lambda_{\max}
    g_{\mathrm{relay}}
    (n_{\mathrm{target}},q_{\mathrm{comm}}),\\
    \lambda_{\mathrm{exp},\mathrm{relay}}
    &=
    \lambda_{\max}
    g_{\mathrm{comm}}
    (q_{\mathrm{comm}},P_{\mathrm{shared}}),\\
    \lambda_{\mathrm{relay},\mathrm{resp}}
    &=
    \lambda_{\max}
    g_{\mathrm{resp}}
    (n_{\mathrm{target}},c_{\mathrm{target}}).
    \label{eq:sar_lambda_relay_resp_cn}
    \end{aligned}
\end{equation}
Here, \(c_{\mathrm{target}}\) denotes target confidence, \(n_{\mathrm{target}}\) denotes the number of robots near the target or within the rescue region, \(q_{\mathrm{comm}}\) denotes communication quality, and \(P_{\mathrm{shared}}\) denotes the target-location information shared in the local neighbourhood.

Under the convention that \(A_{mn}=1\) means the transition \(m\rightarrow n\) is allowed, the transition mask for the search-and-rescue task is
\begin{equation}
    A_{\mathrm{SAR}}
    =
    \left[
    \begin{array}{ccc}
        0 & 1 & 1 \\
        0 & 0 & 1 \\
        0 & 1 & 0
    \end{array}
    \right].
    \label{eq:sar_mask_matrix_cn}
\end{equation}
The corresponding conservative reaction term is
\begin{equation}
    R_{\sigma}^{\mathrm{SAR}}(\rho)
    =
    \sum_{\eta\neq\sigma}
    A_{\eta\sigma}^{\mathrm{SAR}}\lambda_{\eta\sigma}\rho_{\eta}
    -
    \sum_{\ell\neq\sigma}
    A_{\sigma\ell}^{\mathrm{SAR}}\lambda_{\sigma\ell}\rho_{\sigma},
    \qquad
    \sigma,\eta,\ell\in\mathcal{S}_{\mathrm{SAR}}.
    \label{eq:sar_general_reaction_cn}
\end{equation}

\textbf{Advection potential fields.}
The advection component in the search-and-rescue task is constructed from a target-attraction field, a bipolar relay/coupling field and an exploration-bias field (see Sec.~\ref{fields_example} for details):
\begin{equation}
    \begin{aligned}
    \Phi_{\mathrm{target}}(x)
    &=
    \frac{1}{2}\|x-p_{\mathrm{target}}\|^2,\\
    \Phi_{\mathrm{center}}(x)
    &=
    \frac{1}{2}(1-\gamma)\|x-p_{A}\|^2
    +
    \frac{1}{2}\gamma\|x-p_{B}\|^2,\\
    \Phi_{\mathrm{rand}}(x)
    &=
    \frac{1}{2}(x-p_c)^{\top}\Sigma^{-1}(x-p_c).
    \label{eq:sar_exp_field_cn}
    \end{aligned}
\end{equation}
Here, \(p_{\mathrm{target}}\) is the detected or shared rescue-target location, \(p_A\) and \(p_B\) are relay anchors or endpoint references between the base and the target region, \(\gamma\in[0,1]\) controls the relative weight of the bipolar coupling field.

The phase-conditioned advection field is
\begin{equation}
    u_{\sigma}^{\mathrm{SAR}}
    =
    \sum_{k\in\mathcal{K}_{\sigma}^{\mathrm{SAR}}}
    \omega_{\sigma,k}b_k,
    \qquad
    b_k=-\nabla\Phi_k,
    \qquad
    \sigma\in\mathcal{S}_{\mathrm{SAR}}.
    \label{eq:sar_advection_cn}
\end{equation}
The full field dictionary is
\[
    \mathcal{K}^{\mathrm{SAR}}
    =
    \{
    \Phi_{\mathrm{target}},
    \Phi_{\mathrm{center}},
    \Phi_{\mathrm{rand}}
    \}.
\]
Thus, distributed search, target response and communication relay are all generated through phase-dependent weighting of the same field dictionary.

\textbf{Diffusion regulation.}
In the search-and-rescue task, the diffusion term supports coverage, safety and role-distribution regulation. We define the generalized local density field as
\begin{equation}
    \hat{\rho}_{\mathrm{loc}}^{\mathrm{SAR}}(x,t)
    =
    \alpha_{\mathrm{a}}\hat{\rho}_{\mathrm{robot}}(x,t)
    +
    \alpha_{\mathrm{o}}\hat{\rho}_{\mathrm{obs}}(x).
    \label{eq:sar_local_density_cn}
\end{equation}
The search phase typically uses a larger \(D_{\mathrm{search}}\) to improve coverage. The response phase uses a smaller or moderate \(D_{\mathrm{resp}}\) to support local cooperation around the target while preventing excessive congestion. The relay phase uses a moderate \(D_{\mathrm{relay}}\) to maintain appropriate spacing among relay robots and to avoid breaking the communication chain.

\textbf{Final Macro-ADR model and trainable parameters.}
The final Macro-ADR representation for the search-and-rescue task is
\begin{equation}
    \frac{\partial \rho_{\sigma}}{\partial t}
    =-
    \nabla\cdot
    \left(
        \rho_{\sigma}
        \sum_{k\in\mathcal{K}^{\mathrm{SAR}}_{\sigma}}
        \omega_{\sigma,k}b_k
    \right)
    +
    \nabla\cdot
    \left(
        D_{\sigma}\nabla\rho_{\sigma}
    \right)
    +
    R_{\sigma}^{\mathrm{SAR}}(\rho),
    \qquad
    \sigma\in\mathcal{S}_{\mathrm{SAR}}.
    \label{eq:sar_final_adr_cn}
\end{equation}
The corresponding trainable physical parameter set is
\begin{equation}
    P_{\mathrm{SAR}}(t)
    =
    \left\{
    \omega_{\sigma,k}(t),
    D_{\sigma}(t),
    \lambda_{mn}(t)
    \right\}_{\sigma\in\mathcal{S}_{\mathrm{SAR}},\,k\in\mathcal{K}^{\mathrm{SAR}}_{\sigma},\,(m,n)\in\mathcal{E}_{\mathrm{SAR}}}.
    \label{eq:sar_trainable_parameters_cn}
\end{equation}
This scenario evaluates PhySwarm in task-triggered role-adaptive search and rescue.

\begin{remark}[Cross-scenario summary]
    Although the three scenarios differ in task semantics, they are all constructed from the same Macro-ADR template. Trail-Guided Swarm Foraging mainly uses resource, nest and information fields to represent an exploration--exploitation cycle. Formation-Reconfigurable Swarm Navigation uses goal-flow, morphology and local-cohesion fields to represent geometry-constrained formation reconfiguration. Role-Adaptive Swarm Search and Rescue uses exploration, target-attraction, bipolar relay/coupling and shared-information fields to represent role adaptation and communication relay. Together, these scenarios validate the ability of PhySwarm to provide a unified, continuously switchable and learnable representation of multi-stage swarm emergent behaviors.
\end{remark}

\clearpage

\subsection{NPC network implementation}
\label{sec:mappo_pinn_network_implementation_cn}

After constructing the Macro-ADR representation for each scenario, we further instantiate the Neural-Physics Controller (NPC) to learn the corresponding physical parameter trajectories. Although the three experimental scenarios involve different behavioral phases, field dictionaries, diffusion requirements and phase-transition structures, they share the same NPC implementation pipeline. Specifically, the controller follows a unified framework of observation encoding, temporal memory, physical-parameter prediction, ADR residual regularization and policy optimization. The differences across scenarios mainly lie in the input observation variables, the number of field bases, the number of behavioral phases, the task-specific reward design and the associated loss components.

\subsubsection{Shared network configuration}
\label{sec:shared_network_architecture}

To ensure a consistent policy representation, physical-parameter interface and training protocol across all benchmark tasks, we use the same RL--PINN Neural-Physics Controller (NPC) architecture in the three scenarios. The controller follows the centralized-training and decentralized-execution (CTDE) paradigm: during execution, each robot predicts physical parameters only from its local observation and recurrent memory; during training, a centralized critic uses the global joint state to estimate the value function. The scenario-specific components are the behavioral phase set, field dictionary, transition graph, activation functions and task reward terms. The actor--critic architecture, physical-parameter projection, Micro-EDM execution interface and physics-informed losses are shared across all tasks.

Let \(N\) denote the number of robots, \(\mathcal{S}=\{1,\ldots,M\}\) the set of behavioral phases and \(s_i^t\in\mathcal{S}\) the phase of robot \(i\) at time \(t\). The NPC predicts the physical parameter set
\[
    P_i^t=\{\omega_i^t,D_i^t,\lambda_i^t\},
\]
where \(\omega_i^t\) contains the advection-field weights, \(D_i^t\) is the diffusion coefficient and \(\lambda_i^t\) contains the phase-transition rates.

\textbf{Decentralized actor.} 
For robot \(i\), the actor receives the local observation \(o_i^t\) and the previous recurrent hidden state \(h_{a,i}^{t-1}\). The observation is first encoded by a shared multilayer perceptron (MLP), and the resulting feature is then processed by a recurrent module:
\begin{equation}
    e_i^t
    =
    \phi_{\mathrm{enc}}(o_i^t),
    \qquad
    z_i^t,h_{a,i}^{t}
    =
    \mathrm{GRU}_{a}(e_i^t,h_{a,i}^{t-1}),
    \label{eq:actor_temporal_encoding}
\end{equation}
where \(\phi_{\mathrm{enc}}\) is the local observation encoder, \(e_i^t\) is the encoded observation feature, \(z_i^t\) is the actor feature used for parameter prediction and \(h_{a,i}^{t}\) is the updated actor hidden state.

The actor uses a multi-head output structure. The continuous physical-parameter head first predicts unconstrained latent variables through a diagonal Gaussian policy:
\begin{equation}
    \tilde{p}_i^t
    \sim
    \mathcal{N}
    \left(
    \mu_{\theta}(z_i^t),
    \operatorname{diag}
    \left(
    \sigma_{\theta}^{2}(z_i^t)
    \right)
    \right),
    \label{eq:actor_gaussian_head}
\end{equation}
where \(\tilde{p}_i^t\) is the unconstrained latent output, and \(\mu_{\theta}(\cdot)\) and \(\sigma_{\theta}(\cdot)\) are predicted by separate MLP branches. The latent output is then mapped to the bounded physical parameter manifold:
\begin{equation}
    P_i^t
    =
    \{\omega_i^t,D_i^t,\lambda_i^t\}
    \in
    \mathcal{M}_{\omega}
    \times
    \mathcal{M}_{D}
    \times
    \mathcal{M}_{\lambda}.
    \label{eq:actor_physical_parameters}
\end{equation}
Specifically, \(\omega_i^t\) is mapped to a non-negative normalized simplex by a softmax function, \(D_i^t\) is mapped to the interval \([D_{\min},D_{\max}]\) by a sigmoid interval transform, and each transition rate \(\lambda_{mn,i}^t\) is mapped to \([0,A_{mn}\lambda_{\max}]\) by a masked sigmoid transform. Here, \(A_{mn}\in\{0,1\}\) is the phase-transition mask. This projection ensures that the NPC outputs physically meaningful ADR parameters rather than unconstrained low-level actions.

When phase transitions require learnable soft activation, the actor also includes an optional phase-gating head:
\begin{equation}
    \chi_i^t
    =
    \operatorname{sigmoid}
    \left(
    W_{\chi}z_i^t+b_{\chi}
    \right),
    \qquad
    \chi_i^t\in[0,1]^{\mid \mathcal{E}\mid },
    \label{eq:actor_phase_gate}
\end{equation}
where \(\mathcal{E}\) denotes the set of allowed phase-transition edges. The gating output can be used to construct the activation function \(\chi_{mn}(x_i,t)\), which modulates the transition probability together with the learned rate \(\lambda_{mn}\). For event-driven transitions, \(\chi_{mn}\) can instead be directly specified by sensor events, geometric thresholds or communication-quality indicators.

\textbf{Centralized critic. }
The critic is used only during training. It receives the global joint state \(S^t\), which contains the positions, velocities, behavioral phases, local density estimates, task variables, obstacle information and other scenario-dependent global features of all robots. To capture both spatial coordination and temporal dependence, the critic uses a spatial encoder followed by a recurrent temporal module:
\begin{equation}
    e_c^t
    =
    \phi_c(S^t),
    \qquad
    z_c^t,h_c^t
    =
    \mathrm{GRU}_{c}(e_c^t,h_c^{t-1}),
    \label{eq:critic_temporal_encoding}
\end{equation}
where \(\phi_c\) is the centralized state encoder, \(z_c^t\) is the critic feature and \(h_c^t\) is the critic hidden state. The value head outputs the centralized state value:
\begin{equation}
    V_{\psi}(S^t)
    =
    f_V(z_c^t),
    \label{eq:critic_value}
\end{equation}
where \(f_V\) is a linear or shallow MLP regression head and \(\psi\) denotes the critic parameters. During deployment, the critic is removed and each robot executes the decentralized actor using only local information.

\textbf{Shared observation interface. }
All scenarios use a body-frame local observation to improve robustness to global position, orientation and robot indexing. For robot \(i\), the observation is written as
\begin{equation}
    o_i^t
    =
    \left[
    o_{i,\mathrm{shared}}^t,
    o_{i,\mathrm{task}}^t
    \right],
    \label{eq:shared_observation}
\end{equation}
where \(o_{i,\mathrm{shared}}^t\) contains physical features shared by all scenarios, and \(o_{i,\mathrm{task}}^t\) contains task-specific cues. The shared observation contains three components: 

\begin{itemize}
    \item The robot velocity is represented in the body frame: \(v_{\mathrm{body},i}^t = \operatorname{Norm} \left( R_i^{\top}v_i^t \right)\),
            where \(v_i^t\) is the planar velocity in the global frame, \(R_i\) is the rotation matrix defined by the robot heading, and \(\operatorname{Norm}(\cdot)\) denotes normalization by the maximum speed.
    \item The robot observes the local density \(\hat{\rho}_{\mathrm{loc}}(x_i,t)\) and its gradient \(\nabla\hat{\rho}_{\mathrm{loc}}(x_i,t)\),
            where \(\hat{\rho}_{\mathrm{loc}}\) can be the robot density field or a generalized density field that also incorporates obstacle, boundary or risk constraints. These features provide the input for the density-gradient compensation in Micro-EDM.
    \item The current behavioral phase is encoded as a one-hot vector: \(p_i^t  =\operatorname{OneHot}(s_i^t) \in \{0,1\}^{M}\).
\end{itemize}

The task-specific component \(o_{i,\mathrm{task}}^t\) contains scenario-dependent cues, such as resource confidence, nest direction, trail strength, corridor width, formation error, target confidence, communication quality and relay demand. These variables are instantiated according to the field dictionary, transition triggers and task objectives of each scenario, as detailed in the scenario-specific implementation section. All continuous features are normalized before being passed to the actor.

\textbf{Shared reward structure. }
The reward follows the same additive organization in all scenarios:
\begin{equation}
    r_i
    =
    r_{\mathrm{align},i}
    +
    r_{\mathrm{safety},i}
    +
    r_{\mathrm{milestone},i}
    +
    r_{\mathrm{task},i} .
    \label{eq:shared_reward}
\end{equation}
Here, \(r_{\mathrm{align},i}\) and \(r_{\mathrm{safety},i}\) are shared physical-alignment and safety terms; \(r_{\mathrm{milestone},i}\) rewards discrete progress events such as resource discovery, corridor traversal, target detection or relay-chain formation; and \(r_{\mathrm{task},i}\) denotes scenario-specific task rewards.

The advection-alignment reward encourages the executed velocity to be consistent with the current phase-conditioned advection direction. Let \(\hat{v}_{\mathrm{adv},i}^t\) denote the normalized direction induced by the current advection field. We define
\begin{equation}
    r_{\mathrm{align},i}
    =
    S_{\mathrm{align}}
    \max
    \left(
    \frac{v_i^t}{\lVert v_i^t\rVert _2+\epsilon_v}
    \cdot
    \hat{v}_{\mathrm{adv},i}^t,
    0
    \right),
    \label{eq:alignment_reward}
\end{equation}
where \(S_{\mathrm{align}}>0\) is the reward scale and \(\epsilon_v>0\) avoids numerical instability at near-zero velocity. This reward does not replace the physics-informed residual; it provides an additional task-level incentive for microscopic motion to align with the learned advection field.

The safety term penalizes collisions and unsafe proximity:
\begin{equation}
\label{eq:safety_reward}
r_{\mathrm{safety},i}
=
\left\{
\begin{array}{ll}
r_{\mathrm{penalty}}, & d_{\min,i}^{t} < d_{\mathrm{collision}},\\
0, & \mathrm{otherwise}.
\end{array}
\right.
\end{equation}
where \(d_{\min,i}^t\) is the minimum distance from robot \(i\) to neighboring robots, obstacles or boundaries, \(d_{\mathrm{collision}}\) is the collision threshold and \(r_{\mathrm{penalty}}<0\) is a fixed penalty.

\textbf{Shared RL--PINN objective. }
In addition to the MAPPO objective, the NPC is regularized by physics-informed losses. The total training objective is
\begin{equation}
    L_{\mathrm{total}}
    =
    L_{\mathrm{RL}}
    +
    \eta L_{\mathrm{PINN}},
    \qquad
    L_{\mathrm{PINN}}
    =
    L_{\mathrm{dyn}}
    + \beta
    L_{\mathrm{ADR}},
    \label{eq:shared_total_loss}
\end{equation}
where \(L_{\mathrm{RL}}\) is the MAPPO loss, \(L_{\mathrm{PINN}}\) is the physics-informed loss, \(\eta>0\) is the physics-regularization weight, and \(\beta>0\) balances the two residuals..

For each behavioral phase \(m\), the continuous phase density is reconstructed from robot trajectories as
\begin{equation}
    \hat{\rho}_m(t,x)
    =
    \frac{1}{Nh^d}
    \sum_{i=1}^{N}
    \mathbb{I}(s_i^t=m)
    K
    \left(
    \frac{x-x_i(t)}{h}
    \right),
    \label{eq:shared_density_estimation}
\end{equation}
where \(d=2\), \(h\) is the kernel bandwidth, \(K(\cdot)\) is a smooth kernel and \(\mathbb{I}(\cdot)\) is the indicator function.

The microscopic dynamics loss aligns the executed velocity with the Micro-EDM velocity. The Micro-EDM velocity of robot \(i\) is
\begin{equation}
    v_{\mathrm{EDM},i}^t
    =
    v_{\mathrm{adv},i}^t
    -
    \frac{
    D_i^t
    }{
    \hat{\rho}_{\mathrm{loc}}(x_i,t)+\varepsilon
    }
    \nabla\hat{\rho}_{\mathrm{loc}}(x_i,t),
    \label{eq:edm_velocity_for_loss}
\end{equation}
where \(\varepsilon>0\) avoids numerical singularity. The advection component is
\begin{equation}
    v_{\mathrm{adv},i}^t
    =
    \sum_{k\in\mathcal{K}_{s_i^t}}
    \omega_{i,k}^t b_k(x_i,t),
    \qquad
    b_k(x,t)=-\nabla\Phi_k(x,t).
    \label{eq:adv_velocity_for_loss}
\end{equation}
The microscopic consistency loss is then
\begin{equation}
    L_{\mathrm{dyn}}
    =
    \frac{1}{N\mid \mathcal{T}\mid }
    \sum_{t\in\mathcal{T}}
    \sum_{i=1}^{N}
    \left\|
    v_i^t
    -
    v_{\mathrm{EDM},i}^t
    \right\|_2^2,
    \label{eq:shared_dyn_loss}
\end{equation}
where \(\mathcal{T}\) is the set of sampled time steps in a training batch and \(v_i^t\) is the executed or simulated robot velocity.

The macroscopic ADR residual loss enforces the density evolution of each phase to follow the advection--diffusion--reaction dynamics. Let
\begin{equation}
    u_m(t,x)
    =
    \sum_{k\in\mathcal{K}_{m}}
    \omega_{m,k}(t,x)b_k(x,t),
    \label{eq:macro_velocity_for_loss}
\end{equation}
where \(\omega_{m,k}(t,x)\) is obtained from the NPC outputs of robots in phase \(m\) and interpolated or averaged onto the collocation grid. The residual for phase \(m\) is
\begin{equation}
    \mathcal{R}_m(t,x)
    =
    \frac{\partial \hat{\rho}_m}{\partial t}
    +
    \nabla\cdot
    \left(
    u_m\hat{\rho}_m
    \right)
    -
    \nabla\cdot
    \left(
    D_m\nabla\hat{\rho}_m
    \right)
    -
    R_m(\hat{\rho}),
    \label{eq:adr_residual_operator}
\end{equation}
where \(D_m(t,x)\) is the diffusion coefficient field and \(R_m(\hat{\rho})\) is the conservative reaction term induced by the task-specific phase-transition graph and learned rates. With collocation points \(\{(t_j,x_j)\}_{j=1}^{N_c}\), the macroscopic loss is
\begin{equation}
    L_{\mathrm{ADR}}
    =
    \frac{1}{MN_c}
    \sum_{m=1}^{M}
    \sum_{j=1}^{N_c}
    \left\|
    \mathcal{R}_m(t_j,x_j)
    \right\|_2^2 .
    \label{eq:shared_adr_loss}
\end{equation}

Overall, the shared architecture allows the three benchmark tasks to be trained and executed through the same physical parameter interface. The actor learns the mapping from local observations to ADR parameters, the critic provides centralized value estimation, the reward supplies task-level optimization signals, and the PINN losses regularize the learned behavior to remain consistent with the Micro-EDM execution mechanism and the Macro-ADR conservation dynamics.

\subsubsection{Scenario-specific network configuration}
\label{sec:scenario_specific_network_configuration}

Although the three benchmark scenarios use the same RL--PINN Neural-Physics Controller (NPC), they instantiate different physical fields, behavioral phases, transition masks, observation cues and task rewards. This section specifies these scenario-dependent components. Rather than redefining the shared actor--critic architecture, we describe how the common NPC interface is specialized for each task.

\textbf{Scenario-specific actor outputs. }
Table~\ref{tab:gaussian_head_configs} summarizes the continuous output channels activated by the actor in each scenario. These channels correspond to the physical quantities required by the task-specific PhySwarm instance. Field weights and diffusion coefficients are produced by the continuous physical-parameter head, whereas phase-transition rates are produced by the masked reaction-rate head according to the scenario-specific transition graph.

\textbf{Scenario-specific observation variables. }
All scenarios use the shared body-frame observation interface described in Sec.~\ref{sec:shared_network_architecture}. In addition to the shared physical features, each scenario appends task-specific cues required by its field dictionary, phase-transition triggers and reward design. The final local observation of robot \(i\) at time \(t\) is denoted by \(o_i^t\in\mathbb{R}^{D_{\mathrm{obs}}}.\)
In our implementation, \(D_{\mathrm{obs}}\) is set to \(17\), \(16\) and \(13\) for Trail-Guided Swarm Foraging, Formation-Reconfigurable Swarm Navigation and Role-Adaptive Swarm Search and Rescue, respectively. Table~\ref{tab:scenario_specific_obs} lists the scenario-specific observation variables and their physical meanings.

\textbf{Scenario-specific reward terms. }
The reward function in all scenarios follows the shared decomposition
\[
    r_i =
    r_{\mathrm{align},i}
    +
    r_{\mathrm{safety},i}
    +
    r_{\mathrm{milestone},i}
    +
    r_{\mathrm{task},i} .
\]
The alignment and safety terms are shared across tasks, whereas the milestone reward \(r_{\mathrm{milestone},i}\) and the task-specific reward \(r_{\mathrm{task},i}\) are instantiated according to the objective of each scenario. Table~\ref{tab:scenario_specific_rewards} summarizes these scenario-specific reward components.

\textbf{Scenario-specific physics-informed constraints. }
The shared PINN loss consists of a microscopic dynamics consistency term and a macroscopic ADR residual term. Its mathematical form remains unchanged across scenarios; only the task-specific advection field \(v_{\mathrm{adv},i}\) and conservative reaction source \(R_m(\hat{\rho})\) are instantiated differently according to the field dictionary and phase-transition graph of each task. Table~\ref{tab:unified_pinn_constraints} summarizes these scenario-specific physical mappings.

Together, these configurations instantiate the same NPC architecture under different task semantics. The actor--critic backbone, physical projection layer, Micro-EDM execution interface and RL--PINN training objective remain shared, whereas the active field bases, phase-transition graph, observation cues and task rewards are selected according to the modeling requirements of each benchmark task.

\begin{table}[htbp]
\centering
\caption{\textbf{Scenario-specific continuous output channels of the actor.} Each scenario uses the same NPC architecture but activates a different set of physical parameters according to its field dictionary and transition graph.}
\label{tab:gaussian_head_configs}
\footnotesize
\begin{tabular}{p{0.15\textwidth} | p{0.06\textwidth} | p{0.34\textwidth} | p{0.34\textwidth}}
\hline
\textbf{Scenario} &
\textbf{Dim.} &
\textbf{Mean-vector mapping} &
\textbf{Physical parameters and semantics} \\
\hline

\textbf{Trail-Guided Swarm Foraging} &
7 &
\([\mu_{\mathrm{food}}, \mu_{\mathrm{nest}}, \mu_{\mathrm{exp}}, \mu_{\mathrm{info}}, \mu_{\mathrm{diff}}, \mu_{\mathrm{pick}}, \mu_{\mathrm{drop}}]^{\top}\) &
\(\bullet\) food-attraction weight \(w_{\mathrm{food}}\) \newline
\(\bullet\) nest-attraction weight \(w_{\mathrm{nest}}\) \newline
\(\bullet\) exploration-bias weight \(w_{\mathrm{exp}}\) \newline
\(\bullet\) trail/information weight \(w_{\mathrm{info}}\) \newline
\(\bullet\) diffusion coefficient \(D\) \newline
\(\bullet\) task-event transition rates, including pick-up and drop-off gates \\
\hline

\textbf{Formation-Reconfigurable Swarm Navigation} &
4 &
\([\mu_{\mathrm{flow}}, \mu_{\mathrm{shape}}, \mu_{\mathrm{diff}}, \mu_{\beta}]^{\top}\) &
\(\bullet\) goal-flow weight \(w_{\mathrm{flow}}\) \newline
\(\bullet\) morphology/formation weight \(w_{\mathrm{shape}}\) \newline
\(\bullet\) diffusion coefficient \(D\) \newline
\(\bullet\) morphology parameter \(\beta\) for shape adaptation \\
\hline

\textbf{Role-Adaptive Swarm Search and Rescue} &
4 &
\([\mu_{\mathrm{exp}}, \mu_{\mathrm{target}}, \mu_{\mathrm{relay}}, \mu_{\mathrm{diff}}]^{\top}\) &
\(\bullet\) exploration-bias weight \(w_{\mathrm{exp}}\) \newline
\(\bullet\) target-attraction weight \(w_{\mathrm{target}}\) \newline
\(\bullet\) relay/coupling-field weight \(w_{\mathrm{relay}}\) \newline
\(\bullet\) diffusion coefficient \(D\) \\
\hline
\end{tabular}
\end{table}

\begin{table}[htbp]
\centering
\caption{\textbf{Scenario-specific observation variables.} Each task extends the shared observation interface with cues that instantiate its physical fields, transition triggers and task objectives.}
\label{tab:scenario_specific_obs}
\footnotesize
\begin{tabular}{p{0.20\textwidth} | p{0.12\textwidth} | p{0.24\textwidth} | p{0.36\textwidth}}
\hline
\textbf{Scenario} &
\textbf{Dim.} &
\textbf{Task-specific features} &
\textbf{Physical semantics} \\
\hline

\textbf{Trail-Guided Swarm Foraging} &
10 &
\(\hat{v}_{\mathrm{food}}^i\), \(e^{-d_{\mathrm{food}}^i}\), 
\(\hat{v}_{\mathrm{nest}}^i\), \(e^{-d_{\mathrm{nest}}^i}\), 
\(\hat{v}_{\mathrm{info}}^i\), \(I_{\mathrm{info}}^i\) &
Encodes the body-frame direction and distance to the nearest detected resource, the nest direction and distance for homing, and the information-guided attraction induced by trail memory or neighbor broadcasts. \\
\hline

\textbf{Formation-Reconfigurable Swarm Navigation} &
9 &
\(\hat{v}_{\mathrm{cent}}^i\), \(d_{\mathrm{cent}}^i\), 
\(\sigma_{\mathrm{ema},y}^i\), 
\(\hat{v}_{\mathrm{goal}}^i\), \(d_{\mathrm{goal}}^i\), 
\(w_{\mathrm{forward}}^i\) &
Encodes local centroid attraction for formation cohesion, lateral dispersion as a proxy for shape compression, goal-directed navigation cues and predicted traversable width for anticipatory morphology adaptation. \\
\hline

\textbf{Role-Adaptive Swarm Search and Rescue} &
6 &
\(F_{\mathrm{target}}^i\), \(\hat{v}_{\mathrm{target}}^i\), 
\(F_{\mathrm{base}}^i\), \(\hat{v}_{\mathrm{axis}}^i\) &
Encodes the attraction strength and direction to the detected rescue target, the base-anchor cue and the axis-correction vector that guides relay robots toward the base--target communication corridor. \\
\hline
\end{tabular}
\end{table}

\begin{table}[htbp]
\centering
\caption{\textbf{Scenario-specific reward terms.} The shared alignment and safety rewards are complemented by task-specific regulation and milestone terms.}
\label{tab:scenario_specific_rewards}
\footnotesize
\begin{tabular}{p{0.20\textwidth} | p{0.22\textwidth} | p{0.50\textwidth}}
\hline
\textbf{Scenario} &
\textbf{Task-specific terms} &
\textbf{Purpose and physical semantics} \\
\hline

\textbf{Trail-Guided Swarm Foraging} &
\(r_{\mathrm{disp},i}\), \(r_{\mathrm{event},i}\) &
\(\bullet\) \textbf{Spatial dispersion and congestion suppression:} encourages outward exploration away from the nest while penalizing excessive local crowding near the nest, resource or trail regions. \newline
\(\bullet\) \textbf{Event milestone reward:} provides sparse rewards when a robot successfully crosses key task events, such as pick-up, homing completion or drop-off. \\
\hline

\textbf{Formation-Reconfigurable Swarm Navigation} &
\(r_{\mathrm{form}}\), \(\psi_{\mathrm{gate}}\), \(\chi_{\mathrm{atten}}\), \(r_{\mathrm{squeeze},i}\), \(r_{\mathrm{act},i}\) &
\(\bullet\) \textbf{Formation-quality reward:} maintains the desired spatial structure while avoiding degenerate solutions. \newline
\(\bullet\) \textbf{Distance- and morphology-gated regulation:} adjusts the contribution of shape rewards according to goal progress and formation state. \newline
\(\bullet\) \textbf{Anticipatory compression penalty:} penalizes lateral deviation when the predicted corridor width decreases, encouraging the swarm to compress before entering narrow corridors. \\
\hline

\textbf{Role-Adaptive Swarm Search and Rescue} &
\(R_{\mathrm{quota},i}\), \(R_{\mathrm{topo},i}\), \(R_{\mathrm{team\_disc}}\), \(R_{\mathrm{team\_rescue}}\) &
\(\bullet\) \textbf{Role allocation:} discourages redundant responders and encourages functional differentiation between responders and relays. \newline
\(\bullet\) \textbf{Topology maintenance:} rewards relay formation near the base--target axis and penalizes relay-chain disconnection or excessive deviation. \newline
\(\bullet\) \textbf{Team-level milestones:} provides shared rewards for first target discovery and successful completion of the rescue-relay chain. \\
\hline
\end{tabular}
\end{table}

\begin{table}[htbp]
\centering
\caption{\textbf{Scenario-specific physical mappings used in the PINN constraints.}
The microscopic advection velocity is constructed from the active field bases, and the reaction source follows the conservative phase-transition structure of each task.}
\label{tab:unified_pinn_constraints}
\footnotesize
\begin{tabular}{p{0.17\textwidth} | p{0.33\textwidth} | p{0.45\textwidth}}
\hline
\textbf{Scenario} & 
\textbf{ \(\mathbf{v}_{\mathrm{adv}, i}(t)\)} & 
\textbf{ \(R_m(\hat{\boldsymbol{\rho}})\)} \\
\hline
\textbf{Trail-Guided} \par \textbf{Swarm Foraging} \par (\(M=4\)) & 
\(\begin{aligned}[t]
    \mathbf{v}_{\mathrm{adv}, i}(t) =&  -\omega_{\mathrm{food}, i}\nabla \Phi_{\mathrm{food}}(\mathbf{x}_i) \\
    &-\omega_{\mathrm{nest}, i}\nabla \Phi_{\mathrm{nest}}(\mathbf{x}_i) \\
    &-\omega_{\mathrm{info}, i}\nabla \Phi_{\mathrm{info}}(\mathbf{x}_i) \\
    &-\omega_{\mathrm{exp}, i}\nabla \Phi_{\mathrm{exp}}(\mathbf{x}_i)
\end{aligned}\) & 
\(\begin{aligned}[t]
    R_{\mathrm{exp}} &= \lambda_{\mathrm{home},\mathrm{exp}} \hat{\rho}_{\mathrm{home}}  - \left[ \lambda_{\mathrm{exp},\mathrm{app}} + \lambda_{\mathrm{exp},\mathrm{trail}} \right] \hat{\rho}_{\mathrm{exp}} \\
    R_{\mathrm{app}} &= \lambda_{\mathrm{exp},\mathrm{app}} \hat{\rho}_{\mathrm{exp}} - \lambda_{\mathrm{app},\mathrm{home}} \hat{\rho}_{\mathrm{app}} \\
    R_{\mathrm{home}} &= \lambda_{\mathrm{app},\mathrm{home}} \hat{\rho}_{\mathrm{app}} + \lambda_{\mathrm{trail},\mathrm{home}} \hat{\rho}_{\mathrm{trail}}  - \lambda_{\mathrm{home},\mathrm{exp}} \hat{\rho}_{\mathrm{home}} \\
    R_{\mathrm{trail}} &= \lambda_{\mathrm{exp},\mathrm{trail}} \hat{\rho}_{\mathrm{exp}} - \lambda_{\mathrm{trail},\mathrm{home}} \hat{\rho}_{\mathrm{trail}}
\end{aligned}\) \\ \hline

\textbf{Formation-} \par \textbf{Reconfigurable} \par \textbf{Swarm} \par \textbf{ Navigation} \par (\(M=4\)) & 
\(\begin{aligned}[t]
    \mathbf{v}_{\mathrm{adv}, i}(t) =& -\omega_{\mathrm{shape}, i}\nabla \Phi_{\mathrm{shape}}\left(\mathbf{x}_i; \beta_i\right) \\
    &-\omega_{\mathrm{flow}, i}\nabla \Phi_{\mathrm{flow}}(\mathbf{x}_i)
\end{aligned}\) & 
\(\begin{aligned}[t]
    R_{\mathrm{keep}} &= \lambda_{\mathrm{rec},\mathrm{keep}} \hat{\rho}_{\mathrm{rec}}  - \lambda_{\mathrm{keep},\mathrm{nav}} \hat{\rho}_{\mathrm{keep}} \\
    R_{\mathrm{nav}} &= \lambda_{\mathrm{keep},\mathrm{nav}} \hat{\rho}_{\mathrm{keep}}  - \lambda_{\mathrm{nav},\mathrm{morph}} \hat{\rho}_{\mathrm{nav}} \\
    R_{\mathrm{morph}} &= \lambda_{\mathrm{nav},\mathrm{morph}} \hat{\rho}_{\mathrm{nav}}  - \lambda_{\mathrm{morph},\mathrm{rec}} \hat{\rho}_{\mathrm{morph}} \\
    R_{\mathrm{rec}} &= \lambda_{\mathrm{morph},\mathrm{rec}} \hat{\rho}_{\mathrm{morph}}  - \lambda_{\mathrm{rec},\mathrm{keep}} \hat{\rho}_{\mathrm{rec}}
\end{aligned}\) \\ \hline

\textbf{Role-Adaptive} \par \textbf{Swarm} \par \textbf{Search} \par \textbf{ and Rescue} \par (\(M=3\)) & 
\(\begin{aligned}[t]
    \mathbf{v}_{\mathrm{adv}, i}(t) =& -\omega_{\mathrm{exp}, i}\nabla \Phi_{\mathrm{exp}}(\mathbf{x}_i) \\
    &-\omega_{\mathrm{target}, i}\nabla \Phi_{\mathrm{target}}(\mathbf{x}_i) \\
    &-\omega_{\mathrm{relay}, i}\nabla \Phi_{\mathrm{coup}}(\mathbf{x}_i)
\end{aligned}\) & 
\(\begin{aligned}[t]
    R_{\mathrm{search}} &= - \left[ \lambda_{\mathrm{search},\mathrm{resp}} + \lambda_{\mathrm{search},\mathrm{relay}} \right] \hat{\rho}_{\mathrm{search}} \\
    R_{\mathrm{resp}} &= \lambda_{\mathrm{search},\mathrm{resp}} \hat{\rho}_{\mathrm{search}} + \lambda_{\mathrm{relay},\mathrm{resp}} \hat{\rho}_{\mathrm{relay}}  - \lambda_{\mathrm{resp},\mathrm{relay}} \hat{\rho}_{\mathrm{resp}} \\
    R_{\mathrm{relay}} &= \lambda_{\mathrm{resp},\mathrm{relay}} \hat{\rho}_{\mathrm{resp}} + \lambda_{\mathrm{search},\mathrm{relay}} \hat{\rho}_{\mathrm{search}}  - \lambda_{\mathrm{relay},\mathrm{resp}} \hat{\rho}_{\mathrm{relay}}
\end{aligned}\) \\
\hline
\end{tabular}
\end{table}

\clearpage

\subsection{Experimental arena setup}
\label{sec:arena}

This section describes the robot platform, experimental arenas and software--hardware configurations used in the simulation and real-robot experiments. All experiments were conducted in a bounded two-dimensional workspace, denoted by \(\Omega\subset\mathbb{R}^{2}\). The workspace boundary was defined by walls or physical enclosures, and robots were constrained to move inside \(\Omega\) without crossing the boundary. This setting is consistent with the zero-flux boundary assumption used in the Macro-ADR analysis: the swarm can migrate, diffuse and switch behavioural phases within the task domain, but no non-physical loss of robot mass occurs through the boundary.

\textbf{Robot platform.}
We used \textit{e-puck} differential-drive mobile robots as the platform for both simulation and real-robot experiments, shown in Fig.~\ref{fig:epuck}. The \textit{e-puck} robot has a compact mechanical structure and a well-defined motion model, making it suitable for small-scale multi-robot experiments. Each robot consists of a differential-drive chassis, left and right wheel actuators, onboard sensors, odometry modules and communication interfaces. Its low-level motion follows differential-drive kinematics, in which planar linear and angular velocity commands are converted into left- and right-wheel speeds. For more details of the \textit{e-puck} robot, please be referred to~\cite{millard2017pipuck}.

In PhySwarm, the Neural-Physics Controller (NPC) does not directly output wheel speeds. Instead, it predicts physical parameters \(P_i(t)=\{\omega_i(t),D_i(t),\lambda_i(t)\}\), which are first converted by Micro-EDM into a desired planar velocity and then mapped by the differential-drive controller to low-level wheel-speed commands. Therefore, the simulated and physical robots share the same high-level control interface. Their differences mainly arise from low-level factors such as sensor noise, actuator constraints, communication delay and localization error.

\textbf{Simulation arena.}
The simulation experiments were built in Webots 2023b~\cite{michel2004webots}. Webots provides robot kinematic simulation, sensor modelling, scene construction and external control interfaces, which are suitable for rapid validation and parameter debugging in multi-robot systems. The simulated arena is a \(3\,\mathrm{m}\times1\,\mathrm{m}\) rectangular workspace enclosed by boundary walls (shown in Fig.~\ref{fig:sim_arena}). Fixed obstacles were added in some experiments to simulate boundary constraints and narrow corridors in small-scale indoor environments. We used \(8\) \textit{e-puck} robots for the basic experiments, with additional scalability tests using larger swarm sizes (up to 48). Each robot was initialized with an independent pose and equipped with IMU, infrared distance sensing and odometry interfaces.

During simulation, the Webots robot models were connected to the high-level controller through ROS 2 interfaces~\cite{macenski2022robot}. The physical parameters predicted by the NPC were converted by Micro-EDM into desired planar velocities and then mapped to left- and right-wheel commands through the low-level differential-drive interface. The simulation environment recorded robot poses, sensor readings, behavioural phases and control commands for trajectory analysis, density-field reconstruction, physical-consistency evaluation and task-performance statistics. To support reproducibility, Webots world files, ROS 2 node definitions, launch scripts, topic interfaces and experimental execution procedures are provided in the source-code repository.

\begin{figure}[h]
    \centering
    \includegraphics[width=0.5\textwidth]{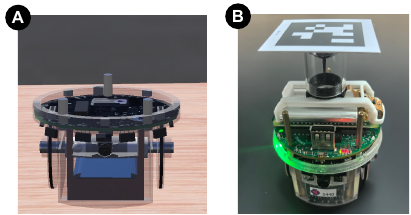} 
    \caption{\textbf{E-puck robot.} \textbf{a,} the model used in Webots simulator. \textbf{b,} real e-puck robot.
    }
    \label{fig:epuck}
\end{figure}

\begin{figure}[h]
    \centering
    \includegraphics[width=0.99\textwidth]{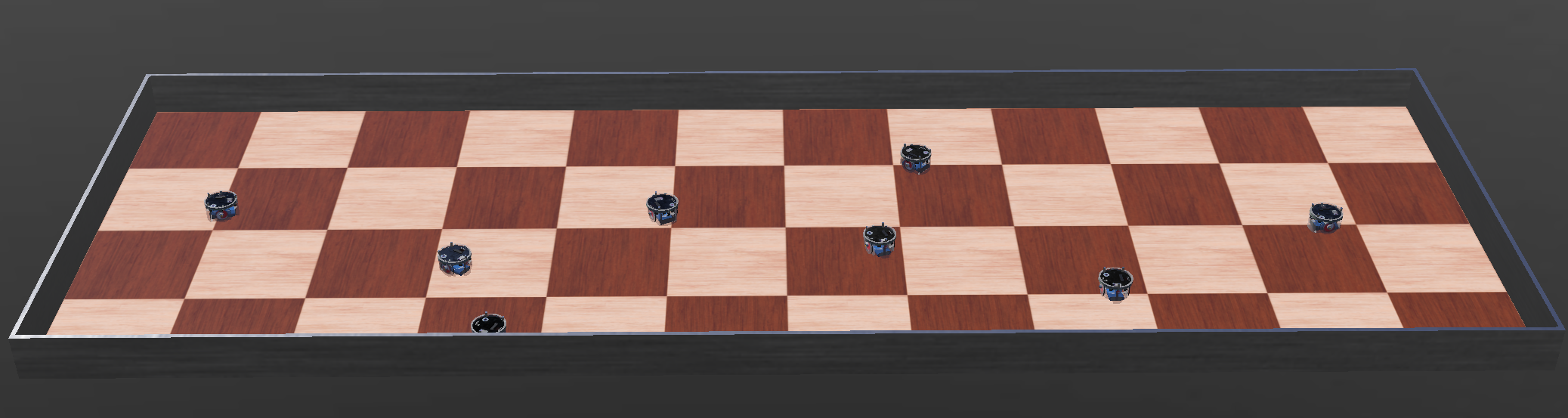} 
    \caption{\textbf{Webots arena.} A $3\,\mathrm{m}\times1\,\mathrm{m}$ simulation corridor environment for tasks. 
    }
    \label{fig:sim_arena}
\end{figure}

\textbf{Real arena.}
To evaluate the feasibility and robustness of PhySwarm on physical robots, we further constructed a real-robot experimental arena, shown in Fig.~\ref{fig:real_arena}. The physical arena is an approximately \(3\,\mathrm{m}\times1\,\mathrm{m}\) bounded rectangular workspace enclosed by surrounding barriers. Depending on the task, fixed obstacles, target regions, nest regions, base markers or narrow-corridor structures were placed in the arena. The real-robot experiments used \(8\) physical \textit{e-puck} robots. To obtain reference poses for experimental recording and evaluation, an industrial overhead camera was installed above the arena, and a unique AprilTag marker was attached to the top of each robot.

The real-robot platform used the same high-level control interface as the simulation platform. The physical parameters predicted by the NPC were first converted by Micro-EDM into desired planar velocities and then mapped by the differential-drive controller to executable wheel-speed commands. On the robot side, the data link handled chassis actuation, onboard sensing and odometry publishing. In parallel, an external reference link, implemented with an overhead camera and AprilTag detection, provided global reference poses for experiment logging, trajectory analysis and metric computation. Importantly, these global reference poses were not used as standard local observation inputs to the NPC. The controller primarily relied on onboard sensing, neighborhood information, task events and communication states, thereby preserving the decentralized-execution setting. The external localization system was used only as an evaluation and recording interface rather than as a centralized control input.

A hardware safety layer used the onboard infrared sensors of the \textit{e-puck} robots to monitor nearby obstacles and inter-robot distances. When a potential collision risk was detected, low-level protective actions were triggered to prevent unsafe contacts. Compared with simulation, the real-robot experiments introduced additional non-ideal factors, including illumination changes, AprilTag occlusion, camera-calibration error, uneven ground friction, sensor noise, communication delay and actuator uncertainty. The physical platform therefore provided a practical testbed for examining the stability, robustness and transferability of the PhySwarm coupling mechanism under real-world constraints.

\begin{figure}[h]
    \centering
    \includegraphics[width=0.7\textwidth, angle=90]{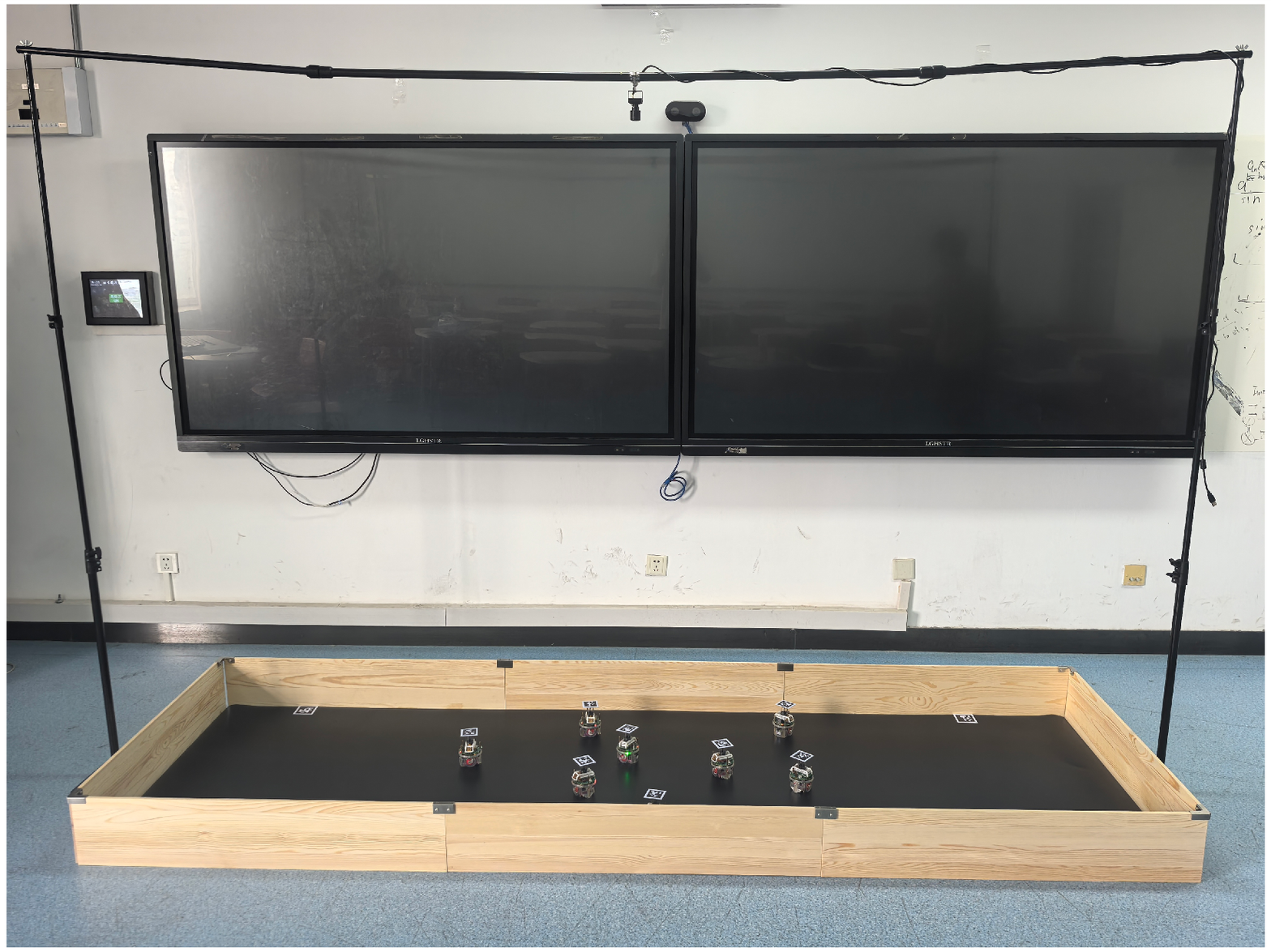} 
    \caption{\textbf{Real arena.} (a) The The $3\,\mathrm{m}\times1\,\mathrm{m}$ bounded rectangular workspace populated with eight physical e-puck robots, featuring custom home base and rescue target regions. (b) An industrial overhead camera tracking system utilizing unique AprilTag markers attached to each robot's top surface to capture high-precision global reference poses for experimental logging and trajectory analysis. 
    }
    \label{fig:real_arena}
\end{figure}

\textbf{Hardware and software configuration.}
The simulation and real-robot platforms used a consistent software architecture. Both platforms were implemented on Ubuntu 20.04 with ROS 2 Foxy. ROS 2 topics were used for velocity control, sensor reading, odometry publishing and reference-pose recording. The main differences between the two platforms lie in the experimental environment, robot embodiment, sensor-data source and reference-pose acquisition method. The main configurations are summarized in Table~\ref{tab:platform_comparison}.

\begin{table}[htbp]
\centering
\caption{Main configurations of the simulation and real-robot platforms.}
\label{tab:platform_comparison}
\begin{tabular}{p{0.24\textwidth}p{0.34\textwidth}p{0.34\textwidth}}
\toprule
\textbf{Item} & \textbf{Simulation platform} & \textbf{Real-robot platform} \\
\midrule
Experimental environment & Webots 2023b & Physical robot arena \\
Arena size & \(3\,\mathrm{m}\times1\,\mathrm{m}\) & \(3\,\mathrm{m}\times1\,\mathrm{m}\) \\
Robot type & Simulated \textit{e-puck} robots & Physical \textit{e-puck} robots \\
Number of robots & \(8\text{--}48\) & \(8\) \\
Locomotion & Differential drive & Differential drive \\
Operating system & Ubuntu 20.04 & Ubuntu 20.04 \\
Communication framework & ROS 2 Foxy & ROS 2 Foxy \\
High-level control interface & Micro-EDM desired-velocity input & Micro-EDM desired-velocity input \\
Low-level execution interface & Differential wheel-speed control & Differential wheel-speed control \\
Onboard sensors & IMU, infrared sensors and odometry & IMU, infrared sensors and odometry \\
Reference-pose source & Webots Supervisor & Overhead camera + AprilTag \\
Use of reference pose & Trajectory logging and metric computation & Trajectory logging and metric computation \\
Main uncertainty sources & Simulation noise and modelling error & Sensor noise, localization error, communication delay and actuator error \\
Main purpose & Model training, scalability testing and repeatability evaluation & Real-robot execution and transferability validation \\

\bottomrule
\end{tabular}
\end{table}

\clearpage

\subsection{Supplementary evaluation metrics}
\label{sec:versatility_metrics}

The main text uses the ADR manifold divergence as the primary task-independent physical-consistency metric. Here we provide additional execution-level and scenario-specific metrics used for supplementary analysis, ablation studies and radar-plot comparisons in Sec.~\ref{sec:versatility}. The execution-level metrics quantify whether the learned physical regulation can be implemented smoothly and safely by the robots. The scenario-specific metrics capture task objectives that are not shared across all benchmarks, including resource delivery, formation preservation and relay-chain connectivity.

For radar-plot visualization in Fig.~\ref{fig:versatility_forage}--\ref{fig:versatility_rescue}, all reported metrics are normalized to \([0,1]\) and oriented so that larger values indicate better performance. For cost-type metrics, such as control variation, collision exposure, navigation time, formation error and time to connectivity, we report the corresponding inverted normalized score. Raw metric values are used for statistical analysis.

\subsubsection{Execution-level metrics shared across tasks}

\textbf{Control smoothness.}
Control smoothness measures the temporal variation of low-level command sequences. Let \(c_i(t)\) denote the command vector of robot \(i\) at control step \(t\). For differential-drive robots, \(c_i(t)\) contains the left- and right-wheel target speeds. We define
\begin{equation}
    J_{\mathrm{smooth}}
    =
    \frac{1}{N(T_{\mathrm{ep}}-1)}
    \sum_{i=1}^{N}
    \sum_{t=2}^{T_{\mathrm{ep}}}
    \left\|
    c_i(t)-c_i(t-1)
    \right\|_{1},
    \label{eq:supp_smoothness}
\end{equation}
where \(N\) is the number of robots and \(T_{\mathrm{ep}}\) is the number of control steps in one episode. A smaller \(J_{\mathrm{smooth}}\) indicates smoother low-level execution and fewer high-frequency actuation changes.

\textbf{Collision safety.}
Collision safety is evaluated by the normalized frequency of unsafe robot--robot and robot--obstacle contacts over an episode. Let \(d_{\min,i}(t)\) be the minimum distance from robot \(i\) to neighboring robots, obstacles or workspace boundaries at step \(t\), and let \(d_{\mathrm{collision}}\) be the collision threshold. We define the normalized collision rate as
\begin{equation}
    R_{\mathrm{coll}}
    =
    \frac{1}{N T_{\mathrm{ep}}}
    \sum_{t=1}^{T_{\mathrm{ep}}}
    \sum_{i=1}^{N}
    \mathbb{I}
    \left(
    d_{\min,i}(t)<d_{\mathrm{collision}}
    \right),
    \label{eq:supp_collision_rate}
\end{equation}
where \(N\) is the number of robots, \(T_{\mathrm{ep}}\) is the episode length and \(\mathbb{I}(\cdot)\) is the indicator function. A smaller \(R_{\mathrm{coll}}\) indicates fewer unsafe contacts and safer swarm motion.

\subsubsection{Trail-Guided Swarm Foraging}

For \textit{Trail-Guided Swarm Foraging}, we evaluate cumulative resource-delivery progress, per-robot foraging efficiency and transport economy.

\textbf{Cumulative foraging throughput.}
To quantify temporal resource-delivery progress, we define the cumulative foraging throughput \(C_{\mathrm{forage}}(t)\) at control step \(t\) as
\begin{equation}
    C_{\mathrm{forage}}(t)
    =
    M_0-M(t)
    =
    \sum_{k=1}^{t}
    n_{\mathrm{del}}(k),
    \label{eq:cumulative_throughput}
\end{equation}
where \(M_0\) is the initial number of available resources, \(M(t)\) is the number of undelivered resources remaining in the arena at step \(t\), and \(n_{\mathrm{del}}(k)\) is the number of delivery events at step \(k\). At the end of an episode,
\(C_{\mathrm{forage}}(T_{\mathrm{ep}})=N_{\mathrm{delivered}}\), where \(N_{\mathrm{delivered}}\) is the total number of resources delivered to the nest. A steeper increase in \(C_{\mathrm{forage}}(t)\) indicates faster collective resource discovery and transport.

\textbf{Per-robot foraging efficiency.}
The per-robot foraging efficiency \(\Phi_{\mathrm{ind}}\) measures the average number of successfully delivered resources per robot per unit time:
\begin{equation}
    \Phi_{\mathrm{ind}}
    =
    \frac{N_{\mathrm{delivered}}}{N T_{\mathrm{ep}}},
    \label{eq:foraging_efficiency_phi}
\end{equation}
where \(N\) is the swarm size and \(T_{\mathrm{ep}}\) is the episode length. A higher \(\Phi_{\mathrm{ind}}\) indicates higher individual contribution and lower efficiency loss caused by interference, congestion or inefficient trail use.

\textbf{Transport economy.}
The transport economy \(E_{\mathrm{trans}}\) evaluates whether delivered resources are transported through spatially efficient trajectories:
\begin{equation}
    E_{\mathrm{trans}}
    =
    \frac{2N_{\mathrm{delivered}}d_{\mathrm{avg}}}{L_{\mathrm{actual}}},
    \qquad
    L_{\mathrm{actual}}
    =
    \sum_{i=1}^{N}
    \sum_{t=2}^{T_{\mathrm{ep}}}
    \left\|
    x_i(t)-x_i(t-1)
    \right\|_2 .
    \label{eq:transport_economy}
\end{equation}
Here, \(d_{\mathrm{avg}}\) is the average distance between the nest and the resource locations, and \(L_{\mathrm{actual}}\) is the cumulative travel distance of all robots. A higher \(E_{\mathrm{trans}}\) indicates more directed and economical transport.

\subsubsection{Formation-Reconfigurable Swarm Navigation}

For \textit{Formation-Reconfigurable Swarm Navigation}, we evaluate formation preservation and navigation efficiency.

\textbf{Formation Error.} 
To evaluate formation preservation independently of global translation and rotation, we use a rigid-motion-invariant formation distance~\cite{cai2019integrated}. Let \(p_i(t)\in\mathbb{R}^{2}\) be the position of robot \(i\) at step \(t\), and let \(\xi_i\in\mathbb{R}^{2}\) be its corresponding target formation point. The formation error is then defined as the minimum alignment error between the current and target formations after optimizing over planar rotations and translations:
\begin{equation}
    E_{\mathrm{form}}(t)
    =
    \min_{R\in SO(2),\,c\in\mathbb{R}^{2}}
    \left(
    \frac{1}{N}
    \sum_{i=1}^{N}
    \left\|
    p_i(t)-R\xi_i-c
    \right\|_2^2
    \right)^{1/2},
    \label{eq:formation_distance}
\end{equation}
where \(R\) is a planar rotation matrix and \(c\) is a translation vector. A lower \(E_{\mathrm{form}}(t)\) indicates better preservation of the target morphology at that specific time step.

\textbf{Mean Formation Error.} 
To quantify the overall formation maintenance performance over an entire evaluation run, we report the temporal mean formation error, defined as
\begin{equation}
    \bar{E}_{\mathrm{form}}
    =
    \frac{1}{T_{\mathrm{ep}}}
    \sum_{t=1}^{T_{\mathrm{ep}}}
    E_{\mathrm{form}}(t),
    \label{eq:mean_formation_error}
\end{equation}
where \(T_{\mathrm{ep}}\) represents the total simulation steps of the episode. This metric provides an episode-level evaluation of the swarm's morphological fidelity.

\textbf{Navigation time.}
Navigation efficiency is measured by the time required for the swarm centroid to reach the goal region:
\begin{equation}
    T_{\mathrm{nav}}
    =
    \min
    \left\{
    t\in[1,T_{\max}]
    :
    \left\|
    p_c(t)-p_{\mathrm{goal}}
    \right\|_2
    <
    d_{\mathrm{th}}
    \right\},
    \qquad
    p_c(t)=\frac{1}{N}\sum_{i=1}^{N}p_i(t).
    \label{eq:navigation_time}
\end{equation}
Here, \(p_c(t)\) is the swarm centroid, \(p_{\mathrm{goal}}\) is the target position, \(d_{\mathrm{th}}\) is the arrival threshold and \(T_{\max}\) is the maximum episode length. If the swarm does not reach the goal within \(T_{\max}\), we set \(T_{\mathrm{nav}}=T_{\max}\). A smaller \(T_{\mathrm{nav}}\) indicates faster goal-directed navigation.

\subsubsection{Role-Adaptive Swarm Search and Rescue}

For \textit{Role-Adaptive Swarm Search and Rescue}, we evaluate whether the swarm can establish an effective base--target relay chain and how quickly this connection is formed. Relay-chain algebraic connectivity is used as the topological criterion from which task success rate and time to connectivity are derived.

\textbf{Relay-chain connectivity criterion.}
We model the communication structure at time \(t\) as a weighted undirected graph \(\mathcal{G}(t)\) whose nodes include the base, the target and all robots. Edge weights are determined by distance-dependent communication quality. We first check whether the base and target belong to the same connected component under the communication threshold \(d_{\mathrm{comm}}\). If they are disconnected, the relay chain is considered invalid and we set
\[
    \lambda_2(t)=0.
\]
If they are connected, we extract the connected subgraph containing both the base and the target, construct its weighted graph Laplacian \(L_{\mathrm{sub}}(t)\), and define
\begin{equation}
    \lambda_2(t)
    =
    \lambda_2
    \left(
    L_{\mathrm{sub}}(t)
    \right),
    \label{eq:relay_algebraic_connectivity}
\end{equation}
where \(\lambda_2(\cdot)\) denotes the second-smallest eigenvalue of the graph Laplacian, also known as the algebraic connectivity or Fiedler value~\cite{fiedler1973algebraic,deAbreu2007Algebraic}. The smallest eigenvalue of an undirected graph Laplacian is always zero, whereas \(\lambda_2>0\) indicates graph connectivity. In this task, a positive \(\lambda_2(t)\) means that a valid base--target relay pathway has been established, and a larger value indicates a more robust relay topology.

\textbf{Task success rate.}
For trial \(m\), task success is defined by whether a valid base--target relay chain is established at least once within the episode:
\begin{equation}
    S_m
    =
    \mathbb{I}
    \left(
    \max_{t\in[1,T_{\max}]}
    \lambda_2^{(m)}(t)
    >
    0
    \right),
    \label{eq:sar_trial_success}
\end{equation}
where \(\lambda_2^{(m)}(t)\) is the relay-chain algebraic connectivity in trial \(m\). Over \(N_{\mathrm{trial}}\) independent trials, the success rate is
\begin{equation}
    \eta_{\mathrm{succ}}
    =
    \frac{1}{N_{\mathrm{trial}}}
    \sum_{m=1}^{N_{\mathrm{trial}}}
    S_m.
    \label{eq:sar_success_rate}
\end{equation}
A higher \(\eta_{\mathrm{succ}}\) indicates stronger robustness to random initial conditions, target locations and decentralized role allocation.

\textbf{Time to connectivity.}
The time to connectivity measures how quickly the swarm establishes a valid relay chain:
\begin{equation}
T_{\mathrm{TTC}}
=
\left\{
\begin{aligned}
&\min\{t\in[1,T_{\max}]:\lambda_2(t)>0\},
&& \text{if } \max_t\lambda_2(t)>0,\\
&T_{\max},
&& \text{otherwise}.
\end{aligned}
\right.
\label{eq:time_to_connectivity}
\end{equation}
If no base--target relay chain is established during the episode, the metric is assigned the maximum time penalty \(T_{\max}\). A smaller \(T_{\mathrm{TTC}}\) indicates faster relay-chain formation and better response efficiency.

\clearpage

\subsection{Baseline and ablation methods}
\label{sec:baseline_ablation_methods}

To examine whether the performance of PhySwarm comes from adaptive physical-parameter learning rather than from hand-crafted heuristics or a favorable static parameter setting, we compared the PhySwarm with one rule-based baseline and two fixed-parameter ablation variants in each benchmark task. All methods used the same robot platform, arena, sensing range, actuator limits, low-level safety interface and task-specific sensory cues. The comparison therefore isolates the effect of adaptive modulation of the physical parameters \(P(t)=\{\omega(t),D(t),\lambda(t)\}\).

\textbf{PhySwarm.}
The Neural-Physics Controller (NPC) predicts the advection weights, diffusion coefficients and reaction rates online from local observations and temporal memory. These parameters are projected onto the physically feasible parameter manifold and executed through the macro--micro coupling.

\textbf{Finite-state-machine baseline.}
The finite-state-machine baseline represents a conventional hand-crafted reactive controller. It uses the same task-relevant cues as PhySwarm, such as food detection, nest direction, corridor width, formation deformation, target awareness and communication conditions. However, these cues are mapped to robot commands through manually designed priority rules rather than through learned ADR parameters. 

\textbf{Fixed-parameter ablations.}
The two ablation variants retain the PhySwarm execution structure but disable adaptive parameter learning. Instead, the field gains, diffusion coefficient and reaction rates are fixed throughout the episode. Ablation A represents a conservative or single-mechanism-dominant regime, whereas Ablation B represents the opposite degenerate regime, such as transition-dominant, propulsion-dominant or relay/diffusion-dominant behaviour. These variants test whether each task can be solved by a static ADR parameter configuration.

The rule-based baselines are summarized in Supplementary Table~\ref{tab:fsm_baseline_settings}, and the fixed-parameter ablation settings are summarized in Supplementary Table~\ref{tab:ablation_parameter_settings}. For ablation variants, the listed field gains are fixed implementation gains; when a normalized advection-weight simplex is used, these gains are normalized before forming \(\omega\). The exact numerical values of the fixed parameters used in each ablation are also provided in the implementation configuration files of our project repository and are kept unchanged across repeated trials.

\begin{sidewaystable}[p]
\centering
\caption{
\textbf{Finite-state-machine baselines used for comparison.}
Each baseline uses the same task cues available to PhySwarm but maps them to actions through hand-crafted priority rules rather than learned physical parameters.
}
\label{tab:fsm_baseline_settings}
\scriptsize
\renewcommand{\arraystretch}{1.25}
\begin{tabular}{
p{0.20\textheight}
p{0.38\textheight}
p{0.36\textheight}
}
\toprule
\textbf{Scenario} & \textbf{Decision logic} & \textbf{Control constants} \\
\midrule

\textit{Trail-Guided Swarm Foraging} &
Priority-based reactive control. For robot \(i\), wall avoidance is activated if \(d_{\mathrm{min,w},i}<d_{\mathrm{wall}}\); inter-robot avoidance is activated if \(d_{\mathrm{min,a},i}<d_{\mathrm{robot}}\). If the robot is carrying a resource, it moves towards the nest. Otherwise, it follows food cues if available, then trail or information cues if available, and performs sinusoidal exploratory motion otherwise. The steering angle is computed from the corresponding local heading error \(\theta_{\cdot,i}\). &
\(d_{\mathrm{wall}}=0.08\,\mathrm{m}\), \(d_{\mathrm{robot}}=0.12\,\mathrm{m}\);
\(V_{\mathrm{avoid}}=0.02\,\mathrm{m\,s^{-1}}\),
\(V_{\mathrm{return}}=0.12\,\mathrm{m\,s^{-1}}\),
\(V_{\mathrm{search}}=0.08\,\mathrm{m\,s^{-1}}\),
\(V_{\mathrm{food}}=0.10\,\mathrm{m\,s^{-1}}\);
\(\epsilon=10^{-3}\), \(A=0.5\), \(\gamma=0.1\), \(\phi_i=i\);
\(K_{\mathrm{wall}}=4.0\),
\(K_{\mathrm{robot}}=5.0\),
\(K_{\mathrm{nest}}=2.5\),
\(K_{\mathrm{food}}=3.0\),
\(K_{\mathrm{info}}=2.0\). \\

\midrule

\textit{Formation-Reconfigurable Swarm Navigation} &
Reactive vector-field controller. If the swarm is close to the goal, the command combines formation regulation, goal attraction and obstacle avoidance. If the local corridor width is small or the deformation measure is large, the robot prioritizes forward drive and strengthened avoidance. Otherwise, it follows formation regulation with obstacle avoidance. The body-frame velocity is composed from \(v_{\mathrm{form}}\), \(v_{\mathrm{avoid}}\), \(v_{\mathrm{attract}}\) and \(v_{\mathrm{drive}}\). &
Goal-neighbourhood threshold \(d_{\mathrm{g}}<0.3\,\mathrm{m}\);
corridor/deformation trigger \(w_i<0.85\) or \(q_{\mathrm{total}}>0.8\);
target formation radius \(R_0=0.2\,\mathrm{m}\);
\(K_{\mathrm{form}}=1.0\),
\(K_{\mathrm{rep}}=0.05\),
\(K_{\mathrm{goal}}=0.1\);
\(V_{\mathrm{cruise}}=0.08\,\mathrm{m\,s^{-1}}\). \\

\midrule

\textit{Role-Adaptive Swarm Search and Rescue} &
Hierarchical role-assignment and reactive control. Let \(\mathcal{K}_t\) be the set of robots aware of the target and \(M_i(t)\) the number of target-aware robots closer to the target than robot \(i\). A robot is assigned as a searcher if \(i\notin\mathcal{K}_t\), a responder if \(i\in\mathcal{K}_t\) and \(M_i(t)<3\), and a relay if \(i\in\mathcal{K}_t\) and \(M_i(t)\geq3\). Low-level priority is wall avoidance, inter-robot avoidance, search motion, target response and relay-line following. Relay robots use a larger safety margin to encourage chain stretching. &
Role-dependent safety margins:
\(d_{\mathrm{robot}}=0.50\,\mathrm{m}\) for relay robots and
\(d_{\mathrm{robot}}=0.12\,\mathrm{m}\) for searchers/responders;
\(d_{\mathrm{wall}}=0.08\,\mathrm{m}\);
\(V_{\mathrm{avoid}}=0.02\,\mathrm{m\,s^{-1}}\),
\(V_{\mathrm{search}}=0.08\,\mathrm{m\,s^{-1}}\),
\(V_{\mathrm{rush}}=0.12\,\mathrm{m\,s^{-1}}\),
\(V_{\mathrm{relay}}=0.06\,\mathrm{m\,s^{-1}}\);
\(K_{\mathrm{wall}}=4.0\),
\(K_{\mathrm{robot}}=5.0\),
\(K_{\mathrm{task}}=3.0\);
\(A=0.5\), \(\gamma=0.1\), \(\phi_i=i\). \\

\bottomrule
\end{tabular}
\end{sidewaystable}

\begin{sidewaystable}[p]
\centering
\caption{
\textbf{Fixed-parameter ablation settings.}
Ablation A and Ablation B keep the PhySwarm execution structure but disable adaptive NPC parameter learning. The listed field gains are fixed throughout the episode; when the implementation uses normalized advection weights, the gains are normalized before constructing \(\omega\). \(D\) denotes the diffusion gain used in Micro-EDM density-gradient compensation.
}
\label{tab:ablation_parameter_settings}
\scriptsize
\renewcommand{\arraystretch}{1.25}
\begin{tabular}{
p{0.15\textheight}
p{0.40\textheight}
p{0.36\textheight}
}
\toprule
\textbf{Scenario} & \textbf{Ablation A} & \textbf{Ablation B} \\
\midrule

\textit{Trail-Guided Swarm Foraging} &
\textbf{Weak-static setting.}
The physical parameters are fixed to a low-response regime with weak food attraction, weak nest attraction, weak exploration and weak information guidance:
\[
\omega_{\mathrm{food}}=0.08,\quad
\omega_{\mathrm{nest}}=0.05,\quad
\omega_{\mathrm{exp}}=0.05,\quad
\omega_{\mathrm{info}}=0.03,
\]
\[
D=0.05,\quad
\lambda_{\mathrm{pick}}=0.6,\quad
\lambda_{\mathrm{drop}}=0.6.
\]
&
\textbf{Transition-dominant homing-biased setting.}
The physical parameters are fixed to a regime with stronger transition rates and relatively stronger nest attraction but weaker food and exploration guidance:
\[
\omega_{\mathrm{food}}=0.04,\quad
\omega_{\mathrm{nest}}=0.07,\quad
\omega_{\mathrm{exp}}=0.03,\quad
\omega_{\mathrm{info}}=0.03,
\]
\[
D=0.03,\quad
\lambda_{\mathrm{pick}}=0.8,\quad
\lambda_{\mathrm{drop}}=0.8.
\]
\\

\midrule

\textit{Formation-Reconfigurable Swarm Navigation} &
\textbf{Shape-dominant setting.}
The physical parameters are fixed to prioritize morphology preservation over forward progression:
\[
\omega_{\mathrm{flow}}=0.05,\quad
\omega_{\mathrm{shape}}=0.08,\quad
D=0.05,\quad
\lambda=0.0.
\]
&
\textbf{Propulsion-dominant setting.}
The physical parameters are fixed to strengthen forward flow while weakening morphology constraints:
\[
\omega_{\mathrm{flow}}=0.09,\quad
\omega_{\mathrm{shape}}=0.04,\quad
D=0.05,\quad
\lambda=1.0.
\]
\\

\midrule

\textit{Role-Adaptive Swarm Search and Rescue} &
\textbf{Response-dominant setting.}
The physical parameters are fixed to bias robots towards aggressive target response with weak relay organization and limited diffusion:
\[
\omega_{\mathrm{target}}=0.12,\quad
\omega_{\mathrm{coup}}=0.01,\quad
\omega_{\mathrm{exp}}=0.00,
\]
\[
D=0.01,\quad
\lambda_{\mathrm{anchor}}=0.1,\quad
\lambda_{\mathrm{release}}=0.9.
\]
&
\textbf{Relay/diffusion-dominant setting.}
The physical parameters are fixed to favour relay-axis organization and spatial spreading while weakening target-directed response:
\[
\omega_{\mathrm{target}}=0.02,\quad
\omega_{\mathrm{coup}}=0.08,\quad
\omega_{\mathrm{exp}}=0.03,
\]
\[
D=0.15,\quad
\lambda_{\mathrm{anchor}}=0.9,\quad
\lambda_{\mathrm{release}}=0.1.
\]
\\

\bottomrule
\end{tabular}
\end{sidewaystable}

\clearpage

\section{Supplementary Note}
\label{supp:emergent_collective_behavior_cn}

\subsection{Emergent Behavior v.s. Collective Behavior}
\label{sec:emergent_vs_collective}

In swarm robotics, multi-agent systems and complex-systems research, \textit{collective behavior} and \textit{emergent behavior} are closely related, but they are not identical concepts. Collective behavior is a broader descriptive term that generally refers to group-level behaviors exhibited by multiple robots, without specifying how these behaviors are generated. Such behaviors may arise from centralized control, explicitly programmed rules, predefined coordination strategies or decentralized self-organization. By contrast, emergent behavior emphasizes the generative mechanism of the macroscopic pattern: the ordered behavior observed at the swarm level is not directly encoded in each individual robot, nor explicitly prescribed by a central controller, but arises over space and time from local perception, neighbourhood interactions, decentralized decision-making and robot--environment feedback. Therefore, this work does not focus on general multi-robot coordination or task execution, but on how multi-stage macroscopic behaviors in robot swarms are generated, transformed and stabilized through the coupling of local interactions, physical constraints and environmental feedback.

Based on this distinction, we define the target of this work as \textit{multi-stage emergent behaviors}. Such behaviors commonly appear in representative swarm-robot tasks. For example, in swarm foraging, the swarm may sequentially undergo dispersed exploration, resource discovery, homing, trail formation and renewed exploration. In collective navigation, the swarm may form a circular formation in open space, adaptively transform into a line-like queue in a narrow corridor and recover its original formation after traversal. In search and rescue, the swarm may first disperse for search, then aggregate around a detected target and further differentiate into functional subgroups such as searchers, responders and communication relays. The key challenge in these processes is not merely the completion of several task objectives, but the continuous transition among different macroscopic behavioral modes. Therefore, a single static formation target, a fixed control law or a concatenation of task-specific rules is insufficient to capture the underlying mechanism. This motivates the central scientific question of this work: can multi-stage macroscopic behaviors in robot swarms be represented as behavioral-state transitions on a learnable physical field, rather than as a simple combination of handcrafted task rules or black-box policies? Addressing this question requires a modeling framework that can simultaneously describe how local interactions induce global behavior, how environmental constraints reshape swarm dynamics, how one behavioral mode transforms into another and how different types of emergent behaviors can be expressed within a unified mathematical form.

To maintain conceptual clarity, we use the following working definition:

\begin{quote}
\textit{\textbf{Emergent behavior} refers to a macroscopic behavioral pattern, or a transition process among such patterns, generated by local robot perception, decentralized interactions and feedback between robots and the environment. It is not a behavior explicitly specified at the level of individual robots or directly prescribed by a central controller. Such behavior typically involves a non-trivial micro--macro relationship, self-organization and dynamic adaptability, and may manifest as continuous transitions among multiple behavioral modes.}
\end{quote}

Correspondingly, we define collective behavior as follows:

\begin{quote}
\textit{\textbf{Collective behavior} refers to any group-level behavior jointly exhibited by multiple robots, regardless of whether it is generated by centralized control, explicit programming, distributed coordination or self-organized emergence.}
\end{quote}

Under these definitions, all emergent behaviors studied in this work are collective behaviors, but not all collective behaviors fall within the scope of emergent-behavior modeling considered here. In other words, this work focuses on macroscopic swarm behaviors that are induced by local interactions and environmental feedback and that exhibit multi-stage transitions during task execution, rather than on all forms of multi-robot cooperative control or collective task execution.

\end{document}